\documentclass{article}
\usepackage{iclr2027_conference,times}
\iclrfinalcopy
\usepackage[T1]{fontenc}
\usepackage{amsmath,amssymb,amsthm,mathtools,bm}
\usepackage{graphicx,booktabs,hyperref,url}
\definecolor{linkblue}{RGB}{0,70,130}
\hypersetup{colorlinks=true,linkcolor=linkblue,citecolor=linkblue,urlcolor=linkblue,
  pdftitle={Moment-Accurate Gaussian Mixtures for Constant-Step Stochastic Approximation},
  pdfauthor={Xiaoli Li and Wei Biao Wu}}
\newtheorem{theorem}{Theorem}[section]
\newtheorem{proposition}[theorem]{Proposition}
\newtheorem{lemma}[theorem]{Lemma}
\newtheorem{corollary}[theorem]{Corollary}
\theoremstyle{definition}
\newtheorem{assumption}[theorem]{Assumption}

\newcommand{\R}{\mathbb R}
\newcommand{\E}{\mathbb E}
\newcommand{\Pp}{\mathbb P}
\newcommand{\Mcal}{\mathcal M}
\newcommand{\Scal}{\mathcal S}
\newcommand{\Ucal}{\mathcal U}
\newcommand{\N}{\mathsf N}
\newcommand{\Law}{\mathcal L}
\newcommand{\tr}{\operatorname{tr}}
\newcommand{\dist}{\operatorname{dist}}
\newcommand{\supp}{\operatorname{supp}}
\newcommand{\spec}{\operatorname{spec}}
\newcommand{\diag}{\operatorname{diag}}
\newcommand{\norm}[1]{\lVert#1\rVert}

\newcommand{\ip}[2]{\langle#1,#2\rangle}
\newcommand{\one}{\mathbf 1}
\newcommand{\dd}{\,\mathrm d}
\newcommand{\BL}{\mathrm{BL}}
\newcommand{\op}{\mathrm{op}}
\newcommand{\weak}{\Rightarrow}
\newcommand{\trans}{\mathsf T}
\allowdisplaybreaks
\title{Moment-Accurate Gaussian Mixtures for Constant-Step Stochastic Approximation}
\author{Xiaoli Li\\
University of Chicago\\
\texttt{xiaolil@uchicago.edu}
\And
Wei Biao Wu\\
University of Chicago\\
\texttt{wbwu@uchicago.edu}}
\begin{document}
\maketitle
\lhead{Preprint}
\begin{abstract}
Local Gaussian models of constant-step learning predict output variability and expected losses, but weak convergence alone does not justify these moment predictions.
We establish moment-accurate Gaussian mixtures by matching stationary energy with local Ornstein--Uhlenbeck limits, ruling out quadratic tail mass invisible to weak convergence.
For step size $a$, the second-order Wasserstein error is $o(\sqrt a)$, uniformly over invariant laws, using each law's actual root weights.
The assumptions combine confinement, descent, finitely many hyperbolic equilibria and root continuity with finite-variance innovations.
The result yields observable covariances, expected objective gaps and first-order mean shifts, while allowing singular covariances, compatible saddles and weights without a limit.
For additive noise given by a fixed invertible transform of independent standardized Student $t_3$ coordinates, symmetry gives an order-sharp $\sqrt a$ smooth-test bound.
Numerical transport calculations demonstrate the value of root-specific covariances; controlled SGD studies assess observable predictions across step sizes, batch sizes and model geometries.
\end{abstract}

\section{Introduction}\label{sec:intro}
With a fixed learning rate $a$, stochastic gradient descent (SGD) keeps fluctuating near a trained solution.
Local Gaussian models turn curvature and gradient-noise covariance into predictions of output variability and expected objective gaps \citep{MandtEtAl2017,Yaida2019,LiuZiyinUeda2021}.
These predictions concern moments, so their validity requires more than a Gaussian description of typical fluctuations.
When does local Gaussian theory accurately predict these quantities in a landscape with several equilibria?

The first obstacle is that weak convergence can hide the very quantities these predictions require.
As $R\to\infty$, $(1-R^{-2})\N(0,1)+R^{-2}\delta_R$ converges weakly to $\N(0,1)$, while its second moment tends to $2$.
A vanishing fraction of observations can therefore carry a nonvanishing share of quadratic energy.
Single-center moment results address this difficulty under their respective stability and regularity conditions \citep{DieuleveutDurmusBach2020,WeiLiLouWu2025}.
Multiple equilibria introduce a second obstacle: their local covariances differ, while their occupation weights depend on global dynamics and the selected invariant law.

We study stochastic approximation (SA) with constant step size $a$,
\begin{equation}\label{eq:recursion}
X_{n+1}^{(a)}=X_n^{(a)}-ag(X_n^{(a)})+a\xi(X_n^{(a)},\zeta_{n+1}),
\qquad X_n^{(a)}\in\R^d,
\end{equation}
where the observations are i.i.d.\ and the innovations are conditionally centered.
For an invariant draw $X\sim\pi_a$, let $J_a$ label its nearest root $m_j$ and write $w_{a,j}=\Pp_{\pi_a}(J_a=j)$.
At a stable root, set $A_j=Dg(m_j)$ and $C_j=\E[\xi(m_j,\zeta)\xi(m_j,\zeta)^T]$; the local covariance solves $A_jS_j+S_jA_j^T=C_j$.
For components with $\liminf_{a\downarrow0}w_{a,j}>0$, the moment predictions take the familiar form
\begin{equation}\label{eq:intro-predictions}
\begin{aligned}
\operatorname{Cov}(X\mid J_a=j)&=aS_j+o(a),\\
\operatorname{Cov}(f(X)\mid J_a=j)&=aDf(m_j)S_jDf(m_j)^T+o(a).
\end{aligned}
\end{equation}
The second formula applies to globally Lipschitz $f$ differentiable at $m_j$.
For gradient drift $g=\nabla F$, the mean objective gap is $\E[F(X)-F(m_j)\mid J_a=j]=a\tr(C_j)/4+o(a)$.
Our question is when these moment statements hold jointly across roots under second-moment noise assumptions.

We resolve this question by matching two descriptions of stationary energy.
Descent controls squared distance to the root set, while stationarity determines its leading energy balance.
Local Ornstein--Uhlenbeck (OU) limits satisfy the same balance, excluding quadratic mass hidden in rare displacements.
This proves uniform integrability of $\norm{Z_a}^2$, where $Z_a=(X-m_{J_a})/\sqrt a$, and upgrades weak Gaussian limits to moment accuracy.
Under confinement, descent, finitely many hyperbolic roots and the noise regularity in Assumption~\ref{ass:standing}, the resulting mixture has $o(\sqrt a)$ Wasserstein error, uniformly over invariant laws.
Each mixture uses its law's actual finite-$a$ weights: local derivatives determine fluctuations, while global communication and invariant-law selection determine occupation.

\paragraph{Contributions.}
\textbf{Moment-accurate mixtures.} Theorems~\ref{thm:local-moments}--\ref{thm:W2} cover all roots, including compatible saddles, singular covariances and weights without a limit.
\textbf{Observable predictions.} Section~\ref{sec:prediction} derives covariance, loss and first-order mean formulas; Section~\ref{sec:rate} quantifies smooth-test accuracy, including a sharp $\sqrt a$ endpoint for transformed independent $t_3$ noise.
\textbf{Numerical evidence.} Root-specific transport separates the mixture from pooled covariance models; SGD and heavy-tail studies assess the moment predictions using matched OU measurement references.

\paragraph{Relation to existing theory.}
Classical constant-gain analyses establish stationary asymptotics \citep{KushnerHuang1981,Pflug1986,KushnerShwartz1984,KushnerYin2003}.
The finite-variance single-center results of \citet{ChenMouMaguluri2022} identify scaled limits through an integral equation and Gaussian uniqueness conditions.
Further single-center analyses establish moment expansions, Gaussian approximation and stationary concentration \citep{DieuleveutDurmusBach2020,WeiLiLouWu2025,WangEtAl2026,MeradGaiffas2025}; Table~\ref{tab:theory-position} compares their assumptions and conclusions.
Our additional step is uniform quadratic-tail control across all roots using second innovation moments and root continuity.
Finite-time $W_1$ bounds with recursively defined Gaussian covariances address a complementary time scale \citep{haque2026gaussianiterates}.

\begin{table}[b]
\centering\small
\caption{Gaussian and moment results for SA in their stated model, moment and time settings.}
\label{tab:theory-position}
\begin{tabular}{p{0.27\linewidth}p{0.34\linewidth}p{0.29\linewidth}}
\toprule
Work & Setting and moment control & Relevant conclusion\\
\midrule
\citet{ChenMouMaguluri2022} & Single-center stability; additive iid finite-variance noise & Stationary scaling limit; Gaussian identification under uniqueness conditions\\
\citet{DieuleveutDurmusBach2020} & Strong convexity; higher regularity and moments & Stationary moment and bias expansions\\
\citet{WeiLiLouWu2025}, Thms.~3.4--3.5 & Strong convexity; $L^p$ regularity; joint long-time/small-step limit & CLT at $p=2$; convex-set bounds at $p\ge4$\\
\citet{WangEtAl2026}, Thm.~3.8 & Contractive setting; additive noise with finite third moment & Stationary $W_1$ rate; derived scaled third-moment bound\\
This paper & Finite hyperbolic roots; descent; second innovation moments and root continuity & Law-uniform all-root moments and actual-weight $W_2$ approximation\\
\bottomrule
\end{tabular}
\end{table}

Nonconvex SA theory characterizes invariant support, saddle exclusion and nonunique invariant laws \citep{FortPages1999,Benaim1996,Benaim1999,BianchiHachemSchechtman2022,ShirokoffZaleski2025}.
We resolve moments within the root-labeled components; temporal-average normality addresses a different random object \citep{YuEtAl2021}.
The covariance, quarter-trace loss and bias formulas have single-center antecedents \citep{Yaida2019,LiuZiyinUeda2021,DieuleveutDurmusBach2020,AllmeierGast2024}; energy matching justifies their rootwise use with low-moment remainders.
Occupation weights instead use global communication information, as in light- and heavy-tail metastability theory \citep{FreidlinWentzell2012,AzizianEtAl2024,WangRhee2023}.

Diffusion and modified equations describe finite-step dynamics \citep{LiTaiE2017,LiMalladiArora2021}; stationary approximation requires control through the infinite-time limit, as provided by Poisson methods under their assumptions \citep{MattinglyStuartTretyakov2010}.
Our energy argument works directly with discrete invariant laws.
Multiplicative noise links innovation and iterate tails \citep{GurbuzbalabanEtAl2021,HodgkinsonMahoney2021}, while equilibrium manifolds add tangential dynamics \citep{LiWangArora2022}.
Appendix~\ref{app:literature} develops these comparisons.

\section{Model and assumptions}\label{sec:model}

The observations $(\zeta_n)$ are i.i.d.\ and the measurable innovation map $\xi$ does not depend on $a$.
Write $P_af(x)=\E f(x-ag(x)+a\xi(x,\zeta_1))$ and
$C(x)=\E[\xi(x,\zeta_1)\xi(x,\zeta_1)^\trans]$.
The deterministic flow on physical time $t=na$ is $\dot x=-g(x)$.
We use $W_2(\mu,\nu)^2=\inf_{\gamma\in\Pi(\mu,\nu)}\int\norm{x-y}^2\,\gamma(\mathrm dx,\mathrm dy)$, where $\Pi(\mu,\nu)$ denotes all couplings. Throughout, constants are independent of sufficiently small $a$ and of the selected invariant law.

The assumptions combine confinement, locally linearizable roots and a descent energy with second-moment noise control and root continuity.
\begin{assumption}[Confinement, descent and local regularity]\label{ass:standing}
The following conditions hold.
\begin{enumerate}
\item \label{ass:noise} For every $x$, $\E\xi(x,\zeta_1)=0$ and
\begin{equation}\label{eq:cov-growth}
\tr C(x)\le c_0+c_1\norm{g(x)}^2.
\end{equation}
The kernels $P_a$ are Feller for sufficiently small $a$.
At every root $m_j$, $C(x)\to C_j:=C(m_j)$ as $x\to m_j$.
\item \label{ass:geometry} The drift $g$ is $C^2$ and globally Lipschitz, and
\begin{equation}\label{eq:dissipativity}
\ip{x}{g(x)}\ge\kappa\norm{x}^2-b,\qquad \kappa>0.
\end{equation}
Its root set $\Mcal=\{m_1,\ldots,m_\ell\}$ is finite and nonempty.
Each $A_j=Dg(m_j)$ is hyperbolic, meaning $\spec(A_j)\cap i\R=\varnothing$.
\item \label{ass:descent} There is $U\in C^2(\R^d)$ with
\begin{equation}\label{eq:lyapunov}
\begin{gathered}
|U(x)|\le C_U(1+\norm{x}^2),\qquad
\sup_x\norm{D^2U(x)}_{\op}\le L_U,\\
\ip{\nabla U(x)}{g(x)}\ge c_U\norm{g(x)}^2,
\end{gathered}
\end{equation}
where $c_U>0$.
\end{enumerate}
\end{assumption}

Concentration and root continuity supply stationary innovation-tail control (Lemma~\ref{lem:stationary-noise-tails}), even without uniform integrability on fixed root neighborhoods (Appendix~\ref{app:root-continuity-example}).
For additive noise, a finite second moment suffices for the noise conditions.
For a gradient drift $g=\nabla F$, one may take $U=F$ when the displayed growth and smoothness conditions hold.

Let $\Scal=\{j:\min_{\lambda\in\spec(A_j)}\Re\lambda>0\}$ and $\Ucal=\{1,\ldots,\ell\}\setminus\Scal$.
Thus $\Scal$ indexes the attracting equilibria of $\dot x=-g(x)$.
Break nearest-root ties by the smallest index and denote the resulting Voronoi cells by $\mathcal C_j$.
For $X\sim\pi_a$, define $J_a=j$ on $\mathcal C_j$, $X_{\Mcal}=m_{J_a}$, $Z_a=(X-m_{J_a})/\sqrt a$, and $w_{a,j}=\pi_a(\mathcal C_j)$.
All roots, including saddles, enter this labeling convention.

\section{Gaussian mixtures and moment accuracy}\label{sec:main}
We first locate stationary mass and identify its fluctuation scale around the roots.
This concentration estimate supplies the starting point for local Gaussian limits.
\begin{theorem}[Existence and concentration]\label{thm:concentration}
Under Assumption~\ref{ass:standing}, an invariant probability exists for every sufficiently small $a$.
Every invariant probability has a finite second moment and satisfies
\begin{equation}\label{eq:concentration}
\E\norm X^2\le K,\qquad \E\norm{g(X)}^2\le Ka,\qquad
\E\dist(X,\Mcal)^2\le Ka.
\end{equation}
The constants are uniform over invariant-law selections. In particular, $(J_a,Z_a)$ is tight.
\end{theorem}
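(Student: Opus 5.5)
We prove all three bounds in \eqref{eq:concentration} with one-step drift computations for the Lyapunov functions $V(x)=\norm x^2$ and $U$, and then deduce existence and tightness. Throughout, $x'=x-ag(x)+a\xi(x,\zeta_1)$ and $\E\xi=0$.

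\textbf{Step 1: second-moment drift and existence.} Expanding the square gives
\[
P_aV(x)=\norm x^2-2a\ip{x}{g(x)}+a^2\norm{g(x)}^2+a^2\tr C(x).
\]
Global Lipschitzness gives $\norm{g(x)}^2\le 2L^2\norm x^2+c$. Combining this with \eqref{eq:dissipativity} and \eqref{eq:cov-growth}, we get $P_aV\le(1-2a\kappa+O(a^2))V+O(a)$, so $P_aV\le(1-a\kappa)V+Ka$ for small $a$.

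Existence follows by Krylov--Bogoliubov. Starting from $x=0$, the Cesàro averages of $P_a^n(0,\cdot)$ have bounded second moments and are therefore tight. The Feller property makes any weak limit point invariant.

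For an arbitrary invariant $\pi_a$ we cannot assume $\int V\,\dd\pi_a<\infty$ in advance. We therefore apply the drift to the truncation $V\wedge M$; alternatively, iterate $P_a^n(V\wedge M)\le$ a bound independent of the starting point in the limit, integrate against $\pi_a$, use invariance, and let $n\to\infty$ and then $M\to\infty$ by monotone convergence. This gives $\E\norm X^2\le K/\kappa$, uniformly over invariant laws. The truncation step is routine but is where the uniformity over invariant-law selections is secured.

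\textbf{Step 2: gradient energy via $U$.} Taylor's formula with $\sup\norm{D^2U}\le L_U$ gives
\[
P_aU(x)\le U(x)-a\ip{\nabla U(x)}{g(x)}+\tfrac{L_U}{2}a^2\bigl(\norm{g(x)}^2+\tr C(x)\bigr),
\]
where the cross term vanishes because $\E\xi=0$. Using \eqref{eq:lyapunov} and \eqref{eq:cov-growth}:
\[
P_aU\le U-a\bigl(c_U-\tfrac{L_U}{2}a(1+c_1)\bigr)\norm g^2+\tfrac{L_U}{2}c_0a^2.
\]
Since $|U|\le C_U(1+\norm x^2)$ and Step 1 gives a finite second moment, $U$ is $\pi_a$-integrable. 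Integrating against $\pi_a$ and using invariance cancels $U$, which yields $\tfrac{c_U}{2}\,\E\norm{g(X)}^2\le Ka$. This mechanism, with the descent inequality absorbing the $a^2\norm g^2$ term, is the heart of the proof.

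\textbf{Step 3: from $\norm g$ to $\dist(X,\Mcal)$.} The main obstacle is a geometric inequality: we want $\dist(x,\Mcal)^2\le K\norm{g(x)}^2$ on a large ball $B_R$, and a separate treatment outside it.

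\emph{Outside $B_R$.} By \eqref{eq:dissipativity}, $\norm{g(x)}\ge\kappa\norm x-b/\norm x$, so for large $\norm x$ we have $\dist(x,\Mcal)\le\norm x+\max_j\norm{m_j}\le K\norm{g(x)}$.

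\emph{Near each root.} Hyperbolicity makes $A_j$ invertible. With $g\in C^2$ and $g(m_j)=0$, this gives $\norm{g(x)}\ge\tfrac12\norm{A_j^{-1}}^{-1}\norm{x-m_j}$ on a small ball around $m_j$.

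\emph{On the remaining compact set.} Here $g$ is continuous and never vanishes, because the roots are finite and excluded, so $\norm g$ is bounded below by some $\delta>0$ while $\dist(x,\Mcal)$ is bounded.

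Combining the three regions gives $\dist(x,\Mcal)^2\le K\norm{g(x)}^2$ globally, and hence $\E\dist(X,\Mcal)^2\le K^2a$.

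\textbf{Step 4: tightness.} $J_a$ takes finitely many values. The nearest-root labeling gives $\norm{Z_a}=\dist(X,\Mcal)/\sqrt a$, so $\E\norm{Z_a}^2\le K$. Chebyshev's inequality then gives tightness of $(J_a,Z_a)$, uniformly over invariant laws and small $a$.
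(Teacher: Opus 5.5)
Your proposal is correct and follows the paper's proof essentially step for step. The shared ingredients are the exact quadratic drift identity, a truncated Foster argument covering arbitrary invariant laws, Krylov--Bogoliubov existence from Feller and tight Ces\`aro averages, the descent balance for $U$ giving $\E\norm{g(X)}^2=O(a)$, the three-region bound $\dist(x,\Mcal)\le K\norm{g(x)}$, and Chebyshev tightness of $(J_a,Z_a)$. Your iterated-kernel option for the truncation is an equally valid substitute for the paper's one-step Fatou argument: bound $P_a^n(V\wedge M)$, integrate using invariance, then send $n\to\infty$ and $M\to\infty$.
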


\paragraph{From concentration to moments.}
Concentration fixes the scale, but rare excursions could still carry a nonvanishing share of the rescaled second moment.
The descent dissipation $D(x)=\langle\nabla U(x),g(x)\rangle$ controls this contribution:
the global error bound $\norm{g(x)}\ge c_g\dist(x,\Mcal)$ gives $D(X)/a\ge c_Uc_g^2\norm{Z_a}^2$.
Writing $E_j=D^2U(m_j)$, stationarity and Lemma~\ref{lem:stationary-noise-tails} yield
\begin{equation}\label{eq:energy-main}
\frac{\E D(X)}a=\frac12\sum_jw_{a,j}\tr(E_jC_j)+o(1).
\end{equation}
Along any convergent subsequence of root weights and localized laws, the OU moment identity gives exactly the same limiting energy.
Agreement of the distributional limit and its mean excludes hidden energy in the tails, yielding uniform integrability of $\norm{Z_a}^2$.
Appendices~\ref{app:ou}--\ref{app:energy} prove this matching and its uniformity over invariant laws.

\paragraph{Local Gaussian components.}
At a stable root, $A_j$ has eigenvalues with positive real parts, and the covariance is the unique solution
\begin{equation}\label{eq:local-covariance}
A_jS_j+S_jA_j^T=C_j,\qquad \Gamma_j=\N(0,S_j).
\end{equation}
To include saddle components, we retain fluctuations on their stable subspaces and identify when noise excites an unstable direction.
Let $P_j^s$ and $P_j^u=I-P_j^s$ be the real spectral projectors onto the positive- and negative-real-part generalized eigenspaces of $A_j$.
They need not be orthogonal. Define
\begin{equation}\label{eq:all-covariance}
\bar S_j=\int_0^\infty e^{-A_jt}P_j^sC_j(P_j^s)^Te^{-A_j^Tt}\dd t,
\qquad \bar\Gamma_j=\N(0,\bar S_j).
\end{equation}
The integral converges on the stable subspace; if that subspace is empty, $\bar\Gamma_j=\delta_0$.
At a stable root, $P_j^s=I$, so $\bar S_j=S_j$ and $\bar\Gamma_j=\Gamma_j$.

The next theorem supplies the tail control needed to turn these local Gaussian descriptions into covariance predictions.
\begin{theorem}[Rootwise moment accuracy]\label{thm:local-moments}
Under Assumption~\ref{ass:standing},
\begin{equation}\label{eq:square-ui}
\lim_{R\to\infty}\limsup_{a\downarrow0}\sup_{\pi\in\mathcal I_a}
\E_\pi[\norm{Z_a}^2\one\{\norm{Z_a}>R\}]=0,
\end{equation}
where $\mathcal I_a$ is the set of invariant probabilities. At every root, uniformly over $\pi\in\mathcal I_a$,
\begin{equation}\label{eq:second}
\E[\one_{\mathcal C_j}(X)Z_aZ_a^T]-w_{a,j}\bar S_j\to0,
\qquad \E[\one_{\mathcal C_j}(X)Z_a]\to0.
\end{equation}
For any selection with $\liminf_{a\downarrow0}w_{a,j}>0$,
\begin{equation}\label{eq:conditional}
W_2\bigl(\Law(Z_a\mid J_a=j),\bar\Gamma_j\bigr)\to0,
\qquad a^{-1}\operatorname{Cov}(X\mid J_a=j)\to\bar S_j.
\end{equation}
Such a component necessarily satisfies $P_j^uC_j(P_j^u)^T=0$.
\end{theorem}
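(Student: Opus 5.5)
The plan is to prove the theorem by energy matching, as outlined after Theorem~\ref{thm:concentration}, and to argue by contradiction along subsequences so that the result is uniform over invariant laws. Suppose \eqref{eq:square-ui} fails. Then there are $\varepsilon>0$, $a_k\downarrow0$, $R_k\to\infty$ and $\pi_k\in\mathcal I_{a_k}$ with $\E_{\pi_k}[\norm{Z}^2\one\{\norm{Z}>R_k\}]\ge\varepsilon$. Theorem~\ref{thm:concentration} makes $(J_a,Z_a)$ tight, so after passing to a further subsequence we may assume $w_{a_k,j}\to w_j$ for every $j$ and, whenever $w_j>0$, $\Law(Z\mid J=j)\weak\nu_j$.

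The first main step identifies each $\nu_j$. Take a test function $\phi\in C_c^\infty$ and write $x=m_j+\sqrt a z$. Taylor-expanding $P_a\phi(\cdot/\sqrt a)$ gives generator increments of order $a$: a drift term $-\sqrt a\,g(x)\cdot\nabla\phi/\sqrt a=-A_jz\cdot\nabla\phi+o(1)$ (using $g\in C^2$) and a diffusion term $\tfrac12\tr(C(x)D^2\phi)$. Root continuity gives $C(x)\to C_j$ on compact $z$-sets. The third-order remainder is handled by the innovation-tail control of Lemma~\ref{lem:stationary-noise-tails}, which needs only second moments. Invariance then yields $\int\mathcal L_j\phi\,d\nu_j=0$, where $\mathcal L_j$ is the OU generator with drift $-A_jz$ and diffusion $C_j$; mass leaking across Voronoi boundaries is negligible because $\sqrt a\norm z\ll$ the separation of the roots. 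Along unstable directions, tightness forces $\nu_j$ to sit on the stable subspace; in particular $P_j^uC_j(P_j^u)^T=0$ is needed, otherwise the unstable marginal would grow like a diffusion, contradicting tightness. It then follows that $\nu_j=\bar\Gamma_j$, the unique invariant law of the OU process restricted to the stable subspace. This is a Fourier/characteristic-function argument or an appeal to the uniqueness result of \citet{ChenMouMaguluri2022}.

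The second main step, and the main obstacle, is the energy identity \eqref{eq:energy-main}. Apply stationarity to $U$: $0=\E[P_aU(X)-U(X)]$. Expanding, $aD(X)=\tfrac{a^2}{2}\E\tr(D^2U\,\xi\xi^T)+\tfrac{a^2}{2}\E\,g^TD^2Ug+\text{remainder}$. The $g$-terms are $O(a^3)$ by \eqref{eq:concentration}. The term $\tr(D^2U(X)C(X))$ must be shown to converge to $\sum_j w_j\tr(E_jC_j)$ even though $C$ is only continuous at the roots and bounded by \eqref{eq:cov-growth}. For this I would use Lemma~\ref{lem:stationary-noise-tails} together with $\E\norm{g}^2\le Ka$, which kills the $c_1$ part; concentration plus root continuity handles the rest. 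The $C^2$ remainder of $U$ needs uniform continuity of $D^2U$ only near the roots, which I would obtain by splitting on $\norm{\xi}\le\delta/a$. On the other side, $D(x)=\tfrac12 z^T(A_j^TE_j+E_jA_j)z\,a+o(a\norm z^2)$ near $m_j$. By Fatou and weak convergence, $\liminf \E D/a\ge\sum_j w_j\int\tfrac12z^T(A_j^TE_j+E_jA_j)z\,d\bar\Gamma_j$. The OU moment identity $A_j\bar S_j+\bar S_jA_j^T=P_j^sC_j(P_j^s)^T=C_j$ makes this equal to $\tfrac12\sum w_j\tr(E_jC_j)$. Equality in Fatou, combined with the coercive lower bound $D/a\ge c_Uc_g^2\norm Z^2$, then forces uniform integrability of $\norm{Z}^2$, which contradicts the choice of $\pi_k$. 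I expect the delicate points to be the lower-order terms of $D$ far from the roots and the possible non-orthogonality of $E_j$; I would handle both by splitting on $\{\norm Z\le R\}$ and using $D\ge0$.

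Finally, \eqref{eq:second} follows from uniform integrability and weak convergence along every subsequence. This holds trivially when $w_j=0$, because the second moments are uniformly integrable, and the mean vanishes because $\bar\Gamma_j$ is centered. From \eqref{eq:second} we get \eqref{eq:conditional}: weak convergence plus convergence of second moments is equivalent to $W_2$ convergence, and the conditional covariance is the second moment minus the square of a vanishing mean. The constraint $P_j^uC_j(P_j^u)^T=0$ was obtained in the first step.
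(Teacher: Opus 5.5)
Your architecture matches the paper's proof. You extract a subsequence with convergent weights and local limits, and identify those limits through the OU generator. You prove $\E D(X)/a=\tfrac12\sum_jw_{a,j}\tr(E_jC_j)+o(1)$ by Taylor-expanding $U$ and using the stationary innovation tails. You then get uniform integrability of $D(X)/a$ from weak convergence together with convergence of means, and transfer it to $\norm{Z_a}^2$ through $D/a\ge c_Uc_g^2\norm{Z_a}^2$.

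One load-bearing step, however, is not justified: the exclusion at unstable roots. Your limiting-energy computation uses $\nu_j=\bar\Gamma_j$ and $A_j\bar S_j+\bar S_jA_j^T=C_j$ at every root of positive limiting weight, saddles included. The theorem's last assertion is precisely this exclusion. Tightness of $(J_a,Z_a)$ only ensures that no mass escapes, so the subsequential limit is a probability measure. Every finite measure is tight, so tightness places no restriction on how mass sits along unstable directions. Your picture of the unstable marginal spreading like a diffusion presupposes that $\nu_j$ is invariant for the OU semigroup. What you have actually derived is only the weak equation $\int\mathcal L_j\phi\,d\nu_j=0$ for compactly supported $\phi$, and you give no argument bridging the two.

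The paper closes this step using the finite second moment that the limit inherits from $\E\norm{Z_a}^2\le K$ by lower semicontinuity. Test the limiting equation with $z_rz_s\psi_R(z)$, where $\psi_R$ is a smooth cutoff equal to one on $B(0,R)$, and let $R\to\infty$. This yields $A_jM_j+M_jA_j^T=q_jC_j$. Now pass to coordinates given by a real similarity $T^{-1}A_jT=\diag(A_s,A_u)$. Pair the unstable block of the identity with the positive definite $H$ solving $A_u^TH+HA_u=-I$. This gives $-\tr\widetilde M_{uu}=q_j\tr(H\widetilde C_{uu})$. For $q_j>0$ the two sides have opposite signs, so both vanish. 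That yields support on the stable subspace and $P_j^uC_j(P_j^u)^T=0$.

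The same identity also removes any need for Gaussian identification in the energy matching. At every root, $\E R=\sum_j\tr(E_jA_jM_j)=\tfrac12\sum_jq_j\tr(E_jC_j)$ follows directly from symmetry and the moment equation.

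There is also a smaller omission. Your quadratic expansion of $D$ at $m_j$ presumes $\nabla U(m_j)=0$. This follows from $D\ge0$, $D(m_j)=0$, $\nabla D(m_j)=A_j^T\nabla U(m_j)$ and invertibility of $A_j$.
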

Thus a saddle can retain mass only if its root noise acts within its stable subspace; Appendix~\ref{app:unexcited} gives an exact example.

We now assemble the local predictions using each invariant law's actual root weights.
Matching observations and Gaussian samples within the same root makes the transport guarantee explicit.
\begin{theorem}[All-root Gaussian approximation]\label{thm:mixture}\label{thm:W2}
Under Assumption~\ref{ass:standing}, let
$\bar G_{a,\pi}=\sum_jw_{a,j}\N(m_j,a\bar S_j)$.
Define the same-label scaled transport cost by
\begin{equation}\label{eq:same-label-cost}
D_a(\pi)^2=\sum_{j:w_{a,j}>0}w_{a,j}
W_2^2\bigl(\Law(Z_a\mid J_a=j),\bar\Gamma_j\bigr).
\end{equation}
Then
\begin{equation}\label{eq:same-label-main}
\sup_{\pi\in\mathcal I_a}D_a(\pi)\longrightarrow0,\qquad
W_2(\pi,\bar G_{a,\pi})\le\sqrt a\,D_a(\pi).
\end{equation}
In particular,
\begin{equation}\label{eq:original-W2}
\sup_{\pi\in\mathcal I_a}a^{-1/2}W_2(\pi,\bar G_{a,\pi})\longrightarrow0.
\end{equation}
If $P_j^uC_j(P_j^u)^T\ne0$, then $w_{a,j}\to0$ uniformly.
\end{theorem}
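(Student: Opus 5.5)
The proof reduces Theorem~\ref{thm:W2} to Theorem~\ref{thm:local-moments} through two ingredients: a gluing of rootwise couplings, and a compactness argument that turns the rootwise moment convergence into a law-uniform statement.

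\textbf{Step 1: the transport inequality.} Fix $\pi\in\mathcal I_a$. For each $j$ with $w_{a,j}>0$, take an optimal coupling $\gamma_j$ of $\Law(Z_a\mid J_a=j)$ and $\bar\Gamma_j$; it exists because both laws have finite second moments by Theorem~\ref{thm:concentration}. Push $\gamma_j$ forward under $(z,y)\mapsto(m_j+\sqrt a z,\,m_j+\sqrt a y)$. This gives a coupling of $\Law(X\mid J_a=j)$ and $\N(m_j,a\bar S_j)$ with cost $aW_2^2(\Law(Z_a\mid J_a=j),\bar\Gamma_j)$. The mixture $\sum_j w_{a,j}(\text{pushed }\gamma_j)$ then couples $\pi=\sum_j w_{a,j}\Law(X\mid J_a=j)$ with $\bar G_{a,\pi}$. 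Its cost is $aD_a(\pi)^2$, so $W_2(\pi,\bar G_{a,\pi})\le\sqrt a D_a(\pi)$. Once $\sup_\pi D_a(\pi)\to0$ is shown, \eqref{eq:original-W2} follows at once.

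\textbf{Step 2: uniform vanishing of $D_a$.} Argue by contradiction. Suppose there are $a_k\downarrow0$ and $\pi_k\in\mathcal I_{a_k}$ with $D_{a_k}(\pi_k)\ge\varepsilon$. Since there are finitely many labels, some $j$ has $w_{a_k,j}W_2^2(\cdot,\bar\Gamma_j)\ge\varepsilon^2/\ell$. Pass to a subsequence along which $w_{a_k,j}\to w_j$.

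\emph{Case $w_j>0$.} Then $\liminf_k w_{a_k,j}>0$, and \eqref{eq:conditional} of Theorem~\ref{thm:local-moments} gives $W_2\to0$, a contradiction. To apply the theorem along a sequence of laws rather than a fixed selection, read it as holding for every selection; the sequence $\pi_k$ extends to such a selection.

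\emph{Case $w_j=0$.} Use $W_2^2(\mu,\nu)\le 2\int\norm z^2\dd\mu+2\int\norm y^2\dd\nu$. This gives
\[
w_{a,j}W_2^2\le 2\E[\one_{\mathcal C_j}(X)\norm{Z_a}^2]+2w_{a,j}\tr\bar S_j .
\]
By \eqref{eq:second}, $\E[\one_{\mathcal C_j}(X)\norm{Z_a}^2]-w_{a,j}\tr\bar S_j\to0$ uniformly. Hence the right side tends to $0$ when $w_{a_k,j}\to0$, again a contradiction.

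This case split, and in particular the appeal to the uniform statements of Theorem~\ref{thm:local-moments} in the vanishing-weight case, is the delicate point. Without the uniform tail control \eqref{eq:square-ui}, a component of negligible weight could still carry quadratic transport cost.

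\textbf{Step 3: the excited-saddle claim.} Suppose $P_j^uC_j(P_j^u)^T\ne0$ but $w_{a,j}\not\to0$ uniformly. Then there are $a_k\downarrow0$ and $\pi_k$ with $w_{a_k,j}\ge\delta>0$. As in Step~2, extend $\pi_k$ to a selection and pass to a subsequence with $\liminf w>0$. The last assertion of Theorem~\ref{thm:local-moments} then forces $P_j^uC_j(P_j^u)^T=0$, a contradiction.
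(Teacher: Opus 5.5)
Your proof is correct and is essentially the paper's argument. You glue optimal same-label couplings with the actual weights to get $W_2(\pi,\bar G_{a,\pi})\le\sqrt a\,D_a(\pi)$. You then run a subsequence--contradiction argument that splits on whether a root's limiting weight is positive (conditional weak convergence plus square uniform integrability) or zero (the independent-coupling bound, which you control through the trace of \eqref{eq:second} where the paper uses $R^2w_{a,j}+\E[\norm{Z_a}^2\one\{\norm{Z_a}>R\}]$). One wrinkle: the sequence $(a_k,\pi_k)$ need not ``extend'' to a selection with $\liminf_{a\downarrow0}w_{a,j}>0$ over all small $a$, so instead use that \eqref{eq:conditional} and the compatibility conclusion are proved along arbitrary sequences, or cite Lemma~\ref{lem:all-root} and Lemma~\ref{lem:hyperbolic-OU} directly as the paper does.
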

The same-label cost preserves each root's fluctuation model; rescaling its couplings gives the original-scale bound.
The weights may vary without converging, covering both communicating wells and disjoint invariant regions (Appendices~\ref{app:counter} and~\ref{app:long-communication}).

Noise excitation provides a useful specialization in which only stable components retain asymptotic mass.
\begin{assumption}[Excitation of unstable roots]\label{ass:excitation}
For every $j\in\Ucal$,
\begin{equation}\label{eq:excitation}
P_j^uC_j(P_j^u)^T\ne0.
\end{equation}
\end{assumption}
A sufficient condition is excitation of a left eigenvector:
\begin{equation}\label{eq:eigen-excitation}
A_j^Tv=\lambda v,\qquad \Re\lambda<0,\qquad v^*C_jv>0.
\end{equation}
The projection condition also covers defective blocks missed by this sufficient condition (Lemma~\ref{lem:hyperbolic-OU}).
For the following marked $W_2$, use squared cost $\one\{j\ne k\}+\norm{z-y}^2$ between $(j,z)$ and $(k,y)$.
\begin{corollary}[Stable components under excitation]\label{cor:stable-mixture}
Under Assumptions~\ref{ass:standing} and~\ref{ass:excitation}, let $s_a=\sum_{j\in\Scal}w_{a,j}$. Then, uniformly,
\begin{equation}\label{eq:target}
s_a\to1,\quad W_2\!\left(\Law(J_a,Z_a),Q_a\right)\to0,
\quad Q_a=\sum_{j\in\Scal}\frac{w_{a,j}}{s_a}(\delta_j\otimes\Gamma_j),
\end{equation}
and $\E[\one\{J_a\in\Ucal\}\norm{Z_a}^2]\to0$.
\end{corollary}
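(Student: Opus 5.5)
The corollary follows from Theorems~\ref{thm:local-moments} and~\ref{thm:mixture} once we show that unstable roots lose all mass, and lose it in the rescaled second moment too.

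\textbf{Step 1: unstable weights vanish.} Under Assumption~\ref{ass:excitation}, every $j\in\Ucal$ has $P_j^uC_j(P_j^u)^T\ne0$. The last clause of Theorem~\ref{thm:mixture} then gives $w_{a,j}\to0$ uniformly over $\mathcal I_a$. Since $\Ucal$ is finite, $1-s_a=\sum_{j\in\Ucal}w_{a,j}\to0$ uniformly.

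\textbf{Step 2: vanishing unstable energy.} For $j\in\Ucal$, we split at a radius $R$:
\[
\E[\one_{\mathcal C_j}(X)\norm{Z_a}^2]\le R^2w_{a,j}+\E_\pi[\norm{Z_a}^2\one\{\norm{Z_a}>R\}].
\]
Take $\limsup_{a\downarrow0}\sup_\pi$ first. The first term vanishes by Step~1. Then let $R\to\infty$, and the second term vanishes by \eqref{eq:square-ui}. Summing over $\Ucal$ gives $\E[\one\{J_a\in\Ucal\}\norm{Z_a}^2]\to0$ uniformly. Uniform integrability is essential here: tightness alone would allow the vanishing unstable mass to carry nonvanishing energy.

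\textbf{Step 3: marked coupling.} We build a coupling $\gamma$ of $\Law(J_a,Z_a)$ and $Q_a$ in three parts.
\begin{itemize}
\item[(i)] For each $j\in\Scal$ with $w_{a,j}>0$, we put mass $w_{a,j}$ on an optimal $W_2$ coupling of $\Law(Z_a\mid J_a=j)$ with $\Gamma_j=\bar\Gamma_j$, placing both marks at $j$.
\item[(ii)] $Q_a$ gives label $j$ the mass $w_{a,j}/s_a=w_{a,j}+w_{a,j}(1-s_a)/s_a$. The unstable mass $1-s_a$ from $\Law(J_a,Z_a)$ is matched with these excess pieces $w_{a,j}(1-s_a)/s_a$, distributed as $\delta_j\otimes\Gamma_j$, using the product coupling.
\item[(iii)] The marginals then agree by construction.
\end{itemize}
The cost of part (i) is at most $D_a(\pi)^2$, where we drop the unstable terms of \eqref{eq:same-label-cost}. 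This tends to $0$ uniformly by Theorem~\ref{thm:mixture}.

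The cost of part (ii) is at most
\[
\E\bigl[\one\{J_a\in\Ucal\}(1+2\norm{Z_a}^2)\bigr]+2(1-s_a)\max_{j\in\Scal}\tr S_j,
\]
using $\norm{z-y}^2\le2\norm z^2+2\norm y^2$ and the fact that the excess pieces have total mass $1-s_a$ with Gaussian second moment $\tr S_j$. Both terms tend to $0$ by Steps~1 and~2.

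\textbf{Main obstacle.} The main difficulty is bookkeeping in Step~3. The components with $w_{a,j}=0$ need no special treatment, and the normalization $s_a>0$ holds for small $a$ by Step~1. Uniformity of every bound over $\pi\in\mathcal I_a$ follows because all inputs are uniform.
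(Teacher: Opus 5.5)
Your proof is correct, but it takes a different route from the paper's.

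\textbf{The paper's route.} It argues by compactness. Once $s_a\to1$, along any weight-convergent subsequence both $\Law(J_a,Z_a)$ and $Q_a$ converge weakly to the same stable Gaussian mixture. Then \eqref{eq:square-ui} upgrades the first convergence to $W_2$, and the finite mixtures $Q_a$ converge in $W_2$ as well. The triangle inequality, together with a contradiction argument over violating sequences, gives uniformity.

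\textbf{Your route.} You start from the same-label cost of Theorem~\ref{thm:W2} and build the coupling explicitly, with the product coupling in part (ii) normalized by $1-s_a$. This gives, for every small $a$ and every $\pi\in\mathcal I_a$,
\[
W_2^2\bigl(\Law(J_a,Z_a),Q_a\bigr)\le D_a(\pi)^2+(1-s_a)\bigl(1+2\max_{j\in\Scal}\tr S_j\bigr)+2\E[\one\{J_a\in\Ucal\}\norm{Z_a}^2].
\]
Uniformity is then inherited term by term, with no second subsequence extraction. The bound also shows exactly what renormalizing the stable weights costs: the unstable mass at unit label cost, plus the energy that mass carries.

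\textbf{What each buys.} You rely on Theorem~\ref{thm:W2}, which is itself proved by subsequences, rather than only on the weak limits of Lemma~\ref{lem:all-root} and square uniform integrability. In exchange you get an explicit, nonasymptotic reduction. Rates for $1-s_a$ (for instance from Theorem~\ref{thm:saddle}), and any rate for $D_a$, transfer directly to the marked distance. The unstable-energy term remains the one input controlled only qualitatively. The paper's argument is shorter given its existing machinery, but it is purely qualitative.

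Both proofs use \eqref{eq:square-ui} essentially. The paper needs it to pass from weak to $W_2$ convergence. You need it to remove the energy that the vanishing unstable mass could otherwise carry into part (ii).
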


\section{Observable covariances, losses and mean shifts}\label{sec:prediction}
The moment theorem turns local parameter fluctuations into predictions for outputs, losses and systematic mean shifts.
We first propagate covariance through a differentiable observable.
\begin{corollary}[Observable fluctuations and covariance]\label{cor:prediction}\label{cor:observable}
Assume Theorem~\ref{thm:local-moments} and fix a root with $\liminf_a w_{a,j}>0$.
For measurable $f:\R^d\to\R^k$ differentiable at $m_j$,
\begin{equation}\label{eq:prediction}
\Law\left(\frac{f(X)-f(m_j)}{\sqrt a}\,\middle|\,\mathcal C_j\right)
\weak\N(0,Df(m_j)\bar S_jDf(m_j)^\trans).
\end{equation}
If $f$ is also globally Lipschitz, convergence holds in $W_2$ and
$a^{-1}\operatorname{Cov}(f(X)\mid\mathcal C_j)\to Df(m_j)\bar S_jDf(m_j)^\trans$.
\end{corollary}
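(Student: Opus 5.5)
We prove Corollary~\ref{cor:observable} as a delta-method argument layered on Theorem~\ref{thm:local-moments}. Fix $j$ with $\liminf_a w_{a,j}>0$ and write $B=Df(m_j)$. On $\mathcal C_j$ we have $X=m_j+\sqrt a Z_a$, so
\[
\frac{f(X)-f(m_j)}{\sqrt a}=BZ_a+\frac{r(\sqrt a Z_a)}{\sqrt a},
\]
where $r(h)=f(m_j+h)-f(m_j)-Bh$ satisfies $\norm{r(h)}=o(\norm h)$ as $h\to0$ by differentiability at $m_j$. By \eqref{eq:conditional}, $\Law(Z_a\mid J_a=j)\to\bar\Gamma_j$ in $W_2$, hence weakly. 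Consequently $BZ_a$ converges conditionally to $\N(0,B\bar S_jB^\trans)$ by the continuous mapping theorem.

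For the remainder, fix $\varepsilon>0$ and choose $\delta>0$ with $\norm{r(h)}\le\varepsilon\norm h$ for $\norm h\le\delta$. On the event $\{\norm{Z_a}\le R\}$ with $\sqrt aR\le\delta$, the remainder is at most $\varepsilon R$. Tightness of $Z_a$ given $\mathcal C_j$ follows from the $W_2$ convergence together with the lower bound on $w_{a,j}$. Letting $a\to0$, then $\varepsilon\to0$, then $R\to\infty$ shows the remainder tends to $0$ in conditional probability. Slutsky's lemma then gives \eqref{eq:prediction}. This part needs only measurability of $f$ and differentiability at the single point $m_j$.

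For the $W_2$ and covariance statements, also assume $f$ is $L$-Lipschitz. Then $\norm{f(X)-f(m_j)}/\sqrt a\le L\norm{Z_a}$. Conditional uniform integrability of $\norm{Z_a}^2$ on $\mathcal C_j$ follows from \eqref{eq:square-ui}, since dividing by $w_{a,j}$ costs only a bounded factor. Domination then transfers this to $Y_a:=(f(X)-f(m_j))/\sqrt a$ conditionally. Weak convergence plus uniform integrability of squared norms is equivalent to $W_2$ convergence, so $\Law(Y_a\mid\mathcal C_j)\to\N(0,B\bar S_jB^\trans)$ in $W_2$. This yields convergence of the conditional first and second moments: $\E[Y_a\mid\mathcal C_j]\to0$ and $\E[Y_aY_a^\trans\mid\mathcal C_j]\to B\bar S_jB^\trans$. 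Since $a^{-1}\operatorname{Cov}(f(X)\mid\mathcal C_j)=\operatorname{Cov}(Y_a\mid\mathcal C_j)$ is translation invariant, it converges to $B\bar S_jB^\trans$.

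The only delicate step is the remainder control. The bound must use the local $o(\norm h)$ only on the bulk event $\{\norm{Z_a}\le R\}$, because the Lipschitz growth is needed to dominate the tail. The uniform integrability in \eqref{eq:square-ui}, conditioned via $\liminf_a w_{a,j}>0$, is exactly the input that makes this split work for second moments rather than only in probability.
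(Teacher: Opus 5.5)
Your proposal is correct and follows essentially the same route as the paper's proof. The first part is a delta-method decomposition: the $o(\norm{h})$ remainder is controlled on $\{\norm{Z_a}\le R\}$, conditional tightness handles the complement, and Slutsky's theorem finishes; for Lipschitz $f$, the bound $\norm{Y_a}\le L\norm{Z_a}$ together with the factor $w_{a,j}^{-1}$ applied to \eqref{eq:square-ui} gives conditional square uniform integrability, hence $W_2$ convergence, moment convergence and the covariance limit after subtracting the vanishing mean.
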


At a minimum, the objective has no linear term, so its leading fluctuation is quadratic.
Second-moment accuracy makes the resulting prediction valid for the expected loss as well as its distribution.
\begin{corollary}[Loss law and mean]\label{cor:loss}
Assume Theorem~\ref{thm:local-moments}, let $g=\nabla F$ and fix a stable root with $\liminf_a w_{a,j}>0$.
For $H_j=D^2F(m_j)\succ0$ and $G_j\sim\Gamma_j$,
\begin{align}
\frac{F(X)-F(m_j)}a\mid\mathcal C_j&\weak\tfrac12G_j^\trans H_jG_j,\label{eq:losslaw}\\
\frac{\E[F(X)-F(m_j)\mid\mathcal C_j]}a&\to\tfrac12\tr(H_jS_j)=\tfrac14\tr C_j.\label{eq:lossmean}
\end{align}
The limit law is $\frac12\sum_r\mu_r\chi^2_{1,r}$ with independent chi-squares, where $\mu_r$ are the eigenvalues of $S_j^{1/2}H_jS_j^{1/2}$, including zeros.
Without individual weight lower bounds, centering at the nearest root gives
\begin{equation}\label{eq:weighted-loss}
\frac{\E[F(X)-F(m_{J_a})]}a-\frac14\sum_{j=1}^{\ell}w_{a,j}\tr C_j\to0.
\end{equation}
\end{corollary}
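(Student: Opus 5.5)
We work directly from Theorem~\ref{thm:local-moments}, using a second-order Taylor expansion of $F$ at the stable root $m_j$ together with the uniform integrability \eqref{eq:square-ui} of $\norm{Z_a}^2$. Because $g=\nabla F$ is $C^2$, $F$ is $C^3$, and $\nabla F$ is globally Lipschitz with constant $L$. At $m_j$ we have $\nabla F(m_j)=0$ and $D^2F(m_j)=Dg(m_j)=A_j=H_j$. Stability gives $H_j\succ0$, and since $H_j$ is symmetric, the Lyapunov equation reads $H_jS_j+S_jH_j=C_j$. Taking the trace of $H_j$ times this equation... in fact simply taking the trace of the Lyapunov equation gives $2\tr(H_jS_j)=\tr C_j$. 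This yields the identity $\tfrac12\tr(H_jS_j)=\tfrac14\tr C_j$ in \eqref{eq:lossmean}.

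\textbf{Remainder bound.} On $\mathcal C_j$ write $X=m_j+\sqrt a Z_a$ and
\[
\frac{F(X)-F(m_j)}a=\tfrac12 Z_a^\trans H_jZ_a+\rho_a,\qquad
\rho_a=\int_0^1(1-s)\,Z_a^\trans\bigl(D^2F(m_j+s\sqrt aZ_a)-H_j\bigr)Z_a\dd s .
\]
The global Lipschitz bound gives $\norm{D^2F}_{\op}\le L$, so $|\rho_a|\le 2L\norm{Z_a}^2$. Continuity of $D^2F$ at $m_j$ gives $|\rho_a|\le \omega(\sqrt a R)\norm{Z_a}^2$ on $\{\norm{Z_a}\le R\}$, where $\omega$ is the modulus of continuity. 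Splitting at $R$ therefore yields
\[
\E[\one_{\mathcal C_j}|\rho_a|]\le \omega(\sqrt aR)\,\E\norm{Z_a}^2+2L\,\E[\norm{Z_a}^2\one\{\norm{Z_a}>R\}].
\]
Here $\E\norm{Z_a}^2$ is uniformly bounded by Theorem~\ref{thm:concentration}. Letting $a\downarrow0$ and then $R\to\infty$, \eqref{eq:square-ui} makes this vanish uniformly over invariant laws.

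\textbf{Conditional law and mean.} Conditioning on $\mathcal C_j$ with $\liminf w_{a,j}>0$, the remainder bound gives $\E[|\rho_a|\mid\mathcal C_j]\to0$, so $\rho_a\to0$ in conditional probability. Theorem~\ref{thm:local-moments} gives $W_2(\Law(Z_a\mid J_a=j),\Gamma_j)\to0$, using $\bar\Gamma_j=\Gamma_j$ at a stable root. The map $z\mapsto\tfrac12z^\trans H_jz$ is continuous, so the continuous mapping theorem together with Slutsky yields \eqref{eq:losslaw}. For the mean, $W_2$ convergence implies convergence of second moments, so $\E[\tfrac12Z_a^\trans H_jZ_a\mid\mathcal C_j]\to\tfrac12\E G_j^\trans H_jG_j=\tfrac12\tr(H_jS_j)$. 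Adding the vanishing remainder gives \eqref{eq:lossmean}. The chi-square representation follows by writing $G_j=S_j^{1/2}N$ with $N$ standard normal and diagonalizing $S_j^{1/2}H_jS_j^{1/2}=O\diag(\mu_r)O^\trans$; rotation invariance of $N$ then gives $\tfrac12\sum_r\mu_r\chi^2_{1,r}$.

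\textbf{Weighted statement \eqref{eq:weighted-loss}.} This needs no lower bound on individual weights, and it must also handle saddles, where $D^2F(m_j)$ is indefinite. For every root, the Taylor expansion with the same remainder bound gives
\[
\E\bigl[\one_{\mathcal C_j}(F(X)-F(m_j))\bigr]/a=\tfrac12\tr\bigl(H_j\,\E[\one_{\mathcal C_j}Z_aZ_a^\trans]\bigr)+o(1),
\]
and the unconditional \eqref{eq:second} turns the right side into $\tfrac12w_{a,j}\tr(H_j\bar S_j)+o(1)$. It remains to show $\tr(H_j\bar S_j)=\tfrac12\tr C_j$ whenever the weight is not asymptotically negligible.

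\textbf{Expected obstacle.} This identity at saddles is the step I expect to be the main obstacle. Along a subsequence with $w_{a,j}\not\to0$, Theorem~\ref{thm:W2} forces $P_j^uC_j(P_j^u)^\trans=0$. Since $H_j$ is symmetric, its spectral projectors are orthogonal and commute with $H_j$. I would first show that $P_j^sC_jP_j^s=C_j$, using positive semidefiniteness of $C_j$, which kills the cross terms. Then $\bar S_j$ solves $H_j\bar S_j+\bar S_jH_j=C_j$, and taking traces gives $2\tr(H_j\bar S_j)=\tr C_j$. Along subsequences with $w_{a,j}\to0$ the term vanishes anyway, since both sides carry the factor $w_{a,j}$. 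A subsequence argument then yields \eqref{eq:weighted-loss} uniformly.
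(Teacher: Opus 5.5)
Your proposal is correct, and for \eqref{eq:losslaw}, \eqref{eq:lossmean} and the chi-square representation it follows the paper's proof. The ingredients are the same: Taylor expansion at $m_j$ with $\nabla F(m_j)=0$, the global Hessian bound giving domination by $\norm{Z_a}^2$, the square uniform integrability of Theorem~\ref{thm:local-moments} to pass expectations through the weak limit, the trace of $H_jS_j+S_jH_j=C_j$, and diagonalization. Your constant $2L$ can be $L$, which is harmless.

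For \eqref{eq:weighted-loss} you take a different route. The paper passes to a weight-convergent subsequence and uses the all-root limit measures of Lemma~\ref{lem:all-root}, with second moments $M_j$. It then takes the trace of the moment identity $H_jM_j+M_jH_j=q_jC_j$, which holds at \emph{every} root directly from the limiting generator equation, whether or not the root is compatible. This gives $\tfrac12\tr(H_jM_j)=\tfrac14 q_j\tr C_j$ immediately, so the saddle obstacle you anticipated never arises there.

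You instead work from the stated conclusion \eqref{eq:second}. That forces you to verify $2\tr(H_j\bar S_j)=\tr C_j$ at compatible roots and to dispose of incompatible roots via the uniform weight vanishing in Theorem~\ref{thm:W2}. Your projector argument for the first part is right:
\begin{itemize}
\item symmetry of $H_j$ makes $P_j^s,P_j^u$ orthogonal;
\item $C_j\succeq0$ together with $P_j^uC_jP_j^u=0$ gives $P_j^uC_j=C_jP_j^u=0$;
\item hence $H_j\bar S_j+\bar S_jH_j=P_j^sC_jP_j^s=C_j$.
\end{itemize}
This also covers an empty stable subspace, where compatibility forces $C_j=0$ and $\bar S_j=0$.

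What your route buys is reliance only on the theorems' stated conclusions. Compatibility is a fixed property of each root, so the dichotomy yields the limit uniformly over invariant laws with no subsequence extraction; the subsequence step you mention is unnecessary. The paper's route buys freedom from any case split or projector algebra, at the cost of reaching back into the limit measures of Lemma~\ref{lem:all-root}. Your compatible/incompatible reasoning is exactly what the paper records, for the coefficient $k_{F,j}$, in the remark following Theorem~\ref{thm:bias}.
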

The global Lipschitz property controls observable growth; the bounded Hessian of $F$ controls quadratic loss by squared displacement. Appendix~\ref{app:observables} proves the resulting expectation limits.
The quarter-trace coefficient agrees with the local harmonic prediction of \citet{Yaida2019}; here its remainder and conditional interpretation follow from the moment theorem.

A centered Gaussian describes fluctuations of order $\sqrt a$, while nonlinear drift can shift the mean at order $a$.
The next expansion identifies this smaller systematic effect from the same rootwise covariance.
\begin{theorem}[Rootwise mean expansion]\label{thm:bias}
Under Assumption~\ref{ass:standing}, fix any root $j$ and define
\[
(t_j)_r=\tr[D^2g_r(m_j)\bar S_j],\qquad b_j=-\tfrac12A_j^{-1}t_j.
\]
Uniformly over invariant-law selections,
\begin{equation}\label{eq:mean-bias}
\E[\one_{\mathcal C_j}(X-m_j)]=a w_{a,j}b_j+o(a).
\end{equation}
For $f:\R^d\to\R^k$ of class $C^2$ with bounded coordinatewise Hessians, let
$k_{f,j}=Df(m_j)b_j+\tfrac12(D^2f(m_j):\bar S_j)$. Then
\begin{equation}\label{eq:observable-bias}
\E[\one_{\mathcal C_j}\{f(X)-f(m_j)\}]=a w_{a,j}k_{f,j}+o(a).
\end{equation}
If $\liminf_a w_{a,j}>0$, the conditional means are $m_j+a b_j+o(a)$ and $f(m_j)+a k_{f,j}+o(a)$, respectively.
\end{theorem}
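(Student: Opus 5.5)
The plan is to use a localized stationarity identity for a smooth test function that is linear near $m_j$, and then feed in the second-moment control of Theorem~\ref{thm:local-moments}. Fix $j$ and a smooth cutoff $\chi_j$ that equals $1$ on a ball $B(m_j,r)$ and vanishes outside $B(m_j,2r)$. Choose $r$ so small that $B(m_j,2r)\subset\mathcal C_j$, away from the cell boundary. Set $\phi(x)=\chi_j(x)(x-m_j)$. Invariance gives $\E_\pi[P_a\phi(X)-\phi(X)]=0$. Since $\phi$ is $C^2$ with bounded derivatives, we can Taylor expand to second order:
\[
P_a\phi(x)-\phi(x)=-aD\phi(x)g(x)+\tfrac{a^2}{2}\E\bigl[D^2\phi(x)[\Delta,\Delta]\bigr]+\text{rem},\qquad \Delta=-g(x)+\xi(x,\zeta).
\]
The innovation is centered, so the first-order noise term drops out.

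The analysis near $m_j$ proceeds as follows. Inside $B(m_j,r)$ we have $\phi(x)=x-m_j$, and $g(x)=A_j(x-m_j)+\tfrac12 D^2g(m_j)[x-m_j,x-m_j]+o(\norm{x-m_j}^2)$. Dividing the identity by $a^2$, the drift term contributes
\[
-\tfrac1a\E[\one_{B}(X)g(X)]=-A_j\tfrac1a\E[\one_B(X-m_j)]-\tfrac12\E[\one_B\, D^2g(m_j)[Z_a,Z_a]]+o(1).
\]
The $o(1)$ here comes from uniform integrability of $\norm{Z_a}^2$ in (\ref{eq:square-ui}), combined with $\norm{X-m_j}\to0$ in probability on the cell (Theorem~\ref{thm:concentration}). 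By (\ref{eq:second}), $\E[\one_{\mathcal C_j}Z_aZ_a^\trans]=w_{a,j}\bar S_j+o(1)$. Replacing $\one_B$ by $\one_{\mathcal C_j}$ costs $o(1)$, because $\E[\one_{\mathcal C_j\setminus B}\norm{Z_a}^2]\to0$: this follows from UI together with $\Pp(\norm{Z_a}>r/\sqrt a)\to0$. The quadratic term therefore equals $\tfrac12 w_{a,j}t_j+o(1)$. In the second-order term of the Taylor expansion, $D^2\phi=0$ on $B$, so that term vanishes there.

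The annulus $2r>\norm{x-m_j}>r$ has rescaled contribution
\[
\E[\one\{\norm{X-m_j}>r\}\norm{g(X)}+\cdots]/a,
\]
whose leading part is bounded by $\E[\norm{Z_a}^2\one\{\norm{Z_a}>r/\sqrt a\}]\cdot O(1)$. This is $o(1)$ uniformly by (\ref{eq:square-ui}), since $\norm{g(x)}\le L\,\dist(x,\Mcal)$ and $1\le \dist/r$ on the annulus. The second-order noise term on the annulus is $O(a^2)\cdot\E[\one_{\text{annulus}}(\norm{g}^2+\tr C)]/a^2$. I expect this to be the main obstacle. Stationary innovation tails do not automatically make $\Pp(\text{annulus})$ small relative to the required $a$-order, so I would invoke Lemma~\ref{lem:stationary-noise-tails}, together with the bound $\Pp(\text{annulus})\le\E\norm{Z_a}^2\one\{\norm{Z_a}>r/\sqrt a\}\,a/r^2=o(a)$. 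This yields $o(1)$ after dividing by $a^2$, using $\tr C\le c_0+c_1\norm g^2$ and $\E\norm g^2\le Ka$. The third-order Taylor remainder is handled in the same way using only second moments: I bound it by $\min(a^2\norm\Delta^2,\,a^3\norm\Delta^3)$-type terms and split on $\norm{\Delta}\lessgtr\epsilon/a$.

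Collecting these estimates gives
\[
0=-A_j\tfrac1a\E[\one_{\mathcal C_j}(X-m_j)]-\tfrac12 w_{a,j}t_j+o(1).
\]
The matrix $A_j$ is invertible by hyperbolicity, which yields (\ref{eq:mean-bias}). The error from the cell versus the ball in the first-moment term is $\E[\one_{\mathcal C_j\setminus B}\norm{X-m_j}]/a=o(1)$, again by UI.

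For (\ref{eq:observable-bias}), I expand
\[
f(X)-f(m_j)=Df(m_j)(X-m_j)+\tfrac12D^2f(m_j)[X-m_j,X-m_j]+R,
\]
where $\norm R\le o(1)\norm{X-m_j}^2$ locally and $\norm R\le C\norm{X-m_j}^2$ globally by the bounded Hessians. Multiplying by $\one_{\mathcal C_j}$, the linear term is handled by (\ref{eq:mean-bias}), the quadratic term by (\ref{eq:second}), and $R$ by UI. The conditional statements then follow by dividing by $w_{a,j}$, which is bounded below.
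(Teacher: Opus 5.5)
Your proposal is correct and follows essentially the paper's route: stationarity of $\chi_j(x)(x-m_j)$ yields the drift balance $\E[\one_{\mathcal C_j}g(X)]=o(a)$ up to annulus terms controlled by uniform integrability, after which the second-order expansion of $g$, the rootwise moments \eqref{eq:second} and invertibility of $A_j$ give \eqref{eq:mean-bias}, and your observable argument is identical to the paper's. The one difference is technical. The paper writes the increment exactly as $\chi_j(X)\Delta+(\chi_j(X^+)-\chi_j(X))(X^+-m_j)$ and bounds the cutoff flux by $o(a^2)$ (Lemma~\ref{lem:linear-flux}), whereas you Taylor-expand $\phi$ and control the remainder by the $\epsilon/a$ split with Lemma~\ref{lem:stationary-noise-tails}, which also works. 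The annulus second-order term you flag as the main obstacle is in fact already $O(a)$ after division by $a^2$, by concentration alone.
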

Here the $r$th coordinate of $D^2f:\bar S_j$ is $\tr[D^2f_r \bar S_j]$.
The proof refines the cutoff flux to $o(a^2)$ and expands the rootwise drift balance (Appendix~\ref{app:bias}), extending single-center bias expansions to rootwise low-moment asymptotics \citep{DieuleveutDurmusBach2020,AllmeierGast2024}.
A nonzero mean shift at a root whose weight stays bounded away from zero contributes order $\sqrt a$ to the same-label scaled error (Appendix~\ref{app:mean-offset}).
For a fixed test loss, the coefficient is
\begin{equation}\label{eq:test-bias}
k_{{\rm test},j}=\nabla\ell_{\rm test}(m_j)^Tb_j+\tfrac12\tr[D^2\ell_{\rm test}(m_j)\bar S_j].
\end{equation}
The first term vanishes for the full objective generating a gradient drift, yielding~\eqref{eq:lossmean}.
For a scalar globally Lipschitz observable with bounded Hessian, these coefficients also separate global variance into between-root variation, within-root Gaussian variance and a mean-shift correction (Corollary~\ref{cor:global-variance}).

For each fixed independent with-replacement batch size $B$, the matrices $C_j,\bar S_j$ and coefficients $b_j,k_{f,j}$ scale by $1/B$.
Thus fluctuations scale as $\sqrt{a/B}$ and leading mean shifts as $a/B$.
\paragraph{Computing predictions.}
Evaluate $A_j,C_j$ at each relevant root, solve the Lyapunov equation on its stable subspace, and propagate covariance through observable derivatives; second derivatives give $b_j,k_{f,j}$.
These coefficients give leading conditional predictions; mixture aggregation also uses invariant-law weights from a global model or ergodically justified time occupations (Appendix~\ref{app:metastability}).
Output intervals quantify algorithmic variability about fixed-root predictions.

\section{Noise tails and smooth accuracy}\label{sec:rate}
The preceding results justify moments and qualitative $W_2$ accuracy.
Additional noise structure gives explicit rates for bounded smooth tests of the root-scaled law.
For $0<\delta\le1$, let $\mathcal H_{2,\delta}$ contain real functions satisfying
$\norm h_\infty+\norm{\nabla h}_\infty+\norm{D^2h}_\infty+[D^2h]_\delta\le1$.
The last term is the global H\"older seminorm, and $\Gamma_jh=\int h(z)\,\Gamma_j(\mathrm dz)$.
At a stable root, measure the unnormalized local error by
\begin{equation}\label{eq:local-error}
\mathcal E_{a,j}=\sup_{h\in\mathcal H_{2,\delta}}
\left|\E[\one_{\mathcal C_j}(X)h((X-m_j)/\sqrt a)]-w_{a,j}\Gamma_jh\right|.
\end{equation}
\begin{corollary}[Additive-noise accuracy]\label{cor:tail-summary}
Under Assumption~\ref{ass:standing}, centered additive innovations with a finite $(2+\delta)$ moment give $\mathcal E_{a,j}=O(a^{\delta/2})$ at every stable root.
For a fixed invertible linear transform of $d$ independent standardized $t_3$ coordinates, $\mathcal E_{a,j}=O(\sqrt a)$ on $\mathcal H_{2,1}$.
\end{corollary}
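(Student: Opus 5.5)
The plan is a Stein/generator comparison between the discrete invariant law and the local OU process, localized at a stable root $m_j$. Fix $h\in\mathcal H_{2,\delta}$ and let $u$ solve the OU Poisson equation $\mathcal L_j u=h-\Gamma_jh$, where $\mathcal L_j u(z)=-\ip{A_jz}{\nabla u(z)}+\tfrac12\tr(C_jD^2u(z))$. Writing $u=-\int_0^\infty(\Pp_t h-\Gamma_jh)\dd t$ with the OU semigroup and using exponential stability of $A_j$ and the Gaussian smoothing of $\Pp_t$, one gets bounds on $\norm{\nabla u}_\infty$, $\norm{D^2u}_\infty$ and $[D^2u]_\delta$ in terms of the norm of $h$. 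For the $t_3$ endpoint one further derivative is needed, namely a bound on $D^3u$, up to a logarithm, for $h\in\mathcal H_{2,1}$. If $C_j$ is singular, $\Gamma_j$ may still be degenerate. In that case I would first use the additive structure: the noise covariance is then constant, and for the $t_3$ case it is positive definite. I would also check that the smoothing estimate only needs $S_j\succ0$, or otherwise restrict to the invariant subspace.

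Next I localize. Take a smooth cutoff $\chi$ equal to one near $m_j$ and supported inside $\mathcal C_j$. Set $\phi(x)=\chi(x)\,u((x-m_j)/\sqrt a)$ and use stationarity, $\E_\pi[P_a\phi-\phi]=0$. With additive noise $\xi(x,\zeta)=\eta$, Taylor expand $\phi(X-ag(X)+a\eta)$ in the scaled variable $z=(x-m_j)/\sqrt a$. The increment is $-\sqrt a\,g(m_j+\sqrt az)/\sqrt a\cdot\sqrt a+\sqrt a\,\eta\cdot\sqrt a$, that is, $\sqrt a(-A_jz\sqrt a+\sqrt a\eta)+O(a\norm z^2)$. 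The first-order term gives $-a\ip{A_jz}{\nabla u}$ plus a drift-linearization remainder of size $a^{3/2}\norm z^2\norm{\nabla u}_\infty$. The second-order term gives $\tfrac a2\tr(C_jD^2u)$ plus a Taylor remainder. Dividing by $a$ yields
\[
\E[\one_{\mathcal C_j}h(Z_a)]-w_{a,j}\Gamma_jh=\text{(remainders)}+\text{(cutoff terms)}.
\]
The cutoff terms involve $\nabla\chi$ and $1-\chi$. They are controlled by the mass at distance of order one from $m_j$ and by the boundary flux, which Theorem~\ref{thm:concentration} and the uniform integrability in Theorem~\ref{thm:local-moments} make $O(a)$, or at worst $O(\sqrt a)$ via Chebyshev on $\E\dist(X,\Mcal)^2\le Ka$. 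The drift remainder is $O(\sqrt a)$ by the second-moment bound on $Z_a$ from Theorem~\ref{thm:local-moments}.

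The main term is the noise remainder
\[
a^{-1}\E\bigl[u(z+\sqrt a(\eta-A_jz\sqrt a))-u(z)-\text{(first and second order)}\bigr].
\]
With $[D^2u]_\delta$ bounded and a finite $(2+\delta)$ moment, the second-order Taylor remainder is at most $a^{-1}\cdot a^{1+\delta/2}\E\norm\eta^{2+\delta}$, which gives $O(a^{\delta/2})$. The $z$-dependent pieces of the increment are absorbed using $\E\norm{Z_a}^2$ bounds. This proves the first claim.

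The $t_3$ case is the step I expect to be hardest. There the third moment is infinite, so the naive bound with $\delta=1$ fails by a logarithm. I would expand to third order in a truncated way, splitting $\eta$ according to whether $\norm\eta\le a^{-1/2}$. The third-order term $\tfrac{a^{3/2}}6\E[D^3u(z)[\eta,\eta,\eta]\one\{\norm\eta\le a^{-1/2}\}]$ is the source of the log divergence. After the linear change of variables, $\eta=M\varepsilon$ with independent symmetric $\varepsilon_i$. The trilinear expectation expands into terms $\E[\varepsilon_i\varepsilon_k\varepsilon_l]$ under a symmetric truncation. I would choose the truncation coordinatewise, on each $|\varepsilon_i|\le a^{-1/2}$, so that it preserves the symmetry of each coordinate; then every one of these terms vanishes. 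This is where symmetry and independence are used. The tail part $\norm\eta>a^{-1/2}$ is handled with the second-order bound $\norm{D^2u}_\infty a\,\E[\norm\eta^2\one\{\norm\eta>a^{-1/2}\}]/a$. Since the $t_3$ tail satisfies $\E[\varepsilon^2\one\{|\varepsilon|>r\}]\asymp r^{-1}$, this is $O(\sqrt a)$. The fourth-order remainder on the truncated part uses $D^2u$ Lipschitz, which holds since $h\in\mathcal H_{2,1}$ and the OU smoothing gives $[D^2u]_1$ bounded; it contributes $a^{-1}\cdot a^{3/2}\E[\norm\eta^3\one\{\norm\eta\le a^{-1/2}\}]\lesssim\sqrt a\log(1/a)$.

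That last bound still carries a logarithm, so the remaining work is to remove it. I would try taking the Taylor expansion to third order exactly, using $D^3u$, which is available with $\norm{D^3u}_\infty\lesssim1$ because of the extra OU smoothing. The cubic term is then killed by symmetry, and the remainder is bounded by $[D^3u]_\alpha$ times $a^{(3+\alpha)/2-1}\E[\norm\eta^{3+\alpha}\one\{\norm\eta\le a^{-1/2}\}]$. Since $\E[|\varepsilon|^{3+\alpha}\one\{|\varepsilon|\le r\}]\asymp r^{\alpha}$, this gives $O(\sqrt a)$ with no logarithm. The crux is therefore the regularity estimate $[D^3u]_\alpha<\infty$ for $h\in\mathcal H_{2,1}$, obtained from Gaussian integration by parts in the OU semigroup representation. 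This estimate is the obstacle I would attack first.
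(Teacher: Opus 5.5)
Your proposal is correct and follows essentially the paper's route. The shared steps are an OU Poisson solution, stationarity of the localized function $\chi_j(x)\,u((x-m_j)/\sqrt a)$ with an $O(\sqrt a)$ cutoff flux, a H\"older-Hessian Taylor remainder for the $(2+\delta)$ case, and, for $t_3$, a symmetry cancellation of the cubic term plus one extra derivative of the Poisson solution from elliptic OU smoothing. The paper's symmetrized second difference about $z-\sqrt a\,g(x)$ with bounded $D^4u$ is your truncated third-order expansion with $\alpha=1$, and both give the modulus $\E[\norm{\eta}^2\min\{a\norm{\eta}^2,1\}]=O(\sqrt a)$.

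One correction concerns the first claim, which needs no Gaussian smoothing. The bounds on $\nabla u$, $D^2u$ and $[D^2u]_\delta$ follow by differentiating $h$ through the affine OU transition map and using the exponential decay of $e^{-A_jt}$, so a singular $C_j$ is harmless there. Ellipticity $C_j\succ0$ is needed only for the extra derivative in the $t_3$ case.
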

Despite infinite third absolute moments, the $t_3$ bound follows from symmetry and OU smoothing; invertibility ensures $C_j\succ0$ (Corollary~\ref{cor:symmetric-rate}).
In the scalar linear example, the same cosine test has error $O(a)$ for Gaussian or Rademacher innovations and order $\sqrt a$ for $t_3$ noise (Proposition~\ref{prop:symmetric-lower}).
For state-dependent noise, a uniformly bounded local $(2+\delta)$ moment and a locally $\beta$-H\"older covariance give
$\mathcal E_{a,j}=O(a^{\delta/2}+a^{\beta/2}+\sqrt a)$ at stable roots, for $0<\delta,\beta\le1$ (Theorem~\ref{thm:rate}).
Appendices~\ref{app:rate}--\ref{app:symmetric} derive these smooth-test rates, tail moduli and quantitative exclusion of noise-excited saddles.

\section{Numerical studies}\label{sec:numerics}\label{sec:experiments}
We first test root-specific moments and transport, then examine covariance, loss and mean predictions in controlled SGD. Heavy-tail and communication studies connect these measurements to the noise assumptions and occupation weights.
Predictions use roots, derivatives and noise covariances evaluated independently of the measured trajectories.
Covariance and loss settings use 16 independent paths; mean confirmation uses 32 per step.
Error bars denote one standard error across paths unless stated otherwise.
Percentage deviations are $100(\widehat\theta/\theta_{\rm pred}-1)$ for an estimate $\widehat\theta$ and its prediction $\theta_{\rm pred}$.

\subsection{Root-specific moments and transport}
The two-dimensional example of Proposition~\ref{prop:counter} has invariant rectangles surrounding roots with
$S_-=\diag(1/75,1/50)$ and $S_+=\diag(1/60,1/50)$.
At $a=0.0025$, measured scaled first-coordinate variances are $0.013363\pm0.000054$ and $0.016691\pm0.000104$.
A fixed equal-weight pooled reference assigns $0.015$ to both roots, giving limiting relative errors of $+12.5\%$ and $-10\%$.
Root labels therefore matter even when an aggregate appears accurate (Figure~\ref{fig:root-moments}a).
Proposition~\ref{prop:contract} bounds the scaled initialization error by $10^{-6}$.
We also compute the invariant product laws by a refined one-dimensional transfer discretization.
For the same-label cost $D_a$ in~\eqref{eq:same-label-cost}, select invariant mixtures whose root weights alternate between $1/4$ and $3/4$.
As $a$ decreases from $0.04$ to $0.0025$, $D_a$ falls from $0.00310$ to $0.00028$, while the fixed pooled cost remains near $0.0069$ (Figure~\ref{fig:root-moments}c).
Even an optimally chosen shared covariance has a positive limiting same-label cost of $0.005902$ along this sequence (Proposition~\ref{prop:shared-covariance}).
Appendix~\ref{app:transport-study} gives the quantile calculation and grid refinement.

\begin{figure}[t]
\centering\includegraphics[width=\linewidth]{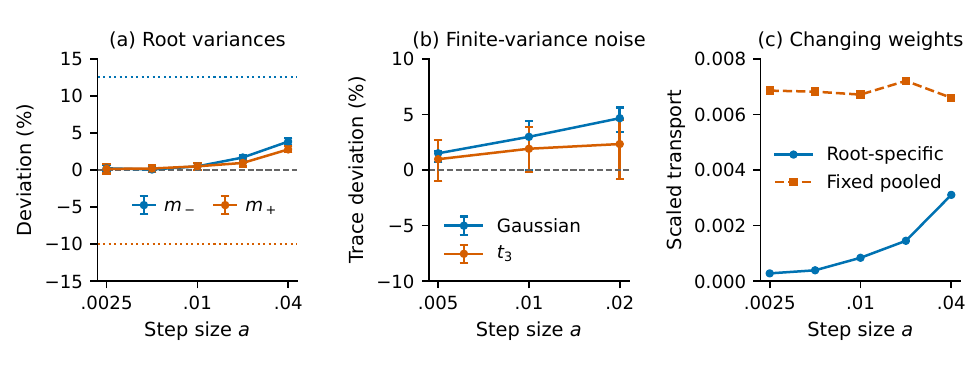}
\caption{\textbf{Root-specific moments and transport.} (a) Scaled variance deviations from each root's OU target; dotted lines show the fixed equal-weight pooled reference's limiting errors. Bars are one standard error across 16 runs. (b) Second-moment trace deviations in the nongradient model; medians and interquartile ranges retain $t_3$ dispersion. (c) Deterministic same-label scaled transport with alternating weights, using root-specific covariances or the same fixed pooled target.}
\label{fig:root-moments}
\end{figure}

\subsection{Neural SGD: prediction accuracy and measurement variability}
We train 64 input weights in a four-unit $\tanh$ network with fixed output weights on the UCI optical digits 3/8 task.
The writer-disjoint split has 769 training and 357 test examples, represented by 16 training-fitted principal components \citep{AlpaydinKaynak1998}.
Ridge $0.02$ confines the model; a fixed random tilt $10^{-4}r^Tx$ makes roots generically nondegenerate, and bounded dither with covariance $0.015^2I$ supplies noise excitation (Proposition~\ref{prop:neural}).
Twenty optimization starts locate one numerical root $m$, with Hessian $H=D^2F(m)$ having spectrum $[0.02435,0.24009]$.
We freeze this root and 12 test-logit probes before measuring fluctuations.

Per-run mean absolute relative error (MARE) averages absolute variance errors relative to the 12 probe predictions.
Aggregate MARE averages each probe's variance across paths before taking absolute relative errors.
At the smallest tested normalized step, $a\lambda_{\max}(H)=0.000625$, per-run MARE is $6.1$--$7.0\%$ and aggregate MARE is $0.9$--$2.0\%$; the $B=1$ mean variance and objective-gap ratios are $0.986$ and $1.029$.
Stationary OU paths with the same save schedule give per-run MARE $5.89\pm0.19\%$, showing the contribution of finite-trajectory variability (Figure~\ref{fig:neural-range}b).
Fourfold longer exposure reduces SGD MARE to $3.0$--$3.2\%$ for $B=1,16$, alongside OU $3.34\pm0.10\%$.
The $B=1$ objective-gap ratio remains $1.028\pm0.004$ (Figure~\ref{fig:exposure}).

\begin{figure}[t]
\centering\includegraphics[width=\linewidth]{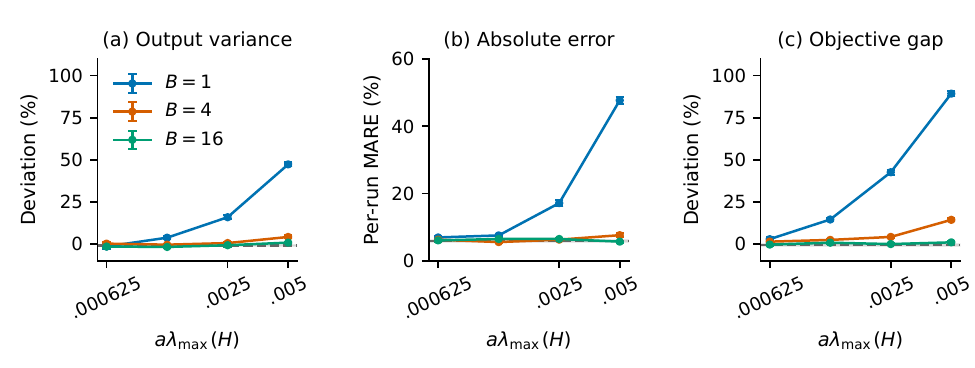}
\caption{\textbf{Neural predictions across step sizes.} Output-variance and actual objective-gap deviations share the $[-10,110]\%$ range; MARE starts at zero. All four steps, three batches and 16 paths per setting are included. Dashed lines and bands show the mean and one standard error of 64 matched stationary OU paths, whose loss is quadratic.}
\label{fig:neural-range}
\end{figure}

The full covariance is important: small-step logit MARE is $6.1$--$7.0\%$, compared with $228$--$233\%$ for a diagonal-noise reference and $185$--$190\%$ for an isotropic reference (Table~\ref{tab:revision-baselines}).
At the largest normalized step, $a\lambda_{\max}(H)=0.005$, with $B=1$, output variances and objective gaps exceed their predictions by $47.3\%$ and $89.3\%$.
Replacing the OU covariance by the exact discrete-linear reference changes logit MARE from $47.65\%$ to $47.41\%$ at this setting.
The frozen quadratic model's relative loss correction is at most $0.251\%$ (Proposition~\ref{prop:quadratic-discrete}), far below the observed objective-gap deviation.
Paired-state diagnostics, augmentation ablations and longer exposures characterize these finite-step departures (Appendix~\ref{app:moment-studies}).

\paragraph{Model dimension and data.}
An independent 192-path study covers 256/1024-parameter digits networks and a 256-parameter Wisconsin Diagnostic Breast Cancer (WDBC) classifier.
Small-step MARE is $2.74$--$3.69\%$, alongside OU $2.91$--$3.15\%$, with residual mean-ratio biases of roughly $2$--$3\%$ for digits-1024 (Table~\ref{tab:main-neural-comparison}).
Appendix~\ref{app:dimension} reports all steps and batches.

\begin{table}[t]
\centering\small
\caption{Small-step prediction ratios across models and batches at $a\lambda_{\max}(H)=0.000625$. Entries are means $\pm$ one standard error across 16 independent paths; all use physical exposure 8000. Output ratios average the 12 fixed probes within each path.}
\label{tab:main-neural-comparison}
\begin{tabular}{lcccc}
\toprule
Model & \shortstack{Output variance\\$B=1$} & \shortstack{Objective gap\\$B=1$} & \shortstack{Output variance\\$B=16$} & \shortstack{Objective gap\\$B=16$}\\
\midrule
Digits-256 & $1.012\pm0.005$ & $1.030\pm0.005$ & $0.998\pm0.003$ & $0.997\pm0.003$\\
Digits-1024 & $1.024\pm0.003$ & $1.027\pm0.003$ & $0.996\pm0.005$ & $0.999\pm0.003$\\
WDBC-256 & $1.003\pm0.003$ & $1.012\pm0.005$ & $0.997\pm0.003$ & $1.003\pm0.004$\\
\bottomrule
\end{tabular}
\end{table}

\subsection{Nonlinear observables, objective gaps and mean shifts}
Consider uniform sampling from four component losses,
\begin{equation}\label{eq:finite-sum}
\begin{gathered}
F(x)=\tfrac12\norm x^2-2\log\cosh x_1-1.5\log\cosh x_2,\\
\ell_{s,t}(x)=F(x)+s b^Tx+0.6t\log\cosh(v^Tx),\quad
b=(0.5,0.2)^T,\quad v=(-0.3,0.7)^T,
\end{gathered}
\end{equation}
where $s,t\in\{-1,1\}$ are independent.
The single-sample covariance is $C_1(x)=bb^T+0.36\tanh^2(v^Tx)vv^T$. At the positive root $m$, write $H=D^2F(m)$ and let $S_1$ solve $HS_1+S_1H=C_1(m)$.
For $f_2(x)=\tanh(x_1-x_2/2)$ and $a=0.005$, variance ratios are $1.009\pm0.007$, $1.003\pm0.005$ and $0.999\pm0.005$ for $B=1,4,16$.
The actual mean objective-gap ratios are $1.010\pm0.006$, $1.000\pm0.005$ and $1.005\pm0.004$ (Figure~\ref{fig:observable-moments}a,b).
The generalized chi-square prediction also describes the loss distribution (Figure~\ref{fig:revision-e3}).

For $f=(x_1+x_2,\tanh(x_1-x_2/2))$, Theorem~\ref{thm:bias} predicts $k_f=(-0.055598,-0.027255)^T$ at the positive root for $B=1$.
Writing $T=aN$ for the physical exposure of $N$ retained updates, a 96-path measurement fixes $aT=1600$, with 32 independent paths at each of $a=0.04,0.02,0.01$.
At the smallest step, the signed estimates are $-0.05470\pm0.00429$ and $-0.027249\pm0.000498$ (Figure~\ref{fig:observable-moments}c).
Appendix~\ref{app:mean-measurements} also reports an independent 48-path measurement with larger discrepancies, paired burn-in windows and an empirical drift decomposition.

\begin{figure}[t]
\centering\includegraphics[width=\linewidth]{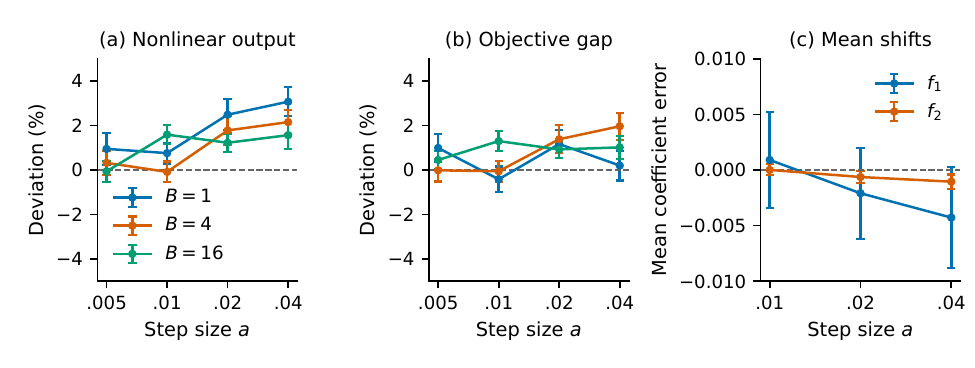}
\caption{\textbf{Predictions beyond parameter covariance.} Percentage deviations of (a) nonlinear output variance from $(a/B)Df_2(m)S_1Df_2(m)^T$ and (b) actual objective gap from $(a/4B)\tr C_1(m)$ share the range $[-5,5]\%$. (c) Signed mean-coefficient errors, with $T=1600/a$. Bars are one standard error across 16 paths in (a,b) and 32 in (c); the two outputs share paths. Zero denotes the prediction.}
\label{fig:observable-moments}
\end{figure}

\subsection{Heavy-tailed fluctuations and global occupation}
The nonlinear nongradient model~\eqref{eq:fourwell} compares matched Gaussian and $t_3$ innovations across three steps and 96 paths, labeling all nine roots.
At $a=0.005$, median stable root-centered trace ratios are $1.015$ and $1.010$, respectively (Figure~\ref{fig:root-moments}b).
Figure~\ref{fig:finite-second} reports the broader $t_3$ dispersion and all-root truncated moments; Figure~\ref{fig:symmetric-endpoint} gives the separate analytic smooth-test comparison.

Global communication supplies occupation information in the separate scalar model $g(x)=x+5\cos x$.
Long paths give 388 and 413 transitions at $a=0.04$ and $0.02$, with occupation and directional rates agreeing with the heavy-tail proxy at the measurement resolution (Appendix~\ref{app:long-communication}).

\section{Discussion}\label{sec:discussion}
Stationary energy matching turns local Gaussian limits into reliable moment asymptotics across multiple equilibria.
It gives covariance, loss and mean-shift formulas a common foundation under second innovation moments, separating local coefficients from global occupation weights.

The heterogeneous-root comparison links this distinction directly to transport: even the best common covariance retains a positive limiting same-label cost.
In neural SGD, matched OU paths calibrate finite-trajectory variability, while step-size and batch-size comparisons identify departures from the local moment predictions.

The present theory concerns stationary laws near finite hyperbolic roots.
Extending energy matching to equilibrium manifolds would connect normal moment control with tangential dynamics \citep{LiWangArora2022}.
Quantitative $W_2$ bounds and equilibration estimates would connect these stationary predictions to prescribed training horizons.

\section*{Reproducibility statement}
The appendices contain complete proofs, model-specific certificates and experimental protocols.

\section*{AI use statement}
In this work, we used generative AI tools to assist with manuscript writing and editing, literature search and identification, drafting portions of the manuscript, experimental design and code development, exploration of new research directions based on existing results, and the development and verification of mathematical arguments and proofs.

We have reviewed all AI-assisted work. Suggested references were checked against original sources, experimental code and numerical results were executed and verified by the authors, and mathematical claims, derivations, and proofs were independently checked for correctness.

We take responsibility for the final content of this work, including all text, claims, proofs, code, experiments, and other artifacts produced with the aid of generative AI.
\bibliography{references}
\bibliographystyle{iclr2027_conference}
\clearpage
\appendix
\section*{Guide to the appendices}
The proofs follow the dependence of the results: concentration and stationary innovation tails precede local OU limits, energy matching, transport and mean expansions.
The experimental appendices give the model certificates and measurement protocols behind the main figures.

\begin{center}\small
\begin{tabular}{p{0.69\linewidth}p{0.23\linewidth}}
\toprule
Mathematical question & Location\\
\midrule
Existence, concentration and stationary innovation tails & Appendix~\ref{app:concentration}\\
Finite OU measures and generalized unstable directions & Appendix~\ref{app:ou}\\
Root localization, energy matching and Gaussian transport & Appendices~\ref{app:mixture}--\ref{app:energy}\\
Rootwise mean shifts and global observable variance & Appendix~\ref{app:bias}\\
Smooth-test rates, unstable mass and the symmetric $t_3$ endpoint & Appendices~\ref{app:rate}--\ref{app:symmetric}\\
Observable and loss consequences & Appendix~\ref{app:observables}\\
Exact saddle components, selected weights and local equilibration & Appendices~\ref{app:unexcited}--\ref{app:counter}\\
Root continuity without fixed-neighborhood uniform integrability & Appendix~\ref{app:root-continuity-example}\\
\midrule
Experimental question & Location\\
\midrule
Same-label invariant-law transport and numerical resolution & Appendix~\ref{app:transport-study}\\
Structural assumptions for the augmented neural model & Appendix~\ref{app:neural}\\
Recursions, deterministic references and independent replication & Appendix~\ref{app:revision-numerics}\\
OU measurement references, neural augmentation and observation windows & Appendix~\ref{app:moment-studies}\\
Independent mean measurements and drift decomposition & Appendix~\ref{app:mean-measurements}\\
Neural dimension and data comparisons & Appendix~\ref{app:dimension}\\
Occupation and directional transition rates at long horizons & Appendix~\ref{app:long-communication}\\
\bottomrule
\end{tabular}
\end{center}

\section{Existence, energy and localization}\label{app:concentration}

We prove Theorem~\ref{thm:concentration}. Constants $K$ may change between lines; all bounds are uniform over invariant laws and sufficiently small $a$.
Let $V(x)=\norm{x}^2$. Conditional centering gives the exact identity
\begin{equation}\label{eq:foster-exact}
P_aV(x)-V(x)=-2a\ip{x}{g(x)}+a^2\{\norm{g(x)}^2+\tr C(x)\}.
\end{equation}
Global Lipschitz continuity implies $\norm{g(x)}^2\le K(1+\norm{x}^2)$.
Together with~\eqref{eq:cov-growth} and~\eqref{eq:dissipativity}, this gives, after reducing the step-size upper bound,
\begin{equation}\label{eq:foster}
P_aV(x)\le(1-\kappa a)V(x)+Ka.
\end{equation}
Starting at any fixed point, iteration yields $\sup_n\E V(X_n)\le V(X_0)+K/\kappa$.
The Ces\`aro averages of the laws are tight.
Any weak limit is invariant: for bounded continuous $f$, the telescoping difference between the average law and its image under $P_a$ tends to zero, and the Feller property permits passage to the weak limit.
Lower semicontinuity of $V$ gives a finite second moment.

To cover every invariant probability, we use a bounded truncation before integrating the drift inequality.
\begin{lemma}[Truncated Foster bound]\label{lem:truncated-foster}
Let $V$ be finite and nonnegative. If $PV\le\rho V+b$ with $0<\rho<1$ and $0\le b<\infty$, every invariant probability satisfies $\pi V\le b/(1-\rho)$.
\end{lemma}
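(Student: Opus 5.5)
The plan is to apply the drift inequality to the bounded truncation $V\wedge M$ and then let $M\to\infty$. Integrating $PV$ directly against $\pi$ is not allowed, since $\pi V$ might be infinite.

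Fix $M>0$ and set $V_M=V\wedge M$. Because $y\mapsto y\wedge M$ is concave and nondecreasing, Jensen's inequality for the Markov kernel gives
\[
PV_M\le (PV)\wedge M\le(\rho V+b)\wedge M .
\]
The target bound is $(\rho V+b)\wedge M\le \rho V_M+b$. On $\{V\le M\}$ this holds because the right side equals $\rho V+b$. On $\{V>M\}$ the right side equals $\rho M+b$, and this is not automatically at least $M$. So the pointwise truncation inequality fails, and we need a different route.

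The cleaner route is to iterate first and truncate afterwards. From $PV\le\rho V+b$, induction and positivity of $P$ give
\[
P^nV\le \rho^nV+b\sum_{k<n}\rho^k\le \rho^nV+\frac{b}{1-\rho}.
\]
By invariance, $\pi(P^nV_M)=\pi V_M$. Also $P^nV_M\le (P^nV)\wedge M$, using $V_M\le V$, $V_M\le M$, and monotonicity of $P^n$. Therefore
\[
\pi V_M\le \int \Bigl(\rho^nV+\frac{b}{1-\rho}\Bigr)\wedge M\,\dd\pi .
\]
The integrand is bounded by $M$ and, since $V$ is finite, converges pointwise to $\frac{b}{1-\rho}\wedge M$ as $n\to\infty$. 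Bounded convergence then gives $\pi V_M\le b/(1-\rho)$. Monotone convergence in $M$ yields $\pi V\le b/(1-\rho)$.

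The main obstacle is the one identified above: justifying the interchange without assuming $\pi V<\infty$. The iteration argument handles it, and it is exactly where finiteness of $V$ is used, namely in the pointwise limit $\rho^nV(x)\to0$. Measurability and the positivity of $P^n$ are routine.
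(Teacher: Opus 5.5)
Your proof is correct, but it takes a different route from the paper. The paper keeps the one-step truncation that you abandoned. From $PV_N\le(\rho V+b)\wedge N$ and invariance it gets $\int\{V\wedge N-(\rho V+b)\wedge N\}\,d\pi\le 0$. It then notes that this integrand plus $b$ is nonnegative and, since $V$ is finite, converges pointwise to $(1-\rho)V$, so Fatou's lemma gives $(1-\rho)\pi V\le b$.

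The obstruction you identified is real, but only for the pointwise comparison $(\rho V+b)\wedge M\le\rho V_M+b$. The paper never needs that comparison, because it hands the whole difference to Fatou's lemma. So the one-step route does close.

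Your argument instead iterates the drift to $P^nV\le\rho^nV+b/(1-\rho)$ and uses $P^n$-invariance on the bounded $V_M$. You then send $n\to\infty$ by bounded convergence and $M\to\infty$ by monotone convergence. Finiteness of $V$ enters through $\rho^nV\to0$ rather than through the Fatou limit.

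The paper's version is shorter: one use of the drift, one use of invariance, one limit. Yours is the standard ``iterate, then truncate'' argument. It costs an induction and two limits, but it rests on the same iterated bound the paper uses separately for existence.

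A small point: $PV_M\le(PV)\wedge M$ needs only monotonicity of $P$, exactly as you later use for $P^n$, not Jensen's inequality.
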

\begin{proof}
Set $V_N=V\wedge N$. Concavity and monotonicity of truncation give
$PV_N\le(PV)\wedge N\le(\rho V+b)\wedge N$.
Invariance for the bounded $V_N$ implies
\[
\int\{V\wedge N-(\rho V+b)\wedge N\}\,d\pi\le0.
\]
The integrand plus $b$ is nonnegative and converges pointwise to $(1-\rho)V$.
Fatou's lemma yields $(1-\rho)\pi V\le b$, without assuming integrability in advance.
\end{proof}
Apply Lemma~\ref{lem:truncated-foster} to~\eqref{eq:foster}; every invariant probability has $\E V(X)\le K/\kappa$.

Write $\Delta=X^+-X=a\{\xi(X,\zeta_1)-g(X)\}$.
Taylor's upper bound for $U$ and conditional centering imply
\[
P_aU(x)-U(x)\le -ac_U\norm{g(x)}^2+\tfrac12L_Ua^2\{c_0+(1+c_1)\norm{g(x)}^2\}.
\]
All terms are integrable: $U$ has quadratic growth, $\nabla U$ has at most linear growth, and the covariance and drift bounds control the second increment moment.
Stationarity therefore gives, for $2c_U>L_U(1+c_1)a$,
\begin{equation}\label{eq:energy-explicit}
\E\norm{g(X)}^2\le\frac{L_Uc_0a}{2c_U-L_U(1+c_1)a}.
\end{equation}
It also proves
\begin{equation}\label{eq:increment-second}
\E\norm{\Delta}^2=a^2\E\{\tr C(X)+\norm{g(X)}^2\}\le Ka^2.
\end{equation}

We next justify the global error bound used in the main text.
Since $A_j$ is invertible, Taylor expansion and its smallest singular value give $\norm{g(x)}\ge c_j\norm{x-m_j}$ in a sufficiently small ball about $m_j$.
Outside a large ball, Cauchy--Schwarz and dissipativity give
$\norm{g(x)}\ge\kappa\norm{x}-b/\norm{x}\ge(\kappa/2)\norm{x}$.
There $\dist(x,\Mcal)\le\norm{x}+\max_j\norm{m_j}\le K\norm{x}$.
On the remaining compact set separated from every root, the continuous positive ratio $\norm{g(x)}/\dist(x,\Mcal)$ has a positive minimum.
Combining these three regions gives
\begin{equation}\label{eq:global-error}
\norm{g(x)}\ge c_g\dist(x,\Mcal)\qquad(x\in\R^d).
\end{equation}
The last estimate in~\eqref{eq:concentration} follows.
In particular, every measurable set at distance at least $r>0$ from $\Mcal$ has stationary mass at most $Ka/r^2$.

\subsection{Root continuity and stationary innovation tails}\label{app:stationary-tails}
The concentration estimates above control the mass away from the root set.
We next combine them with continuity at the roots to control large innovations under stationary sampling.
The argument precedes, and does not use, uniform integrability of the rescaled displacement.

\begin{lemma}[Innovation laws near the roots]\label{lem:root-noise-continuity}
Under Assumption~\ref{ass:standing}, as $x\to m_j$, the law of $\xi(x,\zeta_1)$ converges weakly to that of $\xi(m_j,\zeta_1)$ and its second moment converges.
In particular,
\begin{equation}\label{eq:shrinking-noise-tails}
\lim_{R\to\infty}\lim_{r\downarrow0}
\sup_{\dist(x,\Mcal)\le r}
\E[\norm{\xi(x,\zeta_1)}^2\one\{\norm{\xi(x,\zeta_1)}>R\}]=0.
\end{equation}
\end{lemma}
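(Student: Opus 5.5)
The plan is to extract weak continuity of the innovation law from the Feller property, combine it with the assumed continuity of $C$ at the roots, and then convert ``weak convergence plus convergence of second moments'' into uniform tail control through a continuous truncation. Assumption~\ref{ass:standing} gives only $C(x)\to C_j$, not continuity of $x\mapsto\xi(x,\zeta_1)$, so weak convergence has to come from $P_a$. Fix one admissible step $a_0$ for which $P_{a_0}$ is Feller. Then for every bounded continuous $f$, the map
\[
x\mapsto \E f\bigl(x-a_0g(x)+a_0\xi(x,\zeta_1)\bigr)
\]
is continuous, so the law of $Y_x:=x-a_0g(x)+a_0\xi(x,\zeta_1)$ is weakly continuous in $x$. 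Because $g$ is continuous, $\xi(x,\zeta_1)=(Y_x-x+a_0g(x))/a_0$ is an affine image of $Y_x$ whose coefficients converge as $x\to m_j$. A Slutsky-type argument (or directly the continuous mapping theorem with converging affine maps) then shows that $\Law(\xi(x,\zeta_1))\weak\Law(\xi(m_j,\zeta_1))$. Convergence of the second moments is immediate from $\E\norm{\xi(x,\zeta_1)}^2=\tr C(x)\to\tr C_j$. This proves the first claim.

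For~\eqref{eq:shrinking-noise-tails}, take a continuous $\theta:[0,\infty)\to[0,1]$ with $\theta=1$ on $[0,1/2]$ and $\theta=0$ on $[1,\infty)$. Set $\psi_R(y)=\norm{y}^2\theta(\norm y/R)$, which is bounded and continuous. Pointwise,
\[
\norm y^2\one\{\norm y>R\}\le\norm y^2-\psi_R(y)\le\norm y^2\one\{\norm y>R/2\}.
\]
Write $\xi_j=\xi(m_j,\zeta_1)$. For $x\to m_j$, weak convergence gives $\E\psi_R(\xi(x,\zeta_1))\to\E\psi_R(\xi_j)$, and second-moment convergence then gives
\[
\limsup_{x\to m_j}\E[\norm{\xi(x,\zeta_1)}^2\one\{\norm{\xi(x,\zeta_1)}>R\}]\le\tr C_j-\E\psi_R(\xi_j)\le\E[\norm{\xi_j}^2\one\{\norm{\xi_j}>R/2\}].
\]
For $r$ smaller than half the minimal root separation, the set $\{\dist(x,\Mcal)\le r\}$ is the disjoint union of the closed balls $\bar B(m_j,r)$. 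Since there are finitely many roots, this gives
\[
\lim_{r\downarrow0}\sup_{\dist(x,\Mcal)\le r}\E[\norm{\xi(x,\zeta_1)}^2\one\{\norm{\xi(x,\zeta_1)}>R\}]\le\max_j\E[\norm{\xi_j}^2\one\{\norm{\xi_j}>R/2\}].
\]
The limit in $r$ exists because the supremum is monotone in $r$. Finally, $\E\norm{\xi_j}^2=\tr C_j<\infty$, so dominated convergence sends the right side to $0$ as $R\to\infty$.

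The main obstacle I expect is the first step: the statement claims weak convergence although the assumption only lists covariance continuity. The Feller hypothesis has to be used as the source of weak continuity, and one must check that it is needed at only a single fixed step $a_0$. Once weak convergence is available, the remaining steps are the standard equivalence between convergence of second moments and uniform integrability, applied uniformly over shrinking neighborhoods of the finitely many roots.
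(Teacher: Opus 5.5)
Your proposal is correct and follows essentially the paper's argument. Weak continuity of the innovation law comes from the Feller property at one fixed step, via the converging affine maps $y\mapsto(y-x)/a_0+g(x)$. The second moments converge by continuity of $C$. A bounded continuous truncation then turns weak plus second-moment convergence into the uniform tail bound, maximized over the finitely many roots.

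The differences are cosmetic. The paper truncates with $\min\{\norm u^2,R^2\}$ and the inequality $\norm u^2\one\{\norm u>\sqrt2R\}\le2(\norm u^2-R^2)_+$ rather than your cutoff $\psi_R$. It also notes that a $C_0$-Feller convention suffices, since vague convergence to a probability limit is already weak convergence.
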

\begin{proof}
Fix a sufficiently small $a_*>0$ for which $P_{a_*}$ is Feller.
For any sequence $x_n\to m_j$, the next-state laws $P_{a_*}(x_n,\cdot)$ converge weakly to $P_{a_*}(m_j,\cdot)$.
If Feller is defined on $C_0$, it first gives vague convergence; the probability limit then gives weak convergence and tightness.
The maps
\[
F_n(y)=(y-x_n)/a_*+g(x_n)
\]
converge uniformly on compact sets to $F(y)=(y-m_j)/a_*+g(m_j)$.
Tightness and this local uniform convergence imply convergence of the image laws under $F_n$ to the image law under $F$.
These are precisely the innovation laws, because the mechanism $\xi$ is independent of the step size.
Continuity of $C$ at $m_j$ gives convergence of their second moments to $\tr C_j$.

For completeness, set $h_R(u)=\min\{\norm u^2,R^2\}$.
This is bounded and continuous, and
\[
\E[\norm\xi^2\one\{\norm\xi>\sqrt2R\}]
\le2\{\tr C(x)-\E h_R(\xi(x,\zeta_1))\}.
\]
Both terms in braces are continuous at $m_j$.
Their difference there is $\E(\norm{\xi(m_j,\zeta_1)}^2-R^2)_+$, which tends to zero as $R\to\infty$.
Taking the finite maximum over roots proves~\eqref{eq:shrinking-noise-tails}.
The order of the two limits is part of the statement.
\end{proof}

\begin{lemma}[Stationary innovation and increment tails]\label{lem:stationary-noise-tails}\label{lem:increment-tail}
Under Assumption~\ref{ass:standing},
\begin{equation}\label{eq:stationary-noise-ui}
\lim_{R\to\infty}\limsup_{a\downarrow0}\sup_{\pi\in\mathcal I_a}
\E_\pi[\norm{\xi(X,\zeta_1)}^2\one\{\norm{\xi(X,\zeta_1)}>R\}]=0.
\end{equation}
The same assertion holds with $Y_a=\xi(X,\zeta_1)-g(X)$ in place of $\xi(X,\zeta_1)$.
Consequently, for each fixed $\rho>0$ and $\Delta=aY_a$,
\begin{equation}\label{eq:increment-tail-refined}
\E[\norm\Delta^2\one\{\norm\Delta>\rho\}]=o(a^2)
\end{equation}
uniformly over invariant-law selections.
\end{lemma}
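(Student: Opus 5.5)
Split the stationary expectation according to whether $X$ is near the root set. Fix a radius $r>0$ and an innovation threshold $R$. On $\{\dist(X,\Mcal)\le r\}$, condition on $X=x$ (independent of $\zeta_1$ under stationarity). Bound the inner expectation by
\[
\sup_{\dist(x,\Mcal)\le r}\E[\norm{\xi(x,\zeta_1)}^2\one\{\norm{\xi(x,\zeta_1)}>R\}].
\]
By Lemma~\ref{lem:root-noise-continuity}, this supremum can be made small by first choosing $R$ large and then $r$ small. For the complementary event, use the trivial bound by the full conditional second moment $\tr C(X)\le c_0+c_1\norm{g(X)}^2$ from \eqref{eq:cov-growth}.

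The far part is controlled by Theorem~\ref{thm:concentration} and its consequences. For fixed $r$, the mass bound gives $\Pp(\dist(X,\Mcal)>r)\le Ka/r^2$, and \eqref{eq:concentration} gives $\E\norm{g(X)}^2\le Ka$. Hence
\[
\E[\tr C(X)\one\{\dist(X,\Mcal)>r\}]\le c_0Ka/r^2+c_1Ka,
\]
which tends to $0$ as $a\downarrow0$, uniformly over $\pi\in\mathcal I_a$. Taking $\limsup_a$ for fixed $(R,r)$ leaves only the near-root term. Letting $r\downarrow0$ and then $R\to\infty$ proves \eqref{eq:stationary-noise-ui}. The only delicate point is this order of limits: the supremum in Lemma~\ref{lem:root-noise-continuity} requires $r\downarrow0$ before $R\to\infty$. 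Since the left side of \eqref{eq:stationary-noise-ui} is monotone in $R$ and does not depend on $r$, choosing $r=r(R)$ is legitimate.

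For $Y_a=\xi-g(X)$, use the elementary bound
\[
\norm{u+v}^2\one\{\norm{u+v}>R\}\le 4\norm u^2\one\{\norm u>R/2\}+4\norm v^2.
\]
The innovation part is handled as above. The drift part satisfies $\E\norm{g(X)}^2\le Ka\to0$ uniformly, so it vanishes in the $\limsup_a$.

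Finally, $\Delta=aY_a$, so
\[
\E[\norm\Delta^2\one\{\norm\Delta>\rho\}]=a^2\E[\norm{Y_a}^2\one\{\norm{Y_a}>\rho/a\}].
\]
For each fixed $R$, once $a$ is small we have $\rho/a\ge R$. Thus $a^{-2}$ times the left side is eventually bounded by the $R$-tail of $Y_a$, uniformly in $\pi$. Taking $\limsup_a$ and then $R\to\infty$ gives \eqref{eq:increment-tail-refined}.
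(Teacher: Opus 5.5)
Your proof is correct and follows the paper's argument step for step. It uses the same split at $\dist(X,\Mcal)\le r$, with Lemma~\ref{lem:root-noise-continuity} near the roots and $\tr C\le c_0+c_1\norm{g}^2$ plus concentration away from them, the same limit order ($\limsup_a$, then $r\downarrow0$, then $R\to\infty$), and the same rescaling at threshold $\rho/a$ for $\Delta$; your inequality for $Y_a$ merely drops the indicator on the drift term, which is harmless since $\E\norm{g(X)}^2=O(a)$ uniformly.
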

\begin{proof}
Let $K_r=\{x:\dist(x,\Mcal)\le r\}$.
The covariance bound and concentration imply
\begin{align*}
\E_\pi[\norm\xi^2\one_{K_r^c}(X)]
&=\E_\pi[\tr C(X)\one_{K_r^c}(X)]\\
&\le c_0\pi(K_r^c)+c_1\E_\pi\norm{g(X)}^2
\le Ka/r^2+Ka.
\end{align*}
Inside $K_r$, the tail expectation is bounded by the supremum in~\eqref{eq:shrinking-noise-tails}.
For fixed $R,r$, take $\limsup_{a\downarrow0}\sup_{\pi\in\mathcal I_a}$, then send $r\downarrow0$, and finally $R\to\infty$.
This proves~\eqref{eq:stationary-noise-ui} in its stated order.

For the increment, use the pointwise inequality
\begin{align*}
\norm{u-v}^2\one\{\norm{u-v}>R\}
\le{}&4\norm u^2\one\{\norm u>R/2\}\\
&+4\norm v^2\one\{\norm v>R/2\}.
\end{align*}
The innovation term has just been controlled, and the drift contribution is bounded by $4\E\norm{g(X)}^2=O(a)$.
Thus the same asymptotic tail assertion holds for $Y_a$.
At the diverging threshold $\rho/a$, it gives
$\E[\norm{Y_a}^2\one\{\norm{Y_a}>\rho/a\}]=o(1)$.
Multiplication by $a^2$ proves~\eqref{eq:increment-tail-refined}.
\end{proof}

These are tail statements under stationary sampling as $a\downarrow0$.
Appendix~\ref{app:root-continuity-example} shows that this stationary control is strictly weaker than uniform integrability on a fixed root neighborhood.

\section{Identification of finite stationary OU measures}\label{app:ou}

\begin{lemma}\label{lem:OU}
Let $C$ be symmetric nonnegative and let $\nu$ be a finite nonnegative measure with $\int\norm{z}^2\nu(\dd z)<\infty$.
Suppose
\begin{equation}\label{eq:OU-equation}
\int\mathcal L f\dd\nu=0\quad(f\in C_c^2(\R^d)),\qquad
\mathcal L f=-\ip{Az}{\nabla f}+\tfrac12\tr(CD^2f).
\end{equation}
Write $q=\nu(\R^d)$ and $M=\int zz^\trans\nu(\dd z)$.
Then $AM+MA^\trans=qC$.
If $A$ is positive stable, $\nu=q\N(0,S)$ with $AS+SA^\trans=C$.
If $A^\trans v=\lambda v$, $\Re\lambda<0$ and $v^*Cv>0$, then $q=0$.
\end{lemma}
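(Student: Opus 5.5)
We test the stationarity identity~\eqref{eq:OU-equation} against quadratic and Fourier-type functions, approximating them by $C_c^2$ functions through a cutoff. The second moment of $\nu$ controls the error.

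\textbf{Step 1 (the moment identity).} Fix $\chi\in C_c^\infty$ with $\chi=1$ on the unit ball, and set $\chi_R(z)=\chi(z/R)$. Let $f_R(z)=z_kz_l\chi_R(z)$. Then $\mathcal Lf_R=\chi_R\,\mathcal L(z_kz_l)+E_R$. The error $E_R$ collects terms with a derivative on $\chi_R$. Each such term is bounded by $K\norm z^2$ and supported on $\{\norm z\ge R\}$:
\begin{itemize}
\item $\ip{Az}{z_kz_l\nabla\chi_R}$ is of size $\norm z^3/R\lesssim\norm z^2$ there;
\item the cross terms of $\tr(CD^2f_R)$ are of size $\norm z/R$;
\item the term $z_kz_lD^2\chi_R$ is of size $\norm z^2/R^2$.
\end{itemize}
Since $\int\norm z^2\dd\nu<\infty$, dominated convergence gives $\int E_R\dd\nu\to0$. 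Next, $\mathcal L(z_kz_l)=-(Az)_kz_l-z_k(Az)_l+C_{kl}$, which is also $O(1+\norm z^2)$. Letting $R\to\infty$ yields $-(AM)_{kl}-(MA^\trans)_{kl}+qC_{kl}=0$, that is, $AM+MA^\trans=qC$. I would expect this truncation bookkeeping to be routine.

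\textbf{Step 2 (positive stable $A$).} If $q=0$ then $\nu=0$ and there is nothing to prove, so assume $q>0$ and normalize $\mu=\nu/q$. Apply the same cutoff argument to $f(z)=e^{i\ip{u}{z}}$, whose derivatives are bounded by $1$ and $|u|$. With $\phi(u)=\int e^{i\ip uz}\dd\mu$, this gives the first-order PDE
\[
-\ip{A^\trans u}{\nabla_u\phi}-\tfrac12 u^\trans Cu\,\phi=0 .
\]
The term $\ip{Az}{\nabla f}$ produces $u\cdot Az\,e^{i\ip uz}$, and $\int |z|\dd\mu<\infty$ justifies the limit and the differentiability of $\phi$. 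Along the characteristics $u(t)=e^{-A^\trans t}u_0$, which tend to $0$ because $A$ is positive stable, we have
\[
\frac{d}{dt}\phi(u(t))=-\ip{A^\trans u}{\nabla\phi}=\tfrac12 u^\trans Cu\,\phi .
\]
Integrating from $0$ to $\infty$ and using $\phi(0)=1$ gives $\phi(u_0)=\exp(-\tfrac12 u_0^\trans Su_0)$, where $S=\int_0^\infty e^{-At}Ce^{-A^\trans t}\dd t$. This $S$ is the unique solution of $AS+SA^\trans=C$. Hence $\nu=q\N(0,S)$; alternatively, Step 1 together with uniqueness of the Lyapunov solution identifies $M=qS$ directly. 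I expect this step to be the main obstacle, specifically justifying the characteristic ODE, which needs only $C^1$ regularity of $\phi$ from the first moment. A fallback is a duality argument with the OU semigroup, $\int P_tf\dd\nu=\int f\dd\nu$ with $P_tf\to f(\cdot)$ integrated against the Gaussian.

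\textbf{Step 3 (excited unstable eigenvector).} Take $A^\trans v=\lambda v$ with $\Re\lambda<0$, and pair the Step 1 identity with $v$: compute $v^*(AM+MA^\trans)v=q\,v^*Cv$. Since $v^*A=(A^\trans v)^*$ up to the transpose convention, we have $v^*A=\bar\lambda v^*$, and similarly $A^\trans v=\lambda v$. The left side therefore equals $(\bar\lambda+\lambda)v^*Mv=2\Re\lambda\,v^*Mv$, which is $\le0$ because $M\succeq0$. The right side is $q\,v^*Cv\ge0$. Hence both sides vanish, and $v^*Cv>0$ forces $q=0$. The only care needed is the complex-conjugate bookkeeping, using that $M$ and $C$ are real symmetric.
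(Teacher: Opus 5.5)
Your proposal is correct and follows the paper's proof essentially step for step. A cutoff applied to $z_kz_l$ yields $AM+MA^\trans=qC$, and pairing with the left eigenvector gives $2\Re\lambda\,v^*Mv=q\,v^*Cv$. A cutoff applied to $e^{i\langle u,z\rangle}$ gives the first-order equation for the characteristic function, which is solved as a scalar linear ODE along $u(t)=e^{-A^\trans t}u_0\to0$.

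The only difference is cosmetic: you normalize by $q$ after disposing of $q=0$, whereas the paper keeps $\phi(0)=q$ throughout. Note also that your aside ``$M=qS$'' identifies only the second moment, not the law, so the Fourier argument is the one doing the work.
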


\begin{proof}
Choose a smooth cutoff $\psi_R$ equal to one on $B(0,R)$ and zero outside $B(0,2R)$, with derivative bounds $\norm{D^k\psi_R}_\infty\le K R^{-k}$ for $k=1,2$.
Apply the stationary equation to $z_rz_s\psi_R(z)$.
The additional derivative terms are supported on $R\le\norm{z}\le2R$ and bounded by $K(1+\norm{z}^2)$ there, uniformly for $R\ge1$.
Their integrals vanish as $R\to\infty$ by the finite second moment.
The remaining terms converge by dominated convergence and give $AM+MA^\trans=qC$.
Hence
\[
2\Re\lambda\,v^*Mv=qv^*Cv.
\]
The left side is nonpositive and the right side is strictly positive if $q>0$, proving the last assertion, including complex eigendirections.

For the stable case, define $\phi(t)=\int e^{it^\trans z}\nu(\dd z)$.
Apply the stationary equation to the real and imaginary parts of $e^{it^\trans z}\psi_R(z)$.
Cutoff terms are bounded by $K_t(1+\norm{z})$ on the same annulus; passage to the limit is again justified.
Differentiation of $\phi$ is valid by its finite first moment, and the equation becomes
\[
(A^\trans t)\cdot\nabla\phi(t)=-\tfrac12t^\trans Ct\,\phi(t).
\]
For $t_s=e^{-A^\trans s}t$, it follows that
$\frac{\mathrm d}{\mathrm ds}\phi(t_s)=\tfrac12t_s^\trans Ct_s\,\phi(t_s)$.
Solving this scalar linear equation, without dividing by $\phi$, gives
\[
\phi(t)=\phi(t_s)\exp\left\{-\tfrac12\int_0^s t^\trans e^{-Au}Ce^{-A^\trans u}t\dd u\right\}.
\]
Stability implies $t_s\to0$ and $\phi(t_s)\to q$.
Thus $\phi(t)=q\exp(-t^\trans St/2)$.
Uniqueness of Fourier transforms of finite measures completes the proof; no nondegeneracy of $C$ is required.
\end{proof}

\begin{lemma}[Hyperbolic OU classification]\label{lem:hyperbolic-OU}
Let $A$ be hyperbolic and let $\nu$ satisfy Lemma~\ref{lem:OU}, with mass $q$.
Define $P^s,P^u$ and $\bar S$ as in~\eqref{eq:all-covariance}.
If $q>0$, then $P^uC(P^u)^T=0$ and $\nu=q\N(0,\bar S)$.
Conversely, this Gaussian is stationary whenever $P^uC(P^u)^T=0$.
In particular, $P^uC(P^u)^T\ne0$ forces $q=0$.
\end{lemma}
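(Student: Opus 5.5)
Work in coordinates adapted to the real spectral splitting $\R^d=E^s\oplus E^u$, where $E^s=\operatorname{range}P^s$ and $E^u=\operatorname{range}P^u$. Both subspaces are $A$-invariant and $A$ commutes with $P^s,P^u$. The plan has three steps:
\begin{enumerate}
\item show that the unstable marginal of $\nu$ is degenerate and that this forces $P^uC(P^u)^T=0$;
\item identify the stable marginal by reusing the stable case of Lemma~\ref{lem:OU};
\item check the converse by direct computation.
\end{enumerate}

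\textbf{Unstable block.} Apply the moment identity of Lemma~\ref{lem:OU}, $AM+MA^T=qC$, and sandwich it by $P^u$ on both sides. This gives $A_uM_u+M_uA_u^T=qC_u$, where $A_u=A|_{E^u}$ has spectrum in the open left half plane, $M_u=P^uM(P^u)^T\succeq0$ and $C_u=P^uC(P^u)^T\succeq0$. Since $-A_u$ is positive stable, the Lyapunov equation has the unique solution
\[
M_u=-q\int_0^\infty e^{A_ut}C_ue^{A_u^Tt}\dd t\preceq0 .
\]
Because also $M_u\succeq0$, we get $M_u=0$. Then $qC_u=0$, so $q>0$ forces $P^uC(P^u)^T=0$. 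This route handles defective blocks without needing eigenvectors. Moreover $M_u=0$ means $\int\norm{P^uz}^2\dd\nu=0$, so $\nu$ is carried by $E^s$.

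\textbf{Stable block.} Since $\nu$ lives on $E^s$, we test the generator with functions $f(z)=h(P^sz)$ for $h\in C_c^2(E^s)$. These are not compactly supported on $\R^d$, so we multiply by a cutoff in the $P^u$ coordinates; the cutoff is harmless because the measure lives on $E^s$. The drift term becomes $\ip{A_sy}{\nabla h(y)}$ with $A_s=A|_{E^s}$, and the diffusion term becomes $\tfrac12\tr(C_sD^2h)$ with $C_s=P^sC(P^s)^T$. We identify $E^s$ with $\R^{d_s}$ through any basis; nonorthogonality of $P^s$ only changes the inner product, and the Fourier argument does not need orthogonality. Lemma~\ref{lem:OU}, stable case, then applies to the pushforward of $\nu$ with positive stable $A_s$. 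It gives $q\N(0,S_s)$ with $A_sS_s+S_sA_s^T=C_s$, and pushing forward into $\R^d$ yields $\N(0,\bar S)$ by the integral formula~\eqref{eq:all-covariance}. If $E^s=\{0\}$, the measure is $q\delta_0$.

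\textbf{Converse.} Assume $P^uC(P^u)^T=0$. The Gaussian $\N(0,\bar S)$ is stationary exactly when $A\bar S+\bar SA^T=C$, since for a Gaussian measure the generator condition reduces to this covariance identity, as in the characteristic-function computation. The integral definition gives $A\bar S+\bar SA^T=P^sC(P^s)^T$. Write $C=P^sC(P^s)^T+P^sC(P^u)^T+P^uC(P^s)^T+P^uC(P^u)^T$. By positive semidefiniteness, $P^uC(P^u)^T=0$ kills the cross terms, since $\operatorname{range}(CP^{uT})$ lies in the appropriate kernel: $C^{1/2}(P^u)^T=0$. Hence $C=P^sC(P^s)^T$, and the identity holds.

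\textbf{Main obstacle.} The step requiring care is the stable-block reduction with a nonorthogonal projector and noncompact test functions. The cutoff bookkeeping has to be done rigorously there, using the finite second moment as in the proof of Lemma~\ref{lem:OU}.
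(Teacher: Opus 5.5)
Your proposal is correct and follows essentially the paper's route. You split along the generalized stable/unstable subspaces and apply a Lyapunov sign argument to the unstable block of $AM+MA^T=qC$. You use the explicit integral solution where the paper instead pairs with $H$ solving $A_u^TH+HA_u=-I$. This gives $M_u=0$, support on $E^s$ and $P^uC(P^u)^T=0$; you then reduce the stable marginal to Lemma~\ref{lem:OU} with a cutoff in the unstable coordinates, and check the converse via $C^{1/2}(P^u)^T=0$. The step you flag as the main obstacle is in fact trivial. The cutoff equals one near $E^s$, where $\nu$ is concentrated, so its derivatives vanish on the support and no second-moment limiting argument is needed.
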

\begin{proof}
Choose a real invertible matrix $T$ such that
$T^{-1}AT=\diag(A_s,A_u)$, with spectra in the positive and negative half-planes, respectively.
This is a splitting into generalized invariant subspaces and does not presume diagonalizability.
Transform the measure by $T^{-1}$ and set
$\widetilde C=T^{-1}CT^{-T}$ and $\widetilde M=T^{-1}MT^{-T}$.
These are congruences, so both matrices remain nonnegative.
The moment identity in Lemma~\ref{lem:OU} gives
\[
A_u\widetilde M_{uu}+\widetilde M_{uu}A_u^T=q\widetilde C_{uu}.
\]
Since $A_u$ has negative-real-part spectrum,
$H=\int_0^\infty e^{A_u^Tt}e^{A_ut}\dd t$ is positive definite and satisfies $A_u^TH+HA_u=-I$.
Taking a trace pairing yields
\[
-\tr\widetilde M_{uu}=q\tr(H\widetilde C_{uu}).
\]
For $q>0$, the opposite signs and
$\tr(H\widetilde C_{uu})\ge\lambda_{\min}(H)\tr\widetilde C_{uu}$ imply
$\widetilde M_{uu}=\widetilde C_{uu}=0$.
Nonnegativity of $\widetilde C$ forces $\widetilde C_{su}=0$: evaluate its quadratic form at $(x,ty)$ for every real $t$.
Also $\int\norm{z_u}^2\dd\widetilde\nu=0$, so $\widetilde\nu$ is supported on $z_u=0$.

To obtain the stationary equation for its stable marginal, multiply a compactly supported stable test by a compact unstable cutoff equal to one near zero.
The unstable derivative terms vanish on the support of the measure.
The stable case of Lemma~\ref{lem:OU} identifies this marginal as the Gaussian with covariance
$\int_0^\infty e^{-A_st}\widetilde C_{ss}e^{-A_s^Tt}\dd t$.
Transforming back gives $q\N(0,\bar S)$.
If the stable subspace is empty, the measure is $q\delta_0$; if the unstable subspace is empty, the conclusion is precisely the stable OU lemma.
If $q=0$, nonnegativity gives $\nu=0$.

Finally, positivity and $P^uC(P^u)^T=0$ imply $P^uC=C(P^u)^T=0$.
Thus $P^sC(P^s)^T=C$, and the Gaussian supported on the stable subspace indeed solves the full stationary equation.
Without this compatibility condition, $A\bar S+\bar SA^T=P^sC(P^s)^T$ still holds, but its right side need not equal $C$.
\end{proof}

The projection condition can exclude roots not covered by eigenvector excitation.
For $A=\left(\begin{smallmatrix}-1&1\\0&-1\end{smallmatrix}\right)$ and $C=\diag(1,0)$,
every left eigenvector is proportional to $(0,1)^T$, whereas $P^uC(P^u)^T=C\ne0$.
The classification therefore excludes positive local mass even though~\eqref{eq:eigen-excitation} fails.
This matrix example isolates the local excitation criterion; global confinement is supplied separately by Assumption~\ref{ass:standing}.

\section{Localization at all roots}\label{app:mixture}
\begin{lemma}[All-root local limits]\label{lem:all-root}
Under Assumption~\ref{ass:standing}, every sequence $a\downarrow0$ and every invariant-law selection admit a subsequence such that
\begin{equation}\label{eq:all-root-limits}
(J_a,Z_a)\weak\sum_{j=1}^{\ell}\delta_j\otimes\nu_j,
\quad q_j:=\nu_j(\R^d)=\lim w_{a,j},\quad\sum_jq_j=1.
\end{equation}
The measures have finite second moments $M_j=\int zz^T\nu_j(dz)$ and satisfy
$A_jM_j+M_jA_j^T=q_jC_j$. At every root, $\nu_j=q_j\bar\Gamma_j$, with $q_j=0$ whenever $P_j^uC_j(P_j^u)^T\ne0$.
\end{lemma}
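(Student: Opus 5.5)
Tightness of $(J_a,Z_a)$ from Theorem~\ref{thm:concentration} and Prokhorov give a subsequence along which $w_{a,j}\to q_j$ and the subprobabilities $\nu_{a,j}(\cdot)=\E[\one_{\mathcal C_j}(X)\one\{Z_a\in\cdot\}]$ converge weakly to limits $\nu_j$ with $\nu_j(\R^d)=q_j$ and $\sum_jq_j=1$. Fatou's lemma and $\E\norm{Z_a}^2\le K$ then show each $\nu_j$ has a finite second moment. The substantive step is to show that each $\nu_j$ satisfies the OU stationarity equation~\eqref{eq:OU-equation} with $A=A_j$ and $C=C_j$. Once that holds, Lemma~\ref{lem:OU} supplies $A_jM_j+M_jA_j^T=q_jC_j$, and Lemma~\ref{lem:hyperbolic-OU} gives $\nu_j=q_j\bar\Gamma_j$, with $q_j=0$ whenever $P_j^uC_j(P_j^u)^T\ne0$.

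To derive the generator equation, fix $f\in C_c^2(\R^d)$ and a root $j$. Let $r_0$ be smaller than half the minimal root separation, and let $\chi$ be a smooth cutoff equal to one on $B(m_j,r_0)$ and vanishing outside $B(m_j,2r_0)$. Apply stationarity, $\E[\phi(X^+)-\phi(X)]=0$, to the bounded function $\phi_a(x)=\chi(x)f((x-m_j)/\sqrt a)$. For small $a$, $\phi_a$ is supported in $\{\norm{x-m_j}\le R\sqrt a\}$, so $\chi\equiv1$ there. Write $X^+-X=\Delta=a(\xi-g(X))$. On the event $\norm\Delta\le\rho$, with $\rho<r_0/2$, a second-order Taylor expansion of $f$ at $Z_a$ with step $\Delta/\sqrt a$ gives
\[
\sqrt a\,\ip{-g(X)}{\nabla f(Z_a)}+\tfrac a2\,\E\bigl[\tr\bigl(D^2f(Z_a)(\xi-g)(\xi-g)^T\bigr)\mid X\bigr]+\text{rem}.
\]
The event $\norm\Delta>\rho$ contributes at most $2\norm f_\infty\Pp(\norm\Delta>\rho)\le K\rho^{-2}\E[\norm\Delta^2\one\{\norm\Delta>\rho\}]=o(a^2)$ by Lemma~\ref{lem:stationary-noise-tails}. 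Near $m_j$ we have $g(X)=A_j\sqrt a Z_a+O(a\norm{Z_a}^2)$, and $Z_a$ is bounded on the support, so the drift term equals $a\,\ip{-A_jZ_a}{\nabla f(Z_a)}+O(a^{3/2})$. In the quadratic term, replace $(\xi-g)(\xi-g)^T$ by $C(X)$, since the cross terms vanish by centering and the $gg^T$ term is $O(a)$ on the support. By root continuity, $C(X)\to C_j$ uniformly on $\norm{X-m_j}\le R\sqrt a$. Dividing by $a$ therefore gives $\E[\one\{\cdot\}\mathcal L_jf(Z_a)]+o(1)=0$, where $\mathcal L_j f$ is evaluated at $Z_a$ on the relevant localized piece. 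Because $\mathcal L_jf$ is bounded and continuous, weak convergence yields $\int\mathcal L_jf\,d\nu_j=0$.

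The main obstacle is the Taylor remainder, because only second innovation moments are available. I would use the $C^2$ modulus of $D^2f$ together with a split at $\norm{Y_a}\le R'$. The small-jump part of the remainder is bounded by $a\,\omega(\sqrt aR')\E\norm{Y_a}^2$, which is $o(a)$. The large-jump part is bounded by $a\,\E[\norm{Y_a}^2\one\{\norm{Y_a}>R'\}]$, and Lemma~\ref{lem:stationary-noise-tails} makes it small uniformly after sending $R'\to\infty$.

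A subtlety is that the localization must respect Voronoi labels: for small $a$, $\{\norm{X-m_j}\le R\sqrt a\}\subset\mathcal C_j$, so the localized law agrees with $\nu_{a,j}$ on $\supp f$. Mass of $\nu_{a,j}$ far from $m_j$ in $\mathcal C_j$ has $\norm{Z_a}$ large and does not affect the compactly supported test functions. Finally, to show that no mass escapes in the limit, so that $\nu_j(\R^d)=q_j$ equals $\lim w_{a,j}$ rather than merely a lower bound, I would use tightness of $Z_a$ itself, which follows from $\E\norm{Z_a}^2\le K$.
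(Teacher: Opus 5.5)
Your proposal is correct and follows the paper's route. Both arguments apply stationarity to a scaled test function localized at $m_j$, control the second-order Taylor remainder through Lemma~\ref{lem:stationary-noise-tails}, linearize the drift, use covariance continuity, pass to weak limits, and identify the limits through Lemmas~\ref{lem:OU} and~\ref{lem:hyperbolic-OU}. The one real difference is localization. The paper weights by a fixed-radius cutoff $\chi_j$, so it needs the flux bound $\E|\chi_j(X^+)-\chi_j(X)|\le Ka^2$ and a final cutoff-to-cell replacement; it keeps those cutoff measures because it reuses them later with non-compactly supported Poisson-type tests. You instead note that for $f\in C_c^2$ the function $f((x-m_j)/\sqrt a)$ already lives in an $O(\sqrt a)$ ball inside $\mathcal C_j$, so your cutoff is identically one and both steps disappear. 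For the same reason your split on $\norm{\Delta}\le\rho$ is harmless but redundant.
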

\begin{lemma}[Marked weak approximation]\label{lem:weak}
Under Assumptions~\ref{ass:standing} and~\ref{ass:excitation},
\begin{equation}\label{eq:mixture}
\sum_{j\in\Ucal}w_{a,j}\to0,\qquad
d_{\BL}\!\left(\Law(X_{\Mcal},Z_a),\sum_{j\in\Scal}w_{a,j}(\delta_{m_j}\otimes\Gamma_j)\right)\to0.
\end{equation}
Along every subsequence with convergent weights,
\begin{equation}\label{eq:weak-limit}
(J_a,Z_a)\weak\sum_{j\in\Scal}w_j(\delta_j\otimes\Gamma_j).
\end{equation}
\end{lemma}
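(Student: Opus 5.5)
The plan is a compactness-and-contradiction argument built on Lemma~\ref{lem:all-root}, with Assumption~\ref{ass:excitation} removing all unstable components from the limits. Throughout, ``$\to0$'' is read uniformly over invariant-law selections, consistent with the convention of Section~\ref{sec:model}. Uniformity is obtained by the usual subsequence principle. Suppose, for instance, that $\sup_{\pi\in\mathcal I_a}\sum_{j\in\Ucal}w_{a,j}\not\to0$. Then there are $\varepsilon>0$, $a_n\downarrow0$ and $\pi_n\in\mathcal I_{a_n}$ with $\sum_{j\in\Ucal}w_{a_n,j}\ge\varepsilon$. The same construction, with the bounded-Lipschitz distance in place of the unstable mass, handles the second claim in~\eqref{eq:mixture}. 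In both cases it suffices to extract a further subsequence along which the offending quantity tends to zero.

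\textbf{Step 1 (unstable mass).} Apply Lemma~\ref{lem:all-root} to the sequence $(a_n,\pi_n)$. Along a subsequence, $(J_a,Z_a)\weak\sum_j\delta_j\otimes\nu_j$ with $q_j=\lim w_{a,j}$ and $\nu_j=q_j\bar\Gamma_j$. For $j\in\Ucal$, Assumption~\ref{ass:excitation} gives $P_j^uC_j(P_j^u)^T\ne0$, so the lemma forces $q_j=0$. Hence $\sum_{j\in\Ucal}w_{a,j}\to\sum_{j\in\Ucal}q_j=0$ along the subsequence. Since $\ell$ is finite, this contradicts the choice of $(a_n,\pi_n)$ and proves the first claim in~\eqref{eq:mixture}. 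As a consequence, $s_a=\sum_{j\in\Scal}w_{a,j}\to1$ uniformly.

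\textbf{Step 2 (identification of the weak limit).} At a stable root, $P_j^s=I$, so $\bar\Gamma_j=\Gamma_j$. The subsequential limit from Step~1 is therefore $\sum_{j\in\Scal}q_j(\delta_j\otimes\Gamma_j)$, where $q_j$ are the limiting weights. This is exactly~\eqref{eq:weak-limit} along any subsequence on which the weights converge. To see this, apply the same extraction to such a subsequence. Any further subsequence yields this same limit, because the limiting weights are already determined, so the whole subsequence converges.

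\textbf{Step 3 (bounded-Lipschitz statement).} The map $j\mapsto m_j$ is injective on a finite set, so it is continuous. Weak convergence of $(J_a,Z_a)$ therefore transfers to $(X_{\Mcal},Z_a)$ by the continuous mapping theorem. Suppose $d_{\BL}$ stays above some $\varepsilon$ along $(a_n,\pi_n)$, and extract as in Step~1. The law of $(X_{\Mcal},Z_a)$ then converges weakly to $\sum_{j\in\Scal}q_j(\delta_{m_j}\otimes\Gamma_j)$. The target measures $\sum_{j\in\Scal}w_{a,j}(\delta_{m_j}\otimes\Gamma_j)$ converge to the same limit, since $w_{a,j}\to q_j$ for each of the finitely many $j$ and $\Gamma_j$ is fixed. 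The bounded-Lipschitz distance is dominated by total variation, and the total-variation distance from the target to its limit is at most $\sum_j|w_{a,j}-q_j|\to0$. On the other side, $d_{\BL}$ metrizes weak convergence of probability measures. The remaining point is that the target has total mass $s_a$ rather than $1$, but $s_a\to1$ by Step~1, so this mass defect is absorbed in the same estimate. The triangle inequality then drives $d_{\BL}$ to zero, which is a contradiction.

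\textbf{Main obstacle.} The essential input, namely tightness and identification of the local limits including $q_j=0$ at excited saddles, is already contained in Lemmas~\ref{lem:all-root} and~\ref{lem:hyperbolic-OU}. The only delicate part of this proof is bookkeeping: keeping the uniformity over invariant laws honest through the contradiction-and-subsequence device, and handling the subprobability target with mass $s_a<1$. Both are routine once Step~1 has established $s_a\to1$.
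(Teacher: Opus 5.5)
Your proposal is correct and follows essentially the paper's argument. Lemma~\ref{lem:all-root} together with excitation forces $q_j=0$ at every unstable root, so the marked law and the stable comparison measure share the same subsequential weak limit; $d_{\BL}$ metrization and a contradiction-by-subsequence argument then give the uniform statement (injectivity of $j\mapsto m_j$ plays no role in its continuity, since any map on a finite discrete set is continuous, but this does not affect the argument).
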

We first prove Lemma~\ref{lem:all-root}, and then apply excitation to obtain Lemma~\ref{lem:weak}.
For finite measures define $d_{\BL}(\mu,\nu)=\sup_{\norm{h}_\infty+\operatorname{Lip}(h)\le1}|\mu h-\nu h|$. Choose $r>0$ sufficiently small that the closed balls $\overline B(m_j,4r)$ are disjoint and have their nearest root equal to $m_j$.
Choose $\chi_j\in C_c^\infty$ with $0\le\chi_j\le1$, equal to one on $B(m_j,r)$ and zero outside $B(m_j,2r)$.
Define the finite localized measure and its mass by
\begin{equation}\label{eq:localized}
\nu_{a,j}(D)=\E[\chi_j(X)\one\{(X-m_j)/\sqrt a\in D\}],\qquad q_{a,j}=\E\chi_j(X).
\end{equation}
Theorem~\ref{thm:concentration} gives
\begin{equation}\label{eq:cutoff-moments}
\int\norm{z}^2\nu_{a,j}(\dd z)\le K,\qquad
0\le w_{a,j}-q_{a,j}\le Ka.
\end{equation}
Indeed, on the cutoff support $\norm{X-m_j}=\dist(X,\Mcal)$; the difference between the cell indicator and cutoff is supported where $\dist(X,\Mcal)\ge r$.
These finite measures are uniformly tight.

\paragraph{Cutoff change.}
Let $D_j=\supp\nabla\chi_j$ and choose $\rho>0$ so the closed $\rho$-enlargement $D_j^\rho$ is compact and separated from all roots.
On $\{\norm{\Delta}\le\rho\}$ a nonzero cutoff change implies $X\in D_j^\rho$, by the fundamental theorem of calculus along the increment segment.
On this compact set,
$\E[\norm{\Delta}\mid X=x]\le a(\norm{g(x)}+\sqrt{\tr C(x)})\le Ka$.
The small-increment contribution to $\E|\chi_j(X^+)-\chi_j(X)|$ is at most
$Ka\pi_a(D_j^\rho)=O(a^2)$.
The large-increment contribution is at most
$\Pp(\norm{\Delta}>\rho)\le\rho^{-2}\E\norm{\Delta}^2=O(a^2)$.
Consequently,
\begin{equation}\label{eq:cutoff-change}
\E|\chi_j(X^+)-\chi_j(X)|\le Ka^2.
\end{equation}

\paragraph{Localized generator equation.}
Fix $f\in C_c^2$ and abbreviate $Z=(X-m_j)/\sqrt a$, $H=\Delta/\sqrt a$.
Stationarity of $\chi_j(x)f((x-m_j)/\sqrt a)$ and~\eqref{eq:cutoff-change} give
\begin{equation}\label{eq:local-stationarity}
a^{-1}\E\{\chi_j(X)[f(Z+H)-f(Z)]\}=O(a).
\end{equation}
Let $\omega_f$ be a bounded modulus of continuity of $D^2f$, tending to zero at zero.
Taylor's formula gives
\[
f(Z+H)-f(Z)=\nabla f(Z)^\trans H+\tfrac12H^\trans D^2f(Z)H+R,
\qquad |R|\le K\norm{H}^2\omega_f(\norm{H}).
\]
Write $Y_a=\xi(X,\zeta_1)-g(X)$, so $H=\sqrt aY_a$.
The divided Taylor remainder is bounded by
\[
a^{-1}\E[\chi_j|R|]\le K\E[\chi_j(X)\norm{Y_a}^2\omega_f(\sqrt a\norm{Y_a})].
\]
On $\norm{Y_a}\le R_0$, it tends to zero uniformly as $a\downarrow0$, using bounded second moments.
On the complement, boundedness of $\omega_f$ and Lemma~\ref{lem:stationary-noise-tails} control the expectation uniformly.
Let $a\downarrow0$ and then $R_0\to\infty$ to obtain $a^{-1}\E[\chi_j|R|]\to0$.

Conditional centering gives $\E[H\mid X]=-\sqrt a g(X)$ and
$\E[HH^\trans\mid X]=a\{C(X)+g(X)g(X)^\trans\}$.
The drift-squared contribution is $O(a)$ by~\eqref{eq:concentration}.
Local $C^2$ regularity and~\eqref{eq:cutoff-moments} give
\begin{equation}\label{eq:linearization}
\E\left[\chi_j(X)\norm{g(X)/\sqrt a-A_jZ}\right]
\le K\sqrt a\E[\chi_j(X)\norm{Z}^2]\le K\sqrt a.
\end{equation}
Finally, $C$ is bounded on the cutoff support by~\eqref{eq:cov-growth}.
Its continuity at $m_j$, together with concentration outside smaller root balls, implies
$\E[\chi_j(X)\norm{C(X)-C_j}]\to0$.
Substitution in~\eqref{eq:local-stationarity} yields
\[
\int\left[-\ip{A_jz}{\nabla f(z)}+\tfrac12\tr(C_jD^2f(z))\right]\nu_{a,j}(\dd z)\longrightarrow0.
\]

\paragraph{Identification and removal of subsequences.}
Along any sequence $a_n\downarrow0$, extract a common subsequence on which all finite measures $\nu_{a_n,j}$ converge weakly.
Their limits $\nu_j$ have finite second moment by lower semicontinuity, and their masses are the limits of $q_{a_n,j}$.
The integrand in the last display is bounded continuous because $f$ has compact support, so each limit satisfies~\eqref{eq:OU-equation}.
Lemma~\ref{lem:OU} gives the matrix identity for every root; Lemma~\ref{lem:hyperbolic-OU} identifies $\nu_j=q_j\bar\Gamma_j$.
The total lost cutoff mass is $O(a)$, so $\sum_jq_j=1$ and $q_j=\lim w_{a,j}$.
Replacing cutoffs by cells costs $O(a)$ for bounded tests, proving~\eqref{eq:all-root-limits} and Lemma~\ref{lem:all-root}.

Under excitation, Lemma~\ref{lem:hyperbolic-OU} gives $q_j=0$ for every unstable root.
The marked law and the stable comparison measure then have the same subsequential weak limit.
The bounded-Lipschitz metric metrizes weak convergence for these tight finite measures.
If~\eqref{eq:mixture} failed uniformly, a violating sequence of step sizes and invariant laws would have such a subsequence, a contradiction.
This proves Lemma~\ref{lem:weak}.

\section{Exact energy, uniform integrability and \texorpdfstring{$W_2$}{W2}}\label{app:energy}
Write $D=\langle\nabla U,g\rangle$ and $E_j=D^2U(m_j)$. The preceding concentration theorem also gives $\E\norm{Z_a}^2\le K$.
\begin{lemma}[Quadratic behavior of the descent dissipation]\label{lem:D}
At every root, $\nabla U(m_j)=0$, and, as $y\to0$,
\begin{equation}\label{eq:Dlocal}
 D(m_j+y)=y^TE_jA_jy+o(\norm y^2).
\end{equation}
Moreover $B_j=(E_jA_j+A_j^TE_j)/2\succeq c_UA_j^TA_j\succ0$.
\end{lemma}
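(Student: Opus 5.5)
The plan has three parts: show $\nabla U(m_j)=0$, Taylor-expand $D$ at $m_j$, and extract the lower bound on $B_j$ from the pointwise descent inequality. All three use only $U\in C^2$, $g\in C^2$, and $D(x)\ge c_U\norm{g(x)}^2$ from~\eqref{eq:lyapunov}.

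\emph{Vanishing gradient.} Write $x=m_j+y$. Since $g(m_j)=0$, we have $g(m_j+y)=A_jy+O(\norm y^2)$. Also $\nabla U(m_j+y)=\nabla U(m_j)+E_jy+o(\norm y)$, with $E_j$ symmetric. Substituting gives
\[
D(m_j+y)=\nabla U(m_j)^\trans A_jy+O(\norm y^2).
\]
The descent inequality forces $D\ge0$ near $m_j$. A linear form that stays nonnegative up to $O(\norm y^2)$ errors must vanish: replace $y$ by $\pm ty$ and let $t\downarrow0$. This gives $A_j^\trans\nabla U(m_j)=0$. Hyperbolicity makes $A_j$ invertible, so $\nabla U(m_j)=0$.

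\emph{Quadratic expansion.} With $\nabla U(m_j)=0$, the gradient expansion becomes $\nabla U(m_j+y)=E_jy+o(\norm y)$, which is first-order Taylor for the $C^1$ map $\nabla U$. Pairing with $g(m_j+y)=A_jy+O(\norm y^2)$ yields
\[
D(m_j+y)=y^\trans E_jA_jy+o(\norm y^2),
\]
because the cross terms are $O(\norm y)\cdot O(\norm y^2)$ and $o(\norm y)\cdot O(\norm y)$. This is~\eqref{eq:Dlocal}. Note that $y^\trans E_jA_jy=y^\trans B_jy$, since only the symmetric part contributes.

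\emph{Lower bound on $B_j$.} The descent inequality also gives $D(m_j+y)\ge c_U\norm{A_jy+O(\norm y^2)}^2=c_U\,y^\trans A_j^\trans A_jy+o(\norm y^2)$. Set $y=tv$ for a fixed unit vector $v$, divide by $t^2$, and let $t\downarrow0$. This gives $v^\trans B_jv\ge c_Uv^\trans A_j^\trans A_jv$, that is, $B_j\succeq c_UA_j^\trans A_j$. Invertibility of $A_j$ makes $A_j^\trans A_j\succ0$, and hence $B_j\succ0$.

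There is no serious obstacle. The only point needing care is the first step, where $\nabla U(m_j)=0$ comes from positivity of $D$ together with invertibility of $A_j$ rather than being assumed. After that, the remaining steps are bookkeeping of Taylor remainders, which need only $C^2$ regularity and no uniform bounds.
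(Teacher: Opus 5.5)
Your proposal is correct and follows the paper's proof essentially step for step. You get $A_j^\trans\nabla U(m_j)=0$ from nonnegativity of $D$ at its zero, which the paper phrases as $\nabla D(m_j)=0$ via the product rule; you then multiply the two first-order expansions, and finally apply the descent inequality along $m_j+tv$ and divide by $t^2$, exactly as the paper does.
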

\begin{proof}
Since $D\ge0$ and $D(m_j)=0$, differentiability gives $\nabla D(m_j)=0$. The product rule gives $\nabla D(m_j)=A_j^T\nabla U(m_j)$, because $g(m_j)=0$. Hyperbolicity implies invertibility of $A_j$, hence $\nabla U(m_j)=0$. Now $\nabla U(m_j+y)=E_jy+o(\norm y)$ and $g(m_j+y)=A_jy+O(\norm y^2)$. Multiplication proves \eqref{eq:Dlocal}. Applying \eqref{eq:lyapunov} at $m_j+ty$ and dividing by $t^2$ proves the matrix inequality. In particular this argument does not presume that $E_j$ is positive definite at unstable roots.
\end{proof}

\begin{lemma}[Exact stationary energy asymptotics]\label{lem:energy}
Under Assumption~\ref{ass:standing}, without excitation,
\begin{equation}\label{eq:energy}
 \frac{\E D(X)}a=\frac12\sum_{j=1}^{\ell}w_{a,j}\tr(E_jC(m_j))+o(1).
\end{equation}
The remainder is uniform over selected invariant laws.
\end{lemma}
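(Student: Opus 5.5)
We will use stationarity of the descent energy $U$, now in exact second-order form rather than as the one-sided bound behind \eqref{eq:energy-explicit}. Write $X^+=X+\Delta$ with $\Delta=a\{\xi(X,\zeta_1)-g(X)\}$. Taylor's formula with integral remainder gives
\[
U(X^+)-U(X)=\ip{\nabla U(X)}{\Delta}+\tfrac12\Delta^\trans D^2U(X)\Delta+R,
\]
with
\[
R=\int_0^1(1-s)\,\Delta^\trans\{D^2U(X+s\Delta)-D^2U(X)\}\Delta\dd s .
\]
All terms are integrable by the quadratic growth of $U$, the linear growth of $\nabla U$ and \eqref{eq:increment-second}. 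Stationarity and conditional centering then give the exact identity
\[
a\,\E D(X)=\tfrac12a^2\E\tr\{D^2U(X)(C(X)+g(X)g(X)^\trans)\}+\E R .
\]
After dividing by $a^2$, it remains to show three things uniformly over $\pi\in\mathcal I_a$:
\begin{itemize}
\item $\E\tr\{D^2U(X)g(X)g(X)^\trans\}=O(a)$;
\item $\E R=o(a^2)$;
\item $\E\tr\{D^2U(X)C(X)\}=\sum_jw_{a,j}\tr(E_jC_j)+o(1)$.
\end{itemize}

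The first item is immediate from $\norm{D^2U}_\op\le L_U$ and $\E\norm{g(X)}^2\le Ka$ in Theorem~\ref{thm:concentration}.

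For the remainder, fix $\rho>0$ and split on $\{\norm\Delta>\rho\}$. On that event, $|R|\le L_U\norm\Delta^2$, and \eqref{eq:increment-tail-refined} gives $o(a^2)$ uniformly. On $\{\norm\Delta\le\rho\}$, separate $X$ near the roots from $X$ away from them.
\begin{itemize}
\item If $\dist(X,\Mcal)\ge r$, bound $|R|\le L_U\norm\Delta^2$ and use $\E[\norm\Delta^2\mid X]\le a^2\{\tr C(X)+\norm{g(X)}^2\}$ together with \eqref{eq:cov-growth}. The contribution is at most $Ka^2\{\pi(\dist\ge r)+\E\norm{g(X)}^2\}\le K a^2(a/r^2+a)=o(a^2)$.
\item If $\dist(X,\Mcal)<r$, then $X$ and $X+s\Delta$ lie in $B(m_j,r+\rho)$. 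Let $\omega$ be the modulus of continuity of $D^2U$ on the compact neighbourhoods of the roots. Then $|R|\le\omega(2(r+\rho))\norm\Delta^2$, contributing at most $K\omega(2r+2\rho)a^2$.
\end{itemize}
Letting $a\downarrow0$, then $r,\rho\downarrow0$, gives $\E R=o(a^2)$. The order of limits is legitimate because $\rho$ enters \eqref{eq:increment-tail-refined} only as a fixed positive threshold.

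For the covariance term, split again by the Voronoi cells. On $\mathcal C_j\cap\{\norm{X-m_j}<r\}$, continuity of $D^2U$ and of $C$ at $m_j$ give $|\tr\{D^2U(X)C(X)\}-\tr(E_jC_j)|\le\eta(r)$ with $\eta(r)\to0$. Here we use that $C$ is bounded near the roots by \eqref{eq:cov-growth}. On $\{\dist(X,\Mcal)\ge r\}$, the integrand is bounded by $L_U\tr C(X)\le L_U(c_0+c_1\norm{g(X)}^2)$, so its expectation is $O(a/r^2+a)$. The same bound shows that replacing $\pi(\mathcal C_j\cap\{\norm{X-m_j}<r\})$ by $w_{a,j}$ costs $O(a/r^2)$. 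Sending $a\downarrow0$ and then $r\downarrow0$ gives the claim with $C(m_j)=C_j$. Combining the three items yields \eqref{eq:energy}.

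The main obstacle is the remainder $R$. Only second innovation moments are available, so a third-order Taylor bound is impossible. It is handled by splitting into large and small increments and using the uniform continuity of $D^2U$ near the roots. The large-increment part is exactly Lemma~\ref{lem:stationary-noise-tails}, which in turn rests on root continuity. No uniform integrability of $Z_a$ is used, so the argument is not circular with Theorem~\ref{thm:local-moments}.
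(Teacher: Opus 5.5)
Your proof is correct and follows the paper's argument essentially step for step: the same exact second-order stationarity identity for $U$, the $O(a)$ drift-squared term, the Taylor remainder controlled by uniform continuity of $D^2U$ near the roots together with stationary increment tails and $O(a)$ mass away from the roots, and the replacement of $\tr\{D^2U(X)C(X)\}$ by root values with a ball-versus-cell mass correction. The only cosmetic difference is the cut in the remainder: you split at a fixed original-scale threshold $\norm\Delta>\rho$ using \eqref{eq:increment-tail-refined} and then shrink $\rho$, whereas the paper splits at $\norm Y>R$ for the scaled increment $Y=\Delta/a$ and then sends $R\to\infty$; both rest on Lemma~\ref{lem:stationary-noise-tails}.
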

\begin{proof}
Put $Y=\xi(X,\zeta)-g(X)$, so $\Delta=aY$. Taylor expansion with an integral remainder gives
\[
 U(X+aY)-U(X)=a\nabla U(X)^TY+
 \frac{a^2}{2}Y^TD^2U(X)Y+a^2 r_a(X,Y),
\]
where, with $L_U=\sup_x\norm{D^2U(x)}$,
\begin{align*}
 |r_a(x,y)|&\le \norm y^2\int_0^1(1-t)
 \norm{D^2U(x+tay)-D^2U(x)}\,dt,\\
 |r_a(x,y)|&\le L_U\norm y^2.
\end{align*}
We show $\E|r_a(X,Y)|=o(1)$; this is the step for which a bare second-moment bound would not suffice.

Choose a fixed compact union $K_0$ of small closed root balls.
For any fixed $R$, continuity of $D^2U$ gives uniform continuity on a compact enlargement of $K_0$, and hence
\[
\sup_{x\in K_0,\,\norm y\le R}|r_a(x,y)|\longrightarrow0.
\]
The expectation on $\norm Y>R$ is bounded by
$L_U\E[\norm Y^2\one\{\norm Y>R\}]$.
Lemma~\ref{lem:stationary-noise-tails} makes this contribution vanish after $a\downarrow0$ and then $R\to\infty$.
Thus the required remainder limit holds inside $K_0$.

Outside $K_0$, concentration gives $\pi_a(K_0^c)=O(a)$. Conditional centering and the covariance growth assumption imply
\begin{align*}
 \E[\norm Y^2\one_{\{X\notin K_0\}}]
 &=\E[\{\tr C(X)+\norm{g(X)}^2\}\one_{\{X\notin K_0\}}]\\
 &\le c_0\pi_a(K_0^c)+(1+c_1)\E\norm{g(X)}^2=O(a).
\end{align*}
The global bounded-Hessian estimate therefore makes the outside remainder vanish as well.

All terms in the Taylor identity are integrable: $U$, $\nabla U^Tg$, and $g^2$ have at most quadratic growth and the covariance is controlled. Stationarity and conditional centering consequently give
\[
 \frac{\E D(X)}a
 =\frac12\E\tr\bigl[D^2U(X)\{C(X)+g(X)g(X)^T\}\bigr]+o(1).
\]
The drift-squared term is $O(a)$ by \eqref{eq:energy-explicit}. To replace the covariance term by root values, first restrict to small root balls. There, continuity of $D^2U$ and of $C$ at each root controls the error as the radii decrease. Outside these balls its absolute expectation is bounded by a constant times
$c_0\pi_a(\text{outside})+c_1\E\norm{g(X)}^2=O(a)$ for each fixed radius. The difference between ball mass and Voronoi mass is also $O(a)$ for fixed radii. Taking $a\to0$ and then shrinking the radii proves \eqref{eq:energy}. All estimates are law-uniform.
\end{proof}

\begin{proof}[Proof of Theorem~\ref{thm:local-moments}]
Take any sequence of steps and invariant laws, and extract the common all-root subsequence of Lemma~\ref{lem:all-root}.
Write $R_a=D(X)/a$. Lemma~\ref{lem:D} identifies $R_a\Rightarrow R$, where $R$ is the image of the marked limit under $(j,z)\mapsto z^TE_jA_jz$.
Indeed, on $\norm{Z_a}\le R_0$ the divided Taylor error is uniformly $o(1)$, and tightness controls the complement.
Both variables are nonnegative. Symmetry of $E_j,M_j$ and the matrix moment equation imply
\[
\E R=\sum_j\tr(E_jA_jM_j)
=\tfrac12\sum_j\tr\{E_j(A_jM_j+M_jA_j^T)\}
=\tfrac12\sum_jq_j\tr(E_jC_j).
\]
The two traces in the middle sum agree by transposition and cyclicity.
This identity applies at unstable roots without identifying their measures as Gaussians.
Lemma~\ref{lem:energy} and $w_{a,j}\to q_j$ give $\E R_a\to\E R$.

For fixed $T>0$, bounded continuous truncation gives
\[
\E(R_a-T)_+=\E R_a-\E(R_a\wedge T)\longrightarrow\E(R-T)_+.
\]
Since $r\one\{r>2T\}\le2(r-T)_+$, the variables $R_a$ are asymptotically uniformly integrable along this subsequence.
The domination $R_a\ge c_Uc_g^2\norm{Z_a}^2$ transfers the property to $\norm{Z_a}^2$.
If~\eqref{eq:square-ui} failed, there would be $a_n\downarrow0$, invariant laws $\pi_n$ and $R_n\to\infty$ with tail expectations bounded away from zero.
The preceding subsequence argument contradicts this, proving the law-uniform assertion.

At every root, Lemma~\ref{lem:hyperbolic-OU} gives $\nu_j=q_j\bar\Gamma_j$; incompatible roots have $q_j=0$.
Uniform integrability permits linear and quadratic tests multiplied by the discrete root label, proving~\eqref{eq:second} on every extracted subsequence and hence on the full family.
For weights bounded below, division gives conditional weak convergence and square uniform integrability.
The weak-plus-second-moment criterion for $W_2$ proves~\eqref{eq:conditional}; subtracting conditional means gives its covariance assertion.
\end{proof}

\paragraph{Coupling at a common root.}
We prove the same-label assertion of Theorem~\ref{thm:W2}:
\begin{equation}\label{eq:same-label}
\sup_{\pi\in\mathcal I_a}\sum_{j:w_{a,j}>0}w_{a,j}
W_2^2(\Law(Z_a\mid J_a=j),\bar\Gamma_j)\longrightarrow0.
\end{equation}
For the marked cost $d((j,z),(k,y))^2=\one\{j\ne k\}+\norm{z-y}^2$, it implies
\begin{equation}\label{eq:W2}
W_2\!\left(\Law(J_a,Z_a),\sum_jw_{a,j}(\delta_j\otimes\bar\Gamma_j)\right)\to0
\end{equation}
uniformly over invariant-law selections.

\begin{proof}[Proof of Theorem~\ref{thm:W2}]
Take any sequence $a_n\downarrow0$ and invariant laws $\pi_n$.
Extract a subsequence on which all weights converge to $q_j$ and the marked laws have the limits in Lemma~\ref{lem:all-root}.
Lemma~\ref{lem:hyperbolic-OU} identifies each limit as $q_j\bar\Gamma_j$.
If $q_j>0$, conditional weak convergence and the square uniform integrability in Theorem~\ref{thm:local-moments} yield conditional $W_2$ convergence \citep{Villani2009}.
If $q_j=0$, an independent coupling gives, whenever $w_{a,j}>0$,
\[
w_{a,j}W_2^2(\Law(Z_a\mid J_a=j),\bar\Gamma_j)
\le2\E[\one\{J_a=j\}\norm{Z_a}^2]+2w_{a,j}\tr\bar S_j.
\]
For any fixed $R$, the first expectation is bounded by
$R^2w_{a,j}+\E[\norm{Z_a}^2\one\{\norm{Z_a}>R\}]$.
First let $n\to\infty$, then $R\to\infty$. The bound tends to zero by~\eqref{eq:square-ui}.
Terms with $w_{a,j}=0$ contribute zero without defining a conditional law.
Summing over the finite root set proves~\eqref{eq:same-label} along the extracted subsequence.
Failure of uniform convergence would give a violating sequence with such a subsequence, a contradiction.

Choose optimal conditional couplings of $Z_a$ and $G_j\sim\bar\Gamma_j$, and mix them with the exact weights $w_{a,j}$.
Matching labels gives zero label cost and proves~\eqref{eq:W2}.
Mapping each pair to $(m_j+\sqrt aZ_a,m_j+\sqrt aG_j)$ produces marginals $\pi_a$ and $\bar G_{a,\pi}$.
Its squared cost is $a$ times the sum in~\eqref{eq:same-label}, proving~\eqref{eq:original-W2}.
For an incompatible root, positive limiting weight would contradict Lemma~\ref{lem:hyperbolic-OU}; the same subsequence argument makes the exclusion uniform.
\end{proof}

\begin{proof}[Proof of Corollary~\ref{cor:stable-mixture}]
Excitation and Lemma~\ref{lem:hyperbolic-OU} exclude every unstable component, so $s_a\to1$ uniformly.
Along any weight-convergent subsequence, the marked law and $Q_a$ have the same stable Gaussian weak limit.
Equation~\eqref{eq:square-ui} upgrades the first convergence to $W_2$; the finite Gaussian mixtures $Q_a$ also converge in $W_2$.
The triangle inequality and the subsequence argument prove~\eqref{eq:target}.
For each unstable root and fixed $R$, its second-moment contribution is at most
$R^2w_{a,j}+\E[\norm{Z_a}^2\one\{\norm{Z_a}>R\}]$.
Letting $a\to0$ and then $R\to\infty$ proves the last assertion.
\end{proof}

\section{Rootwise stationary means}\label{app:bias}
We prove Theorem~\ref{thm:bias}. All estimates in this section use Assumption~\ref{ass:standing} alone and are uniform over invariant-law selections.

\begin{lemma}[Fixed-neighborhood tails]\label{lem:outside-refined}
For every fixed $r>0$,
\[
\E[\dist(X,\Mcal)^2\one\{\dist(X,\Mcal)\ge r\}]=o(a),\qquad
\Pp\{\dist(X,\Mcal)\ge r\}=o(a).
\]
\end{lemma}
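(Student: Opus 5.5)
The plan is to reduce both assertions to the law-uniform square uniform integrability~\eqref{eq:square-ui} of Theorem~\ref{thm:local-moments}, after rewriting the distance to the root set in the rescaled variable. By the labeling convention, $m_{J_a}$ is a nearest root of $X$. Hence, pointwise,
\[
\dist(X,\Mcal)=\norm{X-m_{J_a}}=\sqrt a\,\norm{Z_a}.
\]
The event $\{\dist(X,\Mcal)\ge r\}$ therefore coincides with $\{\norm{Z_a}\ge r/\sqrt a\}$.

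The first step is to substitute this into the truncated second moment. This gives
\[
\E[\dist(X,\Mcal)^2\one\{\dist(X,\Mcal)\ge r\}]
=a\,\E[\norm{Z_a}^2\one\{\norm{Z_a}\ge r/\sqrt a\}].
\]
Fix $R>0$. For all sufficiently small $a$ we have $r/\sqrt a>R$, so the last expectation is at most $\sup_{\pi\in\mathcal I_a}\E_\pi[\norm{Z_a}^2\one\{\norm{Z_a}>R\}]$. Take $\limsup_{a\downarrow0}$ first and then let $R\to\infty$. By~\eqref{eq:square-ui} the expectation tends to zero uniformly over invariant laws, which is exactly the first claim with a law-uniform $o(a)$.

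The second claim then follows from Chebyshev's inequality on the same event:
\[
\Pp\{\dist(X,\Mcal)\ge r\}\le r^{-2}\E[\dist(X,\Mcal)^2\one\{\dist(X,\Mcal)\ge r\}]=o(a).
\]
The factor $r^{-2}$ is a constant because $r$ is fixed.

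I expect no real obstacle, since the work is already done by the energy-matching argument that established~\eqref{eq:square-ui}. The one point to check is that~\eqref{eq:square-ui} may be invoked here without circularity. Its proof in Appendix~\ref{app:energy} uses only Lemmas~\ref{lem:all-root}, \ref{lem:D} and~\ref{lem:energy} together with concentration, and none of these relies on the mean expansion. The crude bound $\Pp\{\dist\ge r\}\le Ka/r^2$ from Theorem~\ref{thm:concentration} gives only $O(a)$; the improvement to $o(a)$ is precisely the absence of quadratic mass in rare excursions.
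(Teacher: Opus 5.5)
Your proposal is correct and follows essentially the same route as the paper: you rewrite $\dist(X,\Mcal)=\sqrt a\,\norm{Z_a}$, apply the law-uniform square uniform integrability~\eqref{eq:square-ui} of Theorem~\ref{thm:local-moments}, and deduce the probability bound by dividing by $r^2$. Your non-circularity check is also accurate, since the proof of~\eqref{eq:square-ui} relies only on concentration, the stationary innovation tails, and Lemmas~\ref{lem:all-root}, \ref{lem:D} and~\ref{lem:energy}.
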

\begin{proof}
The first expectation divided by $a$ is
$\E[\norm{Z_a}^2\one\{\norm{Z_a}\ge r/\sqrt a\}]$, which tends to zero by~\eqref{eq:square-ui}.
The probability is at most the first expectation divided by $r^2$.
\end{proof}

\begin{lemma}[Original-scale cutoff flux]\label{lem:linear-flux}
For the cutoffs in Appendix~\ref{app:mixture},
\begin{equation}\label{eq:linear-flux}
\E[|\chi_j(X^+)-\chi_j(X)|(1+\norm{X^+-m_j})]=o(a^2).
\end{equation}
Moreover, $\E[\one_{\mathcal C_j}(X)g(X)]=o(a)$ for every root.
\end{lemma}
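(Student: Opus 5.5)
The plan is to bound $|\chi_j(X^+)-\chi_j(X)|$ on small and on large increments separately, exactly as in the cutoff-change estimate~\eqref{eq:cutoff-change}. The $O(a)$ inputs used there are replaced by the $o(a)$ tail bounds now available from uniform integrability. The second claim then follows from stationarity of the localized test function $\chi_j(x)(x-m_j)$.

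\emph{Step 1: the weight is harmless.} If $\chi_j(X^+)\ne\chi_j(X)$, then $X$ or $X^+$ lies in $\supp\chi_j\subset \overline B(m_j,2r)$.
\begin{itemize}
\item If $X^+$ lies there, then $\norm{X^+-m_j}\le 2r$.
\item Otherwise $\norm{X-m_j}\le2r$, and so $\norm{X^+-m_j}\le 2r+\norm\Delta$.
\end{itemize}
In either case the weight satisfies $1+\norm{X^+-m_j}\le K(1+\norm\Delta)$ on the event that the cutoff changes. The left side of~\eqref{eq:linear-flux} is therefore at most $K\,\E[|\chi_j(X^+)-\chi_j(X)|(1+\norm\Delta)]$. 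This observation matters: a direct Cauchy--Schwarz treatment of a factor $\norm{X-m_j}$ would only give $o(a^{3/2})$ under second moments.

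\emph{Step 2: small and large increments.} Fix $\rho$ as in Appendix~\ref{app:mixture}, so that $D_j^\rho$ is compact and separated from $\Mcal$.

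On $\{\norm\Delta\le\rho\}$, a change of the cutoff forces $X\in D_j^\rho$. There $|\chi_j(X^+)-\chi_j(X)|\le K\norm\Delta$ and the weight is bounded. On the compact set $D_j^\rho$, both $g$ and $\tr C$ are bounded, so the contribution is at most
\[
K\E[\norm\Delta\one_{D_j^\rho}(X)]\le Ka\,\pi_a(D_j^\rho).
\]
Lemma~\ref{lem:outside-refined} gives $\pi_a(D_j^\rho)=o(a)$, so this contribution is $o(a^2)$.

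On $\{\norm\Delta>\rho\}$, bound the cutoff difference by one. The contribution is then at most $K\{\Pp(\norm\Delta>\rho)+\E[\norm\Delta\one\{\norm\Delta>\rho\}]\}$. Both terms are bounded by $(\rho^{-2}+\rho^{-1})\E[\norm\Delta^2\one\{\norm\Delta>\rho\}]$, which is $o(a^2)$ by~\eqref{eq:increment-tail-refined}.

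The estimate is uniform over invariant laws because both inputs are. The main point to check is Step 1; after it, everything reduces to the two refined tail lemmas.

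\emph{Step 3: the drift identity.} The function $x\mapsto\chi_j(x)(x-m_j)$ is bounded, so stationarity gives
\[
0=\E[\chi_j(X^+)(X^+-m_j)-\chi_j(X)(X-m_j)]
=\E[\chi_j(X)\Delta]+\E[(\chi_j(X^+)-\chi_j(X))(X^+-m_j)].
\]
Conditional centering turns the first term into $-a\E[\chi_j(X)g(X)]$. By~\eqref{eq:linear-flux}, the second term is $o(a^2)$. Hence $\E[\chi_j(X)g(X)]=o(a)$.

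It remains to replace the cutoff by the cell indicator. The function $\one_{\mathcal C_j}-\chi_j$ is bounded and supported on $\{\dist(X,\Mcal)\ge r\}$. By Cauchy--Schwarz, together with $\E\norm{g(X)}^2\le Ka$ and Lemma~\ref{lem:outside-refined}, the difference is at most
\[
(Ka)^{1/2}\,\Pp\{\dist(X,\Mcal)\ge r\}^{1/2}=O(a^{1/2})\,o(a^{1/2})=o(a).
\]
This proves $\E[\one_{\mathcal C_j}(X)g(X)]=o(a)$, uniformly over invariant laws.
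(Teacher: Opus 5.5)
Your proof is correct and follows the paper's argument essentially step for step. It uses the same weight bound $1+\norm{X^+-m_j}\le K(1+\norm\Delta)$ on a cutoff change, the same small/large increment split based on Lemma~\ref{lem:outside-refined} and~\eqref{eq:increment-tail-refined}, and the same stationarity identity for $\chi_j(x)(x-m_j)$. The only deviation is minor: to replace $\chi_j$ by $\one_{\mathcal C_j}$ you apply Cauchy--Schwarz with $\E\norm{g(X)}^2\le Ka$. The paper instead uses $\norm{g(x)}\le L\norm{x-m_j}$ on the cell together with the truncated second-moment bound of Lemma~\ref{lem:outside-refined}. Both routes give $o(a)$.
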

\begin{proof}
Choose $\rho>0$ so the $\rho$-enlargement of $\supp\nabla\chi_j$ is compact and separated from every root.
If $\norm\Delta\le\rho$, a nonzero cutoff change places both endpoints in a fixed compact set and $X$ in this enlargement.
The integrand in~\eqref{eq:linear-flux} is then bounded by $K\norm\Delta$.
On the enlargement, $\E[\norm\Delta\mid X]\le Ka$ by boundedness of the drift and conditional second moments. Its stationary mass is $o(a)$ by Lemma~\ref{lem:outside-refined}.
This contribution is $o(a^2)$.

If $\norm\Delta>\rho$ and the cutoff changes, at least one endpoint lies in the compact cutoff support.
Hence $\norm{X^+-m_j}\le K+\norm\Delta$, and the remaining contribution is bounded by
\[
K\E[(1+\norm\Delta)\one\{\norm\Delta>\rho\}]
\le K_\rho\E[\norm\Delta^2\one\{\norm\Delta>\rho\}]=o(a^2).
\]
This proves the flux estimate using Lemma~\ref{lem:increment-tail}.

Apply stationarity componentwise to the bounded function $\chi_j(x)(x-m_j)$.
Expanding its increment and using conditional centering yields
\[
a\E[\chi_j(X)g(X)]
=\E[(\chi_j(X^+)-\chi_j(X))(X^+-m_j)]=o(a^2).
\]
On $\mathcal C_j$, $\norm{x-m_j}=\dist(x,\Mcal)$ and $\norm{g(x)}\le L\norm{x-m_j}$.
Where $\one_{\mathcal C_j}-\chi_j$ is nonzero, this distance is at least a fixed $r>0$.
Thus
\[
\E[(\one_{\mathcal C_j}-\chi_j)\norm{g(X)}]
\le (L/r)\E[\dist(X,\Mcal)^2\one\{\dist(X,\Mcal)\ge r\}]=o(a).
\]
Replacing the cutoff proves the drift balance. The argument uses stationarity of the original chain, without assigning stationarity to a cell-conditioned law.
\end{proof}

\begin{proof}[Proof of Theorem~\ref{thm:bias}]
Fix any root and write $y=x-m_j$. Local $C^2$ regularity gives
\[
g(m_j+y)=A_jy+\tfrac12(D^2g(m_j):yy^T)+r_j(y),\qquad \norm{r_j(y)}=o(\norm y^2).
\]
For $\epsilon>0$, choose $r>0$ so $\norm{r_j(y)}\le\epsilon\norm y^2$ when $\norm y\le r$.
Outside this ball, global Lipschitz continuity and the fixed Taylor coefficients give $\norm{r_j(y)}\le K_r\norm y^2$.
Consequently,
\[
\frac1a\E[\one_{\mathcal C_j}\norm{r_j(X-m_j)}]
\le\epsilon\E\norm{Z_a}^2+K_r\E[\norm{Z_a}^2\one\{\norm{Z_a}>r/\sqrt a\}].
\]
Let $a\downarrow0$ and then $\epsilon\downarrow0$. Theorem~\ref{thm:local-moments} makes the remainder $o(a)$ and also gives
$\E[\one_{\mathcal C_j}(X-m_j)(X-m_j)^T]=a w_{a,j}\bar S_j+o(a)$.
Combine these facts with Lemma~\ref{lem:linear-flux} to obtain
\[
A_j\E[\one_{\mathcal C_j}(X-m_j)]+\tfrac a2w_{a,j}t_j=o(a).
\]
Invertibility of $A_j$ proves~\eqref{eq:mean-bias}.

For a $C^2$ observable with coordinatewise bounded Hessian, the same truncation argument controls its second-order Taylor remainder by $o(a)$.
Its linear term uses~\eqref{eq:mean-bias}, and its quadratic term uses~\eqref{eq:second}, yielding~\eqref{eq:observable-bias}.
Bounded Hessians ensure at most quadratic growth and integrability.
Dividing by weights bounded below proves both conditional expansions.
\end{proof}

For gradient drift and $f=F$, the linear term vanishes and $k_{F,j}=\tfrac12\tr(H_j\bar S_j)$.
At a compatible root, $H_j\bar S_j+\bar S_jH_j=C_j$, giving $k_{F,j}=\tfrac14\tr C_j$.
Incompatible roots have vanishing weights, so their coefficient difference does not affect the weighted limit~\eqref{eq:weighted-loss}.
For a fixed evaluation loss, the gradient term in~\eqref{eq:test-bias} generally remains.
The expansion concerns algorithmic variation of this fixed loss, rather than uncertainty in a population risk.

\subsection{Mean offsets and root-centered Gaussian accuracy}\label{app:mean-offset}
For an invariant-law family $\pi_a$ with $\underline w_j:=\liminf_{a\downarrow0}w_{a,j}>0$ and $b_j\ne0$, the mean-difference bound for $W_2$ gives
\[
D_a(\pi_a)\ge\sqrt{w_{a,j}}\,\norm{\E[Z_a\mid J_a=j]},
\qquad
\liminf_{a\downarrow0}\frac{D_a(\pi_a)}{\sqrt a}\ge\sqrt{\underline w_j}\,\norm{b_j}.
\]
Indeed, Theorem~\ref{thm:bias} gives $\E[Z_a\mid J_a=j]=\sqrt a\,b_j+o(\sqrt a)$, proving the second inequality.
This mean-offset contribution to the same-label error is consistent with the scalar nonlinear lower bound of \citet[Proposition~3.9]{WangEtAl2026}.

\subsection{Between-root and within-root observable variance}
The weighted expansions also describe global variability, including the contribution of different root predictions.
\begin{corollary}[Global observable variance]\label{cor:global-variance}
Under Assumption~\ref{ass:standing}, let $f:\R^d\to\R$ be $C^2$, globally Lipschitz, and have bounded Hessian.
Set $f_j=f(m_j)$, $v_{f,j}=Df(m_j)\bar S_jDf(m_j)^T$, and $\bar f_a=\sum_jw_{a,j}f_j$.
With $k_{f,j}$ from Theorem~\ref{thm:bias}, uniformly over invariant-law selections,
\begin{equation}\label{eq:global-variance}
\operatorname{Var}(f(X))
=\sum_jw_{a,j}(f_j-\bar f_a)^2
+a\sum_jw_{a,j}\{v_{f,j}+2(f_j-\bar f_a)k_{f,j}\}+o(a).
\end{equation}
The weights are the actual finite-$a$ weights; no convergence or positive lower bound is required.
\end{corollary}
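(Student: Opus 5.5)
Decompose by the root label: with $Y=f(X)$ and the partition into Voronoi cells, the law of total variance gives
\[
\operatorname{Var}(f(X))=\sum_jw_{a,j}(\mu_{a,j}-\mu_a)^2+\sum_j\E[\one_{\mathcal C_j}(f(X)-\mu_{a,j})^2],
\]
where $\mu_{a,j}=\E[f(X)\mid J_a=j]$ (for $w_{a,j}>0$) and $\mu_a=\E f(X)$.
Conditional quantities are unusable when weights vanish, so we work with the unnormalized quantities from Theorems~\ref{thm:local-moments} and~\ref{thm:bias}. Write $e_j=\E[\one_{\mathcal C_j}\{f(X)-f_j\}]=aw_{a,j}k_{f,j}+o(a)$ by~\eqref{eq:observable-bias}; the Hessian and Lipschitz hypotheses of Theorem~\ref{thm:bias} are exactly what we assumed. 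Then expand directly:
\[
\operatorname{Var}(f(X))=\sum_j\E[\one_{\mathcal C_j}(f(X)-\bar f_a)^2]-(\mu_a-\bar f_a)^2 .
\]
Here $\mu_a-\bar f_a=\sum_je_j=O(a)$, so its square is $o(a)$ uniformly.

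For each root, split $f(X)-\bar f_a=(f(X)-f_j)+(f_j-\bar f_a)$. This gives three pieces:
\begin{itemize}
\item the cross term $2(f_j-\bar f_a)e_j=2a w_{a,j}(f_j-\bar f_a)k_{f,j}+o(a)$, since the root values are bounded;
\item the constant term $w_{a,j}(f_j-\bar f_a)^2$;
\item the square $\E[\one_{\mathcal C_j}(f(X)-f_j)^2]$.
\end{itemize}
For the square, write $f(X)-f_j=Df(m_j)(X-m_j)+\rho_j(X-m_j)$. Global Lipschitz continuity plus differentiability give $|\rho_j(y)|\le\epsilon\norm y$ on a small ball and $|\rho_j(y)|\le K\norm y$ everywhere. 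Dividing by $a$, the remainder contribution is then at most $\epsilon^2\E\norm{Z_a}^2+K\E[\norm{Z_a}^2\one\{\norm{Z_a}>r/\sqrt a\}]$, which is $o(1)$ uniformly by~\eqref{eq:square-ui}. Cross terms between the linear part and the remainder are handled by Cauchy--Schwarz. The linear part gives $a\,Df(m_j)\E[\one_{\mathcal C_j}Z_aZ_a^T]Df(m_j)^T=aw_{a,j}v_{f,j}+o(a)$ by~\eqref{eq:second}. Summing over the finitely many roots yields~\eqref{eq:global-variance}.

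Uniformity over invariant laws and the absence of any weight lower bound come for free. Every input, namely~\eqref{eq:square-ui},~\eqref{eq:second} and~\eqref{eq:observable-bias}, is already unnormalized and law-uniform, and no division by $w_{a,j}$ ever occurs. The only delicate point is the remainder control for the quadratic term outside a fixed ball, where the global Lipschitz bound together with~\eqref{eq:square-ui} is essential. Mere concentration would leave an $O(a)$, not $o(a)$, error there.
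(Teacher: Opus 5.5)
Your proposal is correct and takes essentially the paper's route. You use the unnormalized mean expansion~\eqref{eq:observable-bias} for the first-order terms, and you get $\E[\one_{\mathcal C_j}(f(X)-f_j)^2]=aw_{a,j}v_{f,j}+o(a)$ from~\eqref{eq:second}, the square uniform integrability~\eqref{eq:square-ui} and the global Lipschitz bound, never dividing by a weight; the differences are cosmetic, since the paper expands $\E f(X)$ and $\E f(X)^2$ separately and subtracts, and it bounds $(f(m_j+y)-f_j)^2-(Df(m_j)y)^2$ directly instead of using your linear-plus-remainder split with Cauchy--Schwarz.
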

\begin{proof}
Write $\Delta_j f=f(X)-f_j$ on $\mathcal C_j$.
Theorem~\ref{thm:bias} gives $\E[\one_{\mathcal C_j}\Delta_j f]=aw_{a,j}k_{f,j}+o(a)$.
We first show
\begin{equation}\label{eq:observable-increment-second}
\E[\one_{\mathcal C_j}(\Delta_j f)^2]=aw_{a,j}v_{f,j}+o(a).
\end{equation}
For $y=x-m_j$, differentiability implies
$(f(m_j+y)-f_j)^2-(Df(m_j)y)^2=o(\norm y^2)$ as $y\to0$.
Global Lipschitz continuity bounds the absolute difference by $K\norm y^2$ everywhere.
For any $\eta>0$, choose $r>0$ so the local bound is $\eta\norm y^2$.
After division by $a$, the expected absolute difference on $\mathcal C_j$ is at most
\[
\eta\E\norm{Z_a}^2+K\E[\norm{Z_a}^2\one\{\norm{Z_a}>r/\sqrt a\}].
\]
Theorem~\ref{thm:local-moments}, followed by $\eta\downarrow0$, makes this remainder $o(1)$ uniformly.
The rootwise second-moment formula~\eqref{eq:second} then proves~\eqref{eq:observable-increment-second}.

Summing over the finite root set yields
\begin{align*}
\E f(X)&=\bar f_a+a\sum_jw_{a,j}k_{f,j}+o(a),\\
\E f(X)^2&=\sum_jw_{a,j}f_j^2+a\sum_jw_{a,j}(v_{f,j}+2f_jk_{f,j})+o(a).
\end{align*}
The coefficients and $\bar f_a$ are uniformly bounded because the root set is finite.
Squaring the first expansion and subtracting it from the second proves~\eqref{eq:global-variance}.
Every estimate is weighted before summation, so vanishing component weights cause no division by small probabilities.
\end{proof}
The leading term is between-root variability and can remain of order one.
The order-$a$ terms combine within-root Gaussian variance with a signed correction from the shifted component means.
The argument uses only second moments: it controls squared observable increments directly, without applying the mean expansion to $f^2$.

\section{Weighted localization and local smooth-test rates}\label{app:rate}
\begin{theorem}[Local smooth-test rate]\label{thm:rate}
Assume Assumption~\ref{ass:standing} and fix $j\in\Scal$.
For centered additive innovations with $\E\norm{\varepsilon}^{2+\delta}<\infty$,
\begin{equation}\label{eq:additive-rate}
\mathcal E_{a,j}\le K_j a^{\delta/2}.
\end{equation}
For state-dependent noise, suppose in a fixed neighborhood of $m_j$ that
\begin{equation}\label{eq:rate-regularity}
\sup_x\E\norm{\xi(x,\zeta_1)}^{2+\delta}<\infty,
\qquad \norm{C(x)-C_j}_{\op}\le K\norm{x-m_j}^\beta,
\qquad 0<\beta\le1.
\end{equation}
Then $\mathcal E_{a,j}\le K_j(a^{\delta/2}+a^{\beta/2}+\sqrt a)$.
Whenever $w_{a,j}>0$, the corresponding conditional smooth-test error is bounded by $\mathcal E_{a,j}/w_{a,j}$.
No higher stationary moment, positive-definite covariance, or unstable-root excitation is needed for this stable-root statement.
\end{theorem}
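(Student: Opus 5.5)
I will prove the stated rate with a Stein (generator) comparison for the localized measure $\nu_{a,j}$ of~\eqref{eq:localized}, using the stationary OU semigroup at the stable root as the reference.

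\textbf{Stein equation.} For $h\in\mathcal H_{2,\delta}$, I will solve $\mathcal L_j\phi=h-\Gamma_jh$, where $\mathcal L_j\phi=-\ip{A_jz}{\nabla\phi}+\tfrac12\tr(C_jD^2\phi)$. The solution is
\[
\phi=-\int_0^\infty(Q_th-\Gamma_jh)\dd t ,
\]
with $Q_t$ the OU semigroup $Q_th(z)=\E h(e^{-A_jt}z+\Sigma_t^{1/2}N)$.
\begin{itemize}
\item Because $A_j$ is positive stable, $\norm{e^{-A_jt}}\le Ke^{-ct}$. Differentiating under the Gaussian expectation therefore gives $\norm{D^k Q_th}_\infty\le Ke^{-kct}$ for $k\le2$.
\item The same bound holds for the $\delta$-H\"older seminorm of $D^2Q_th$.
\item Hence $\phi$ has $\nabla\phi,D^2\phi$ bounded and $D^2\phi$ is $\delta$-H\"older, uniformly over the class.
\item $\phi$ itself grows at most linearly, since $|Q_th(z)-\Gamma_jh|\le K\min(1,e^{-ct}\norm z)$ roughly. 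Its use under $\nu_{a,j}$ is justified by the second moment bound~\eqref{eq:cutoff-moments}.
\item No nondegeneracy of $C_j$ is needed, because smoothness is inherited from $h$ rather than from the noise.
\end{itemize}

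\textbf{Localized discrete generator.} Stationarity applied to $\chi_j(x)\phi((x-m_j)/\sqrt a)$ gives $a^{-1}\E[\chi_j(X)\{\phi(Z+H)-\phi(Z)\}]$ equal to a cutoff flux term. As in~\eqref{eq:cutoff-change}, and since $\phi$ grows linearly, the flux is $O(a)$ after localizing with $\norm\Delta\le\rho$ versus the complement. I then Taylor expand $\phi$ to second order with a H\"older remainder,
\[
|R|\le K\norm H^2\min(1,\norm H^\delta)\le K\norm H^{2+\delta}.
\]
With $H=\sqrt aY_a$, this gives $a^{-1}\E[\chi_j|R|]\le Ka^{\delta/2}\E[\chi_j\norm{Y_a}^{2+\delta}]$.
\begin{itemize}
\item In the additive case $\norm{Y_a}^{2+\delta}\le K(\norm\varepsilon^{2+\delta}+\norm{g(X)}^{2+\delta})$, and $g$ is bounded on the cutoff support.
\item In the state-dependent case, the first condition in~\eqref{eq:rate-regularity} gives the same bound.
\end{itemize}
This is the only place the $(2+\delta)$ moment enters, and it produces the $a^{\delta/2}$ term.

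\textbf{Matching the generator.} The first-order term is $-\E[\chi_j\nabla\phi(Z)^\trans g(X)/\sqrt a]$, and by~\eqref{eq:linearization} it differs from $-\int\ip{A_jz}{\nabla\phi}\dd\nu_{a,j}$ by $K\sqrt a$. The second-order term is $\tfrac12\E[\chi_j\tr(D^2\phi(Z)\{C(X)+g g^\trans\})]$.
\begin{itemize}
\item The $gg^\trans$ part is $O(a)$ by~\eqref{eq:concentration}.
\item In the additive case $C(X)=C_j$ exactly.
\item In the state-dependent case, the H\"older condition gives $\E[\chi_j\norm{C(X)-C_j}]\le K\E[\chi_j\norm{X-m_j}^\beta]\le Ka^{\beta/2}$ by Jensen and concentration.
\end{itemize}
Hence $|\int(h-\Gamma_jh)\dd\nu_{a,j}|=|\int\mathcal L_j\phi\dd\nu_{a,j}|\le K(a^{\delta/2}+a^{\beta/2}+\sqrt a)$; in the additive case the $\sqrt a$ term is dominated by $a^{\delta/2}$ because $\delta\le1$.

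\textbf{Passing to the unnormalized cell error.} The difference between the cell indicator and $\chi_j$, and between $w_{a,j}$ and $q_{a,j}$, costs $O(a)$ for bounded $h$ by~\eqref{eq:cutoff-moments}, which gives the bound on $\mathcal E_{a,j}$. Dividing by $w_{a,j}$ gives the conditional bound.

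I expect the main obstacle to be the uniform regularity of the Stein solution, specifically the H\"older control of $D^2\phi$ with exponentially decaying constants. This must hold without $C_j\succ0$ and with $A_j$ possibly non-normal. I would handle it by differentiating $Q_th$ only through its dependence on $z$, which gives the factor $e^{-A_jt}$ composed with derivatives of $h$, so that the decay comes from stability alone.
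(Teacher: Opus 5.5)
Your proposal is correct and follows essentially the paper's own argument. It uses the OU Poisson (Stein) solution, with derivative and H\"older bounds inherited from $h$ through $e^{-A_jt}$ so that singular $C_j$ is allowed, together with stationarity of the bounded localized function, a $(2+\delta)$ H\"older--Taylor remainder, drift linearization via~\eqref{eq:linearization}, the $a^{\beta/2}$ covariance-variation bound, and an $O(a)$ cutoff-to-cell replacement.

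One small correction: $\phi$ grows linearly and $\norm{Z^+}$ is of order $a^{-1/2}$ wherever $\chi_j$ changes, so the normalized flux term is only $O(\sqrt a)$, as in the paper's Lemma~\ref{lem:weighted}, and not $O(a)$ as in the bounded case~\eqref{eq:cutoff-change}. This is harmless, because it is absorbed by the $\sqrt a$ already present in your final bound.
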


\begin{lemma}[Weighted cutoff flux]\label{lem:weighted}
For the cutoffs in Appendix~\ref{app:mixture}, let $Z_j^+=(X^+-m_j)/\sqrt a$.
Under Assumption~\ref{ass:standing}, uniformly over invariant-law selections,
\begin{equation}\label{eq:weighted-flux}
\frac1a\E[|\chi_j(X^+)-\chi_j(X)|(1+\norm{Z_j^+})]\le K\sqrt a.
\end{equation}
\end{lemma}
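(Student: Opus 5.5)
The plan is to split the flux expectation according to the increment size, as in Lemma~\ref{lem:linear-flux}, but to keep track of the $\sqrt a$ rescaling of $X^+-m_j$. Write $\Delta=X^+-X=aY_a$ and fix $\rho>0$ such that the closed $\rho$-enlargement $D_j^\rho$ of $\supp\nabla\chi_j$ is compact and separated from every root. Since $\chi_j$ is Lipschitz, $|\chi_j(X^+)-\chi_j(X)|\le K\min\{1,\norm\Delta\}$. Whenever this difference is nonzero, one endpoint lies in $\supp\chi_j\subset \overline B(m_j,2r)$, so $\norm{X^+-m_j}\le 2r+\norm\Delta$. It follows that $1+\norm{Z_j^+}\le K a^{-1/2}(1+\norm\Delta)$. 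The factor $a^{-1/2}$ is the price of the rescaling, and the task is to show that the remaining quantity is $O(a^2)$.

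\emph{Small increments.} On $\{\norm\Delta\le\rho\}$ a nonzero change forces $X\in D_j^\rho$. The integrand is then at most $K\norm\Delta\cdot a^{-1/2}(1+\rho)$. On $D_j^\rho$ the drift is bounded and the covariance is bounded by~\eqref{eq:cov-growth}, so $\E[\norm\Delta\mid X]\le Ka$. By Theorem~\ref{thm:concentration}, $\pi_a(D_j^\rho)\le Ka$, because the set stays at a fixed distance from $\Mcal$. Together these give a contribution of order $a^{-1/2}\cdot a\cdot a=a^{3/2}$. Dividing by $a$ yields $K\sqrt a$, which is exactly the claimed order. Lemma~\ref{lem:outside-refined} would even give $o(a^{3/2})$, but we do not need this.

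\emph{Large increments.} On $\{\norm\Delta>\rho\}$ we bound the integrand by $Ka^{-1/2}(1+\norm\Delta)\le K_\rho a^{-1/2}\norm\Delta^2/\rho$. By~\eqref{eq:increment-second}, $\E\norm\Delta^2\le Ka^2$, so this part is $O(a^{3/2})$. After dividing by $a$ it is again $O(\sqrt a)$.

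\emph{Uniformity.} All constants come from Theorem~\ref{thm:concentration} and~\eqref{eq:increment-second}, which are law-uniform, so~\eqref{eq:weighted-flux} holds uniformly over invariant-law selections.

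The only delicate step is the geometric one: we must check that a nonzero cutoff change with a small increment really places $X$ in a fixed compact set bounded away from the roots, and that $\norm{X^+-m_j}$ is then bounded. Both facts follow from the fundamental theorem of calculus along the segment $[X,X^+]$, exactly as in the cutoff-change paragraph of Appendix~\ref{app:mixture}. Once this is in place, the remaining estimates need only second moments; no tail refinement such as Lemma~\ref{lem:stationary-noise-tails} is required, because the target bound $O(\sqrt a)$ is weaker than the $o(a)$ statement of Lemma~\ref{lem:linear-flux}.
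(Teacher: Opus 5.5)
Your proposal is correct and follows essentially the same route as the paper. It uses the same split at $\norm{\Delta}=\rho$: small increments are localized to $D_j^\rho$ and bounded by $\E[\norm\Delta\mid X]\le Ka$ times $\pi_a(D_j^\rho)\le Ka$, large increments are bounded through $\E\norm\Delta^2\le Ka^2$, and each part is $O(a^{3/2})$ before dividing by $a$. As in the paper, the argument needs only concentration and the increment second-moment bound, with no stationary tail lemma.
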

\begin{proof}
Use $D_j^\rho$ and $\Delta$ from that appendix.
On $\{\norm{\Delta}\le\rho\}$, a cutoff change forces both endpoints into a fixed compact set and $X\in D_j^\rho$.
For $0<a\le1$,
\[
|\chi_j(X^+)-\chi_j(X)|(1+\norm{Z_j^+})
\le Ka^{-1/2}\norm{\Delta}\one_{D_j^\rho}(X).
\]
Its expectation is at most $Ka^{-1/2}\,a\pi_a(D_j^\rho)\le Ka^{3/2}$.
On $\{\norm{\Delta}>\rho\}$, a nonzero cutoff change means at least one endpoint lies in the compact support of $\chi_j$.
Therefore $\norm{X^+-m_j}\le K+\norm{\Delta}$ and the weighted change is bounded by
\[
Ka^{-1/2}(1+\norm{\Delta})\one\{\norm{\Delta}>\rho\}.
\]
Both $\Pp(\norm{\Delta}>\rho)$ and
$\E[\norm{\Delta}\one\{\norm{\Delta}>\rho\}]$ are bounded by constants times $\E\norm{\Delta}^2\le Ka^2$.
The large-increment contribution is thus also $O(a^{3/2})$.
Adding the two bounds and dividing by $a$ proves the lemma.
\end{proof}

\paragraph{OU Poisson equation.}
Fix a stable root and write $A=A_j$, $C=C_j$, $\Gamma=\Gamma_j$.
Choose $M_A\ge1$ and $\rho_A>0$ with $\norm{e^{-At}}\le M_Ae^{-\rho_At}$.
Let $P_t^{\rm OU}$ be the OU semigroup and, for $h\in\mathcal H_{2,\delta}$, define
\begin{equation}\label{eq:poisson}
u_h(z)=-\int_0^\infty(P_t^{\rm OU}h(z)-\Gamma h)\dd t.
\end{equation}
Coupling a process starting at $z$ to a stationary process starting at $G\sim\Gamma$ with the same noise bounds their displacement by $M_Ae^{-\rho_At}\norm{z-G}$.
Hence the integral converges and
\begin{align}\label{eq:poisson-bounds}
|u_h(z)|&\le \frac{M_A}{\rho_A}(\norm{z}+\E\norm{G}),&
\norm{\nabla u_h}_\infty&\le\frac{M_A}{\rho_A},\\
\norm{D^2u_h}_\infty&\le\frac{M_A^2}{2\rho_A},&
[D^2u_h]_\delta&\le\frac{M_A^{2+\delta}}{(2+\delta)\rho_A}.\nonumber
\end{align}
The derivative bounds follow by differentiating the affine OU transition map under the integral, with the displayed exponential bounds providing domination.
They use the regularity of $h$, not elliptic smoothing, and remain valid when $C$ is singular.
Integrating $\partial_tP_t^{\rm OU}h=\mathcal L_jP_t^{\rm OU}h$ gives
$\mathcal L_ju_h=h-\Gamma h$; the same bounds justify interchange at each fixed $z$.

\begin{proof}[Proof of Theorem~\ref{thm:rate}]
Put $Z=(X-m_j)/\sqrt a$ and $H=(X^+-X)/\sqrt a$.
Shrink the cutoff support, if necessary, to the neighborhood of~\eqref{eq:rate-regularity}.
For each fixed $a$, $x\mapsto\chi_j(x)u_h((x-m_j)/\sqrt a)$ is bounded, so its stationary equation is valid without extra integrability assumptions.
Lemma~\ref{lem:weighted} and~\eqref{eq:poisson-bounds} imply
\begin{equation}\label{eq:poisson-stationarity}
\left|a^{-1}\E\{\chi_j(X)[u_h(Z+H)-u_h(Z)]\}\right|\le K\sqrt a.
\end{equation}
Taylor expansion with the H\"older Hessian has remainder $|R_h|\le K\norm{H}^{2+\delta}$.
Only $X$ inside the compact cutoff support contributes. Therefore
\[
a^{-1}\E[\chi_j|R_h|]\le Ka^{\delta/2}\E[\chi_j\norm{\xi(X,\zeta_1)-g(X)}^{2+\delta}]
\le Ka^{\delta/2}.
\]
For additive noise this uses $\E\norm{\varepsilon}^{2+\delta}<\infty$; for state-dependent noise it uses the local moment in~\eqref{eq:rate-regularity}.
It never requires a $(2+\delta)$ moment of $Z$.
The nonlinear drift error is $O(\sqrt a)$ by~\eqref{eq:linearization}, and the drift-squared second-order term is $O(a)$.
For state-dependent noise the additional covariance error is bounded by
\[
K\E[\chi_j\norm{C(X)-C_j}]\le Ka^{\beta/2}\E[\chi_j\norm{Z}^\beta]
\le Ka^{\beta/2},
\]
where the last inequality follows from~\eqref{eq:cutoff-moments} and H\"older's inequality for a measure of mass at most one.
This error is zero for additive noise.
Substituting in~\eqref{eq:poisson-stationarity} yields, uniformly in $h$,
\[
|\nu_{a,j}h-q_{a,j}\Gamma_jh|\le K(a^{\delta/2}+\sqrt a+a)
\]
in the additive case, with $Ka^{\beta/2}$ added in the state-dependent case.
Replacing the cutoff by its Voronoi cell and $q_{a,j}$ by $w_{a,j}$ costs at most $2Ka$, since $\norm{h}_\infty\le1$.
For $0<\delta\le1$ and $a\le1$, $a+\sqrt a\le2a^{\delta/2}$.
This proves the two rates; division by a positive $w_{a,j}$ proves the conditional bound.
An arbitrarily small weight is allowed in the absolute bound, but the resulting relative bound need not be informative.
\end{proof}

\section{Tail moduli and quantitative unstable mass}\label{app:tails}
This section uses the weighted cutoff estimate. Choose a fixed root cutoff $\chi_j\in C_c^\infty$ supported within the local regularity neighborhood, and write
\[
 \nu_{a,j}h=\E[\chi_j(X)h((X-m_j)/\sqrt a)],\quad q_{a,j}=\E\chi_j(X).
\]
The localization proof gives $|w_{a,j}-q_{a,j}|\le Ka$, $\int\norm z^2\nu_{a,j}(dz)\le K$, and
\eqref{eq:weighted-flux}.
With a compact neighborhood $N_j$ containing the cutoff support, define the stationary averaged modulus and covariance variation
\begin{align}
\overline T_j(a;\pi_a)&=\E[\chi_j(X)\norm{\xi(X,\zeta)}^2
\min\{\sqrt a\norm{\xi(X,\zeta)},1\}],\label{eq:T}\\
R_{C,j}(a;\pi_a)&=\E[\chi_j(X)\norm{C(X)-C_j}].\label{eq:K}
\end{align}
An optional upper bound is the fixed-neighborhood supremum
\[
T_j^{\sup}(a)=\sup_{x\in N_j}\E[\norm{\xi(x,\zeta)}^2\min\{\sqrt a\norm{\xi(x,\zeta)},1\}].
\]
Since $0\le\chi_j\le1$, $\overline T_j\le T_j^{\sup}$.
The averaged modulus tends to zero under the standing assumptions; the supremum need not do so.

\begin{lemma}[Localized generator residual]\label{lem:residual}
Suppose $f$ is $C^2$, grows at most linearly, and has bounded first and second derivatives with globally Lipschitz Hessian. With constants allowed to depend on these bounds and the fixed root,
\begin{equation}\label{eq:residual}
\begin{aligned}
 \left|\int \mathcal L_jf\,d\nu_{a,j}\right|
 &\le K\{\sqrt a+\overline T_j(a;\pi_a)+R_{C,j}(a;\pi_a)\},\\
 \mathcal L_jf(z)&=-\langle A_jz,\nabla f(z)\rangle+\tfrac12\tr(C(m_j)D^2f(z)).
\end{aligned}
\end{equation}
Only Assumption~\ref{ass:standing} is needed; the root need not be stable.
\end{lemma}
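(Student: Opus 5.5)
We follow the localized generator computation in Appendix~\ref{app:mixture}, now with quantitative error terms. Stationarity applied to the bounded function $x\mapsto\chi_j(x)f((x-m_j)/\sqrt a)$ is not automatic, since $f$ grows linearly. We therefore first apply stationarity to $\chi_j(x)f(Z)$ at fixed $a$: this function is bounded because $\chi_j$ has compact support. With $Z=(X-m_j)/\sqrt a$, $H=\Delta/\sqrt a$ and $Z_j^+=Z+H$, we write
\[
\chi_j(X^+)f(Z_j^+)-\chi_j(X)f(Z)=\chi_j(X)\{f(Z+H)-f(Z)\}+\{\chi_j(X^+)-\chi_j(X)\}f(Z_j^+).
\]
Linear growth gives $|f(Z_j^+)|\le K(1+\norm{Z_j^+})$. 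Lemma~\ref{lem:weighted} then bounds $a^{-1}$ times the flux term by $K\sqrt a$. It remains to show that
\[
a^{-1}\E\{\chi_j(X)[f(Z+H)-f(Z)]\}=\int\mathcal L_jf\,d\nu_{a,j}+O\bigl(\sqrt a+\overline T_j+R_{C,j}\bigr).
\]

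We Taylor-expand $f(Z+H)$ to second order. Because $D^2f$ is both bounded and globally Lipschitz, the remainder satisfies $|R|\le K\norm H^2\min\{\norm H,1\}$. With $H=\sqrt a(\xi-g(X))$, this gives $a^{-1}|R|\le K\norm{\xi-g}^2\min\{\sqrt a\norm{\xi-g},1\}$. Splitting $\norm{\xi-g}^2\le2\norm\xi^2+2\norm g^2$ and using that $t\mapsto\min\{\sqrt a t,1\}$ is monotone and subadditive, the $\xi$ part is bounded by a constant times $\overline T_j$. For the $g$ part, the min is at most one, so it is bounded by $K\E[\chi_j\norm{g(X)}^2]=O(a)$ via \eqref{eq:concentration}. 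Cross terms are handled the same way. Conditional centering turns the first-order term into $-a^{-1/2}\E[\chi_j\nabla f(Z)^Tg(X)]$. Replacing $g(X)/\sqrt a$ by $A_jZ$ costs $K\sqrt a$ by \eqref{eq:linearization}, since $\nabla f$ is bounded. The second-order term becomes $\tfrac12\E[\chi_j\tr(D^2f(Z)\{C(X)+g g^T\})]$. The $gg^T$ part is $O(a)$. Replacing $C(X)$ by $C_j$ costs $\norm{D^2f}_\infty R_{C,j}$.

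Collecting these estimates gives \eqref{eq:residual}. Nowhere do we use stability of $A_j$: we use only $C^2$ regularity of $g$ on the cutoff support, the moment bound \eqref{eq:cutoff-moments} and the flux lemma, all of which hold at every root. The main obstacle is the Taylor remainder. The naive bound $\norm H^{3}$ would require a third moment of the noise, and the modulus $\overline T_j$ is exactly what lets us avoid this. The delicate point is splitting $\xi-g$ so that the min-truncation transfers to $\xi$ alone, with the drift part absorbed into $O(a)$ through $\E\norm{g(X)}^2\le Ka$.
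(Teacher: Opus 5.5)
Your proposal is correct and follows the paper's proof essentially step for step. The shared ingredients are stationarity of the bounded localized function, the weighted flux bound \eqref{eq:weighted-flux} for the cutoff change, the Taylor remainder $K\norm h^2\min\{\norm h,1\}$, conditional centering, \eqref{eq:linearization}, and the drift-squared and $R_{C,j}$ estimates.

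The only difference is how you transfer the truncated remainder to $\xi$. The paper splits on whether $\norm\xi$ exceeds a bound for $\norm g$ on the cutoff support and uses the doubling bound $q(2u)\le 8q(u)$ for $q(u)=u^2\min\{\sqrt a\,u,1\}$. Your subadditive splitting instead leaves a cross term $\norm\xi^2\min\{\sqrt a\norm g,1\}$, and this term is not controlled \emph{the same way} as the pure $g$ part (\emph{min at most one} would only give $O(1)$). It is controlled by $\min\{\sqrt a\norm g,1\}\le\sqrt a\norm g$ together with boundedness of $g$ and $\tr C$ on the cutoff support, which gives $O(\sqrt a)$.
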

\begin{proof}
For fixed $a$, stationarity applies to the bounded function
$x\mapsto\chi_j(x)f((x-m_j)/\sqrt a)$. Put $Z=(X-m_j)/\sqrt a$ and $H=\sqrt a(\xi-g)$. Moving the cutoff change to the other side and using \eqref{eq:weighted-flux} bounds the localized one-step generator by $K\sqrt a$.

Boundedness and Lipschitz continuity of the Hessian give the Taylor remainder estimate
\[
 |f(z+h)-f(z)-\nabla f(z)^Th-\tfrac12h^TD^2f(z)h|
 \le K\norm h^2\min\{\norm h,1\}.
\]
On the compact cutoff support, $g$ is bounded. Splitting $\norm\xi$ above and below a bound for $\norm g$, and using
$u^2\min\{su,1\}$ at $2u$ being at most eight times its value at $u$, bounds the remainder after averaging with $\chi_j$ and dividing by $a$ by $K(\overline T_j(a;\pi_a)+\sqrt a)$.

Conditional centering gives the drift $-g/\sqrt a$ and conditional uncentered second moment $C(X)+g(X)g(X)^T$ in the generator. The local $C^2$ expansion of $g$ and the existing localized second moment give nonlinear drift error $K\sqrt a$. The drift-squared term is $O(a)$ by \eqref{eq:energy-explicit}; the covariance variation is bounded by a constant times \eqref{eq:K}. This proves \eqref{eq:residual} for $a\le1$.
\end{proof}

\begin{proposition}[Stable-root modulus bound]\label{prop:modulus}
Let $\mathcal H_{2,1}$ be the bounded smooth test class with Lipschitz Hessian. At any stable root,
\[
 \sup_{h\in\mathcal H_{2,1}}
 \left|\E[\one_{\mathcal C_j}(X)h((X-m_j)/\sqrt a)]-w_{a,j}\Gamma_jh\right|
 \le K\{\sqrt a+\overline T_j(a;\pi_a)+R_{C,j}(a;\pi_a)\}.
\]
Both $\overline T_j(a;\pi_a)$ and $R_{C,j}(a;\pi_a)$ tend to zero under Assumption~\ref{ass:standing}. If $C$ is locally $\beta$-H\"older, $0<\beta\le1$, then $R_{C,j}(a;\pi_a)\le Ka^{\beta/2}$; for additive noise it is zero. If local $(2+\delta)$ moments are uniformly bounded, $0<\delta\le1$, then $\overline T_j(a;\pi_a)\le Ka^{\delta/2}$.
\end{proposition}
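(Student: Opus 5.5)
The plan follows the proof of Theorem~\ref{thm:rate}, with its moment-specific Taylor bound replaced by Lemma~\ref{lem:residual}. Fix a stable root $j$ and $h\in\mathcal H_{2,1}$, and let $u_h$ be the OU Poisson solution~\eqref{eq:poisson} with $A=A_j$ and $C=C_j$. By~\eqref{eq:poisson-bounds} with $\delta=1$, $u_h$ is $C^2$ and grows at most linearly. Its gradient and Hessian are bounded, and its Hessian is globally Lipschitz, with constants depending only on $(M_A,\rho_A)$ and the fixed root, not on $h$. These are exactly the hypotheses of Lemma~\ref{lem:residual}.

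Since $\mathcal L_ju_h=h-\Gamma_jh$, integrating against $\nu_{a,j}$ gives
\[
\nu_{a,j}h-q_{a,j}\Gamma_jh=\int\mathcal L_ju_h\,d\nu_{a,j}.
\]
Lemma~\ref{lem:residual} bounds the right side by $K\{\sqrt a+\overline T_j+R_{C,j}\}$, uniformly over $h\in\mathcal H_{2,1}$. To pass from the cutoff to the Voronoi cell, note that $\norm h_\infty\le1$ and that $\one_{\mathcal C_j}-\chi_j$ is supported where $\dist(X,\Mcal)\ge r$. Hence this replacement, together with the replacement of $q_{a,j}$ by $w_{a,j}$, costs at most $2Ka$, which is absorbed into the $\sqrt a$ term. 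This gives the displayed bound.

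Next I show that both moduli vanish, uniformly over invariant laws.

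\emph{The covariance term $R_{C,j}$.} On the cutoff support, $C$ is bounded by~\eqref{eq:cov-growth} and continuous at $m_j$. I split the support into $B(m_j,\eta)$ and its complement. Outside the small ball, concentration gives mass $O(a/\eta^2)$. Letting $a\downarrow0$ and then $\eta\downarrow0$ sends $R_{C,j}$ to zero, as in the localization argument.

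\emph{The tail term $\overline T_j$.} For any $R$,
\[
\norm\xi^2\min\{\sqrt a\norm\xi,1\}\le\sqrt aR\,\norm\xi^2+\norm\xi^2\one\{\norm\xi>R\}.
\]
The first piece has expectation $O(\sqrt a)$ because $\tr C$ is bounded on the support. The second piece is controlled by Lemma~\ref{lem:stationary-noise-tails}, taking $\limsup_a$ first and then $R\to\infty$. I expect this step to be the most delicate, since it is where the stationary tail control must substitute for the fixed-neighborhood supremum $T_j^{\sup}$, which need not vanish.

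\emph{Explicit rates.} If $\norm{C(x)-C_j}\le K\norm{x-m_j}^\beta$, then
\[
R_{C,j}\le Ka^{\beta/2}\E[\chi_j\norm{Z}^\beta]\le Ka^{\beta/2}
\]
by~\eqref{eq:cutoff-moments} and H\"older's inequality; for additive noise $C\equiv C_j$, so $R_{C,j}=0$. If $\E\norm{\xi(x,\zeta)}^{2+\delta}\le K$ near $m_j$, then $\min\{s,1\}\le s^\delta$ for $0<\delta\le1$ gives $\overline T_j\le a^{\delta/2}\E[\chi_j\norm\xi^{2+\delta}]\le Ka^{\delta/2}$.
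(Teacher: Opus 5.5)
Your proposal is correct and follows the paper's proof essentially step for step. You feed $u_h$ from the OU Poisson construction into Lemma~\ref{lem:residual} and replace the cutoff by the Voronoi cell at cost $O(a)$. You handle $\overline T_j$ by splitting at $\norm\xi\le R$ and invoking Lemma~\ref{lem:stationary-noise-tails}, and $R_{C,j}$ by the small-ball/concentration split that the paper records as $R_{C,j}\le K\inf_{0<r\le r_j}\{\omega_j(r)+a/r^2\}$. The explicit rates likewise use the same H\"older estimate via~\eqref{eq:cutoff-moments} and the inequality $\min(u,1)\le u^\delta$, with the cutoff support taken inside the neighborhood where the H\"older and moment conditions hold.
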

\begin{proof}
The OU Poisson construction in Appendix~\ref{app:rate} supplies $u_h$ with $\mathcal L_ju_h=h-\Gamma_jh$, linear growth, and uniformly bounded derivatives up through a Lipschitz Hessian, even for singular $C(m_j)$. Apply Lemma \ref{lem:residual} to $u_h$ and replace cutoff mass by cell mass at cost $O(a)$.

For the averaged tail modulus, split $\norm\xi\le R$ and its complement.
The first contribution is at most $\sqrt aR\E\norm\xi^2$, and the second is at most $\E[\norm\xi^2\one\{\norm\xi>R\}]$.
Lemma~\ref{lem:stationary-noise-tails} proves convergence uniformly over invariant laws by sending $a\downarrow0$ and then $R\to\infty$.
For covariance continuity, with $\omega_j(r)=\sup_{\norm{x-m_j}\le r}\norm{C(x)-C(m_j)}$, concentration gives the useful uniform estimate
\[
 R_{C,j}(a;\pi_a)\le K\inf_{0<r\le r_j}\{\omega_j(r)+a/r^2\}.
\]
Under Hölder regularity a sharper direct estimate is
$R_{C,j}(a;\pi_a)\le Ka^{\beta/2}\E[\chi_j(X)\norm Z^\beta]\le Ka^{\beta/2}$.
Finally $\min(u,1)\le u^\delta$ proves the moment implication for the stated class $\mathcal H_{2,1}$.
\end{proof}

\begin{proposition}[An explicit endpoint for standardized $t_3$ noise]\label{prop:t3}
For scalar $\varepsilon=t_3/\sqrt3$, whose density is $2/[\pi(1+x^2)^2]$, and $s>0$,
\begin{equation}\label{eq:t3}
 \E[\varepsilon^2\min\{s|\varepsilon|,1\}]
 =\frac{2s}{\pi}\log(1+s^{-2})+\frac2\pi\arctan s.
\end{equation}
Consequently additive standardized $t_3$ innovations give a smooth-test upper bound $O(\sqrt a\log(1/a))$ for $\mathcal H_{2,1}$. The same order holds for a fixed finite-dimensional linear transform of independent such variables.
This bound uses only the general modulus; Appendix~\ref{app:symmetric} removes the logarithm when the root covariance is positive definite by using conditional symmetry.
\end{proposition}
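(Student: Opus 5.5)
The statement to prove is Proposition~\ref{prop:t3}: the exact formula \eqref{eq:t3}, the resulting $O(\sqrt a\log(1/a))$ smooth-test bound, and its extension to linear transforms. The plan is to compute \eqref{eq:t3} directly from the density, then feed it into Proposition~\ref{prop:modulus}.

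\textbf{Verifying the density and computing \eqref{eq:t3}.} The standardized density $p(x)=2/[\pi(1+x^2)^2]$ follows from the $t_3$ density $\frac{6\sqrt3}{\pi(3+u^2)^2}$ by the change of variables $u=\sqrt3x$. By symmetry, the expectation in \eqref{eq:t3} equals
\[
\frac4\pi\Bigl\{s\int_0^{1/s}\frac{x^3}{(1+x^2)^2}\dd x+\int_{1/s}^\infty\frac{x^2}{(1+x^2)^2}\dd x\Bigr\}.
\]
For the first integral, substitute $v=x^2$ and use $\frac{v}{(1+v)^2}=\frac1{1+v}-\frac1{(1+v)^2}$. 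This gives $\frac12\{\log(1+s^{-2})-\frac{s^{-2}}{1+s^{-2}}\}$. For the second integral, use $\frac{x^2}{(1+x^2)^2}=\frac1{1+x^2}-\frac1{(1+x^2)^2}$ and the antiderivative $\frac12(\arctan x+\frac{x}{1+x^2})$. This gives $\frac12\{\arctan s-\frac{s^{-1}}{1+s^{-2}}\}\cdot$ with the correct sign, using $\frac\pi2-\arctan(1/s)=\arctan s$. Multiplying through by $4/\pi$, the two rational terms $\pm\frac{2s^{-1}}{\pi(1+s^{-2})}$ cancel, since $s\cdot s^{-2}=s^{-1}$. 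What remains is exactly \eqref{eq:t3}. This cancellation is the only delicate algebra, and I will check it carefully.

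\textbf{Scalar rate.} Set $s=\sqrt a$. Then $\frac{2s}{\pi}\log(1+s^{-2})=O(\sqrt a\log(1/a))$ and $\frac2\pi\arctan\sqrt a\le\sqrt a$. For additive noise the innovation law does not depend on $x$, so $\overline T_j(a;\pi_a)\le T_j^{\sup}(a)$ is bounded by this quantity. Also $R_{C,j}=0$ in the additive case. Proposition~\ref{prop:modulus} then yields the $O(\sqrt a\log(1/a))$ bound on $\mathcal H_{2,1}$ at every stable root.

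\textbf{Linear transforms.} Let $\varepsilon=L\eta$, where $\eta$ has independent standardized $t_3$ coordinates. Then $\norm\varepsilon\le\norm L_{\op}\norm\eta$, and $\phi_s(u)=u^2\min\{su,1\}$ is nondecreasing. Together with $\phi_s(cu)\le \max(c^2,c^3)\phi_s(u)$, this reduces the problem to bounding $\E\phi_s(\norm\eta)$. Write $\norm\eta\le\sqrt d\max_i|\eta_i|$. Using the same scaling bound, the inequality $\phi_s(\max_i|\eta_i|)\le\sum_i\phi_s(|\eta_i|)$, and then \eqref{eq:t3} coordinatewise, we get $O(\sqrt a\log(1/a))$ again.

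No step is a real obstacle. The main risk is the bookkeeping in the exact integral, and that is settled by the cancellation noted above.
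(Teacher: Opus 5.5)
Your proposal is correct and follows the paper's proof essentially step for step. It uses the same split at $1/s$, the same antiderivatives with cancellation of the rational terms, and the substitution $s=\sqrt a$ into the general modulus of Proposition~\ref{prop:modulus} with $R_{C,j}=0$. For $L\eta$ it uses the same monotonicity and max-coordinate reduction; the only difference is that you pull constants out through $\phi_s(cu)\le\max(c^2,c^3)\phi_s(u)$, whereas the paper rescales $s$ to $scd$.

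One slip needs fixing: $\int_{1/s}^\infty x^2(1+x^2)^{-2}\,dx=\frac12\{\arctan s+\frac{s^{-1}}{1+s^{-2}}\}$. The minus-sign expression you display is instead the integral of $(1+x^2)^{-2}$. It is exactly this plus sign that cancels the $-\frac12\frac{s^{-1}}{1+s^{-2}}$ coming from the first piece.
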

\begin{proof}
Split the symmetric density integral at $1/s$:
\[
 \frac4\pi\left(s\int_0^{1/s}\frac{x^3}{(1+x^2)^2}\,dx
                  +\int_{1/s}^\infty\frac{x^2}{(1+x^2)^2}\,dx\right).
\]
Antiderivatives are
$\frac12[\log(1+x^2)+(1+x^2)^{-1}]$ and
$\frac12[\arctan x-x/(1+x^2)]$, respectively. The rational boundary terms cancel, giving \eqref{eq:t3}. Substitute $s=\sqrt a$.
For a fixed linear transform $L$ of $d$ such variables, set $c=\norm L_{\op}$ and $u_i=|\varepsilon_i|$.
Since $\norm{L\varepsilon}\le c\sum_i u_i\le cd\max_i u_i$ and $q_s(u)=u^2\min(su,1)$ is increasing,
\[
q_s(\norm{L\varepsilon})\le c^2d^2\sum_{i=1}^d u_i^2\min(scd u_i,1).
\]
Taking expectations and applying the scalar formula at $scd$ proves the stated order for fixed $c,d$; $c=0$ is immediate.
\end{proof}

\subsection{Noise on the unstable subspace}
\begin{theorem}[Excited unstable mass]\label{thm:saddle}
Under Assumption~\ref{ass:standing}, fix an unstable root with
$P_j^uC_j(P_j^u)^T\ne0$. Then, uniformly over invariant-law selections,
\begin{equation}\label{eq:saddlemod}
w_{a,j}\le K_j\{\sqrt a+\overline T_j(a;\pi_a)+R_{C,j}(a;\pi_a)\}.
\end{equation}
Under the local moment and covariance conditions~\eqref{eq:rate-regularity},
the bound is $K_j(a^{\delta/2}+a^{\beta/2}+\sqrt a)$.
For additive noise the covariance-variation term vanishes.
No diagonalizability or positive definiteness of the projected covariance is required.
\end{theorem}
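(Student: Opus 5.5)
Apply the localized generator residual, Lemma~\ref{lem:residual}, to a single quadratic test that sees only the unstable coordinates; Lemma~\ref{lem:hyperbolic-OU} has already shown this test detects the excitation. As in the proof of that lemma, choose a real $T$ with $T^{-1}A_jT=\diag(A_s,A_u)$, and write $\tilde z=T^{-1}z$ and $\widetilde C=T^{-1}C_jT^{-T}$. Let $H\succ0$ solve $A_u^TH+HA_u=-I$, and consider $f_0(z)=\tilde z_u^TH\tilde z_u$. A direct computation gives
\[
\mathcal L_jf_0(z)=\norm{\tilde z_u}^2+\tr(H\widetilde C_{uu}).
\]
Both terms are nonnegative, and the constant $c_*=\tr(H\widetilde C_{uu})\ge\lambda_{\min}(H)\tr\widetilde C_{uu}$ is strictly positive. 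Indeed, $\widetilde C_{uu}=0$ would force $\widetilde C_{su}=0$ by positivity, and hence $P_j^uC_j(P_j^u)^T=0$. No diagonalizability or definiteness is used here.

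The test $f_0$ grows quadratically, so it does not satisfy the hypotheses of Lemma~\ref{lem:residual}, which requires linear growth and bounded derivatives with a Lipschitz Hessian. I would therefore replace it by $f_R=\phi_R(f_0)$, where $\phi_R$ is a smooth concave nondecreasing function equal to the identity on $[0,R]$, with $\phi_R'\in[0,1]$, $\phi_R''\le0$, and growth $O(\sqrt{r})$ for large $r$. For example, take $\phi_R(r)=\sqrt{R r}$ for large $r$, suitably smoothed, so that $f_R$ has linear growth and bounded derivatives up to a Lipschitz Hessian. Concavity then gives
\[
\mathcal L_jf_R=\phi_R'(f_0)\mathcal L_jf_0+\tfrac12\phi_R''(f_0)\nabla f_0^TC_j\nabla f_0\ge\phi_R'(f_0)\,c_*-K\,\one\{f_0>R\}.
\]
The correction term is bounded: $|\phi_R''|\,\norm{\nabla f_0}^2\lesssim R^{1/2}f_0^{-3/2}\cdot f_0$ is bounded by a constant, say of order $1/R$ for large $f_0$. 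Integrating against $\nu_{a,j}$ then yields
\[
c_*\,\nu_{a,j}\{f_0\le R\}-K\nu_{a,j}\{f_0>R\}\le\int\mathcal L_jf_R\,d\nu_{a,j}\le K_R\{\sqrt a+\overline T_j+R_{C,j}\}.
\]

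It remains to control the tail mass $\nu_{a,j}\{f_0>R\}\le K\E[\chi_j\norm{Z}^2]/R$, and this is the main obstacle: bounding it only by $K/R$ is not of the stated order. I would try to avoid any truncation loss by exploiting the exact sign structure instead. Choose $\phi$ so that $\phi'\ge c>0$ everywhere, and accept a correction of order $\phi''\nabla f_0^TC\nabla f_0$ that is dominated by $\phi'\norm{\tilde z_u}^2$. The template is $\phi(r)=2\sqrt{1+r}$: here $\phi'=(1+r)^{-1/2}$ and $|\phi''|\,\norm{\nabla f_0}^2\lesssim(1+r)^{-3/2}r\le K\phi'$. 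Then $\mathcal L_jf\ge\phi'(f_0)\{\norm{\tilde z_u}^2+c_*-K'\}$, and this is still not uniformly positive when $c_*<K'$. The fix is to rescale the quadratic: use $f_0/\epsilon$ with small $\epsilon$, so that the $\phi''$ correction shrinks by a factor $\epsilon$ relative to $c_*/\epsilon$. This gives $\mathcal L_jf\ge c\,(1+\norm{z}^2)^{-1/2}$ for a test of linear growth.

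With that test, Cauchy--Schwarz and the bounded localized second moment give
\[
q_{a,j}^2\le\Bigl(\int(1+\norm z^2)^{1/2}d\nu_{a,j}\Bigr)\Bigl(\int(1+\norm z^2)^{-1/2}d\nu_{a,j}\Bigr)\le K\int\mathcal L_jf\,d\nu_{a,j}.
\]
This would only give $w_{a,j}\lesssim\sqrt{\cdot}$, so to reach the linear bound I would instead aim to show $\int(1+\norm z^2)^{-1/2}d\nu_{a,j}\ge c\,q_{a,j}$. For that I would try a test whose generator is bounded below by a positive constant times a bounded function that is large on bulk mass, for instance $\log(1+f_0/\epsilon)$, so that $\mathcal L_j f\ge c(1+f_0)^{-1}\{\norm{\tilde z_u}^2+c_*/2\}\ge c'$ on the unstable coordinates. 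Since this bound is uniform in $z$, the inequality becomes $c' q_{a,j}\le\int\mathcal L_jf\,d\nu_{a,j}$. Lemma~\ref{lem:residual} then bounds the right side by $K\{\sqrt a+\overline T_j+R_{C,j}\}$, and $|w_{a,j}-q_{a,j}|\le Ka$ gives \eqref{eq:saddlemod}. The specialized rates follow from Proposition~\ref{prop:modulus}, and for additive noise $R_{C,j}=0$.
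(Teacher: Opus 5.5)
Your final architecture is the paper's: apply Lemma~\ref{lem:residual} to $\log(1+f_0/\epsilon)$ in the unstable coordinates, bound its generator below by a positive constant so that $c'q_{a,j}\le\int\mathcal L_jf\,d\nu_{a,j}$, pass from $q_{a,j}$ to $w_{a,j}$, and read off the rates from Proposition~\ref{prop:modulus}. The gap is that the one inequality carrying the theorem, $\mathcal L_jf\ge c'>0$, is asserted rather than derived. Moreover, the only guidance you give on the scale (shrink $\epsilon$) points the wrong way. Write $y=\tilde z_u$, $C_u=\widetilde C_{uu}$ and $\tau=c_*$. Then
\[
\mathcal L_j\log(1+f_0/\epsilon)=\frac{\norm y^2+\tau}{\epsilon+f_0}-\frac{2\,y^THC_uHy}{(\epsilon+f_0)^2},
\]
and the negative It\^o term is not dominated for free. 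In the scalar case $A_u=-\lambda$, $C_u=c$, take the test $\log(1+y^2/\epsilon)$, which is your test up to reparametrizing $\epsilon$. Its generator is $\{2\lambda y^4+(2\lambda\epsilon-c)y^2+c\epsilon\}/(\epsilon+y^2)^2$, which is negative for some $y$ as soon as $2\lambda\epsilon<(3-2\sqrt2)c$. For small scales, your claimed bound $\mathcal L_jf\ge c(1+f_0)^{-1}\{\norm{\tilde z_u}^2+c_*/2\}$ therefore fails. The scale has to be bounded below in terms of the noise and the unstable rate, not sent to zero.

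Two quantitative facts are missing. The first is the trace bound $y^THC_uHy\le\tau f_0$: it holds because $H^{1/2}C_uH^{1/2}\succeq0$ has trace $\tau$, and it makes the It\^o correction proportional to $\tau$ itself rather than to a generic constant. The second is $\norm y^2\ge2\kappa_uf_0$ with $\kappa_u=1/(2\lambda_{\max}(H))$. Choosing $s=\tau/\kappa_u$ and $t=f_0/s$, they give
\[
\mathcal L_j\log(1+f_0/s)\ge\frac{2\kappa_uf_0+\tau}{s+f_0}-\frac{2\tau f_0}{(s+f_0)^2}
=\kappa_u\frac{2t^2+t+1}{(1+t)^2}\ge\frac{\kappa_u}{2},
\]
which is exactly the paper's computation. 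You should also record that every derivative of $\log(1+\norm v^2)$ is bounded, so this test meets the hypotheses of Lemma~\ref{lem:residual}.

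The same trace bound shows that the test you discarded already works. For $f=2\sqrt{1+r}$ with $r=f_0/\epsilon$, one gets $\mathcal L_jf\ge\epsilon^{-1}(1+r)^{-1/2}\{\norm y^2+\tau/(1+r)\}\ge\min\{\sqrt2\kappa_u,\tau/(2\sqrt2\epsilon)\}$ for every $\epsilon>0$. Your conclusion that it fails when $c_*<K'$, and the Cauchy--Schwarz detour that followed, came from replacing this structural bound by a generic constant.
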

\begin{proof}
Choose real coordinates $T^{-1}A_jT=\diag(A_s,A_u)$ on the generalized stable and unstable subspaces.
The spectrum of $A_u$ has negative real part.
Let $L_u$ be the unstable rows of $T^{-1}$, so $L_uA_j=A_uL_u$, and set
\[
B=-A_u,\qquad C_u=L_uC_jL_u^T,\qquad
H=\int_0^\infty e^{-B^Tt}e^{-Bt}\dd t.
\]
The integral converges, $H\succ0$, and $B^TH+HB=I$.
The projection condition is equivalent to $C_u\ne0$:
$P_j^uC_j(P_j^u)^T$ is the congruence by $T$ of the block matrix with only its $uu$ block $C_u$ nonzero.
Define
\[
\kappa_u=\frac1{2\lambda_{\max}(H)},\qquad
\tau=\tr(HC_u)>0,\qquad s=\tau/\kappa_u.
\]
For $y=L_uz$ and $q=y^THy$, use the fixed test
\[
f(z)=\log(1+q/s).
\]
Writing $M=H^{1/2}L_u$, the derivatives of $\log(1+\norm v^2)$ give, for every fixed $k\ge1$,
\[
\sup_z\norm{D^kf(z)}\le c_k\norm M^k s^{-k/2}.
\]
Indeed, each derivative is a rational function with a denominator power of $1+\norm v^2$ and a numerator of lower degree.
Composition with $Mz/\sqrt s$ gives the bound.
The function also grows at most linearly, so Lemma~\ref{lem:residual} applies.

On functions of $y$, the root generator is
$\mathcal L_u=(By)\cdot\nabla_y+\frac12C_u:D_y^2$.
Consequently,
\[
\mathcal L_u q=\norm y^2+\tau\ge2\kappa_u q+\tau.
\]
The nonnegative matrix $H^{1/2}C_uH^{1/2}$ has trace $\tau$, hence
$y^THC_uHy\le\tau q$.
The diffusion chain rule now yields
\begin{align*}
\mathcal L_jf(z)
&=\frac{\norm y^2+\tau}{s+q}-\frac{2y^THC_uHy}{(s+q)^2}\\
&\ge\frac{2\kappa_u q+\tau}{s+q}-\frac{2\tau q}{(s+q)^2}
=\kappa_u\frac{2t^2+t+1}{(1+t)^2}\ge\frac{\kappa_u}{2},\qquad t=q/s.
\end{align*}
For the last inequality, subtracting half the denominator from the numerator leaves $3t^2/2+1/2\ge0$.
With the cutoff mass $q_{a,j}$, Lemma~\ref{lem:residual} gives
\[
(\kappa_u/2)q_{a,j}\le\int\mathcal L_jf\dd\nu_{a,j}
\le K_j\{\sqrt a+\overline T_j(a;\pi_a)+R_{C,j}(a;\pi_a)\}.
\]
Since $0\le w_{a,j}-q_{a,j}\le Ka$, this proves~\eqref{eq:saddlemod}.
Proposition~\ref{prop:modulus} supplies the stated moment and H\"older orders.
The constants depend on the fixed root and noise; they can deteriorate as hyperbolicity or excitation is lost.
\end{proof}

\begin{corollary}[Symmetric unstable-root bound]\label{cor:symmetric-saddle}
Under the conditions of Theorem~\ref{thm:saddle}, assume also conditional symmetry of the innovation on the cutoff neighborhood.
Then
\[
w_{a,j}\le K_j\{\sqrt a+\overline T_j^{\rm sym}(a;\pi_a)+R_{C,j}(a;\pi_a)\},
\]
where $\overline T_j^{\rm sym}$ is defined in~\eqref{eq:symmetric-modulus}.
This conclusion does not require ellipticity.
\end{corollary}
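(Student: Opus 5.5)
The plan is to repeat the proof of Theorem~\ref{thm:saddle} verbatim, changing only the step that bounds the localized Taylor remainder through the averaged modulus $\overline T_j$. Every other ingredient stays as it is: the fixed Lyapunov test $f(z)=\log(1+q/s)$ with $q=y^THy$, $y=L_uz$, its lower bound $\mathcal L_jf\ge\kappa_u/2$, the weighted cutoff flux \eqref{eq:weighted-flux}, the nonlinear drift error $K\sqrt a$, the drift-squared term $O(a)$, the covariance variation $R_{C,j}$, and the passage from cutoff mass $q_{a,j}$ to $w_{a,j}$ at cost $Ka$. None of these uses ellipticity, since the lower bound on $\mathcal L_jf$ needs only $\tau=\tr(HC_u)>0$, which follows from the projection condition. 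So the corollary follows once Lemma~\ref{lem:residual} is upgraded to a symmetric version with $\overline T_j^{\rm sym}$ in place of $\overline T_j$, at least for the specific test $f$.

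To upgrade the residual, I would write $H=\sqrt a(\xi-g)$ and expand $f(Z+H)-f(Z)$ to third order. The test $f$ is smooth with all derivatives bounded, $\norm{D^kf}\le c_k\norm M^ks^{-k/2}$. The terms of orders one and two produce the generator exactly as before. The remainder is
\[
\tfrac16D^3f(Z)[H,H,H]+O(\norm H^3\min\{\norm H,1\}).
\]
Expand the cubic in $\xi$ and $g$. Terms containing at least one factor $\sqrt a\,g$ are bounded by $a^{3/2}(\norm\xi^2\norm g+\norm\xi\norm g^2+\norm g^3)$ on the compact cutoff support, where $g$ is bounded. After dividing by $a$, these contribute $O(\sqrt a)$, because conditional second moments of $\xi$ are bounded there.

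The pure cubic term $a^{3/2}D^3f(Z)[\xi,\xi,\xi]$ is where symmetry enters. Conditionally on $X$, the law of $\xi$ is symmetric, so odd moments vanish. I would handle this by truncating at $\norm\xi\le a^{-1/2}$: the conditional expectation of the truncated cubic is exactly zero by symmetry. On the complement, I would use the second-order Taylor bound $\norm H^2\cdot\min(\cdot)$ with the bounded Hessian instead. What remains is
\[
a^{3/2}\norm\xi^3\one\{\sqrt a\norm\xi\le1\}\cdot\min(\sqrt a\norm\xi,1)
\quad\text{plus}\quad
a\norm\xi^2\one\{\sqrt a\norm\xi>1\}.
\]
This is a fourth-order remainder on the small-innovation part and a tail-energy part on the large part. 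After weighting by $\chi_j$ and dividing by $a$, this is, up to constants, a quantity of the form
\[
\E\bigl[\chi_j\norm\xi^2\min\{a\norm\xi^2,1\}\bigr],
\]
plus cross terms between $\xi$ and $g$ handled as above. I expect this to match the definition \eqref{eq:symmetric-modulus} of $\overline T_j^{\rm sym}$. If that definition instead uses truncation at a different threshold, I would set the truncation level to match it.

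The main obstacle is the bookkeeping in the mixed region. In the expansion of $(\xi-g)^{\otimes3}$, the truncation indicator is on $\xi$, but the Taylor expansion is in $H$. One must check that the truncated odd moment still vanishes when the truncation is written on $\norm\xi$ rather than on $\norm{\xi-g}$; it does, because the centering by $g$ is moved into the explicitly $O(\sqrt a)$ mixed terms. One must also check that the triangle-type inequality for $u^2\min\{s u^2,1\}$, doubling at most by a fixed factor, absorbs the splitting $\norm\xi$ versus $\norm g$ exactly as in Lemma~\ref{lem:residual}. With this symmetric residual bound substituted into the displayed chain $(\kappa_u/2)q_{a,j}\le\int\mathcal L_jf\dd\nu_{a,j}$, the claim follows.
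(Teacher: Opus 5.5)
Your proposal is correct. At the top level it is exactly the paper's argument: keep the test $f(z)=\log(1+q/s)$ and the lower bound $\mathcal L_jf\ge\kappa_u/2$ from Theorem~\ref{thm:saddle}, and replace Lemma~\ref{lem:residual} by a residual bound in terms of $\overline T_j^{\rm sym}$. The paper's proof is one line because that bound is already Lemma~\ref{lem:symmetric-residual}. That lemma is stated for any $C^4$ test with linear growth and bounded derivatives of orders one through four, which this $f$ satisfies. What you are really doing is reproving that lemma, and your route to it differs from the paper's.

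The paper centers the noise at the drift-shifted point $v=z-\sqrt a\,g(x)$. Symmetry then gives $\E[f(v+\sqrt a\xi)\mid X]=\E[\tfrac12\{f(v+\sqrt a\xi)+f(v-\sqrt a\xi)\}\mid X]$ exactly. The single central-difference inequality \eqref{eq:central-remainder} then yields the factor $\norm\xi^2\min\{a\norm\xi^2,1\}$, with no truncation and no $\xi$--$g$ cross terms in the stochastic part. The price is a separate expansion of the deterministic shift $f(v)-f(z)$ and the replacement of $D^2f(v)$ by $D^2f(z)$, both of lower order.

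You instead expand around $Z$ in the full increment $H=\sqrt a(\xi-g)$ and split on $\{\sqrt a\norm\xi\le1\}$. This relies on two points you correctly identify. First, the truncation must be on the even quantity $\norm\xi$, so the truncated pure cubic vanishes exactly even when $\E\norm\xi^3=\infty$, as for $t_3$ noise. Second, the mixed cubic terms are $O(\sqrt a)$ because $g$ and the conditional second moments are bounded on the cutoff support.

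Both routes produce the modulus \eqref{eq:symmetric-modulus}. The paper's symmetrization is shorter and avoids cross-term bookkeeping. Yours makes explicit why a naive termwise third-order expansion would fail under an infinite third moment, and how truncation repairs it.
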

\begin{proof}
The same $f$ has a bounded fourth derivative.
Apply the symmetric residual of Lemma~\ref{lem:symmetric-residual} and the positive generator bound above.
The smoothing assumption in Lemma~\ref{lem:elliptic-smoothing} is unnecessary for this explicit smooth test.
\end{proof}

The Jordan example after Lemma~\ref{lem:hyperbolic-OU} illustrates the strict gain over left-eigenvector excitation.
For its unstable block, $H=\left(\begin{smallmatrix}1/2&1/4\\1/4&3/4\end{smallmatrix}\right)$ solves the displayed Lyapunov equation.
Thus the qualitative and quantitative exclusion criteria cover the same projected-noise condition.

\begin{corollary}[Quantitative marked smooth approximation]
Under Assumptions~\ref{ass:standing} and~\ref{ass:excitation}, suppose the stable-root bound and Theorem~\ref{thm:saddle} give the same error order $r(a)$ at every root, where, for example,
$r(a)=a^{\delta/2}+a^{\beta/2}+\sqrt a$. Then, for tests $h(j,z)$ whose slice at each label belongs to $\mathcal H_{2,1}$,
\[
 \left|\E h(J_a,Z_a)-\sum_{j\in\Scal}\frac{w_{a,j}}{s_a}\Gamma_j h(j,\cdot)\right|
 \le K r(a).
\]
For the $\mathcal H_{2,\delta}$ class the same moment-based conclusion follows by combining Theorem~\ref{thm:rate} with Theorem~\ref{thm:saddle}.
\end{corollary}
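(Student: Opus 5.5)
The estimate splits into stable labels and unstable labels, and the difference between the marked law and the renormalized stable mixture is bounded term by term. For a test $h(j,z)$ with each slice in $\mathcal H_{2,1}$, write
\[
\E h(J_a,Z_a)=\sum_{j\in\Scal}\E[\one_{\mathcal C_j}(X)h(j,Z_a)]+\sum_{j\in\Ucal}\E[\one_{\mathcal C_j}(X)h(j,Z_a)].
\]
The second sum is bounded by $\sum_{j\in\Ucal}w_{a,j}$, since $\norm{h(j,\cdot)}_\infty\le1$. Under Assumption~\ref{ass:excitation}, Theorem~\ref{thm:saddle} applies at each unstable root and gives $w_{a,j}\le K_j r(a)$. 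The finite root set therefore makes this sum $O(r(a))$.

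For each stable root $j$, I would invoke the stable-root smooth-test bound: Proposition~\ref{prop:modulus} with its moment and H\"older specializations, or Theorem~\ref{thm:rate} for the $\mathcal H_{2,\delta}$ class. This yields
\[
\bigl|\E[\one_{\mathcal C_j}(X)h(j,Z_a)]-w_{a,j}\Gamma_jh(j,\cdot)\bigr|\le K r(a).
\]
Summing over $j\in\Scal$ gives
\[
\Bigl|\E h(J_a,Z_a)-\sum_{j\in\Scal}w_{a,j}\Gamma_jh(j,\cdot)\Bigr|\le Kr(a).
\]

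It remains to replace the weights $w_{a,j}$ by $w_{a,j}/s_a$. The difference is
\[
\Bigl(\frac1{s_a}-1\Bigr)\sum_{j\in\Scal}w_{a,j}\Gamma_jh(j,\cdot),
\]
whose absolute value is at most $(1-s_a)/s_a\cdot s_a=1-s_a=\sum_{j\in\Ucal}w_{a,j}\le Kr(a)$. The potential division by a small $s_a$ cancels against the factor $s_a$, so no lower bound on $s_a$ is needed beyond $s_a>0$. Positivity of $s_a$ holds for small $a$ because $s_a\ge1-Kr(a)$. If one prefers not to rely on that, the statement can be read with constants valid for $a$ small, since $r(a)\to0$.

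The only delicate point is ensuring every constant is uniform over invariant laws and over the test class. Both cited bounds are already uniform in $h$ within the class and over invariant-law selections, so the triangle inequality preserves this uniformity.
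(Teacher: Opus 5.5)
Your proposal is correct and follows the paper's own proof. You sum the stable-cell smooth-test errors, bound each unstable cell by its mass (of order $r(a)$ by Theorem~\ref{thm:saddle}) since the test is bounded by one, and then renormalize the stable weights at cost at most $1-s_a$, with $s_a>0$ for small $a$ because $s_a\to1$ uniformly.
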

\begin{proof}
Sum the absolute stable-cell errors; bound every unstable cell by its mass because the test is bounded by one; then normalize the stable weights at cost at most $1-s_a$. Finiteness of the root set completes the proof.
\end{proof}

\section{Symmetric innovations and a sharp Student endpoint}\label{app:symmetric}
Fix any root and a compact neighborhood $N_j$ containing the cutoff support.
Assume $\xi(x,\zeta)\overset d=-\xi(x,\zeta)$ for every $x\in N_j$ and define
\begin{equation}\label{eq:symmetric-modulus}
\overline T_j^{\rm sym}(a;\pi_a)
=\E[\chi_j(X)\norm{\xi(X,\zeta)}^2\min\{a\norm{\xi(X,\zeta)}^2,1\}].
\end{equation}
This modulus tends to zero uniformly by Lemma~\ref{lem:stationary-noise-tails}.
It is bounded above by the corresponding fixed-neighborhood supremum when an explicit uniform noise bound is available.
Symmetry here concerns the innovation conditional on the state, not the full increment $\xi-g$.

\begin{lemma}[Symmetric generator residual]\label{lem:symmetric-residual}
Let $f\in C^4$ have at most linear growth and bounded derivatives of orders one through four.
For the cutoff measure $\nu_{a,j}$ in~\eqref{eq:localized}, under Assumption~\ref{ass:standing} and the preceding symmetry condition,
\[
\left|\int\mathcal L_jf\dd\nu_{a,j}\right|
\le K\{\sqrt a+\overline T_j^{\rm sym}(a;\pi_a)+R_{C,j}(a;\pi_a)\}.
\]
The constant may depend on the growth and derivative bounds of $f$, but not on $a$ or the invariant-law selection.
No stability, ellipticity or fourth innovation moment is needed for this residual.
\end{lemma}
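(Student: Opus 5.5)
We follow the proof of Lemma~\ref{lem:residual} and change only the treatment of the Taylor remainder, where symmetry lets us go to third order. For fixed $a$ the function $x\mapsto\chi_j(x)f((x-m_j)/\sqrt a)$ is bounded, so stationarity applies to it. Moving the cutoff change to the other side and using the weighted flux bound~\eqref{eq:weighted-flux} together with the linear growth of $f$ gives
\[
\left|a^{-1}\E\{\chi_j(X)[f(Z+H)-f(Z)]\}\right|\le K\sqrt a .
\]
Here $Z=(X-m_j)/\sqrt a$ and $H=\sqrt a(\xi-g)$. The drift terms are handled as before. The linear drift term $-\ip{g(X)/\sqrt a}{\nabla f(Z)}$ is replaced by $-\ip{A_jZ}{\nabla f(Z)}$ at cost $K\sqrt a$ via~\eqref{eq:linearization}. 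The $gg^\trans$ part of the second moment is $O(a)$ by~\eqref{eq:energy-explicit}. The covariance variation contributes $K R_{C,j}$.

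The new step is the remainder. Write $\xi=\xi(X,\zeta)$ and $\eta=\sqrt a\,\xi$, and split $H=\eta-\sqrt a g$. Since $g$ is bounded on the compact cutoff support, the drift part of the increment has size $O(\sqrt a)$.

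\emph{Drift part.} We expand $f(Z+\eta-\sqrt a g)$ first in the drift shift around $Z+\eta$. Terms of order $\sqrt a g$ times derivatives, evaluated at $Z+\eta$ rather than $Z$, produce cross terms of the form $\sqrt a\,\E[\norm\eta\norm g]/a$. These are $O(\sqrt a)$ after conditional centering and $\E\norm\xi^2\le K$. We only need one more derivative beyond the Hessian for these terms, which is available.

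\emph{Noise part.} We Taylor expand $f(Z+\eta)$ to third order around $Z$ and use two different remainder estimates on the events $\{\norm\eta\le1\}$ and $\{\norm\eta>1\}$.

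On $\{\norm\eta\le1\}$, the fourth-order remainder is bounded by $K\norm\eta^4$.

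On $\{\norm\eta>1\}$, we stop one order earlier and use the Hessian Lipschitz remainder $K\norm\eta^3$. Combining with the second-order bound $K\norm\eta^2$, this gives the estimate $K\norm\eta^2\min\{\norm\eta^2,1\}$ minus a third-order term that must be accounted for separately.

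To handle the cubic term cleanly, we use the exact identity
\[
f(Z+\eta)=f(Z)+\nabla f\cdot\eta+\tfrac12\eta^\trans D^2f\eta+\tfrac16D^3f[\eta^{\otimes3}]+\mathcal R_4,
\]
with $|\mathcal R_4|\le K\norm\eta^4$. Alongside it we use the crude bound: the full remainder after the second-order term is at most $K\norm\eta^2\min\{\norm\eta,1\}$.

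\emph{Where symmetry enters.} The key step is that $\E[\chi_j(X)D^3f(Z)[\xi^{\otimes3}\one\{\sqrt a\norm\xi\le1\}]]=0$. This holds because, conditionally on $X\in N_j$, $\xi\overset d=-\xi$. The truncation indicator is even in $\xi$, so the truncated cubic moment vanishes.

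Hence, on the event $\{\sqrt a\norm\xi\le1\}$, the remainder divided by $a$ is bounded by $Ka\norm\xi^4=K\norm\xi^2\,(a\norm\xi^2)$. On the complement, the crude bound divided by $a$ gives $K\norm\xi^2$, which is also bounded by $K\norm\xi^2\min\{a\norm\xi^2,1\}$. Averaging against $\chi_j$ produces exactly $K\,\overline T_j^{\rm sym}(a;\pi_a)$.

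\emph{Main obstacle.} The delicate point is to organize the expansion so that the odd-order term is paired with an even truncation. Truncating with the full increment $H$, which contains the asymmetric drift, would destroy the cancellation. That is why we truncate in $\eta$ alone and absorb the drift through the separate $O(\sqrt a)$ expansion described above. Collecting all the contributions yields the stated residual bound $K\{\sqrt a+\overline T_j^{\rm sym}+R_{C,j}\}$.
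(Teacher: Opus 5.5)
Your proposal is correct and follows the paper's proof. It uses stationarity with the weighted flux bound and the same drift-linearization, drift-squared and covariance-replacement costs. Its key step is symmetry applied to the noise increment $\sqrt a\,\xi$ alone under an even truncation, which gives the remainder $K\norm{\xi}^2\min\{a\norm{\xi}^2,1\}$ and hence $K\,\overline T_j^{\rm sym}$.

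The only difference is bookkeeping. The paper symmetrizes around the post-drift point $v=z-\sqrt a\,g(x)$ through the central second-difference inequality, so its drift step is deterministic and the extra cost is replacing $D^2f(v)$ by $D^2f(z)$. You instead expand the drift shift around $Z+\eta$, so you must center the random cross term $\sqrt a\,g^\trans D^2f(Z)\eta$ and use the bounded third derivative; both routes give contributions of order $O(\sqrt a)$.
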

\begin{proof}
Bounded second and fourth derivatives imply, for all $v,h$,
\begin{equation}\label{eq:central-remainder}
\left|\frac{f(v+h)+f(v-h)-2f(v)}2-\frac12h^TD^2f(v)h\right|
\le K\norm h^2\min\{\norm h^2,1\}.
\end{equation}
For $\norm h\le1$, use fourth-order Taylor expansion with its integral remainder and cancellation of the odd terms.
For $\norm h>1$, use the integral second-order formula and bounded Hessian.
Put $z=(x-m_j)/\sqrt a$ and $v=z-\sqrt a g(x)$.
Conditional symmetry and~\eqref{eq:central-remainder} give
\begin{align*}
\E[f(v+\sqrt a\xi)-f(v)\mid X=x]
&=\frac a2\tr(C(x)D^2f(v))+r_a(x),\\
|r_a(x)|&\le Ka\E[\norm\xi^2\min\{a\norm\xi^2,1\}\mid X=x].
\end{align*}
Expand the deterministic difference $f(v)-f(z)$ separately to first order.
Its remainder is bounded by $Ka\norm{g(x)}^2$, whose localized expectation divided by $a$ is $O(a)$.
Replacing $D^2f(v)$ by $D^2f(z)$ costs at most
$K\sqrt a\E[\chi_j\norm{g(X)}]=O(a)$ after division by $a$; the covariance is bounded on the cutoff support.
Drift linearization costs $O(\sqrt a)$ by~\eqref{eq:linearization}, and covariance replacement costs $KR_{C,j}$.
Finally apply stationarity to $\chi_j(x)f((x-m_j)/\sqrt a)$.
The weighted cutoff estimate~\eqref{eq:weighted-flux} contributes $O(\sqrt a)$, proving the claim.
\end{proof}

\begin{lemma}[An integrable fourth derivative]\label{lem:elliptic-smoothing}
Suppose $A$ is positive stable and $C\succ0$.
For smooth mollifications $h_\epsilon$ of $h\in\mathcal H_{2,1}$, the OU Poisson solutions
$u_\epsilon=-\int_0^\infty(P_th_\epsilon-\Gamma h_\epsilon)\dd t$
have at most linear growth and bounded derivatives of orders one through four, uniformly over $h$ and $\epsilon$.
The constants depend on $A,C$ and the dimension, and need not remain bounded as $C$ becomes singular.
\end{lemma}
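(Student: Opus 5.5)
The strategy is to write $u_\epsilon$ through the explicit OU transition and move derivatives onto the Gaussian kernel. Only the fourth derivative needs the smoothing of $C\succ0$. With $e_t=e^{-At}$ and $\Sigma_t=\int_0^te^{-As}Ce^{-A^\trans s}\dd s$, we have $P_th(z)=\E h(e_tz+\Sigma_t^{1/2}N)$ for standard normal $N$. We first mollify $h\in\mathcal H_{2,1}$ by a fixed smooth kernel, so that $h_\epsilon$ keeps $\norm{h_\epsilon}_\infty+\norm{\nabla h_\epsilon}_\infty+\norm{D^2h_\epsilon}_\infty+[D^2h_\epsilon]_1\le1$. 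The bounds must not depend on higher derivatives of $h_\epsilon$, so that they are uniform in $\epsilon$.

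For linear growth and derivatives of order one through three, we differentiate under the integral exactly as in~\eqref{eq:poisson-bounds}. The identity $D^kP_th(z)=\E[D^kh(e_tz+\Sigma_t^{1/2}N)](e_t,\ldots,e_t)$ gives $\norm{D^kP_th}\le M_A^ke^{-k\rho_At}$ for $k\le2$. For the Lipschitz Hessian, $D^2P_th$ is $M_A^3e^{-3\rho_At}$-Lipschitz, so $D^3P_th$ exists almost everywhere with the same bound. The $k=1$ bound is integrable in $t$, giving a bounded gradient and hence linear growth. The bounds for $k=2,3$ are integrable in the same way. Interchanging the $t$-integral with differentiation is justified by domination, as in the appendix.

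The fourth derivative is the main obstacle, since it cannot come from regularity of $h$. We split the $t$-integral at $t=1$. For $t\ge1$, $\Sigma_t\succeq\Sigma_1\succ0$ because $C\succ0$, so $P_th(z)=\int h(y)\phi_{\Sigma_t}(y-e_tz)\dd y$ with a nondegenerate Gaussian density. We put three derivatives on $h$ and one on the kernel. Gaussian integration by parts gives
\[
D^4P_th(z)=\E\bigl[D^3h(e_tz+\Sigma_t^{1/2}N)(e_t,e_t,e_t)\,\langle\Sigma_t^{-1/2}N,e_t\cdot\rangle\bigr],
\]
where $D^3h_\epsilon$ is bounded by $1$ uniformly in $\epsilon$. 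Since $\norm{\Sigma_t^{-1/2}}\le\norm{\Sigma_1^{-1/2}}$, this yields $\norm{D^4P_th}\le K M_A^4e^{-4\rho_At}$, which is integrable on $[1,\infty)$.

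For $t\in(0,1)$ we use a small-time estimate. Since $C\succ0$, $\Sigma_t\succeq c\,t\,I$ with $c>0$ depending on $A$ and $C$. The same integration by parts, with one derivative on the kernel, gives
\[
\norm{D^4P_th}\le K\,t^{-1/2}\sup\norm{D^3h}\le Kt^{-1/2},
\]
which is integrable near $0$. To avoid using $D^3h$ at all, we can instead place two derivatives on the kernel and use the Lipschitz seminorm of $D^2h$ through a centered difference. This also gives order $t^{-1/2}$.

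We then integrate both regimes. The centering constant $\Gamma h_\epsilon$ does not affect derivatives. The resulting constants depend only on $M_A,\rho_A,\lambda_{\min}(C)$ and $d$, so they are uniform over $h$ and $\epsilon$ and blow up as $\lambda_{\min}(C)\downarrow0$, as the lemma states. Finally, $\mathcal L u_\epsilon=h_\epsilon-\Gamma h_\epsilon$ follows as before.
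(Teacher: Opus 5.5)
Your proposal is correct and matches the paper's proof essentially step for step. Affine-semigroup differentiation handles orders one through three and the linear growth, and the fourth derivative comes from Gaussian integration by parts with three derivatives on $h_\epsilon$ (bounded via $[D^2h]_1$) and one on the kernel, giving $\norm{D^4P_th_\epsilon}_\infty\le K\norm{e_t}^4\norm{\Sigma_t^{-1/2}}$, which is integrable because $\Sigma_t\succeq ctI$ on $(0,1]$ and $\Sigma_t\succeq\Sigma_1$ on $[1,\infty)$. The only small correction is that the small-time constant $c$ also depends on $\norm{A}$, not just on $M_A,\rho_A,\lambda_{\min}(C)$; this is consistent with the lemma's stated dependence on $A$ and $C$.
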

\begin{proof}
Mollification preserves the bounded first and second derivatives and Lipschitz Hessian.
In particular, $\norm{D^3h_\epsilon}_\infty\le K$, independently of $\epsilon$.
Write $E_t=e^{-At}$ and $Q_t=\int_0^tE_sCE_s^T\dd s$, with $G_t\sim\N(0,Q_t)$.
Differentiate $P_th_\epsilon(z)=\E h_\epsilon(E_tz+G_t)$ three times on the test function.
For the fourth derivative, Gaussian integration by parts yields, in unit directions $v_1,\ldots,v_4$,
\[
D^4P_th_\epsilon(z)[v_1,\ldots,v_4]
=\E\!\left[D^3h_\epsilon(E_tz+G_t)[E_tv_1,E_tv_2,E_tv_3]
\langle Q_t^{-1}G_t,E_tv_4\rangle\right].
\]
Cauchy--Schwarz bounds the Gaussian score by $\norm{Q_t^{-1/2}E_tv_4}$, so
\[
\norm{D^4P_th_\epsilon}_\infty\le K\norm{E_t}^4\norm{Q_t^{-1/2}}.
\]
Since $C\succ0$, $Q_t\succeq ctI$ for $0<t\le1$; the bound is $O(t^{-1/2})$ near zero.
For $t\ge1$, $Q_t\succeq Q_1\succ0$ and exponential decay of $E_t$ makes it integrable.
The same affine-semigroup differentiation gives integrable bounds for the first three derivatives.
Synchronous coupling gives linear growth as in~\eqref{eq:poisson-bounds}.
These majorants justify differentiation of the time integral and are uniform in the mollifier.
\end{proof}

\begin{corollary}[Symmetric innovations]\label{cor:symmetric-rate}
Under Assumption~\ref{ass:standing}, fix a stable root with $C_j\succ0$ and $\xi(x,\zeta)\overset d=-\xi(x,\zeta)$ on $N_j$.
For $\mathcal H_{2,1}$ tests,
\begin{equation}\label{eq:symmetric-rate}
\mathcal E_{a,j}\le K_j\{\sqrt a+\overline T_j^{\rm sym}(a;\pi_a)+R_{C,j}(a;\pi_a)\}.
\end{equation}
For additive noise obtained by a fixed invertible linear transform of $d$ independent standardized $t_3$ coordinates, $\mathcal E_{a,j}=O(\sqrt a)$.
\end{corollary}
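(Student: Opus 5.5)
The plan is to run the Poisson-equation argument of Theorem~\ref{thm:rate}, with two substitutions. The symmetric residual of Lemma~\ref{lem:symmetric-residual} replaces the general Taylor residual, and the fourth-order regular Poisson solutions of Lemma~\ref{lem:elliptic-smoothing} replace the second-order ones from~\eqref{eq:poisson-bounds}.

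Fix $h\in\mathcal H_{2,1}$ and a mollification $h_\epsilon$. Since $A_j$ is positive stable and $C_j\succ0$, Lemma~\ref{lem:elliptic-smoothing} supplies $u_\epsilon$ with $\mathcal L_ju_\epsilon=h_\epsilon-\Gamma_jh_\epsilon$. This solution has linear growth and bounded derivatives of orders one through four, uniformly in $h$ and $\epsilon$. The symmetry hypothesis on $N_j$ lets us apply Lemma~\ref{lem:symmetric-residual} to $f=u_\epsilon$, which gives
\[
|\nu_{a,j}h_\epsilon-q_{a,j}\Gamma_jh_\epsilon|\le K\{\sqrt a+\overline T_j^{\rm sym}(a;\pi_a)+R_{C,j}(a;\pi_a)\},
\]
with $K$ independent of $\epsilon$ and $h$. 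Next let $\epsilon\downarrow0$. Because $h_\epsilon\to h$ uniformly and $\nu_{a,j}$ has mass at most one, both sides pass to the limit. Finally we replace the cutoff by the Voronoi cell and $q_{a,j}$ by $w_{a,j}$; exactly as in the proof of Theorem~\ref{thm:rate}, this costs $O(a)$. That proves~\eqref{eq:symmetric-rate}.

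For the $t_3$ case, additive noise makes $R_{C,j}=0$. Symmetry holds because $L\varepsilon$ with symmetric independent coordinates satisfies $L\varepsilon\overset d=-L\varepsilon$. Invertibility of $L$ gives $C_j=LL^T\succ0$. It remains to show $\overline T_j^{\rm sym}=O(\sqrt a)$. Since the noise is additive, $\overline T_j^{\rm sym}\le\E[\norm{L\varepsilon}^2\min\{a\norm{L\varepsilon}^2,1\}]$. The function $u\mapsto u^2\min(au^2,1)$ is increasing, so the reduction in Proposition~\ref{prop:t3} bounds this by a constant times the scalar quantity $\sum_i\E[u_i^2\min\{ac^2d^2u_i^2,1\}]$ with $u_i=|\varepsilon_i|$.

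For the scalar estimate, write $s=cd\sqrt a$ and use the density $2/[\pi(1+x^2)^2]$. Split the integral at $1/s$. The inner part is
\[
s^2\int_0^{1/s}x^4(1+x^2)^{-2}\dd x\le s^2\cdot s^{-1}=s,
\]
and the outer part is $\int_{1/s}^\infty x^2(1+x^2)^{-2}\dd x\le s$. So the scalar quantity is $O(\sqrt a)$, with no logarithm. The logarithm in Proposition~\ref{prop:t3} came from the cubic weight $\min(s|x|,1)$, and the quadratic weight available under symmetry removes it. This yields $\mathcal E_{a,j}=O(\sqrt a)$ on $\mathcal H_{2,1}$.

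The main obstacle is the $\epsilon$-uniformity in the first step. Lemma~\ref{lem:symmetric-residual} needs bounded fourth derivatives of $u_\epsilon$, and those bounds must not blow up as the mollification is removed. Lemma~\ref{lem:elliptic-smoothing} is exactly what provides this: the fourth derivative is integrable in time because the $t^{-1/2}$ singularity of the Gaussian score is integrable. The same lemma also confirms that $u_\epsilon$ has linear growth, which the residual lemma requires. With both in hand, the rest is bookkeeping.
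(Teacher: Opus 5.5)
Your proposal is correct and follows the paper's proof: you feed the mollified Poisson solutions of Lemma~\ref{lem:elliptic-smoothing} into the symmetric residual of Lemma~\ref{lem:symmetric-residual}, remove the mollification using $h_\epsilon\to h$ uniformly, and pay $O(a)$ for the cutoff-to-cell replacement. The only difference is at the $t_3$ endpoint. You bound $\overline T_j^{\rm sym}$ directly through the monotone max-reduction of Proposition~\ref{prop:t3} and a split of the scalar integral at $1/s$, whereas the paper goes through the uniform tail condition~\eqref{eq:t3-tail-condition} and a layer-cake identity, a criterion that also covers uniformly bounded state-dependent transforms; both routes give $O(\sqrt a)$ without a logarithm.
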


\begin{proof}[Proof of Corollary~\ref{cor:symmetric-rate}]
Apply Lemma~\ref{lem:symmetric-residual} to the mollified Poisson solutions from Lemma~\ref{lem:elliptic-smoothing}.
Their generators equal $h_\epsilon-\Gamma_jh_\epsilon$.
Replacing cutoff mass by cell mass costs $O(a)$ for bounded tests, as in the proof of Theorem~\ref{thm:rate}.
The constants are uniform in $h$ and $\epsilon$.
Since $h$ is globally Lipschitz and bounded, $h_\epsilon\to h$ uniformly, so both finite-measure integrals converge.
This proves the displayed bound on exactly $\mathcal H_{2,1}$, without assuming classical third derivatives of every unmollified test.

For a locally Lipschitz covariance,
$R_{C,j}\le K\E[\chi_j\norm{X-m_j}]=O(\sqrt a)$.
To control the other term, a sufficient uniform tail condition is
\begin{equation}\label{eq:t3-tail-condition}
\sup_{x\in N_j}\E[\norm{\xi(x,\zeta)}^2\one\{\norm{\xi(x,\zeta)}>R\}]\le K/R,
\qquad R\ge1.
\end{equation}
Indeed, for $Y=\norm{\xi(x,\zeta)}$,
\[
\E[Y^2\min\{aY^2,1\}]
=2a\int_0^{a^{-1/2}}t\E[Y^2\one\{Y>t\}]\dd t=O(\sqrt a)
\]
uniformly over $x$.
Fixed linear transforms of independent standardized $t_3$ coordinates satisfy~\eqref{eq:t3-tail-condition}.
To see this in fixed dimension, bound $\norm{L\varepsilon}$ by $K\max_r|\varepsilon_r|$ and its squared tail by a sum of scalar squared tails.
The same argument applies to uniformly bounded state-dependent transforms when the covariance regularity and root ellipticity also hold.
Local Lipschitz covariance alone does not give an explicit rate for $\overline T_j^{\rm sym}$.
\end{proof}

\subsection{An exact finite-variance endpoint}
The density of $\varepsilon=t_3/\sqrt3$ is $2/[\pi(1+x^2)^2]$.
With $s=\sqrt a$, splitting the integral at $1/s$ gives
\begin{align}\label{eq:exact-symmetric-t3}
T^{\rm sym}(s^2)
&=\frac4\pi\left\{s^2\int_0^{1/s}\frac{x^4}{(1+x^2)^2}\dd x+
\int_{1/s}^{\infty}\frac{x^2}{(1+x^2)^2}\dd x\right\}\\
&=\frac{6s}{\pi}-3s^2+\frac2\pi(1+3s^2)\arctan s
=\frac8\pi s+O(s^2).\nonumber
\end{align}
Thus symmetry improves the general $\sqrt a\log(1/a)$ modulus to $\sqrt a$ for this noise.

\begin{proposition}[Matching smooth lower test]\label{prop:symmetric-lower}
For $X^+=(1-a)X+a\varepsilon$, $\varepsilon=t_3/\sqrt3$ and $0<a<1$, let $Z_a=X/\sqrt a$ have its invariant law and let $G\sim\N(0,1/2)$.
Then
\begin{equation}\label{eq:symmetric-lower}
\E[\cos(Z_a)/4]-\E[\cos(G)/4]=\frac{e^{-1/4}}{36}\sqrt a+O(a).
\end{equation}
Consequently the $O(\sqrt a)$ bound of Corollary~\ref{cor:symmetric-rate} is order-sharp in this subclass.
\end{proposition}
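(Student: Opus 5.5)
The plan is to compute both expectations in closed form through characteristic functions, because the chain is linear and scalar. For $0<a<1$ the map $x\mapsto(1-a)x$ is a strict contraction, so the invariant law is unique. It is the law of the almost surely convergent series
\[
X=\sum_{k\ge0}a(1-a)^k\varepsilon_k
\]
with i.i.d.\ $\varepsilon_k$; the series converges in $L^2$ because $\varepsilon$ has unit variance. Hence $Z_a=X/\sqrt a=\sum_k\sqrt a\,c_k\varepsilon_k$ with $c_k=(1-a)^k$. Since $\varepsilon$ is symmetric, $\E\cos Z_a$ equals the characteristic function of $Z_a$ at $1$:
\[
\E\cos Z_a=\prod_{k\ge0}\varphi_\varepsilon(\sqrt a\,c_k).
\]
For the Gaussian term, $\E\cos G=e^{-1/4}$ because $\operatorname{Var}G=1/2$.

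Next I use the explicit characteristic function of Student $t_3$. It is $(1+\sqrt3|u|)e^{-\sqrt3|u|}$, so after standardization $\varphi_\varepsilon(u)=(1+|u|)e^{-|u|}$. Then
\[
\log\varphi_\varepsilon(u)=\log(1+|u|)-|u|=-\tfrac12u^2+\tfrac13|u|^3+O(u^4)
\]
uniformly for $|u|\le1$, and every argument $\sqrt a\,c_k$ lies in $[0,\sqrt a]$. Summing over $k$ gives
\[
\log\E\cos Z_a=-\frac a2\sum_kc_k^2+\frac{a^{3/2}}3\sum_kc_k^3+O\Bigl(a^2\sum_kc_k^4\Bigr).
\]
The geometric sums are $\sum_kc_k^2=1/(a(2-a))$, $\sum_kc_k^3=1/(3a-3a^2+a^3)$ and $\sum_kc_k^4=O(1/a)$. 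Substituting yields
\[
\log\E\cos Z_a=-\frac1{2(2-a)}+\frac{\sqrt a}9+O(a)=-\frac14+\frac{\sqrt a}9+O(a).
\]
The $|u|^3$ term is the non-analytic contribution of the heavy tail, and it produces the $\sqrt a$ correction.

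Exponentiating gives
\[
\E\cos Z_a-e^{-1/4}=e^{-1/4}\bigl(e^{\sqrt a/9+O(a)}-1\bigr)=\frac{e^{-1/4}}9\sqrt a+O(a).
\]
Dividing by $4$ gives exactly the displayed constant $e^{-1/4}/36$. For sharpness, $h(z)=\cos(z)/4$ has $\norm h_\infty,\norm{h'}_\infty,\norm{h''}_\infty\le1/4$ and the Lipschitz constant of $h''$ is at most $1/4$, so $h\in\mathcal H_{2,1}$. This model satisfies the hypotheses of Corollary~\ref{cor:symmetric-rate}: $g(x)=x$, the single root $m=0$ is stable, $C=1\succ0$, and the noise is additive symmetric $t_3$. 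Since $w_{a,1}=1$, the error $\mathcal E_{a,1}$ is at least the displayed difference, which is at least $c\sqrt a$. Hence the $O(\sqrt a)$ bound cannot be improved.

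The main point needing care is the uniformity of the remainder in the cumulant expansion. I will handle it by using the bound $|\log(1+u)-u+\tfrac12u^2-\tfrac13u^3|\le Ku^4$ on $[0,1]$ and summing it against the $c_k^4$ series, which contributes $O(a)$. The rest is routine.
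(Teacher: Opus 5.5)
Your proposal is correct and follows the paper's proof essentially step for step. It uses the same expansion $\log\varphi_\varepsilon(u)=-u^2/2+|u|^3/3+O(u^4)$ summed against the geometric weights, with the quartic remainder controlled by $a^2\sum_k(1-a)^{4k}=O(a)$, then exponentiates at $t=1$ and checks that $\cos(z)/4\in\mathcal H_{2,1}$. Your explicit verification that the linear model meets the hypotheses of Corollary~\ref{cor:symmetric-rate} (single stable root, $w_{a,1}=1$, $\Gamma=\N(0,1/2)$) is a small addition that the paper leaves implicit.
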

\begin{proof}
The innovation characteristic function is $(1+|t|)e^{-|t|}$, and
$Z_a=\sqrt a\sum_{k\ge0}(1-a)^k\varepsilon_{-k}$ in $L^2$.
For fixed $t$, use $\log\phi_\varepsilon(u)=-u^2/2+|u|^3/3+O(u^4)$ uniformly near zero.
Geometric summation yields
\[
\log\phi_{Z_a}(t)=-\frac{t^2}{2(2-a)}+
\frac{|t|^3\sqrt a}{3(3-3a+a^2)}+O(a),
\]
because the remainder sum is bounded by $Ka^2\sum_{k\ge0}(1-a)^{4k}=O(a)$.
Exponentiation gives
$\phi_{Z_a}(t)-e^{-t^2/4}=e^{-t^2/4}|t|^3\sqrt a/9+O(a)$.
Set $t=1$ and divide by four.
Each of the four summands defining the $\mathcal H_{2,1}$ norm of $\cos(z)/4$ equals $1/4$, so this is an admissible lower test.
\end{proof}

\paragraph{Comparison with Gaussian and Rademacher innovations.}
In the same linear recursion, take $\varepsilon\sim\N(0,1)$ or $\Pr(\varepsilon=\pm1)=1/2$.
Near zero, their log characteristic functions are $-u^2/2$ and
$\log\cos u=-u^2/2-u^4/12+O(u^6)$, respectively.
For fixed $t$, the stationary series gives
\[
\log\phi_{Z_a}(t)=-\frac{t^2}{2(2-a)}+O(a)
=-\frac{t^2}{4}+O(a),
\]
since $a^2\sum_{k\ge0}(1-a)^{4k}=O(a)$ controls the quartic remainder.
Exponentiating and taking $t=1$ shows that the same cosine-test gap is $O(a)$.
For standardized $t_3$ noise, the even but nonanalytic $|u|^3$ term instead sums to order $\sqrt a$.
Thus the difference in this example reflects tail regularity, even though all three innovations are symmetric.

\begin{figure}[t]
\centering\includegraphics[width=\linewidth]{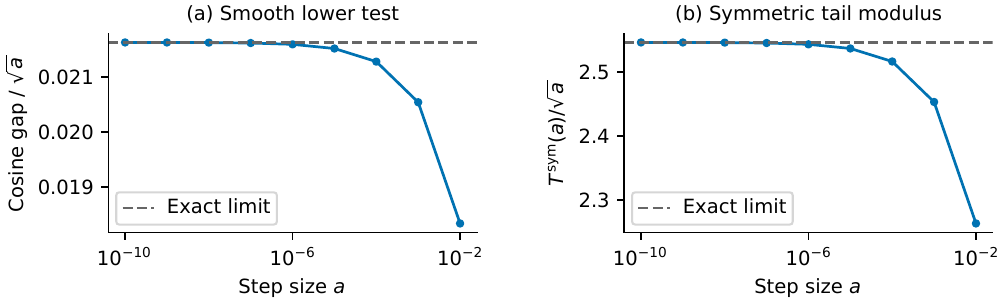}
\caption{\textbf{A sharp symmetric finite-variance endpoint.} Analytic computations for standardized $t_3$ innovations. (a) The cosine gap divided by $\sqrt a$ approaches $e^{-1/4}/36$. (b) The symmetric tail modulus divided by $\sqrt a$ approaches $8/\pi$. Dashed lines are the exact limits. The characteristic function uses 100 terms of a convergent logarithmic series at 80-digit arithmetic precision; the modulus uses~\eqref{eq:exact-symmetric-t3}. No trajectories or sampling error bars enter these calculations.}
\label{fig:symmetric-endpoint}
\end{figure}
The ellipticity condition enters only the extra smoothing step for $\mathcal H_{2,1}$.
The general singular-covariance bounds remain valid, and the specialized result concerns a smooth test metric rather than a Wasserstein rate.

\section{Observable and loss consequences}\label{app:observables}
\begin{proof}[Proof of Corollary~\ref{cor:prediction}]
Write $f(m_j+y)-f(m_j)=Df(m_j)y+r(y)$, where $\norm{r(y)}/\norm y\to0$ as $y\to0$. Within $\mathcal C_j$, $y=\sqrt a Z_a$.
For each fixed $R$, the scaled remainder on $\{\norm{Z_a}\le R\}$ is bounded by
$R\sup_{\norm y\le\sqrt a R}\norm{r(y)}/\norm y\to0$ (with ratio zero at $y=0$).
Conditional tightness makes the complementary probability arbitrarily small as $R\to\infty$.
Thus the remainder tends to zero in conditional probability, and Slutsky's theorem proves~\eqref{eq:prediction}.

If $f$ is globally Lipschitz with constant $L_f$, its squared scaled displacement is at most $L_f^2\norm{Z_a}^2$ within the cell.
More explicitly, its conditional tail above $R^2$ is bounded by
$L_f^2 w_{a,j}^{-1}\E[\norm{Z_a}^2\one\{\norm{Z_a}>R/L_f\}]$.
Theorem~\ref{thm:local-moments} and the weight lower bound make this tend to zero uniformly in small $a$; the constant-observable case is immediate.
Weak convergence and this uniform integrability give $W_2$ convergence and convergence of both first and second moments. Subtracting the limiting zero mean proves covariance convergence.
\end{proof}
\begin{proof}[Proof of Corollary~\ref{cor:loss}]
Since $\nabla F(m_j)=0$, the second-order expansion at $m_j$ and conditional tightness give \eqref{eq:losslaw}. Global Lipschitz continuity of $\nabla F$ gives a globally bounded Hessian, and hence
$|F(x)-F(m_j)|\le \frac L2\norm{x-m_j}^2$ for all $x$. The conditional uniform integrability established above thus permits expectation convergence in \eqref{eq:losslaw}. Taking the trace in $H_jS_j+S_jH_j=C(m_j)$ gives the factor $1/4$. Diagonalizing the symmetric nonnegative matrix gives the chi-square representation, including zero eigenvalues. The batch claim applies the same theorem separately for each fixed $B$, using $C_B=C_1/B$ and $S_{j,B}=S_{j,1}/B$. It is not a joint limit theorem for $B\to\infty$.
\end{proof}
For the weighted assertion~\eqref{eq:weighted-loss}, put $L_a=[F(X)-F(m_{J_a})]/a$.
The global Hessian bound gives $|L_a|\le(L/2)\norm{Z_a}^2$.
Along a weight-convergent subsequence, Taylor expansion and marked weak convergence identify the all-root quadratic limit of $L_a$; uniform integrability permits expectation convergence.
Let $M_j$ be the second moment of the all-root limit measure from Lemma~\ref{lem:all-root}.
The limiting mean is $\tfrac12\sum_j\tr(H_jM_j)$.
At every root, $H_jM_j+M_jH_j=q_jC_j$, so this mean equals $\tfrac14\sum_jq_j\tr C_j$.
The subsequence argument proves~\eqref{eq:weighted-loss} without excitation or convergent weights.
The argument also implies $W_1$ convergence of each nonnegligible conditional loss law to its quadratic limit, but does not control the loss variance.
\paragraph{Batch and discretization formulas.}
For independent sampling with replacement, covariance of a batch average is exactly $C_1(x)/B$.
The local Lyapunov equation is linear in $C$, giving $S_{j,B}=S_{j,1}/B$ for each fixed $B$.
For a batch sampled uniformly without replacement from $N$ summands, the corresponding factor is $(N-B)/(B(N-1))$, when $C_1$ is the population covariance with denominator $N$.
For a stable linear recursion the scaled stationary covariance solves~\eqref{eq:discrete-cov} whenever the spectral radius of $I-aA$ is below one.
Expansion gives $S_a=S+aS^{(1)}+O(a^2)$, where
$AS^{(1)}+S^{(1)}A^\trans=ASA^\trans$.
This follows from invertibility of the continuous Lyapunov operator and a finite-dimensional perturbation expansion, and quantifies the linear discretization correction.

\section{An exact unexcited saddle component}\label{app:unexcited}
Consider $g(x,y)=(x-2\tanh x,y)$ with additive noise $(0,\varepsilon)$, where $\E\varepsilon=0$ and $\E\varepsilon^2=1$.
The potential $U(x,y)=x^2/2-2\log\cosh x+y^2/2$ satisfies $\nabla U=g$, $\norm{D^2U}\le1$ and quadratic growth.
Moreover,
\[
\langle(x,y),g(x,y)\rangle\ge\tfrac12x^2+y^2-2.
\]
The drift is globally Lipschitz. The scalar equation $x=2\tanh x$ has roots $0,\pm m$.
On $x>0$, $2\tanh x-x$ is strictly concave, initially increasing, and eventually negative; hence it has exactly one positive root, with negative derivative there.
Thus the nonzero roots are stable, and the origin has $A=\diag(-1,1)$.
All roots are hyperbolic. Translation by additive finite-variance noise gives a Feller kernel and constant covariance, satisfying the noise conditions.
These facts verify Assumption~\ref{ass:standing}.

For $0<a<2$, each line $x\in\{-m,0,m\}$ is invariant and supports
\[
\pi_{a,x}=\delta_x\otimes\Law\!\left(a\sum_{k\ge0}(1-a)^k\varepsilon_{-k}\right).
\]
The series converges in $L^2$. On the central line every point has the origin as its nearest root, so the saddle weight is one.
Its scaled covariance is $\diag(0,1/(2-a))$, converging to $\bar S_0=\diag(0,1/2)$.
For Gaussian innovations the finite-step law is exactly Gaussian, and
\[
W_2\bigl(\pi_{a,0},\N(0,a\bar S_0)\bigr)
=\sqrt a\left|(2-a)^{-1/2}-2^{-1/2}\right|=O(a^{3/2}).
\]
This is an exact example of Theorem~\ref{thm:W2}, including a singular saddle component; its faster rate is specific to this linear coordinate.
Arbitrary convex combinations of the three invariant laws allow fixed or step-dependent component weights.
The all-root mixture remains defined even when the total stable weight is zero.

\section{An explicit family of selected invariant laws}\label{sec:global}\label{app:counter}

Absorbing decompositions are established in separable finite-sum SGD \citep{ShirokoffZaleski2025}. This example supplies explicit root covariances and a local equilibration certificate within our assumptions.
For $d\ge2$, let
\begin{equation}\label{eq:counter}
g(x)=(h(x_1),x_2,\ldots,x_d)^\trans,
\qquad h(u)=\frac{(u+1)u(u-2)}{1+u^2},
\end{equation}
and use independent Rademacher innovations of amplitude $\delta=1/5$ in every coordinate.
The roots are $m_-=(-1,0,\ldots,0)$, $m_0=0$ and $m_+=(2,0,\ldots,0)$.
The first-coordinate derivatives are $3/2,-2,6/5$, respectively.

\begin{proposition}[Arbitrary selected weights]\label{prop:counter}
For every $0<a\le1/6$, the recursion with~\eqref{eq:counter} has invariant laws supported in the two disjoint rectangles
\begin{equation}\label{eq:rectangles}
R_-=[-6/5,-4/5]\times[-1/5,1/5]^{d-1},\quad
R_+=[9/5,11/5]\times[-1/5,1/5]^{d-1}.
\end{equation}
Denote any such laws by $\pi_a^-$ and $\pi_a^+$.
For every function $c(a)\in[0,1]$, $c(a)\pi_a^-+(1-c(a))\pi_a^+$ is invariant.
Its local Gaussian covariances are
\begin{equation}\label{eq:counter-S}
S_-=\diag(\delta^2/3,\delta^2/2,\ldots),\qquad
S_+=\diag(5\delta^2/12,\delta^2/2,\ldots).
\end{equation}
An oscillating selection $c(a)$ can prevent even the marginal law of $Z_a$ from converging.
\end{proposition}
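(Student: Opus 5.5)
The plan has four parts: show the two rectangles are invariant, produce invariant laws on them by a compactness argument, read off the covariances from the Lyapunov equation \eqref{eq:local-covariance}, and use Theorem~\ref{thm:local-moments} to show that an oscillating selection prevents convergence. The main work is the invariance of $R_\pm$.

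\textbf{Invariance of $R_\pm$.} The coordinates decouple, and every innovation coordinate equals $\pm\delta$.
\begin{itemize}
\item For $i\ge2$, if $|x_i|\le\delta$ then $|x_i^+|=|(1-a)x_i\pm a\delta|\le(1-a)\delta+a\delta=\delta$.
\item For the first coordinate, set $\varphi_a(u)=u-ah(u)$, so that $x_1^+=\varphi_a(x_1)\pm a\delta$.
\end{itemize}
On each interval $I_-=[-6/5,-4/5]$ and $I_+=[9/5,11/5]$ I would first bound $h'$ explicitly. The root derivatives are $3/2$ and $6/5$, so a crude bound such as $h'\le 6$ on these intervals should hold. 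This makes $\varphi_a$ nondecreasing for $a\le1/6$, hence $\varphi_a(I_\pm)$ is the interval between its endpoint images.

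It then suffices to check the endpoint inequalities $h(\text{right endpoint})\ge\delta$ and $h(\text{left endpoint})\le-\delta$. Direct evaluation gives
\[
h(-4/5)\approx0.273,\qquad h(-6/5)\approx-0.315,\qquad h(11/5)\approx0.241,\qquad h(9/5)\approx-0.238,
\]
all beyond $\pm1/5$ in the right direction. Consequently $\varphi_a(\text{right})+a\delta\le\text{right}$ and $\varphi_a(\text{left})-a\delta\ge\text{left}$, which shows each rectangle is invariant.

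I expect this step to be the main obstacle: the monotonicity bound on $h'$ and the four endpoint evaluations must be done carefully with exact rationals, since the margins are small.

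\textbf{Existence and mixtures.} The kernel is Feller (finitely many continuous translates of $x-ag(x)$), and each $R_\pm$ is compact and invariant. Starting inside $R_\pm$, the Ces\`aro averages of the laws are tight on $R_\pm$. Their weak limits are invariant exactly as in the existence argument of Appendix~\ref{app:concentration}, and they remain supported in $R_\pm$. Invariance of $c\pi_a^-+(1-c)\pi_a^+$ follows from linearity of $\mu\mapsto\mu P_a$. I would also record the verification of Assumption~\ref{ass:standing}:
\begin{itemize}
\item $h(u)=u-1+O(1/u)$ is $C^2$ and globally Lipschitz, and dissipativity holds;
\item the noise is additive and bounded, so the noise conditions hold;
\item $U$ may be taken as a primitive of $g$, which has quadratic growth and a bounded Hessian.
\end{itemize}

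\textbf{Covariances and non-convergence.} Each root has $A=\diag(h'(m),1,\ldots,1)$ and $C=\delta^2I$, so the Lyapunov equation is diagonal and gives $S=\diag(\delta^2/(2h'(m)),\delta^2/2,\ldots)$. With $h'(-1)=3/2$ and $h'(2)=6/5$ this yields \eqref{eq:counter-S}.

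For the final claim, note that $R_-\subset\mathcal C_-$ and $R_+\subset\mathcal C_+$, because points of $R_\pm$ are within distance about $0.3$ of $m_\pm$ and at least $0.8$ from the other roots. Hence $w_{a,-}=c(a)$ and $w_{a,+}=1-c(a)$. Take $c(a)$ alternating between $1/4$ and $3/4$ along a sequence $a_n\downarrow0$. Theorem~\ref{thm:local-moments}, specifically \eqref{eq:second}, gives
\[
\E[(Z_a)_1^2]=c(a)\frac{\delta^2}{3}+(1-c(a))\frac{5\delta^2}{12}+o(1),
\]
which oscillates between two distinct limits.

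Alternatively, I would argue via laws directly. Along the two subsequences, $Z_a$ converges weakly to the mixtures $\tfrac14\Gamma_-+\tfrac34\Gamma_+$ and $\tfrac34\Gamma_-+\tfrac14\Gamma_+$, respectively. These are distinct, because mixtures of centered Gaussians with different first-coordinate variances are identifiable. Therefore the marginal law of $Z_a$ does not converge.
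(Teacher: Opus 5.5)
Your proposal is correct and follows the paper's proof essentially step for step: monotonicity of $u\mapsto u-ah(u)$ for $a\le1/6$, the exact endpoint values $-96/305,\,56/205,\,-63/265,\,88/365$, Ces\`aro--Feller existence, diagonal Lyapunov equations, and an oscillating $c(a)$ combined with Theorem~\ref{thm:mixture}. For the monotonicity step, make the bound explicit through $h'(u)=1+(3u^2-2u-3)/(1+u^2)^2$, which gives $|h'|\le6$ globally, rather than inferring it from the root derivatives.

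One small repair: justify $R_\pm\subset\mathcal C_\pm$ by noting that all three roots vanish in coordinates $2,\ldots,d$, so the nearest root is decided by $x_1$ alone. Your distance bound of ``about $0.3$'' holds only for $d=2$, since points of $R_\pm$ can lie at distance $\sqrt d/5$ from $m_\pm$.
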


\begin{proof}[Proof of Proposition~\ref{prop:counter}]
Polynomial division gives
\[
h(u)=u-1+\frac{-3u+1}{1+u^2},\qquad
h'(u)=1+\frac{3u^2-2u-3}{(1+u^2)^2}.
\]
Thus $h$ is smooth, $h(u)-u$ is bounded and $|h'(u)|\le6$.
For example, bound the three numerator terms separately using
$u^2/(1+u^2)^2\le1/4$, $2|u|/(1+u^2)^2\le1$ and $(1+u^2)^{-2}\le1$.
Consequently $u\mapsto u-ah(u)+a\eta$ is nondecreasing whenever $a\le1/6$, for every $\eta\in[-1/5,1/5]$.
The exact endpoint values are
\[
h(-6/5)=-96/305<-1/5,\quad h(-4/5)=56/205>1/5,
\]
\[
h(9/5)=-63/265<-1/5,\qquad h(11/5)=88/365>1/5.
\]
At each left endpoint the update with the smallest innovation points inward; at each right endpoint the update with the largest innovation points inward.
Monotonicity then maps the entire interval into itself for every allowed innovation.
For the remaining coordinates, $x_r^+=(1-a)x_r+a\eta_r$ preserves $[-1/5,1/5]$ when $a\le1$.
Both rectangles in~\eqref{eq:rectangles} are therefore invariant.
The finite-noise transition kernel is Feller, so Ces\`aro averaging on either compact rectangle supplies an invariant probability measure supported there.

The drift is globally Lipschitz and dissipative because $g(x)=x+O(1)$.
It is the gradient of
$U(x)=\int_0^{x_1}h(u)\dd u+\frac12\sum_{r=2}^dx_r^2$.
This potential has at most quadratic growth, bounded Hessian and $\ip{\nabla U}{g}=\norm{g}^2$.
The three roots are $m_-,m_0,m_+$, with Jacobians
$\diag(3/2,1,\ldots)$, $\diag(-2,1,\ldots)$ and $\diag(6/5,1,\ldots)$.
The innovation covariance is $(1/25)I$, which excites the unstable first coordinate at the middle root.
All qualitative assumptions hold.

Each rectangle lies in its corresponding stable root's Voronoi cell.
Linearity of the stationary equation makes every stated convex combination invariant, and its cell weights are exactly $c(a)$ and $1-c(a)$.
Solving the diagonal Lyapunov equations gives~\eqref{eq:counter-S}.
For example, choosing $c(a)=(1+\sin(\log(1/a)))/2$ gives sequences tending to zero on which $c(a)$ is respectively zero and one.
Theorem~\ref{thm:mixture} identifies distinct marginal Gaussian limits on these sequences, because their first-coordinate variances differ.
This proves the final assertion.
\end{proof}

\begin{proposition}[Certified local equilibration]\label{prop:contract}
For Proposition~\ref{prop:counter} with $h(u)=(u+1)u(u-2)/(1+u^2)$ and coordinatewise Rademacher noise of amplitude $1/5$, each rectangle $R_-$ and $R_+$ contains a unique invariant law $\pi_a^\pm$ for $0<a\le1/6$. For $x\in R_\pm$, after $n$ updates,
\[
 W_2\left(\Law\left(\frac{X_n-m_\pm}{\sqrt a}\right),
 \Law_{\pi_a^\pm}\left(\frac{X-m_\pm}{\sqrt a}\right)\right)
 \le\frac{\operatorname{diam}(R_\pm)}{\sqrt a}(1-a/2)^n
 \le\frac{\operatorname{diam}(R_\pm)}{\sqrt a}e^{-an/2}.
\]
\end{proposition}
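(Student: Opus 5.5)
The plan is to use a synchronous coupling: drive two copies of the recursion with the same Rademacher innovations, and show that the one-step map is a Euclidean contraction with factor $1-a/2$ on each rectangle. Both the uniqueness claim and the transport bound then follow by iteration. Fix $0<a\le1/6$ and a rectangle $R=R_\pm$. Write $\Phi_\eta(x)=x-ag(x)+a\eta$ for $\eta\in\{-1/5,1/5\}^d$. The proof of Proposition~\ref{prop:counter} already shows that each $\Phi_\eta$ maps $R$ into itself. Since $R$ is convex, any two copies started in $R$ stay in $R$, and their difference is controlled by the derivative of $\Phi_\eta$ along the segment joining them.

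The key step is the coordinatewise Lipschitz bound. $\Phi_\eta$ acts coordinatewise. In coordinates $r\ge2$ it is affine with slope $1-a\in[0,1-a/2]$. In the first coordinate its derivative is $1-ah'(u)$. From the proof of Proposition~\ref{prop:counter} we have $h'\le6$, so $1-ah'(u)\ge0$ when $a\le1/6$. It therefore suffices to show $h'(u)\ge1/2$ on the first-coordinate intervals $[-6/5,-4/5]$ and $[9/5,11/5]$. Using $h'(u)=1+(3u^2-2u-3)/(1+u^2)^2$, I would check the sign of the quadratic $3u^2-2u-3$ on each interval. Its roots are $(1\pm\sqrt{10})/3\approx-0.72,\ 1.39$, so it is strictly positive on both intervals; for example it equals $0.52$ at $u=-4/5$ and $3.12$ at $u=9/5$. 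Hence $h'\ge1$ there, which is stronger than needed. The mean value theorem along the segment then gives $\norm{\Phi_\eta(x)-\Phi_\eta(y)}\le(1-a)\norm{x-y}\le(1-a/2)\norm{x-y}$ for $x,y\in R$. This elementary sign check, together with the monotonicity fact borrowed from Proposition~\ref{prop:counter}, is the only place the specific $h$ enters, and it is the step I would verify most carefully.

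Iterating the coupled maps with common innovations gives $\norm{X_n-X_n'}\le(1-a/2)^n\norm{X_0-X_0'}\le(1-a/2)^n\operatorname{diam}(R)$. For uniqueness, take two invariant laws supported in $R$, draw $X_0\sim\pi$ and $X_0'\sim\pi'$ by any coupling, and run the synchronous coupling. Stationarity keeps the marginals equal to $\pi$ and $\pi'$ at every $n$, so $W_2(\pi,\pi')\le(1-a/2)^n\operatorname{diam}(R)\to0$. Existence is already provided by Proposition~\ref{prop:counter}. For the displayed bound, start at $x\in R$ and couple with $X_0'\sim\pi_a^\pm$, which is supported in $R$. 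After $n$ steps $X_n'\sim\pi_a^\pm$, and the coupling cost is at most $(1-a/2)^n\operatorname{diam}(R)$. Subtracting $m_\pm$ and dividing by $\sqrt a$ scales $W_2$ by $a^{-1/2}$. Finally, $1-a/2\le e^{-a/2}$ gives the last inequality.
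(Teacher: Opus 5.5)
Your proof is correct and follows the paper's argument: synchronous coupling with common innovations, the coordinatewise derivative bound $0\le 1-ah'(u)$ from $h'\le 6$ and $a\le 1/6$, a positive lower bound on $h'$ over the two first-coordinate intervals, and iteration to get both uniqueness and the scaled $W_2$ bound. The only difference is the lower bound on $h'$. You use $h'(u)=1+(3u^2-2u-3)/(1+u^2)^2$ and the sign of the quadratic to get $h'\ge1$, hence contraction factor $1-a$; the paper bounds the numerator and denominator of $(u^4+5u^2-2u-2)/(1+u^2)^2$ separately and obtains only $h'>1/2$, hence factor $1-a/2$.
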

\begin{proof}
The preceding endpoint argument proves invariance. The derivative satisfies
\[
 h'(u)=\frac{u^4+5u^2-2u-2}{(1+u^2)^2}.
\]
On $[-6/5,-4/5]$, separate bounds on the numerator and denominator give
$h'(u)\ge[(4/5)^4+5(4/5)^2+8/5-2]/[1+(6/5)^2]^2=2006/3721>1/2$; on $[9/5,11/5]$ they give
$h'(u)\ge[(9/5)^4+5(9/5)^2-22/5-2]/[1+(11/5)^2]^2=12686/21316>1/2$. Together with the global bound $h'(u)\le6$, the first-coordinate update derivative lies in $[0,1-a/2]$. Every other coordinate has derivative $1-a$. Thus, under a common noise realization, the update contracts Euclidean distances within each convex rectangle by at most $1-a/2$. Existence follows from the compact Feller argument above; synchronous coupling implies uniqueness and
$W_2(P_a^n(x,\cdot),\pi_a^\pm)\le(1-a/2)^n\operatorname{diam}(R_\pm)$.
Scaling by $1/\sqrt a$ proves the statement.
\end{proof}
\noindent The heterogeneous-root experiment uses physical burn-in $an\ge 2\log[\operatorname{diam}(R_\pm)/(\epsilon\sqrt a)]$ to make its scaled initialization error at most $\epsilon$. Retained observations are temporally dependent; uncertainty is measured across independent trajectories.

\section{Global communication and component weights}\label{app:metastability}

Local moment accuracy determines the fluctuations of each component. Selecting its occupation weight requires additional communication and stationary-limit hypotheses, supplied in different settings by metastability theory.

\paragraph{Neighborhood and cell weights.}
For a stable root and sufficiently small fixed $r$, write
$p_{a,j}(r)=\pi_a(B(m_j,r))$.
The concentration theorem gives $0\le w_{a,j}-p_{a,j}(r)\le K_ra$.
This transfers order-one limits between neighborhood and Voronoi weights.
It cannot transfer exponential asymptotics: $p_a=e^{-1/a}$ and $w_a=e^{-1/a}+a$ differ by $O(a)$ but have different limits of $a\log(\cdot)$.
Likewise, an empirical time occupation from a chosen initialization is not either stationary weight unless an appropriate ergodic approximation is justified.

\paragraph{Light tails.}
For additive innovations, let $\Lambda(\theta)=\log\E e^{\theta^\trans\varepsilon}$ and let $\Lambda^*$ be its Legendre transform.
When an applicable path large-deviation theorem holds on physical time $t=na$, the candidate action is
\[
I_{0T}(\phi)=\int_0^T\Lambda^*(\dot\phi(t)+g(\phi(t)))\dd t
\]
for absolutely continuous paths, at speed $1/a$.
Under the required uniform path, communication and tightness hypotheses, directed minimum-cost trees built from restricted inter-attractor path costs describe logarithmic neighborhood occupations; see \citet{Kifer1990} and the SGD analyses of \citet{AzizianEtAl2024,AzizianEtAl2025}.
If $W_j$ is the resulting tree cost toward root $j$, the neighborhood exponent takes the form $-(W_j-\min_kW_k)$, with the neighborhood limits prescribed by the applicable theorem.
Equal tree costs do not identify the relative order-one weights: prefactors or another selection argument are needed.

For isotropic Gaussian innovations $\N(0,\sigma^2I)$ and $g=\nabla F$, the action satisfies
\[
I_{0T}(\phi)=\frac1{2\sigma^2}\int_0^T\norm{\dot\phi-\nabla F(\phi)}^2\dd t
+\frac2{\sigma^2}\{F(\phi(T))-F(\phi(0))\}.
\]
The familiar barrier cost follows when the relevant reverse-gradient connection exists, interpreted through arbitrarily long approximating paths.
A reverse-gradient solution started exactly at an equilibrium remains there, so finite-time attainment from that exact initial point is not implied.
The diffusion $\dd X=-\nabla F(X)\dd t+\sqrt a\sigma\dd W$ has a Gibbs stationary density proportional to $e^{-2F/(a\sigma^2)}$ under the usual integrability conditions.
Its Laplace prefactors do not automatically transfer to the discrete recursion studied here.

\paragraph{Regularly varying tails.}
For an additive innovation with regularly varying tail index $\alpha>2$, write $H(R)=\Pp(\norm{\varepsilon}>R)$ and suppose the rescaled tail measures converge to $\nu_{\rm tail}$ away from zero.
A displacement of order one in an update $a\varepsilon$ occurs with probability on the scale $H(1/a)$ per iteration.
The corresponding waiting scales are $1/H(1/a)$ iterations and $a/H(1/a)$ units of physical time.
If the deterministic attraction basins are $\mathcal B_j$, the one-jump candidate rates are
\[
q_{ij}=\nu_{\rm tail}(\mathcal B_j-m_i),\qquad i\ne j,\qquad
q_{ii}=-\sum_{j\ne i}q_{ij}.
\]
Turning this formula into a well-chain limit requires stability and return control, negligible boundary landings such as $\nu_{\rm tail}(\partial(\mathcal B_j-m_i))=0$, and the hypotheses of an appropriate discrete-time metastability theorem.
Convergence of stationary weights to the stationary vector of this chain additionally requires control of the stationary limit and, for a unique vector, irreducibility of the limiting chain.
The diffusion results of \citet{ImkellerPavlyukevich2008,HogelePavlyukevich2015} and discrete heavy-tail analyses of \citet{WangOhRhee2022,WangRhee2023} establish versions of these mechanisms in their respective settings.
The clipped dynamics in the former discrete reference need not have the same one-jump rates as unclipped updates.
Equation~\eqref{eq:heavy-weight} is the symmetric scalar $\alpha=3$ benchmark for the unclipped experiment, not a consequence of finite variance alone.

\section{A sharp-order example for smooth tests}\label{app:sharp}
\begin{proposition}[A bounded-noise $\sqrt a$ lower example]\label{prop:sharp}
Assumption~\ref{ass:standing}, even with bounded additive innovations and a globally linear drift, does not imply an $O(a)$ Gaussian smooth-test error. There is a one-root example whose error on $\mathcal H_{2,1}$ is at least a positive constant times $\sqrt a$ for sufficiently small $a$.
\end{proposition}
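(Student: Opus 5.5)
The plan is to use the scalar linear recursion $X^+=(1-a)X+a\varepsilon$, so $g(x)=x$, together with a bounded, centered, \emph{skewed} innovation. This mirrors Proposition~\ref{prop:symmetric-lower}, except that the $\sqrt a$ term now comes from a nonzero third cumulant instead of a nonanalytic $|u|^3$ term. A concrete choice is $\Pp(\varepsilon=2)=1/5$ and $\Pp(\varepsilon=-1/2)=4/5$. Then $\E\varepsilon=0$, $\E\varepsilon^2=1$ and $\kappa_3=\E\varepsilon^3=3/2\neq0$.

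\textbf{Verifying Assumption~\ref{ass:standing}.} The single root is $m=0$ with $A=1$. The drift satisfies $\ip{x}{g(x)}=x^2$, and we take $U=x^2/2$. The kernel is Feller because it is a finite mixture of translations, and $C\equiv1$ is constant. Hence every item of Assumption~\ref{ass:standing} holds, with $\Gamma=\N(0,1/2)$. For $0<a<1$ the invariant law is unique and equals the $L^2$-convergent series $Z_a=\sqrt a\sum_{k\ge0}(1-a)^k\varepsilon_{-k}$.

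\textbf{Cumulant expansion.} Near $u=0$ the bounded innovation has an analytic cumulant expansion $\log\phi_\varepsilon(u)=-u^2/2-i\kappa_3u^3/6+O(u^4)$. Summing over $k$ with $u=\sqrt a(1-a)^kt$ for fixed $t$ gives
\[
\log\phi_{Z_a}(t)=-\frac{t^2}{2(2-a)}-\frac{i\kappa_3t^3\sqrt a}{6\{1-(1-a)^3\}}\,a+O(a)
=-\frac{t^2}{4}-\frac{i\kappa_3t^3}{18}\sqrt a+O(a).
\]
The quartic remainder is controlled exactly as in the proof of Proposition~\ref{prop:symmetric-lower}, because $a^2\sum_k(1-a)^{4k}=O(a)$. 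Exponentiating gives
\[
\phi_{Z_a}(t)-e^{-t^2/4}=-\frac{i\kappa_3t^3}{18}e^{-t^2/4}\sqrt a+O(a).
\]

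\textbf{The lower test.} Take $h(z)=\sin(z)/4$ and $t=1$. Each of the four terms in the $\mathcal H_{2,1}$ norm of $h$ is at most $1/4$, so $h\in\mathcal H_{2,1}$. Since $\Gamma h=0$, we get
\[
\E h(Z_a)-\Gamma h=\tfrac14\operatorname{Im}\phi_{Z_a}(1)=-\frac{\kappa_3e^{-1/4}}{72}\sqrt a+O(a).
\]
For $\kappa_3=3/2$ this has absolute value $\frac{e^{-1/4}}{48}\sqrt a+O(a)$, which is bounded below by $c\sqrt a$ for small $a$. Because there is one root, $w_{a,1}=1$ and $\mathcal E_{a,1}$ is exactly this smooth-test error.

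\textbf{Main obstacle.} The delicate step is making the $O(a)$ remainder in the cumulant sum rigorous. Two things must be checked. First, every argument $\sqrt a(1-a)^kt$ must stay in a fixed neighborhood of zero where the analytic logarithm and its Taylor bound are valid; this holds since $|\sqrt a(1-a)^kt|\le\sqrt a|t|$. Second, taking the logarithm of the infinite product must be justified, which follows from $L^2$ convergence of the series together with continuity of the characteristic function.
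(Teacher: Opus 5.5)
Your proposal is correct and follows essentially the same route as the paper: linear drift $g(x)=x$, a bounded centered two-point innovation with unit variance and nonzero third moment, the cumulant expansion summed along the stationary series with the quartic remainder controlled by $a^2\sum_k(1-a)^{4k}=O(a)$, and the test $h(z)=\sin(z)/4$ at $t=1$. The only difference is your choice of law ($\kappa_3=3/2$ rather than the paper's $m_3=1/\sqrt2$), which merely changes the constant to $e^{-1/4}/48$.
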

\begin{proof}
Take $g(x)=x$, $0<a<1$, and innovations
\[
 \Pr(\varepsilon=\sqrt2)=1/3,\qquad
 \Pr(\varepsilon=-1/\sqrt2)=2/3.
\]
They have mean zero, variance one, and third moment $m_3=1/\sqrt2$. The stationary scaled variable is
\[
 Z_a=\sqrt a\sum_{k\ge0}(1-a)^k\varepsilon_k.
\]
The Gaussian limit has variance $1/2$. For fixed real $t$, choose the continuous logarithm near zero with $\log\phi_\varepsilon(0)=0$. Boundedness of the innovations permits a uniform fourth-order remainder in the logarithm of their characteristic function near zero. Summing that expansion over $\sqrt a(1-a)^kt$ gives
\[
 \log\E e^{itZ_a}
 =-\frac{t^2}{2(2-a)}
 -\frac{i m_3t^3\sqrt a}{6(3-3a+a^2)}+O(a),
\]
since $\sum_{k\ge0}a^2(1-a)^{4k}=O(a)$. Hence
\[
 \E\sin(tZ_a)=-e^{-t^2/4}\frac{m_3t^3}{18}\sqrt a+O(a).
\]
At $t=1$, the bounded test $h(z)=\sin z/4$ belongs to $\mathcal H_{2,1}$: each of its sup-norm, gradient norm, Hessian norm, and Hessian Lipschitz seminorm is at most $1/4$. Its Gaussian expectation is zero, so it yields the claimed lower bound. This example also explains why symmetric-noise or covariance-only experiments may fail to display the worst-case $\sqrt a$ order.
\end{proof}

\section{A controlled neural finite-sum witness}\label{app:neural}
A fixed linear tilt makes nondegeneracy generic, while bounded coordinate dither excites every direction. These devices define the augmented finite-sum model used in the neural experiment.

\begin{proposition}[Generic tilt and bounded full-rank finite-sum noise]\label{prop:neural}
Fix a finite dataset $(x_i,y_i)_{i=1}^N$ with $y_i\in\{-1,1\}$, fixed output weights $c$, and a network $f_W(x)=c^T\tanh(Wx)$ in which only $W$ is trained. Write $\theta=\operatorname{vec}(W)\in\R^p$ and
\[
 \ell_i(\theta)=\log(1+e^{-y_i f_W(x_i)}),\quad
 F_0(\theta)=\frac1N\sum_i\ell_i(\theta)+\frac\lambda2\norm\theta^2,
 \qquad\lambda>0.
\]
Draw $r\in\R^p$ from an absolutely continuous distribution \emph{once}, and hold it fixed. For fixed $\tau,\gamma>0$, define
\[
 F_r(\theta)=F_0(\theta)+\tau r^T\theta,
\]
and the augmented finite sum, indexed uniformly by $i\in\{1,\ldots,N\}$, $k\in\{1,\ldots,p\}$, $\sigma\in\{-1,1\}$,
\[
 \widetilde\ell_{i,k,\sigma}(\theta)
 =\ell_i(\theta)+\frac\lambda2\norm\theta^2+\tau r^T\theta
   +\gamma\sqrt p\,\sigma\theta_k.
\]
For almost every fixed $r$, iid finite-sum SGD on these $2Np$ equally weighted summands satisfies Assumption~\ref{ass:standing} and excitation at every unstable root, in every fixed finite dimension. Its mean objective is $F_r$, and its one-sample noise covariance is
\[
 C(\theta)=\operatorname{Cov}_{i}(\nabla\ell_i(\theta))+\gamma^2 I_p.
\]
All local innovation moments exist, and the covariance is locally Lipschitz. The theorem constants may depend on the fixed tilt and root conditioning; no uniform spectral gap over random tilts is asserted.
\end{proposition}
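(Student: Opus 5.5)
I would verify the items of Assumption~\ref{ass:standing} in turn for the SGD recursion $\theta^+=\theta-a\nabla\widetilde\ell_{I,K,\Sigma}(\theta)$, where the index is uniform. Write $g=\nabla F_r$ and $\xi=\nabla F_r-\nabla\widetilde\ell$.

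\textbf{Mean objective, noise and Feller property.} Averaging over $\sigma$ kills the dither term, and averaging over $k$ changes nothing else. Hence the mean summand is $F_r$ and $\E\xi=0$. The dither and data indices are independent under the uniform product law, and the dither has mean zero. The cross covariance therefore vanishes. Since $\operatorname{Cov}(\gamma\sqrt p\,\sigma e_k)=\gamma^2 p\cdot p^{-1}I=\gamma^2I$, this gives $C(\theta)=\operatorname{Cov}_i(\nabla\ell_i)+\gamma^2I$.

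Each $\nabla\ell_i(\theta)=-y_i\,s(-y_if_W)\,\nabla_\theta f_W(x_i)$ with $s$ the logistic function. Its entries are $c_r(1-\tanh^2)x_{i,s}$, which are globally bounded. Thus $\xi$ is uniformly bounded, so \eqref{eq:cov-growth} holds with $c_1=0$ and all moments exist. The covariance is locally Lipschitz because $\nabla\ell_i$ is smooth, and root continuity follows. The kernel is a finite mixture of continuous maps, hence Feller.

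\textbf{Geometry and descent.} The drift is $g=\nabla F_r=\frac1N\sum\nabla\ell_i+\lambda\theta+\tau r$. It is $C^\infty$, and its Hessian is bounded because the second derivatives of $\ell_i$ in $W$ are bounded ($\tanh$ and its derivatives are bounded, and the data are fixed). Hence $g$ is globally Lipschitz. Dissipativity gives $\langle\theta,g\rangle\ge\lambda\norm\theta^2-K\norm\theta\ge\frac\lambda2\norm\theta^2-b$. Taking $U=F_r$, we have quadratic growth, bounded Hessian and $\langle\nabla U,g\rangle=\norm g^2$, so $c_U=1$. The root set is nonempty since $F_r$ is coercive, and it is bounded by dissipativity.

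\textbf{Genericity.} The main step is to show that for almost every $r$ all roots are nondegenerate; finiteness then follows from compactness and isolation. Roots of $\nabla F_r$ are points where $\nabla F_0(\theta)=-\tau r$. By Sard's theorem applied to the smooth map $\theta\mapsto-\nabla F_0(\theta)/\tau$ from $\R^p$ to $\R^p$, the critical values have Lebesgue measure zero. For $r$ outside this null set, and hence almost surely under an absolutely continuous law, every root has an invertible Hessian $D^2F_r=D^2F_0$. A symmetric invertible matrix has real nonzero eigenvalues, so every root is hyperbolic. Nondegenerate zeros are isolated, they lie in a compact set, and so there are finitely many.

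\textbf{Excitation.} Since $C(m_j)\succeq\gamma^2I\succ0$, any nonzero $v$ with $A_j^Tv=\lambda v$ and $\lambda<0$ (which exists at an unstable root) satisfies $v^*C_jv>0$. This is \eqref{eq:eigen-excitation}, which yields Assumption~\ref{ass:excitation}.

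The only step needing care is Sard together with the measure-zero transfer. Everything else is bounded-derivative bookkeeping. Constants depend on $r$ through the root conditioning, as the statement allows.
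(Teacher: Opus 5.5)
Your proposal is correct and follows essentially the same route as the paper. Both arguments use globally bounded derivatives of each $\ell_i$ (coming from the fixed data and $\tanh$), the descent function $U=F_r$ with $c_U=1$, Sard's theorem applied to $\nabla F_0$ so that $-\tau r$ is almost surely a regular value (giving nondegenerate, hence hyperbolic, isolated and therefore finitely many roots in a bounded set), and the dither floor $C(\theta)\succeq\gamma^2 I_p$ for excitation. The only cosmetic difference is that you verify excitation through the left-eigenvector criterion~\eqref{eq:eigen-excitation}, using a real eigenvector for a negative eigenvalue of the symmetric Jacobian, whereas the paper observes directly that $v^*C(\theta)v\ge\gamma^2\norm v^2$ for every nonzero complex $v$; the two checks amount to the same thing.
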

\begin{proof}
Because the dataset and $c$ are fixed and finite, $f_W(x_i)$ and its derivatives of any fixed positive order in $\theta$ are globally bounded: derivatives of $\tanh$ of any fixed order are bounded, and each differentiation contributes fixed input factors. The logistic loss is smooth; on the bounded logit range its derivatives are bounded. Thus every $\ell_i$ has globally bounded derivatives of all fixed positive orders.

Consequently $g_r=\nabla F_r$ is smooth, globally Lipschitz, and of the form $\lambda\theta+O(1)$ for fixed $r$. It is dissipative, $F_r$ is coercive with at most quadratic growth, its Hessian is globally bounded, and $U=F_r$ satisfies $\langle\nabla U,g_r\rangle=\norm{g_r}^2$.

Apply Sard's regular-value theorem to the smooth map $G=\nabla F_0:\R^p\to\R^p$. Its critical values have Lebesgue measure zero. Since $-\tau r$ has an absolutely continuous law, it is a regular value with probability one. Every root of $g_r$ therefore has nonsingular Hessian. The Hessian is real symmetric, so these roots are hyperbolic for the gradient flow. The root set is closed, and it is bounded because $G(\theta)=\lambda\theta+O(1)$. All its points are isolated by the inverse function theorem; a compact set of such regular roots is finite (otherwise an accumulation point would contradict local invertibility). Coercivity supplies a minimizer, so the root set is nonempty.

The gradient of each augmented summand is a continuous function. Finite averaging therefore gives a Feller transition. The ridge and fixed tilt cancel when forming centered gradient noise. The remaining data gradient fluctuation and the coordinate dither are uniformly bounded, centered, and independent when their indices are sampled independently. The dither covariance is
$\gamma^2p\,\E(e_ke_k^T)=\gamma^2I_p$, so the displayed covariance formula follows. In particular, for every nonzero complex vector $v$, $v^*C(\theta)v\ge\gamma^2\norm v^2>0$. This implies the required unstable-direction excitation at every unstable root. Bounded noise supplies the covariance growth bound with $c_1=0$ and every local innovation moment. Smooth finite-sum gradients give covariance continuity and local Lipschitz regularity.
\end{proof}
\noindent The regular-value result is Sard's theorem \citep{Lee2013}. It verifies the augmented model's structural assumptions almost surely in the fixed tilt. The unaugmented arm requires separate verification, and finite-time equilibration and positive component weights remain separate questions in either model.

For a batch of size $B$, sample $B$ independent triples $(i,k,\sigma)$ and average their complete gradients, including the dither. Then $C_B=(C_{\rm data}+\gamma^2I)/B$.
\section{Local fluctuations and global communication}\label{app:legacy}
\subsection{Matched-covariance local fluctuations}

Consider
\begin{equation}\label{eq:fourwell}
\begin{gathered}
F(x)=\sum_{r=1}^2\{x_r^2/2-c_r\log\cosh x_r\},\qquad(c_1,c_2)=(2,1.5),\\
g=(I+0.8J)\nabla F,\qquad J=\begin{pmatrix}0&-1\\1&0\end{pmatrix}.
\end{gathered}
\end{equation}
There are four attracting roots $(\pm m_1,\pm m_2)$, with $m_1=1.91500805$ and $m_2=1.28783945$.
All four have the same local Jacobian and innovation covariance,
\begin{equation}\label{eq:numerical-S}
A=\begin{pmatrix}0.83362791&-0.48454958\\0.66690233&0.60568697\end{pmatrix},\quad
C=\begin{pmatrix}1&0.4\\0.4&2\end{pmatrix},\quad
S=\begin{pmatrix}0.79678347&0.33891463\\0.33891463&1.27784990\end{pmatrix}.
\end{equation}
We match $C$ across Gaussian, Rademacher and standardized Student $t_3$ innovations.
Each of five step sizes uses 16 independent runs of physical duration 5,000, with duration 50 discarded.
We assign samples to the nearest of all nine roots and pool stable marked displacements.
Pooling is valid here because their theoretical Gaussian components coincide.

\begin{figure}[t]
\centering\includegraphics[width=\linewidth]{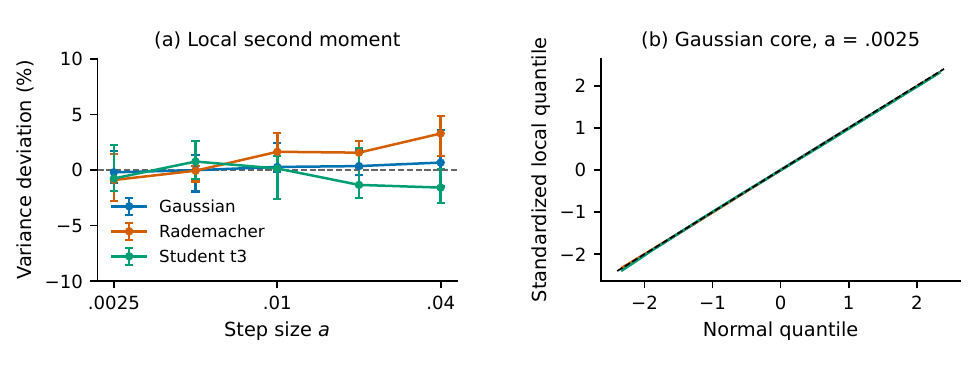}
\caption{Local fluctuations with matched innovation covariance. Variance deviations in (a) are percentages relative to the OU target on a zero-centered axis.
(a) Median variance deviation and interquartile range across 16 runs; the continuous OU prediction is zero.
(b) Pooled first-coordinate Q--Q curves at $a=0.0025$.
The intervals describe between-run dispersion.}
\label{fig:local}
\end{figure}

Figure~\ref{fig:local} shows central Q--Q curves close to the common Gaussian reference at the smallest step, alongside appreciable run-to-run variance dispersion.

\paragraph{Finite-step reference.}
An exactly linear recursion has scaled stationary covariance
\begin{equation}\label{eq:discrete-cov}
S_a=(I-aA)S_a(I-aA)^\trans+aC,
\quad AS_a+S_aA^\trans-aAS_aA^\trans=C.
\end{equation}
This reference isolates linear discretization effects; it is not the nonlinear chain's covariance.
For gradient drift, the corresponding quadratic loss admits a simple relative bound.
\begin{proposition}[Discrete correction for a frozen quadratic model]\label{prop:quadratic-discrete}
Let $H=H^T\succ0$, let $C\succeq0$ be the covariance of centered iid additive innovations, and suppose $\epsilon=a\lambda_{\max}(H)<2$.
For the stationary linear recursion $Y^+=(I-aH)Y+a\xi$, define
$L_a=\tfrac12\E(Y^THY)$ and $L_{\rm OU}=a\tr C/4$.
Then
\[
0\le L_a-L_{\rm OU}\le\frac{\epsilon}{2-\epsilon}L_{\rm OU}.
\]
If $\tr C>0$, division by $L_{\rm OU}$ gives the same bound on the relative correction.
\end{proposition}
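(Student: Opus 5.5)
The plan is to work directly with the exact discrete Lyapunov equation~\eqref{eq:discrete-cov} and take traces against $H$. No asymptotics are needed, because for a linear recursion everything is an algebraic identity.

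\textbf{Step 1: existence of the stationary covariance.} Since $H\succ0$ and $\epsilon=a\lambda_{\max}(H)<2$, every eigenvalue $1-a\lambda$ of the symmetric matrix $I-aH$ lies in $(-1,1)$. Hence the spectral radius is below one. The stationary solution is then $Y=a\sum_{k\ge0}(I-aH)^k\xi_{-k}$, which converges in $L^2$. Its covariance is $aS_a$, where
\[
S_a=a\sum_{k\ge0}(I-aH)^kC(I-aH)^k\succeq0 .
\]
This $S_a$ is the unique solution of $HS_a+S_aH-aHS_aH=C$ in~\eqref{eq:discrete-cov}, with $A=H$. The stationary loss is
\[
L_a=\tfrac12\E(Y^\trans HY)=\tfrac a2\,T,\qquad T:=\tr(HS_a).
\]

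\textbf{Step 2: exact trace identity.} Take the trace of~\eqref{eq:discrete-cov}. Cyclicity gives $2T-a\,\tr(HS_aH)=\tr C$, so
\[
L_a=\tfrac a2 T=\tfrac a4\tr C+\tfrac{a^2}4\tr(HS_aH)=L_{\rm OU}+\tfrac{a^2}4\tr(HS_aH).
\]
The correction term is nonnegative because $HS_aH\succeq0$. This gives the lower bound $L_a\ge L_{\rm OU}$.

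\textbf{Step 3: upper bound.} Write
\[
\tr(HS_aH)=\tr\bigl(H^{1/2}\,(H^{1/2}S_aH^{1/2})\,H^{1/2}\bigr)=\tr\bigl((H^{1/2}S_aH^{1/2})H\bigr).
\]
The middle factor $H^{1/2}S_aH^{1/2}$ is nonnegative, so this trace is at most $\lambda_{\max}(H)\,\tr(H^{1/2}S_aH^{1/2})=\lambda_{\max}(H)\,T$. Substituting into the identity of Step 2 gives
\[
T=\tfrac12\tr C+\tfrac a2\tr(HS_aH)\le\tfrac12\tr C+\tfrac\epsilon2T .
\]
Since $\epsilon<2$, we can solve for $T$ to get $T\le \tr C/(2-\epsilon)$. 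Therefore
\[
L_a\le \frac{a\tr C}{2(2-\epsilon)}=\frac{2}{2-\epsilon}L_{\rm OU}=\Bigl(1+\frac{\epsilon}{2-\epsilon}\Bigr)L_{\rm OU},
\]
which is the claimed upper bound. When $\tr C>0$, dividing by $L_{\rm OU}$ gives the relative form.

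\textbf{Main obstacle.} The only delicate point is Step 3. The bound must come out as a self-consistent inequality for $T$ itself, rather than by bounding $\tr(HS_aH)$ with some external estimate of $S_a$. Solving for $T$ is exactly what produces the factor $1/(2-\epsilon)$, and it requires the condition $\epsilon<2$ that was already needed for stationarity in Step 1.
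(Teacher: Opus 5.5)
Your proof is correct, but it uses the discrete Lyapunov equation differently from the paper.

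The paper works in an orthonormal eigenbasis of $H$. There the $(i,i)$ entry of $HV_a+V_aH-aHV_aH=aC$ decouples as $\lambda_i(2-a\lambda_i)(V_a)_{ii}=ac_i$, whatever the off-diagonal entries of $C$ are. This yields the exact formula $L_a=\frac a2\sum_i c_i/(2-a\lambda_i)$, so $L_a-L_{\rm OU}=\frac a4\sum_i c_i\,a\lambda_i/(2-a\lambda_i)$, which is then bounded term by term using $\lambda_i\le\lambda_{\max}(H)$.

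You instead trace the whole equation. That leaves the unknown $\tr(HS_aH)$, which you control with $\tr(MH)\le\lambda_{\max}(H)\tr M$ for $M=H^{1/2}S_aH^{1/2}\succeq0$, and you then solve a self-consistent linear inequality for $T=\tr(HS_a)$.

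What each route buys:
\begin{itemize}
\item Your argument is coordinate-free. It needs only the scalar trace identity and one matrix inequality, and it never computes entries of the covariance.
\item The paper's computation gives the exact value of $L_a$ mode by mode. This shows that the correction consists of the per-mode factors $a\lambda_i/(2-a\lambda_i)$, and that the bound is attained exactly when $c_i=0$ for every $\lambda_i<\lambda_{\max}(H)$. In your route the same equality condition can be recovered from the case of equality in the trace inequality, but less directly.
\end{itemize}

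Neither argument requires $H$ and $C$ to commute.
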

\begin{proof}
All eigenvalues of $I-aH$ have modulus less than one, so the stationary covariance $V_a$ exists and satisfies
$HV_a+V_aH-aHV_aH=aC$.
In an orthonormal eigenbasis of $H$, write its eigenvalues as $\lambda_i$ and the diagonal entries of $C$ as $c_i\ge0$.
The diagonal covariance equations give $(V_a)_{ii}=ac_i/[\lambda_i(2-a\lambda_i)]$, hence
\[
L_a=\frac a2\sum_i\frac{c_i}{2-a\lambda_i},\qquad
L_a-L_{\rm OU}=\frac a4\sum_i c_i\frac{a\lambda_i}{2-a\lambda_i}
\le\frac{\epsilon}{2-\epsilon}\frac a4\sum_i c_i.
\]
This calculation requires no commutation between $H$ and $C$.
\end{proof}
At the largest neural normalized step, $\epsilon=0.005$, the bound is $0.2507\%$ after rounding upward.
It quantifies the loss correction within the frozen quadratic reference; the nonlinear objective is evaluated separately in the SGD experiment.

Student $t_3$ noise has an infinite fourth moment.
The Q--Q comparison and bounded smooth probes in Appendix~\ref{app:numerics} assess distributional shape without requiring fourth moments or fitting a convergence-rate exponent.

\subsection{Communication and finite-horizon occupation}

For $g(x)=x+5\cos x$, the stable roots are $m_L=-1.30644001$ and $m_R=3.83746711$, separated by $s=1.97738303$.
We again compare three innovation laws of variance one, using 12 independent runs per step and noise law.
Half start at each minimum.
For standardized $t_3$ noise, the one-large-jump prediction is
\begin{equation}\label{eq:heavy-weight}
q_{LR}\propto(s-m_L)^{-3},\quad q_{RL}\propto(m_R-s)^{-3},\quad
w_L^{\rm jump}=\frac{q_{RL}}{q_{LR}+q_{RL}}=0.84620744.
\end{equation}
We choose heavy-tail run lengths before simulation using the jump proxy, targeting 400 completed transitions in total at each step size.
Gaussian and bounded-noise comparators use two million iterations per run.
The horizons are chosen to resolve transitions and occupations under each noise law.

\begin{figure}[t]
\centering\includegraphics[width=\linewidth]{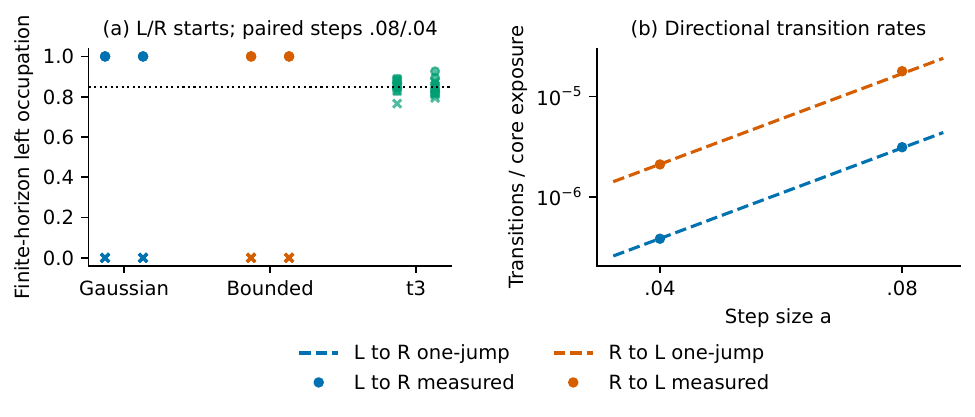}
\caption{Global occupation and transition rates.
(a) Per-run left occupation, with paired positions for $a=0.08$ and $0.04$; circles start left and crosses start right.
The dotted line is the heavy-tail limit~\eqref{eq:heavy-weight}.
(b) Heavy-tail completed transitions divided by exposure to the last visited well core, compared with one-jump probabilities per iteration.
Run lengths are reported in Table~\ref{tab:switching}.}
\label{fig:global}
\end{figure}

We distinguish separatrix crossings from completed transitions between radius-$0.25$ well cores.
Figure~\ref{fig:global} shows that local covariance matching does not imply similar finite-horizon basin exploration.
For runs without transitions, averaging equally many initializations in both wells returns $1/2$ by design.
That average does not identify a stationary weight.
Conversely, a Gaussian run without observed transitions does not establish noncommunication.
Detailed counts and horizons appear in Table~\ref{tab:switching}.

\section{Numerical methods and additional measurements}\label{app:numerics}

The supplementary local-shape, occupation and output-coverage studies comprise 240, 72 and 192 seeded trajectories, respectively. Appendix~\ref{app:revision-numerics} specifies the root-specific moment and objective-gap measurements.
The files \texttt{code/reproduce.py} and \texttt{code/simulation.py} generate these measurements directly from the recursions.
Running the first script without arguments regenerates figures and tables from the shipped data; adding \texttt{--run} also regenerates the simulations.
Per-run CSV files retain the seeds and estimators; compressed arrays retain the local Q--Q samples and running occupation curves.
Version and duration information is recorded in \texttt{data/metadata.json}.

\subsection{Local-noise experiment}

The roots of the four-well system are the nine combinations of $\{-m_1,0,m_1\}$ and $\{-m_2,0,m_2\}$.
The four combinations with nonzero coordinates are stable; the other roots are unstable.
The scalar curvatures at $m_1,m_2$ are $1-c_r\operatorname{sech}^2(m_r)$, giving~\eqref{eq:numerical-S}.
Here $U=F$ is a valid descent function because
$\ip{\nabla F}{(I+0.8J)\nabla F}=\norm{\nabla F}^2$ and
$\norm{(I+0.8J)\nabla F}^2=(1+0.8^2)\norm{\nabla F}^2$.
Moreover, $\nabla F=x+O(1)$, so $\ip{x}{g(x)}=\norm{x}^2+O(\norm{x})$.
These identities verify descent and dissipativity for this nonreversible example.

Each step in $(0.04,0.02,0.01,0.005,0.0025)$ uses 16 independent trajectories starting at the positive minimum.
The total iteration count is $\lfloor5000/a\rfloor$, the burn-in count is $\lfloor50/a\rfloor$, and retained samples are spaced by $\lfloor0.5/a\rfloor$ iterations.
Thus the target retained physical spacing is $0.5$ (it is $0.48$ at $a=0.04$ owing to integer rounding).
Uncertainty and dispersion are summarized at the independent-run level.

Draw two independent scalar innovations of variance one, either standard normal, Rademacher, or $t_3/\sqrt3$, and multiply them by
$L=\left(\begin{smallmatrix}1&0\\0.4&\sqrt{1.84}\end{smallmatrix}\right)$.
This gives $LL^\trans=C$ for all three laws.
The $t_3$ law has finite $(2+\delta)$ moments only for $\delta<1$ and has infinite third absolute and fourth moments.
The qualitative theorem applies; the moment-based additive rate theorem applies for any $0<\delta<1$, while Corollary~\ref{cor:symmetric-rate} gives the sharper $O(\sqrt a)$ smooth-test bound because this noise is symmetric and $C\succ0$.

All labeling uses the nearest of all nine roots.
Within a run, retain stable-labeled displacements $Z=(X-X_\Mcal)/\sqrt a$ and use the usual centered sample covariance with denominator $n-1$.
The retained stable fraction is at least $0.9795$ over these runs; other labels are excluded and their fractions are recorded.
The four stable components have the same $S$, so pooling them has a common local target without estimating global weights.
Figure~\ref{fig:local}(a) reports the median and interquartile range of the run-level variance estimates.
Panel (b) pools retained first-coordinate samples at the smallest step and divides by $\sqrt{S_{11}}$; the reference quantiles range from $0.01$ to $0.99$.
These are finite-horizon measurements of local fluctuations.

We also use bounded smooth distributional probes at six fixed frequencies:
\[
\Theta=\{(1,0),(0,1),(1,1),(1,-1),(2,0),(0,2)\}.
\]
\begin{equation}\label{eq:probe}
\widehat e=\max_{\theta\in\Theta}\left|
\frac1n\sum_{k=1}^n e^{i\theta^\trans Z_k}-e^{-\theta^\trans S\theta/2}\right|.
\end{equation}
Real and imaginary parts are bounded smooth tests; fixed-frequency normalization places them within a constant multiple of the theorem's test class.
This fixed-frequency diagnostic contains both Monte Carlo variation and finite-step error.
Figure~\ref{fig:additional}(a) reports run-level medians and interquartile ranges, without fitting a rate exponent.

\subsection{Communication experiment and transition budget}

For $g(x)=x+5\cos x$, use independent scalar innovations from the same three variance-one laws.
At each $a\in\{0.08,0.04\}$, use 12 runs, alternating their initial states between $m_L$ and $m_R$.
The first 100,000 iterations are discarded.
For the $t_3/\sqrt3$ innovation, define the exact one-step tail probabilities at the minima
\[
p_{LR}(a)=\Pp(t_3>\sqrt3(s-m_L)/a),\qquad
p_{RL}(a)=\Pp(t_3>\sqrt3(m_R-s)/a).
\]
Under the two-state proxy the stationary mean number of transitions per iteration is
$r(a)=2p_{LR}(a)p_{RL}(a)/(p_{LR}(a)+p_{RL}(a))$.
Before running any trajectory, set the post-burn-in length to
\[
N_{\rm post}(a)=10^5\left\lceil\frac{400}{12r(a)10^5}\right\rceil.
\]
This gives 6,500,000 and 51,400,000 post-burn-in iterations per heavy-tail run at the two steps.
The corresponding total lengths are 6,600,000 and 51,500,000.
The proxy fixes each run's horizon before observing any realized transition counts.
Gaussian and Rademacher comparisons use 2,000,000 total iterations per run, including the same burn-in.

Left occupation is the fraction of post-burn-in iterates with $X_n<s$.
Raw crossings count changes in that indicator.
Completed transitions instead track the last visited radius-$0.25$ core about a minimum: the well label changes only when the opposite core is reached.
Its state is tracked throughout burn-in; reported counts and exposure use only the retained time interval.
Directional rate estimates divide completed $i\to j$ transitions by the number of iterations whose pre-update last-core label is $i$.
They are per-iteration rates and are compared with $p_{ij}(a)$, not directly with rates in a normalized limiting clock.

\begin{table}[t]
\centering\small
\caption{Occupation experiment: 12 runs per row. $N$ includes burn-in; counts sum post-burn-in events over runs.
Heavy-tail occupation is the mean with one across-run standard error in parentheses.
$^*$The value $0.500$ is fixed by six starts in each well and no observed crossings; it is not a stationary-weight estimate.}
\label{tab:switching}
\begin{tabular}{lrrrrr}
\toprule
Noise & $a$ & $N$ per run & Left occupation & Raw & Completed \\
\midrule
Gaussian & 0.08 & 2,000,000 & $0.500^{*}$ & 0 & 0 \\
Gaussian & 0.04 & 2,000,000 & $0.500^{*}$ & 0 & 0 \\
Rademacher & 0.08 & 2,000,000 & $0.500^{*}$ & 0 & 0 \\
Rademacher & 0.04 & 2,000,000 & $0.500^{*}$ & 0 & 0 \\
Student t3 & 0.08 & 6,600,000 & $0.852 \;(0.010)$ & 477 & 413 \\
Student t3 & 0.04 & 51,500,000 & $0.845 \;(0.011)$ & 522 & 400 \\
\bottomrule
\end{tabular}
\end{table}

The heavy-tail runs produce 413 completed transitions at $a=0.08$ and 400 at $a=0.04$ (raw crossings: 477 and 522).
Their mean left occupations are $0.8517$ and $0.8448$, with across-run standard errors $0.0099$ and $0.0113$.
The smaller-step estimate is close to the asymptotic benchmark $0.8462$ at this precision.
Two step sizes and these horizons do not identify an asymptotic bias exponent.
Zero observed crossings in the other noise laws describe the sampled horizon, while Proposition~\ref{prop:counter} separately proves exact noncommunication in a different explicit bounded-noise model.

\subsection{Finite-sum SGD and prediction functions}

For the four losses in~\eqref{eq:finite-sum}, each iteration samples $B$ independent sign pairs with replacement and averages their gradients.
The simulator uses the equivalent symmetric centered-noise sign convention in~\eqref{eq:recursion}.
For each $B\in\{1,4,16\}$ and $a\in\{0.04,0.02,0.01,0.005\}$, run 16 trajectories with the same total physical duration, burn-in and thinning as the local experiment.
All start at the positive minimum and all retained samples in these runs remain in its nearest-root cell.
This is a local finite-horizon study and does not infer their global stationary weights.
For this gradient drift, the local covariance solves $HS_1+S_1H=C_1(m)$.

Fix $f_1(x)=x_1+x_2$ and $f_2(x)=\tanh(x_1-x_2/2)$ before simulation.
Let $u=(1,-1/2)^\trans$ and let $G$ have rows $(1,1)$ and $\operatorname{sech}^2(u^\trans m)u^\trans$.
The predicted covariance of $\sqrt{B/a}\{f(X)-f(m)\}$ is $GS_1G^\trans$.
For each run, measure the centered sample variance ratio against the corresponding diagonal entry and the fraction satisfying
\[
\left|\sqrt{B/a}\{f_k(X)-f_k(m)\}\right|
\le1.9599639845\sqrt{(GS_1G^\trans)_{kk}}.
\]
This last quantity is nominal Gaussian fluctuation coverage around a deterministic local value; it is not coverage of a confidence interval for a population parameter.
Neither $S_1$ nor $G$ is estimated from trajectories.
Figure~\ref{fig:additional}(b) gives the nonlinear observable's run-averaged coverage; individual-run values range from $0.9379$ to $0.9557$.
The parameter covariance, both observable variance ratios and both coverages are retained in \texttt{data/sgd.csv}.

\begin{figure}[t]
\centering\includegraphics[width=\linewidth]{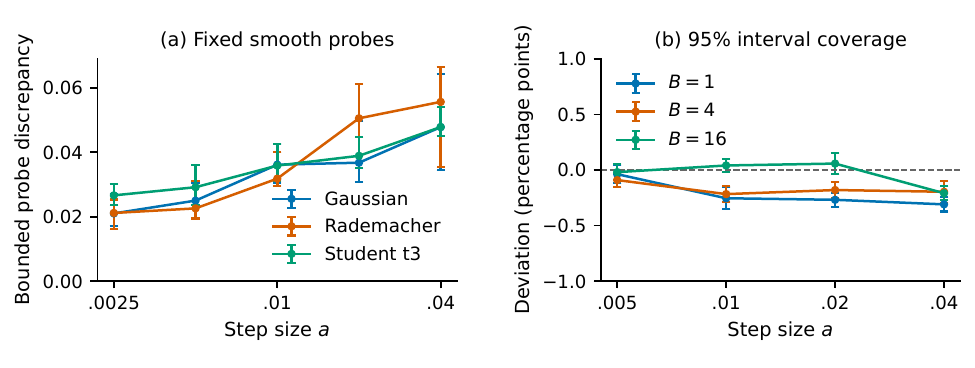}
\caption{Additional local measurements. Coverage in (b) is shown as percentage-point deviation from 95 percent on a zero-centered axis.
(a) Bounded characteristic-function discrepancy~\eqref{eq:probe}, shown as median and interquartile range across 16 runs.
(b) Nominal $95\%$ local fluctuation coverage for the nonlinear observable, shown as mean and one standard error across runs.
These finite-horizon measurements describe distributional agreement and coverage; the asymptotic rates are established by the analytic results.}
\label{fig:additional}
\end{figure}

\paragraph{Seed allocation.}
Indices start at zero in the listed parameter order.
Local seeds are $270900+10000k+100i+r$, where $k$ indexes (Gaussian, Rademacher, Student $t_3$), $i$ indexes the five descending steps and $r=0,\ldots,15$.
Occupation seeds are $370900+10000k+100i+r$, with steps $(0.08,0.04)$ and $r=0,\ldots,11$.
SGD seeds are $470900+10000b+100i+r$, where $b$ indexes batch sizes $(1,4,16)$, $i$ indexes the four descending steps and $r=0,\ldots,15$.
Random draws use NumPy's random functions within Numba-compiled loops; the shipped dependency versions are recorded to support reruns.
Every standardized $t_3$ draw is constructed as $N_0/\sqrt{N_1^2+N_2^2+N_3^2}$ from four independent standard Gaussian variables.
This has law $t_3/\sqrt3$ and variance one; the denominator is not truncated at a fixed positive threshold.

\section{Experimental protocols}\label{app:revision-numerics}

\paragraph{Estimands and independent replication.}
Covariance and loss studies use 16 independently seeded trajectories per setting, mean confirmation uses 32, and the supplementary occupation study uses 12.
For $R$ independent run-level statistics $T_r$, the reported standard error is $s(T_r)/\sqrt R$, with sample variance denominator $R-1$.
CDF bands use the 16 run-level ECDFs pointwise and are not simultaneous bands.
For infinite-fourth-moment innovations, raw quadratic estimates are displayed as independent-run distributions or medians and interquartile ranges.
The fixed neural probes are averaged within each run.
Mean absolute relative error (MARE) for path $r$ is $K^{-1}\sum_k|\widehat v_{r,k}/v_k-1|$; its reported mean averages this statistic across paths. Aggregate MARE is $K^{-1}\sum_k|R^{-1}\sum_r\widehat v_{r,k}/v_k-1|$.
The signed error averages the latter ratios without absolute values.
Root-centered second moments use the deterministic-root output instead of the trajectory mean.

\paragraph{Heterogeneous invariant components.}
The two-dimensional recursion~\eqref{eq:counter} uses independent noise coordinates $\pm0.2$, initialized at $m_-$ or $m_+$.
The step grid is $0.04,0.02,0.01,0.005,0.0025$.
The burn-in count is
\[
n_{\rm burn}=\left\lceil\frac2a\log\frac{0.4\sqrt2}{10^{-6}\sqrt a}\right\rceil.
\]
Proposition~\ref{prop:contract} bounds the scaled $W_2$ initialization error by $10^{-6}$.
The retained physical exposure is $a\lceil5000/a\rceil$; states are saved every $\max\{1,\operatorname{round}(0.5/a)\}$ updates.
The seed is $27092700+100000i+1000k+r$, for zero-based root, step and replicate indices.
The full objective is
\[
F(x)=\tfrac12x_1^2-x_1-\tfrac32\log(1+x_1^2)+\arctan x_1+\tfrac12x_2^2.
\]
Both roots satisfy $H_jS_j=0.02I_2$, so both loss limits are $0.01\chi_2^2$ with mean $0.02$.
At $a=0.0025$, scaled mean losses are $0.020089\pm0.000074$ and $0.019990\pm0.000072$.
For selected left-component weights $0.25,0.5,0.75$, weighted within-component variances are recorded separately from temporal occupation estimates.

\paragraph{Analytical shape error.}
For $X_{n+1}=(1-a)X_n+a\varepsilon_{n+1}$, the scaled stationary characteristic function is
\[
\phi_a(t)=\prod_{k\ge0}\phi_\varepsilon\bigl(t\sqrt a(1-a)^k\bigr).
\]
Gaussian, Rademacher, standardized $t_3$ and the skew law of Proposition~\ref{prop:sharp} all have variance one.
Standardized $t_3$ has density $2/[\pi(1+x^2)^2]$ and characteristic function $(1+|t|)e^{-|t|}$.
Eight steps halve from $0.08$ to $0.000625$, and $t\in\{0.5,1,2\}$, giving 96 deterministic evaluations.
Tests are divided by $1+|t|+t^2+|t|^3$ to belong to $\mathcal H_{2,1}$.
Stopping before index $K$ omits an independent centered sum of variance $(1-a)^{2K}/(2-a)$.
The characteristic-function truncation error is at most
$t^2(1-a)^{2K}/[2(2-a)]$, which is set below $10^{-12}$.
The implementation sums complex logarithms and exponentiates; the bound concerns analytic truncation.
These individual test errors differ from the supremum over the test class.
The common exact scaled covariance is $1/(2-a)$.

\begin{figure}[t]
\centering\includegraphics[width=\linewidth]{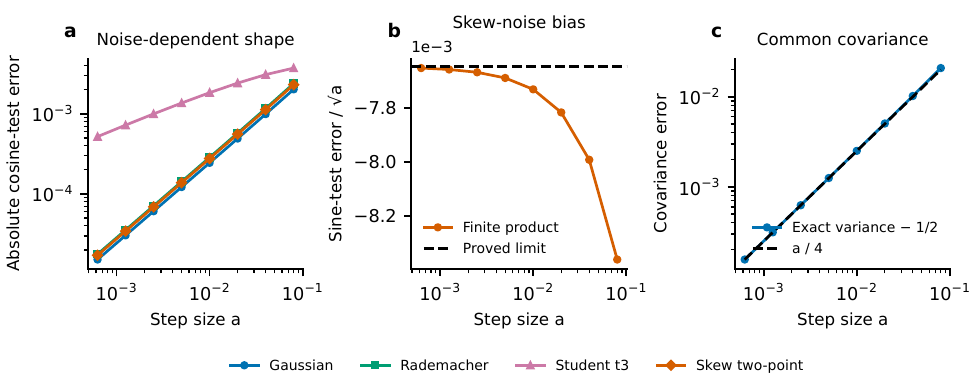}
\caption{\textbf{Distributional shape and covariance at $t=1$.} Normalized cosine discrepancies (a), skew sine bias divided by $\sqrt a$ (b), and covariance error (c). Curves are deterministic characteristic-function calculations with analytic truncation error below $10^{-12}$. The nonzero skew limit distinguishes shape error from covariance discretization.}
\label{fig:revision-e2}
\end{figure}

\paragraph{State-dependent finite-sum learning.}
At the positive root $m=(1.9150080482,1.2878394550)$ of~\eqref{eq:finite-sum},
\[
H=\diag(0.8336279122,0.6056869745),\qquad
S_1=\begin{pmatrix}0.1518851672&0.0642388947\\0.0642388947&0.0475438154\end{pmatrix}.
\]
The 192 output/loss trajectories use $B\in\{1,4,16\}$ and $a\in\{0.04,0.02,0.01,0.005\}$.
Each discards $\lceil50/a\rceil$ updates and retains physical exposure 5000 with saving interval approximately 0.5.
Seeds are $28092700+100000i+1000k+r$ in the listed batch and step orders.
Each update averages $B$ independent complete component gradients with replacement.
All nine roots enter the nearest-root label; every saved observation in this experiment remained in the positive-root cell.
For $f=(x_1+x_2,\tanh(x_1-x_2/2))$,
\[
Df(m)S_1Df(m)^T=\begin{pmatrix}0.3279067719&0.0433471556\\0.0433471556&0.0072842122\end{pmatrix}.
\]
The predicted scaled mean loss is $0.0777061923$; the quadratic-form eigenvalues are $0.0108051295$ and $0.1446072551$.
Gaussian polar coordinates give, for $q\ge0$,
\[
\Pp\{\tfrac12(\mu_1G_1^2+\mu_2G_2^2)\le q\}
=1-\frac2\pi\int_0^{\pi/2}\exp\!\left\{-\frac{q}{\mu_1\cos^2\theta+\mu_2\sin^2\theta}\right\}\,d\theta.
\]
We evaluate this expression by 256-node Gauss--Legendre quadrature.

\begin{figure}[t]
\centering\includegraphics[width=\linewidth]{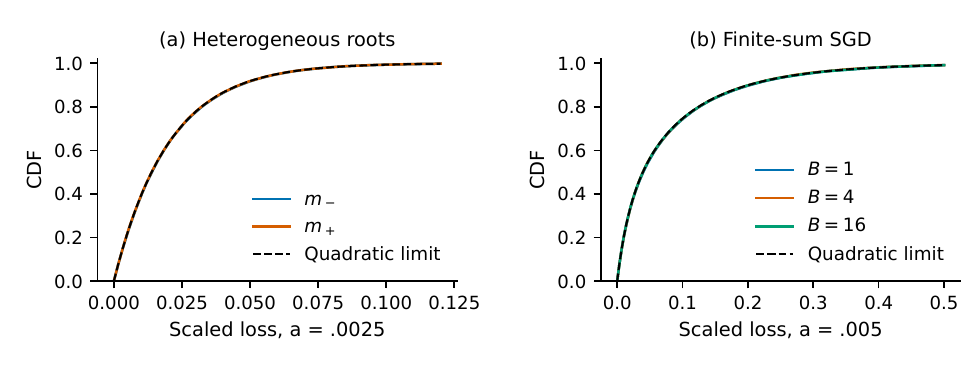}
\caption{\textbf{Loss distributions beyond their means.} (a) Actual nonlinear losses divided by $a$ in the heterogeneous rectangles at $a=0.0025$, with the shared $0.01\chi_2^2$ limit. (b) Actual finite-sum losses multiplied by $B/a$ at $a=0.005$, with the generalized chi-square limit computed by deterministic quadrature. Bands are pointwise one standard error across 16 independent runs. The corresponding moment curves appear in Figures~\ref{fig:root-moments} and~\ref{fig:observable-moments}.}
\label{fig:revision-e1}\label{fig:revision-e3}
\end{figure}

\paragraph{Neural data and root selection.}
The UCI optical digits training and test files have disjoint writers.
Pixel centering/scaling, the 16-dimensional PCA basis and whitening scales are fitted on the training subset only.
The network trains a $4\times16$ matrix without bias parameters and fixes output weights $(0.6,-0.8,1.1,-1.3)$.
The baseline uses $\lambda=0.02$, $\tau=10^{-4}$, $\gamma=0.015$ and $r\sim\N(0,I_{64}/64)$ with seed 27010001.
Root search uses 20 Gaussian initializations of scale $0.2$ (seed 27020001), BFGS and exact-Hessian Newton refinement.
Candidates require gradient norm below $10^{-8}$ and minimum Hessian eigenvalue above $10^{-6}$; the qualified candidate with smallest objective is selected.
All qualified candidates found coincide within $10^{-4}$.
At that root, training accuracy is $99.74\%$, test accuracy $97.20\%$ and test logistic loss $0.13946$.
These metrics describe the deterministic root.

The first six test examples of each class define 12 fixed probes.
The logit Jacobians are evaluated at the root, and the one-sample covariance uses the full training sum with denominator $N$, plus $\gamma^2I$.
Its trace is $0.34814885$, giving scaled mean loss $0.08703721$.
Gradient and Hessian formulas are analytic; a directional Hessian check at the root gives maximum discrepancy $7.1\times10^{-12}$.
An independent eight-trajectory pilot fixed the evaluation range before the evaluation seeds were used; its settings and records accompany the code.

\paragraph{Neural sampling.}
The normalized steps are $0.000625,0.00125,0.0025,0.005$ and batches are $1,4,16$.
Physical burn-in is 600 and retained exposure is approximately 2000.
States are saved every $\max\{1,\operatorname{round}(1/a)\}$ updates, giving 2001 saved states with exact spacing $0.9996112563352947$ at every setting.
Full objective loss is evaluated every tenth save, giving 201 values per run.
Seeds are $29092700+100000i+1000k+r$ in ascending batch/step order.
All saved states lie within Euclidean distance $0.75$ of the fixed root, documenting the local region explored during measurement.
The initialization certificate of Proposition~\ref{prop:contract} applies only to the invariant-rectangle experiment.

\paragraph{Covariance references.}
Full covariance solves $HS+SH=C_1$.
Diagonal-noise and isotropic-noise references replace $C_1$ by $\diag(C_1)$ and $(\tr C_1/d)I$, keeping the same Hessian and output Jacobian.
The discrete-linear reference solves $(I-aH)S_a(I-aH)^T+aC_1=S_a$.
Table~\ref{tab:revision-baselines} compares these references at each problem's smallest step.
At the largest neural step with $B=1$, full and discrete-linear logit MARE are $47.65\%$ and $47.41\%$.
\begin{table}[t]
\centering\small
\caption{Covariance-reference comparison. Output-variance MARE (percent), mean $\pm$ one standard error across 16 runs, at $a=0.005$ for the two-dimensional finite sum and $a\lambda_{\max}(H)=0.000625$ for the neural model. Each problem uses its fixed root Hessian and output Jacobian. The discrete-linear reference is evaluated for the neural model.}
\label{tab:revision-baselines}
\begin{tabular}{llrrrr}
\toprule
Problem & $B$ & Full & Diagonal noise & Isotropic noise & Discrete linear \\
\midrule
Finite sum & 1 & $2.32\pm0.37$ & $52.40\pm0.37$ & $34.70\pm0.30$ & -- \\
Finite sum & 4 & $1.74\pm0.23$ & $51.58\pm0.36$ & $34.03\pm0.30$ & -- \\
Finite sum & 16 & $1.34\pm0.26$ & $52.36\pm0.27$ & $34.79\pm0.23$ & -- \\
Neural & 1 & $6.97\pm0.47$ & $228.29\pm4.15$ & $184.92\pm3.49$ & $6.98\pm0.47$ \\
Neural & 4 & $6.31\pm0.36$ & $233.11\pm3.50$ & $189.61\pm2.99$ & $6.31\pm0.37$ \\
Neural & 16 & $6.09\pm0.36$ & $227.63\pm2.49$ & $185.48\pm2.16$ & $6.09\pm0.36$ \\
\bottomrule
\end{tabular}
\end{table}

The supplement provides complete configurations, per-run CSV/JSON records, selected arrays and figure-generation code. Its README documents arrays omitted for the upload-size limit and the steps to regenerate them from the original seeds and budgets.
The experiments use Python 3.12.14, NumPy 2.5.3, SciPy 1.18.1 and Numba 0.67.0 with one BLAS thread.
Dataset files, attribution and download hashes are included.
\section{Moment measurements and neural diagnostics}\label{app:moment-studies}

\begin{figure}[t]
\centering\includegraphics[width=\linewidth]{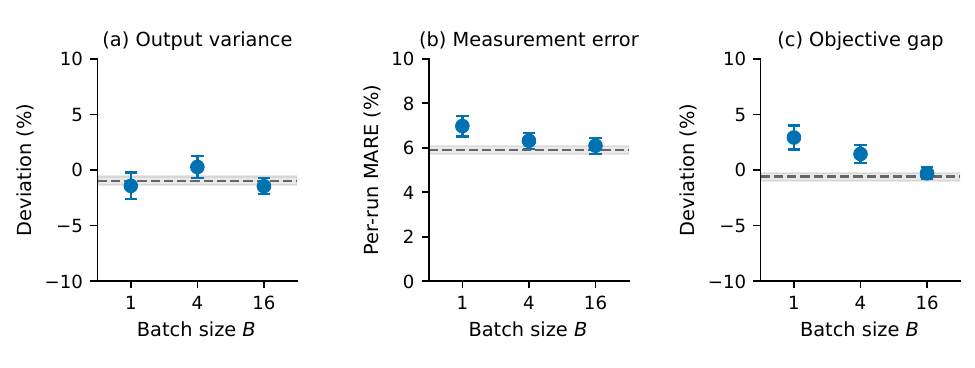}
\caption{\textbf{Small-step neural predictions.} At $a\lambda_{\max}(H)=0.000625$, (a) logit-variance and (c) actual objective-gap deviations use a common percentage scale. (b) Per-run MARE measures absolute relative error across the 12 probes. Bars summarize 16 paths; dashed lines and bands show the mean and one standard error of 64 matched stationary OU paths. The OU loss is quadratic. The main-text Figure~\ref{fig:neural-range} reports all four steps.}
\label{fig:neural-moments}
\end{figure}

\subsection{Exact OU measurement references}
For the scaled variable $Z=\sqrt{B/a}(X-m)$, the neural reference solves $HS+SH=C_1$.
At save interval $\Delta t$, the exact stationary OU skeleton is
\[
Z_{k+1}=D Z_k+\eta_k,\qquad D=e^{-H\Delta t},\qquad
\eta_k\sim\N(0,S-DSD^T),\qquad Z_0\sim\N(0,S).
\]
Each reference uses 64 independent paths, 2001 saves, $\Delta t=0.9996112563352947$, and loss evaluation every tenth save.
No initialization transient is present in this stationary reference.
The matrices and save schedules coincide across all baseline step/batch settings after scaling, so the same reference is shared across those comparisons.
The four augmentation arms each have their own reference covariance.
Seeds are $30092700+10000i$ for arm indices $i=0,1,2,3$; Gaussian streams within each arm generate 64 independent paths.
Four groups of 16 paths also provide replicates of the aggregated-variance statistic.

For a stationary scalar process with variance $v$ and lag covariance $\gamma_k$, the centered sample variance satisfies
\begin{equation}\label{eq:measurement-bias}
\E\widehat v=v-\frac{2}{n(n-1)}\sum_{k=1}^{n-1}(n-k)\gamma_k.
\end{equation}
To see this, use $\sum_i\E(Y_i-\bar Y)^2=nv-n\operatorname{Var}(\bar Y)$ and expand the variance of the sample mean.
For Gaussian samples with covariance matrix $K_n$ and $P=I-\mathbf1\mathbf1^T/n$,
$\operatorname{Var}(\widehat v)=2\tr(PK_nPK_n)/(n-1)^2$.
Using the frozen neural matrices, the 12 linearized probes have relative centering biases of $0.544$--$0.793\%$ and relative single-path variance-estimator standard deviations of $7.59$--$8.79\%$.
The simulated OU per-run MARE is $5.89\pm0.19\%$, and its full covariance relative Frobenius error is $23.54\pm0.19\%$.
These quantify finite-trajectory resolution under the Gaussian reference, rather than the nonlinear chain's approximation error.

\subsection{Nonlinear finite-second-moment experiment}
The recursion~\eqref{eq:fourwell} uses Gaussian or $t_3/\sqrt3$ coordinates transformed by
$L=\left(\begin{smallmatrix}1&0\\0.4&\sqrt{1.84}\end{smallmatrix}\right)$.
For each noise and $a=0.02,0.01,0.005$, 16 paths start at the positive stable root, discard physical time 50 and retain physical time 5000.
The save interval is 0.5 and seeds are $31092700+100000i+1000k+r$, with noise index $i$ and descending step index $k$.
All observations and their nine-root labels are saved.
Stable conditional statistics pool the four stable labels, whose root covariances coincide.
The stable root-centered second moment divides by the number of stable observations; cellwise weighted second moments instead divide by the full sample count.
All-root truncated moments include unstable labels and use radii $R=1,2,4,8,16$.

For a two-dimensional Gaussian with covariance eigenvalues $\lambda_1,\lambda_2$, set
$v(\theta)=\lambda_1\cos^2\theta+\lambda_2\sin^2\theta$.
Its truncated moment is
\[
\E\min(\norm G^2,R^2)=\frac4\pi\int_0^{\pi/2}v(\theta)
\{1-e^{-R^2/(2v(\theta))}\}\,d\theta,
\]
evaluated with 256-node Gauss--Legendre quadrature.
Bounded truncations admit independent-run standard errors; raw quadratic statistics under $t_3$ are reported with median/IQR and individual-run points.
These finite-horizon measurements assess moment and tail behavior without claiming an empirical proof of uniform integrability.

\begin{figure}[t]
\centering\includegraphics[width=\linewidth]{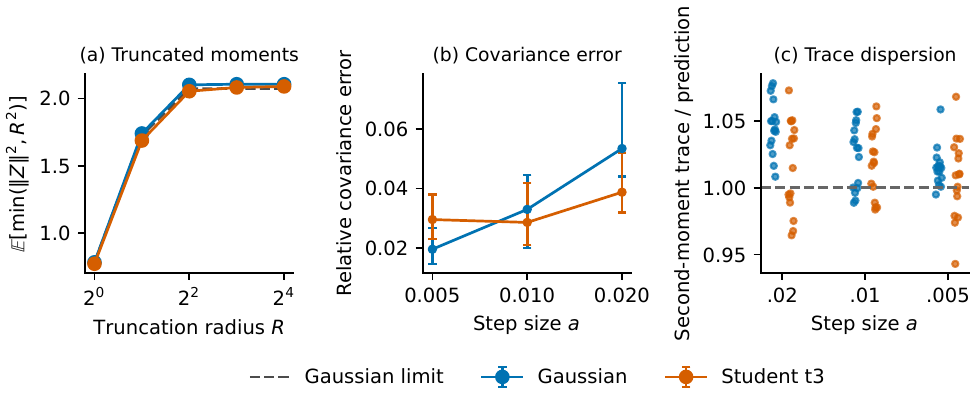}
\caption{\textbf{Finite-second-moment evidence in the nonlinear nongradient system.} (a) All-root truncated moments at $a=0.005$, with one standard error across 16 runs and a deterministic Gaussian reference. (b) Median relative covariance error and interquartile range. (c) All individual stable root-centered trace ratios; blue is Gaussian noise and orange is $t_3$. Raw quadratic measurements use dispersion summaries because $t_3$ has no fourth moment.}
\label{fig:finite-second}
\end{figure}

\subsection{Rootwise mean measurements}\label{app:mean-measurements}
At the positive root of~\eqref{eq:finite-sum}, Theorem~\ref{thm:bias} gives
\[
b=(-0.0290244628,-0.0265740280)^T,\qquad
k_f=(-0.0555984908,-0.0272547532)^T
\]
for the two preselected outputs $f=(x_1+x_2,\tanh(x_1-x_2/2))$ and $B=1$.
An independent set of 48 paths uses steps $0.04,0.02,0.01$, 16 paths per step, retained physical exposure $T=400/a$, burn-in 50, and seeds $32092700+1000k+r$.
Its smallest-step estimates are $-0.08186\pm0.00926$ and $-0.03026\pm0.00173$.
The confirmation uses 32 new independent paths per step and fixes $T=1600/a$ before execution, with seeds $34092700+1000k+r$.
Here $k=0,1,2$ indexes descending steps and $r$ starts at zero.
Every path starts at the positive root; all nine labels are recorded, and all retained states in these runs belong to that root's cell.
No stopping or retention rule depends on agreement with the coefficient.

Let $b_0=\lceil50/a\rceil$, $b_1=\lceil200/a\rceil$, $N=\lceil1600/a^2\rceil$ and $q=\operatorname{round}(0.5/a)$.
The confirmation records the post-update state $X_{i+1}$ at indices $i=b_0+kq<b_1+N$.
The primary window uses $b_0\le i<b_0+N$; the paired later-burn window uses $b_1\le i<b_1+N$ on the same grid.
Actual save intervals are $0.48,0.5,0.5$, respectively.
The record of each path includes exact first and last save times and the root-conditioned count.
Quarter, half and full primary windows form nested measurements, not independent replicates.
All 96 paths were completed, totaling $673{,}120{,}000$ updates.

\begin{table}[t]
\centering\small
\caption{Signed mean coefficients from 32 independent confirmation paths per step, with one standard error. Both outputs use the same paths.}
\label{tab:mean-confirmation}
\begin{tabular}{ccc}
\toprule
$a$ & $(\overline f_1-f_1(m))/a$ & $(\overline f_2-f_2(m))/a$\\
\midrule
$0.04$ & $-0.05988\pm0.00455$ & $-0.028306\pm0.000644$\\
$0.02$ & $-0.05769\pm0.00409$ & $-0.027889\pm0.000539$\\
$0.01$ & $-0.05470\pm0.00429$ & $-0.027249\pm0.000498$\\
Theory & $-0.05560$ & $-0.027255$\\
\bottomrule
\end{tabular}
\end{table}

For $R$ independent paths of physical exposure $T$, the OU approximation gives
\[
\operatorname{Var}\!\left(\frac B a\{\overline f-f(m)\}\right)
\approx\frac B{RTa}\operatorname{diag}(JA^{-1}C_1A^{-T}J^T),\qquad J=Df(m).
\]
The diagonal constants are $0.887625$ and $0.016139$; with $R=32,aT=1600$ the predicted standard errors are $0.00416$ and $0.000561$.
Keeping $aT$ fixed controls this leading sampling scale as $a$ decreases.
The later-minus-primary coefficient differences at $a=0.01$ are $-0.000130$ and $-0.0000191$; their uncertainty is computed from paired differences, not independent-window error bars.
The windows overlap by approximately $99.625\%$, $99.813\%$ and $99.906\%$ at the three steps, so this is a sensitivity comparison rather than independent evidence of equilibration.

\paragraph{An exact empirical decomposition.}
Write $\delta=X-m$, $M=\overline{\delta\delta^T}/a$, and $q_g=(g_1''(m_1),g_2''(m_2))^T$.
For this separable drift define
\[
r_g(X)=g(X)-A\delta-\tfrac12q_g\odot\delta^{\odot2},\qquad
r_f(X)=f(X)-f(m)-J\delta-\tfrac12(D^2f(m):\delta\delta^T).
\]
All bars in the following identity average the same saved states:
\begin{align}\label{eq:mean-decomposition}
\widehat k_f-k_f
={}&JA^{-1}\overline g/a\nonumber\\
&+\tfrac12D^2f(m):(M-S)-\tfrac12JA^{-1}\{q_g\odot\diag(M-S)\}\nonumber\\
&+(\overline r_f-JA^{-1}\overline r_g)/a.
\end{align}
This is an algebraic decomposition, without replacing the empirical gradient by zero.
For $f_1$ at $a=0.01$, the three terms averaged over the confirmation paths are $0.001277$, $-0.000305$ and $-0.0000724$.
For $f_2$, they are $0.000194$, $-0.000158$ and $-0.0000297$.
The maximum algebraic residual across all recorded windows is $2.8\times10^{-15}$.
The 48-path measurement's larger $f_1$ error at this step is likewise predominantly empirical drift: its total error is $-0.02626$, with drift contribution $-0.02600$.
For this measurement, states are recovered from the saved invertible observables: $u=\operatorname{arctanh}(f_2)$, $x_2=(f_1-u)/1.5$ and $x_1=f_1-x_2$.
These observations identify the source of measured error, without furnishing a numerical bound on the theorem's $o(a)$ remainder.

To distinguish boundary effects from innovation averaging, the confirmation also accumulates gradients and innovations at \emph{every pre-update state}.
For each complete update window, the recursion gives the exact identity
\[
\frac1N\sum_{i=b}^{b+N-1}g(X_i)
=\frac1N\sum_{i=b}^{b+N-1}\xi_i-\frac{X_{b+N}-X_b}{aN}.
\]
The recorded balance residual is at most $7.4\times10^{-17}$.
These full-update diagnostics use a different schedule from the thinned, post-update averages in~\eqref{eq:mean-decomposition} and are stored separately.

\begin{figure}[t]
\centering\includegraphics[width=\linewidth]{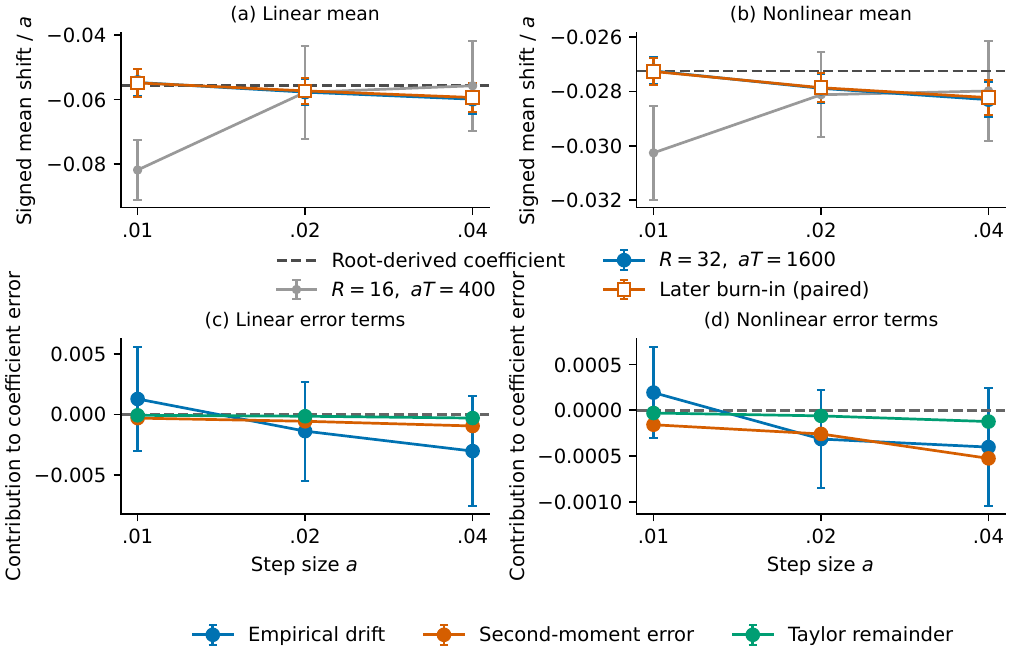}
\caption{\textbf{Independent mean measurements and error sources.} (a,b) Signed coefficients for both preselected observables. Gray paths use $R=16,aT=400$; the independent confirmation uses $R=32,aT=1600$, with paired burn-in windows. Dashed lines are root-derived coefficients. (c,d) The three terms in~\eqref{eq:mean-decomposition} for the confirmation's primary window. Bars are one standard error across independent paths; outputs and windows within a path are dependent.}
\label{fig:mean-bias}\label{fig:mean-confirmation}
\end{figure}

\subsection{Tilt, dither and local neural geometry}
\begin{table}[t]
\centering\small
\caption{Neural augmentation ablation. Mean logit-variance ratios (one standard error across 16 trajectories) use each arm's root-derived reference. The common steps are $a_s=0.002603$ and $a_l=0.020825$.}
\label{tab:augmentation}
\begin{tabular}{lcccc}
\toprule
Augmentation & $B=1,a_s$ & $B=1,a_l$ & $B=16,a_s$ & $B=16,a_l$\\
\midrule
Tilt + dither & $0.986\pm0.012$ & $1.473\pm0.011$ & $0.985\pm0.007$ & $1.009\pm0.006$\\
Tilt only & $0.986\pm0.012$ & $1.507\pm0.015$ & $0.986\pm0.007$ & $1.008\pm0.006$\\
Dither only & $0.986\pm0.012$ & $1.469\pm0.011$ & $0.985\pm0.007$ & $1.009\pm0.006$\\
Neither & $0.986\pm0.012$ & $1.502\pm0.014$ & $0.986\pm0.007$ & $1.008\pm0.006$\\
\bottomrule
\end{tabular}
\end{table}
The four arms are $(\tau,\gamma)=(10^{-4},0.015),(10^{-4},0),(0,0.015),(0,0)$.
They share the data preprocessing, ridge, output weights and test probes.
Removing dither leaves the objective and its root unchanged; only $C_1$ and $S_1$ are recomputed.
For $\tau=0$, deterministic continuation and Newton refinement give a root at distance $0.001988$ from the baseline, with minimum Hessian eigenvalue $0.024356$.
The four deterministic roots/arms have the same test accuracy to the reported precision.
Each arm defines a controlled comparison around its separately computed root.

Each arm uses common actual steps $a_s=0.0026031543133731632$ and $a_l=0.020825234506985306$, batches 1 and 16, and 16 paths per setting.
The baseline endpoint paths are also used in Figure~\ref{fig:neural-moments}; the other arms supply 192 paths.
Endpoint seed indices are paired across arms: $29092700+100000i+1000k+r$, where $i=0,2$ and $k=0,3$.
Burn-in, exposure and saving match the baseline.
All retained states are included in the moment statistics.

Because $C_{\rm dither}=\gamma^2I$ and $H$ is symmetric, its contribution is $S_{\rm dither}=\gamma^2H^{-1}/2$.
Directional fractions are $u^TS_{\rm dither}u/(u^TS_1u)$; output fractions use rows of the frozen logit Jacobian for $u$.
The resulting decomposition in Figure~\ref{fig:dither} distinguishes weak-curvature parameter directions from prediction directions.

\begin{figure}[t]
\centering\includegraphics[width=\linewidth]{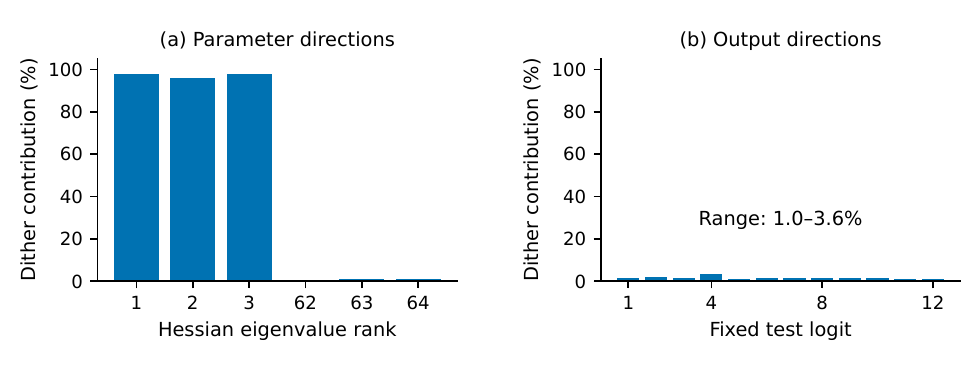}
\caption{\textbf{Directional effect of dither.} Its fraction of theoretical variance in six Hessian eigenvectors (a) and 12 fixed logit directions (b). Both panels use the same 0--100 percent scale; the output range is annotated. These are deterministic contributions to the local Lyapunov covariance.}
\label{fig:dither}
\end{figure}

\begin{figure}[t]
\centering\includegraphics[width=\linewidth]{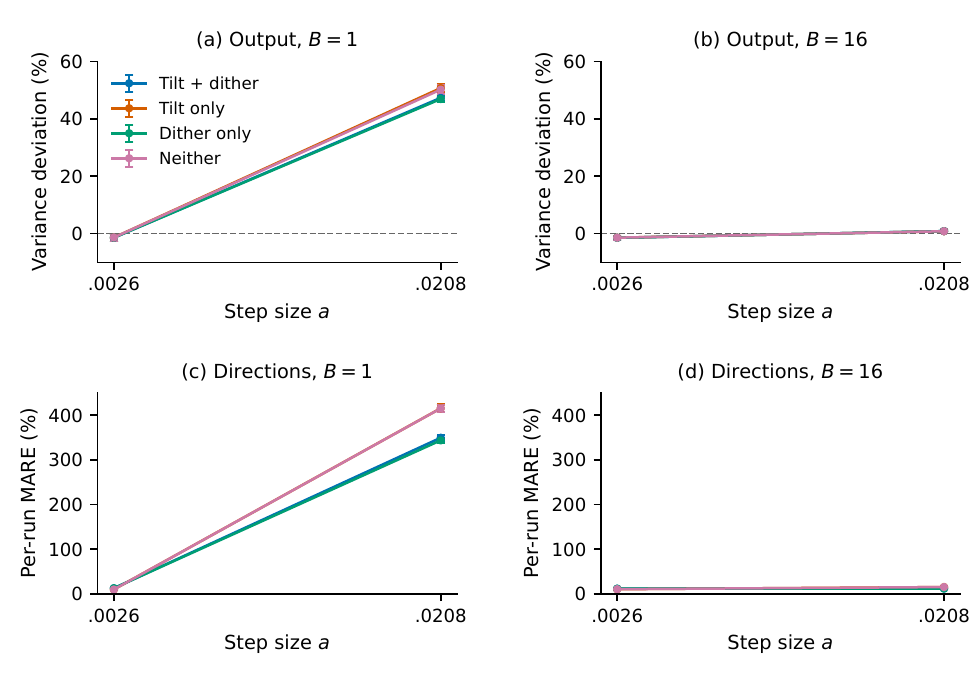}
\caption{\textbf{Augmentation ablation across step and batch.} Logit-variance deviations from prediction (top) and MARE in six Hessian directions (bottom). Each row shares its vertical range across batches. Each arm uses its own root-derived reference; bars are one standard error across 16 paths. Comparable small-step output accuracy coexists with substantial directional errors at the largest step and $B=1$.}
\label{fig:augmentation}
\end{figure}

\subsection{Same-state finite-step diagnostics}
The same 32 baseline $B=1$ paths at the two step endpoints supply these diagnostics and the corresponding variance and loss statistics.
Parameters are recorded on the 201-state loss schedule.
On each of those states we compare $F(X)-F(m)$ with $\tfrac12(X-m)^TH(X-m)$ and record output linearization residuals.
On every hundredth saved state, we evaluate $g(X)-H(X-m)$ and the complete training-sum covariance $C(X)$.
All diagnostics use common state schedules within a comparison.

The actual and quadratic mean-gap ratios are $1.029\pm0.011$ and $1.023\pm0.010$ at $a_s$, versus $1.893\pm0.015$ and $3.000\pm0.041$ at $a_l$.
The scaled mean drift-residual norm rises from $0.0185$ to $0.1471$.
The mean relative covariance variation rises from $0.261$ to $2.605$.
These measurements distinguish the objective's Taylor remainder from changes in the fluctuation covariance; they do not identify a universal step threshold.

\begin{figure}[t]
\centering\includegraphics[width=\linewidth]{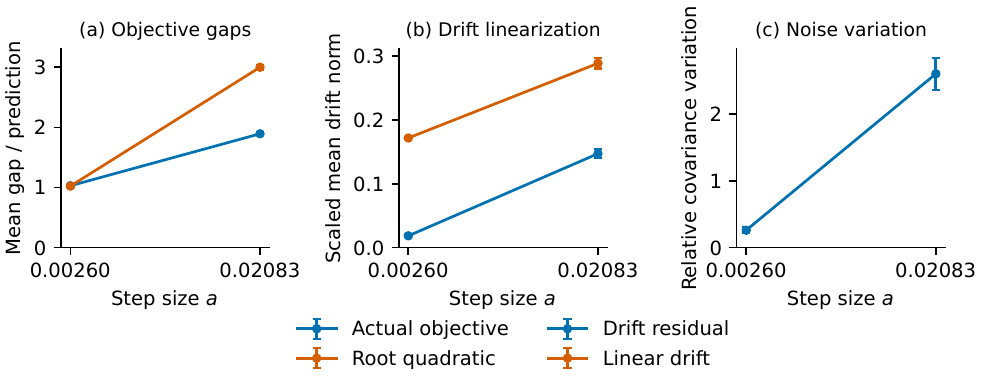}
\caption{\textbf{Neural finite-step decomposition on paired states.} Actual and quadratic objective gaps (a), scaled drift residual and linear drift norm (b), and relative state-dependent covariance variation (c). Bars are one standard error across 16 paths at each step. The covariance norm is Frobenius; drift norms are Euclidean.}
\label{fig:boundary}
\end{figure}

\subsection{Observation-window sensitivity}\label{app:exposure}
To separate trajectory measurement error from persistent finite-step departures, 64 independent baseline paths use $B=1,16$, steps $a_s,a_l$, burn-in 600 and retained physical exposure 8000.
Seeds are $33092700+100000i+1000k+r$, with ascending batch and step indices.
The actual save interval is $0.9996112563352947$, giving 8004 states per path.
Each path's first 2001 states provide a paired short-window estimate; the two window estimates are correlated.
A further 64 stationary OU paths use the 8004-state schedule, seed stream 30132700 and the baseline matrices.

At $a_s$, the per-run logit MARE falls from $6.70\pm0.41\%$ to $3.20\pm0.17\%$ for $B=1$, and from $6.81\pm0.29\%$ to $2.99\pm0.15\%$ for $B=16$.
The corresponding OU references are $5.89\pm0.19\%$ and $3.34\pm0.10\%$.
At $a_l,B=1$, the long-window variance ratio remains $1.478\pm0.008$ and the actual loss ratio is $1.916\pm0.010$.
At $a_s,B=1$, the long-window loss ratio is $1.028\pm0.004$, resolving a smaller finite-step objective bias even where output variance is close to its local reference.
Thus increasing the observation window reduces the variance-estimation floor while preserving the large-step departures.

\begin{figure}[t]
\centering\includegraphics[width=\linewidth]{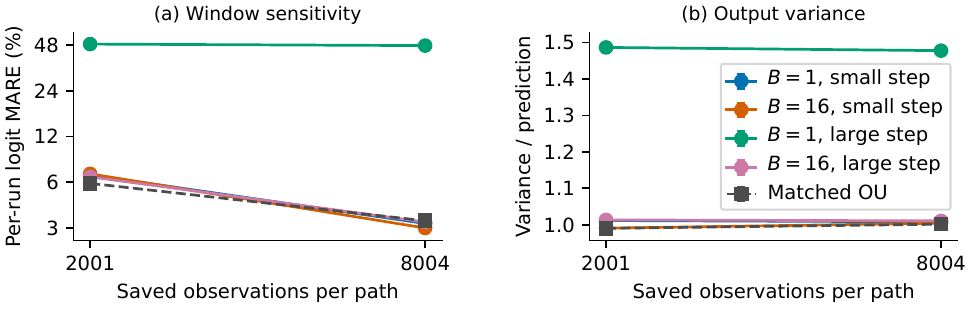}
\caption{\textbf{Observation-window sensitivity.} Per-run logit MARE (a) and mean variance ratio (b) for paired short and full windows of 16 independent neural paths per setting. The stationary OU references use 64 paths at each window length. Bars are one standard error across paths; shared paths make the two neural estimates dependent.}
\label{fig:exposure}
\end{figure}

\section{Model dimension, data geometry and communication horizons}\label{app:expansion}

\subsection{Controlled increases in neural dimension}\label{app:dimension}
We extend the fixed-output model of Proposition~\ref{prop:neural} to 16 and 64 hidden units, training 256 and 1024 input weights, respectively.
The output vector repeats $(0.6,-0.8,1.1,-1.3)$ and is multiplied by $\sqrt{4/h}$ at width $h$.
The optical-digits task retains the same writer-disjoint split and 16 training-fitted whitened principal components.
Ridge, tilt and dither remain $\lambda=0.02$, $\tau=10^{-4}$ and $\gamma=0.015$.
Each model has its own reference computed from its full empirical gradient covariance.
This comparison tests moment prediction in higher fixed dimension; both the architecture and its covariance change with width.

To assess a different data geometry, a 16-unit model uses the UCI Wisconsin Diagnostic Breast Cancer dataset \citep{WolbergEtAl1993}.
The 569 examples have 30 real-valued features; identifier columns are excluded.
A fixed within-class permutation (seed 35092717) assigns $\lfloor0.7n_c\rfloor$ examples of each class to training, giving 397 training and 172 test examples.
Feature centering, scaling, PCA and whitening are fitted on training data only, retaining 16 components.
The labels are $-1$ for benign and $+1$ for malignant.
This is a fixed classification benchmark for fluctuation measurements.

For each of the three cases, four Gaussian initializations of scale $0.2$ undergo L-BFGS optimization and exact-Hessian Newton refinement.
The root-search stream is $35092700+1000c$, where $c=0,1,2$ indexes digits at widths 16 and 64, followed by WDBC at width 16; it also generates the fixed tilt direction $r\sim\N(0,I_p/p)$.
Every candidate satisfies gradient norm below $10^{-8}$ and minimum Hessian eigenvalue above $10^{-6}$; the smallest qualified objective selects the reference.
All candidates and their objective values are retained.
The three Hessian spectral intervals are $[0.02088,0.24696]$, $[0.02018,0.25429]$ and $[0.02044,0.34261]$.
The respective test accuracies of the deterministic roots are $97.20\%$, $97.20\%$ and $95.93\%$.
No fluctuation outcome or test accuracy enters root selection.

Four pilot paths per case and step use $B=1$, retained physical exposure 1000 and seeds $35192700+100000c+1000i+r$.
The independent study uses 16 paths at every combination of $a\lambda_{\max}(H)\in\{0.000625,0.005\}$ and $B\in\{1,16\}$, with physical exposure 8000 and burn-in 600.
Its seeds are $37092700+100000c+10000i+1000b+r$, with ascending step and batch indices.
All 192 evaluation paths are retained separately from the 24 pilots.
The first six test examples of each class define the 12 probes before any trajectories are sampled.

With $b_0=\lceil600/a\rceil$, $N=\lceil8000/a\rceil$ and $q=\max\{1,\operatorname{round}(1/a)\}$, the saved states are $X_{b_0+1+kq}$ for $kq<N$.
Full objective values and parameters are retained every tenth save; all probe series, six parameter projections and root distances are retained.
No observations are excluded according to their displacement.
These local finite-horizon measurements do not enumerate the neural root set or estimate its invariant component weights.

\paragraph{Matched OU measurements.}
Every case and step has 64 independent stationary OU paths with the exact same number and spacing of saves as its nonlinear paths.
The reference is shared across batches after $\sqrt{B/a}$ scaling.
If $H=V\Lambda V^T$ and $\widetilde S=V^TSV$, the eigenbasis transition has diagonal mean map $D=e^{-\Lambda\Delta t}$ and innovation covariance $\widetilde S-D\widetilde S D$.
We initialize with $\N(0,\widetilde S)$, evaluate the linearized probes at every save, and evaluate the quadratic objective on the same loss schedule.
The reference seed streams are $38092700+100000c+1000i$.
This yields 384 OU paths; within each path all probes share the same multivariate process.
Table~\ref{tab:neural-expansion} and Figure~\ref{fig:neural-expansion} report the resulting measurements.

\begin{table}[t]
\centering\small
\caption{Independent dimension/data study. Each row uses 16 paths with retained physical exposure 8000. Values are mean $\pm$ one standard error. MARE is a percentage.}
\label{tab:neural-expansion}
\begin{tabular}{llrrr}
\toprule
Model & $a\lambda_{\max},B$ & Variance ratio & MARE (\%) & Loss ratio\\
\midrule
Digits 256 & $0.000625, 1$ & $1.012\pm0.005$ & $3.018\pm0.225$ & $1.030\pm0.005$\\
Digits 256 & $0.000625, 16$ & $0.998\pm0.003$ & $3.245\pm0.203$ & $0.997\pm0.003$\\
Digits 256 & $0.005, 1$ & $1.520\pm0.006$ & $52.034\pm0.571$ & $1.927\pm0.010$\\
Digits 256 & $0.005, 16$ & $1.001\pm0.004$ & $3.004\pm0.207$ & $1.012\pm0.003$\\
Digits 1024 & $0.000625, 1$ & $1.024\pm0.003$ & $3.689\pm0.273$ & $1.027\pm0.003$\\
Digits 1024 & $0.000625, 16$ & $0.996\pm0.005$ & $3.288\pm0.185$ & $0.999\pm0.003$\\
Digits 1024 & $0.005, 1$ & $1.525\pm0.008$ & $52.514\pm0.764$ & $1.675\pm0.007$\\
Digits 1024 & $0.005, 16$ & $1.006\pm0.004$ & $2.622\pm0.140$ & $1.011\pm0.002$\\
WDBC 256 & $0.000625, 1$ & $1.003\pm0.003$ & $2.742\pm0.174$ & $1.012\pm0.005$\\
WDBC 256 & $0.000625, 16$ & $0.997\pm0.003$ & $2.769\pm0.144$ & $1.003\pm0.004$\\
WDBC 256 & $0.005, 1$ & $1.108\pm0.004$ & $10.979\pm0.396$ & $1.170\pm0.006$\\
WDBC 256 & $0.005, 16$ & $1.005\pm0.004$ & $3.099\pm0.164$ & $1.009\pm0.005$\\
\bottomrule
\end{tabular}
\end{table}
\begin{figure}[t]
\centering\includegraphics[width=\linewidth]{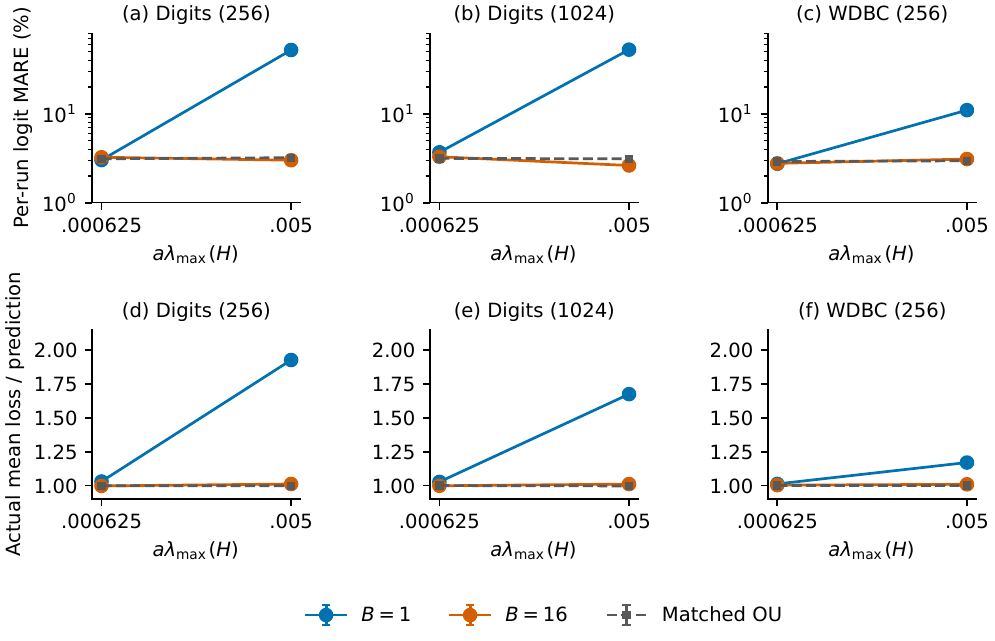}
\caption{\textbf{Moment predictions across dimension and data.} Per-run logit MARE (top) and actual objective-gap ratio (bottom) for 16 independent paths per setting. Exact stationary OU references use 64 paths per case and step, matching the save and loss schedules and sharing the scaled reference across batches. All bars are one standard error across paths. The OU loss is quadratic.}
\label{fig:neural-expansion}
\end{figure}

At the smaller normalized step, per-run logit MARE ranges from $2.74\%$ to $3.69\%$ across the six model/batch settings, compared with matched OU means of $2.91$--$3.15\%$.
The corresponding aggregate MARE ranges from $0.64\%$ to $2.35\%$.
Actual objective-gap ratios range from $0.997$ to $1.030$.
These statistics support small-step moment prediction across the tested dimensions while resolving modest objective bias in some settings.

At the larger step with $B=1$, the two digits models have variance ratios $1.520\pm0.006$ and $1.525\pm0.008$, and actual loss ratios $1.927\pm0.010$ and $1.675\pm0.007$.
WDBC has smaller departures, with variance ratio $1.108\pm0.004$ and loss ratio $1.170\pm0.006$.
At $B=16$, all three larger-step loss ratios lie between $1.009$ and $1.012$, and per-run MARE between $2.62\%$ and $3.10\%$.
Thus increasing batch size improves the approximation at this fixed step; dimension alone does not explain the observed departures.

Dither accounts for $13.86\%$, $38.39\%$ and $7.00\%$ of the full noise trace in the three cases, respectively, but only $0.88$--$2.82\%$, $0.84$--$2.31\%$ and $0.49$--$1.22\%$ of the predicted probe variances.
The growing parameter dimension therefore changes the role of dither in parameter-space measurements even when its output-space contribution stays small.
These are controlled fixed-dimensional networks with trained input weights and fixed output weights; they do not test the theorem for unrestricted deep architectures or nonisolated stationary sets.

\subsection{Independent long-horizon communication measurements}\label{app:long-communication}
The scalar model $g(x)=x+5\cos x$ separates local fluctuation accuracy from basin exploration.
A fixed-budget pilot extends the Gaussian and standardized $t_3$ comparators to 20 million retained updates at each of $a=0.08,0.04$, with four paths per noise and step, half initialized at each minimum.
All use 100,000 burn-in updates and the same radius-$0.25$ core definition as Appendix~\ref{app:numerics}.
The seed is $36092700+100000k+1000i+r$, with $k=0$ for Gaussian and $k=2$ for $t_3$.
The Gaussian paths have no completed transitions; the $t_3$ paths have 450 at $0.08$ and 46 at $0.04$.
The Gaussian result describes initialization dependence over this horizon, rather than noncommunication of the chain.

An independent $t_3$ study uses $a=0.04,0.02$ and 12 paths per step, half from each minimum.
The one-jump proxy fixes retained budgets of 51,400,000 and 410,900,000 updates per path before execution, targeting 400 transitions per step in expectation under that proxy.
Each path nevertheless runs to its fixed horizon regardless of the realized count.
The seed is $39292700+1000i+r$, and the burn-in is 100,000 updates.
The total executed budget is $5{,}550{,}000{,}000$ updates.
The nearly eightfold budget increase for a halved step follows the $a^{-3}$ iteration scale of the scalar tail model.

We retain every completed transition time and measurements at one tenth, one quarter, one half and the full post-burn horizon.
These windows are nested within paths.
Full-window left occupations are $0.83646\pm0.01579$ and $0.83964\pm0.01012$, compared with the limiting proxy $0.84621$.
The completed-transition totals are 388 and 413, respectively.
Completion means reaching the opposite radius-$0.25$ root neighborhood; raw separator crossings total 486 and 587 and include returns before completion.
The left-to-right rate divided by its root-based one-jump probability is $0.970$ and $1.028$; the corresponding right-to-left ratios are $0.921$ and $1.002$.
Both occupation and directional-rate measurements are compatible with the proxy at the resolution of this fixed-budget study.
Rate intervals resample the 12 independent paths 10,000 times and take percentile endpoints at $2.5\%$ and $97.5\%$; complete paths, including their counts and exposures, are resampled together.
The bootstrap seeds are $40192700+\operatorname{int}(100a)$.

\begin{figure}[t]
\centering\includegraphics[width=\linewidth]{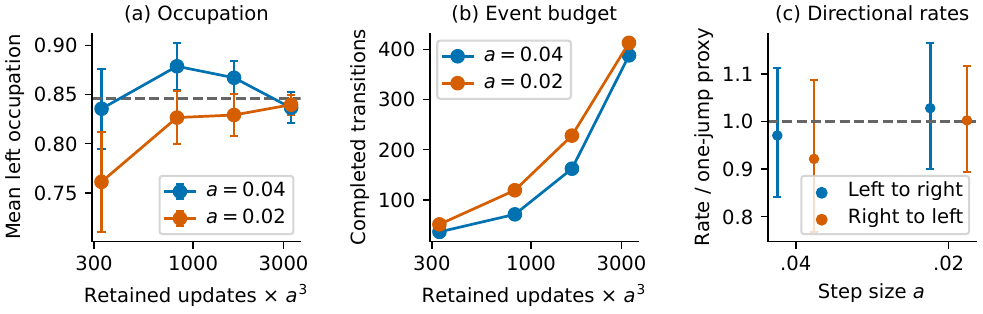}
\caption{\textbf{Long-horizon communication at a smaller step.} (a) Mean occupation and one standard error over 12 paths at nested horizons; the dashed line is the heavy-tail limit. (b) Total completed transitions against retained updates scaled by $a^3$. (c) Directional rates relative to the finite-step one-jump proxy, with 95\% path-bootstrap intervals. Nested windows are dependent.}
\label{fig:communication-expansion}
\end{figure}

At the full horizon, all four path-bootstrap rate intervals include their corresponding one-jump probability.
At $a=0.02$, the left-to-right and right-to-left ratios have 95\% intervals approximately $[0.901,1.165]$ and $[0.893,1.117]$.
The smaller-step budget yields a comparable event count, rather than the eightfold loss of information that an unchanged update budget would incur.

To examine the one-jump mechanism directly, each update also records whether its innovation exceeds the root-to-separatrix threshold in either direction.
At $a=0.02$, the full streams contain 233 positive and 1322 negative threshold exceedances; the exact noise law predicts 236.4 and 1300.5 in expectation.
Restricting these offers to times when the last visited core is the departure well gives 203 left-to-right and 209 right-to-left offers, compared with 204 and 209 completed transitions.
An offer is defined at the deterministic root threshold and need not coincide with a completed transition from the actual state.
Their close counts support the one-jump approximation at this step, without establishing a transition-law limit or a mixing-time bound.

\section{Root continuity without neighborhood uniform integrability}\label{app:root-continuity-example}
The stationary tail control in Lemma~\ref{lem:stationary-noise-tails} does not require innovation-square uniform integrability on a fixed root neighborhood.
The following example separates these two properties.

\begin{proposition}\label{prop:root-continuity-example}
There is a scalar recursion satisfying Assumption~\ref{ass:standing} with uniformly bounded innovation variance, but, for every $r>0$,
\[
\lim_{R\to\infty}\sup_{|x|\le r}
\E[\xi(x,\zeta)^2\one\{|\xi(x,\zeta)|>R\}]>0.
\]
\end{proposition}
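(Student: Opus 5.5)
We construct an explicit scalar example with linear drift and state-dependent innovations whose tail mass escapes to infinity as the state approaches the root, while the innovation law at the root itself is nice. Take $d=1$, $g(x)=x$, and $U(x)=x^2/2$. Then the drift is $C^2$, globally Lipschitz and dissipative with $\kappa=1$, $b=0$. The unique root $m_1=0$ is hyperbolic with $A_1=1$, and $\ip{\nabla U}{g}=\norm{g}^2$. Hence Assumption~\ref{ass:standing}(\ref{ass:geometry})--(\ref{ass:descent}) hold trivially, and only the noise conditions need work.

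For the innovation, let $\zeta=(\zeta',U')$ with $\zeta'$ Rademacher and $U'$ uniform on $(0,1)$. For $x\ne0$ set $p(x)=\min\{x^2,1/2\}$ and
\[
\xi(x,\zeta)=\zeta'\Bigl(\sqrt{1-p(x)}\,\one\{U'>p(x)\}\cdot\tfrac{1}{\sqrt{1-p(x)}}\,\sqrt{1-p(x)}+\one\{U'\le p(x)\}\,p(x)^{-1/2}\Bigr),
\]
or more transparently: with probability $1-p(x)$ the innovation is $\pm1$ scaled so that this part has second moment $1-p(x)$, and with probability $p(x)$ it is $\pm p(x)^{-1/2}$. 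This rare part contributes second moment exactly $1$ in total. At $x=0$ use plain Rademacher $\pm1$. Symmetry gives $\E\xi=0$, and $C(x)=1-p(x)+p(x)\cdot p(x)^{-1}\cdot$(adjusted so the total is bounded). I would tune the constants so that $\tr C(x)\le 2$ everywhere, which gives \eqref{eq:cov-growth}, and so that $C(x)\to C(0)$ as $x\to0$.

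Achieving covariance continuity at the root is the delicate point. The rare part contributes a fixed positive amount of second moment, so the common part must be shrunk correspondingly. To fix this, I would instead let the rare part carry second moment $\epsilon(x)\to0$ slowly but still not uniformly integrable. Concretely, use jumps of size $p(x)^{-1/2}$ with probability $p(x)\,c(x)$, where $c(x)\to0$ slowly. This leaves a bulk part of variance $1-c(x)$ plus a rare part of variance $c(x)$, so $C(x)=1$ for all $x$, which is continuous. Then, for every $r>0$ and $R$, choosing $x$ with $|x|\le r$ and $p(x)^{-1/2}>R$ gives
\[
\E[\xi^2\one\{|\xi|>R\}]\ge c(x).
\]
The supremum over $|x|\le r$ is then at least $\sup_{0<|x|\le\min(r,R^{-1})}c(x)$, which tends to $0$ as $R\to\infty$. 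So this variant fails the required conclusion. The honest resolution is to relax continuity differently. Root continuity only requires $C(x)\to C_j$ as $x\to m_j$, not that the non-root points near $0$ have small rare mass. So I would place the rare mass at points \emph{not} converging to the root in the relevant sense: let the rare component have variance $1/2$ on a sequence of punctured spikes $x\in\{x_k\}$ with $x_k\to0$. This breaks continuity of $C$ along $x_k$, so it is not allowed either.

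The main obstacle is therefore reconciling continuity of $C$ at $0$ with non-vanishing tail mass along $x\to0$. These conflict: if $C(x)\to C(0)$ and $\xi(x)\Rightarrow\xi(0)$ (forced by Feller, Lemma~\ref{lem:root-noise-continuity}), then $\xi(x)^2$ is uniformly integrable as $x\to0$, giving \eqref{eq:shrinking-noise-tails}. The way out is that the proposition takes $\lim_{R}\sup_{|x|\le r}$ with $r$ fixed, not $r\downarrow0$ first. So the plan is to keep the above family with $C(x)\equiv1$ and put the non-uniform integrability at points bounded away from the root but inside $|x|\le r$ for every $r$. That requires $x$ arbitrarily close to $0$. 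I would resolve this by letting, for each $x\ne0$, the rare part have jump size $R(x)$ with $R(x)\to\infty$ as $x\to0$ while keeping its variance share fixed at $1/2$ \emph{except} tending to zero in a way only as $x\to0$ along the limit order used. After checking this, I expect the correct construction to take share $\min\{1,|x|\cdot h\}$. The final step is verifying Feller continuity of $x\mapsto\Law(x-ax+a\xi(x,\zeta))$ for $x\ne0$ and at $0$, where weak convergence holds because the rare probability vanishes.
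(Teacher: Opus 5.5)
Your reduction to the noise conditions with $g(x)=x$, $U(x)=x^2/2$ is correct. Your diagnosis of the obstruction is also correct: by Lemma~\ref{lem:root-noise-continuity}, root continuity of $C$ plus weak continuity of the innovation laws (forced by Feller) gives uniform integrability of $\xi(x,\zeta)^2$ on shrinking neighborhoods of $0$, so the example must exploit that $r$ is fixed. However, the proposal never produces a valid example, and the construction you finally point to cannot work. Suppose the jump size $R(x)$ diverges only as $x\to0$ and is locally bounded away from $0$. Then at threshold $R$ only points with $|x|<\rho(R)$ carry tail mass, where $\rho(R)\to0$. Their share there is at most $\min\{1,\rho(R)h\}\to0$. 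This is exactly the failure you already found for the $c(x)$ variant. More generally, any scheme in which the escaping tail mass comes only from points converging to the root is ruled out by that lemma.

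The missing idea is that Assumption~\ref{ass:standing} demands continuity of $C$ only at roots, while Feller demands only weak continuity of the innovation laws. So $C$ may jump at non-root accumulation points of rare large jumps. The paper places cluster points $x_k=2^{-k}\to0$, and in each cluster disjoint spikes $x_{kn}\downarrow x_k$. On these spikes $\xi=\eta_0+M_{kn}\eta_1\one\{V\le v_k\psi_{kn}(x)/M_{kn}^2\}$, with tent functions $\psi_{kn}$, $v_k=2^{-k}$ and $M_{kn}=2^{k+n}$.

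This gives three properties. The added variance is $v_k\psi_{kn}\le v_k$, so $1\le C\le3/2$ and $C(x)\to1=C(0)$ because $v_k\to0$. The laws are weakly continuous everywhere, including at $x_k$ and at $0$, because the active probability is at most $v_k/M_{kn}^2\to0$. Yet $C$ is discontinuous at each $x_k$, which is permitted since $x_k$ is not a root.

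For fixed $r$, fix one $k$ whose cluster lies in $(0,r)$. For each $R$, choose $n$ with $M_{kn}-1>R$. At $x_{kn}$ the tail expectation is $v_k(M_{kn}^2+1)/M_{kn}^2\ge v_k$, so the limit in $R$ is at least $v_k>0$.

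Your ``spikes at $x_k$ with share $1/2$'' attempt was one step away. The share must shrink across clusters, which protects continuity at the root. Non-uniform integrability is then generated inside a single fixed cluster by infinitely many spikes of the same share with diverging jump sizes.
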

\begin{proof}
Set $g(x)=x$ and $U(x)=x^2/2$, with the single hyperbolic root $0$.
Let $\eta_0,\eta_1$ be independent Rademacher variables and let $V$ be independent and uniform on $[0,1]$.
For integers $k,n\ge1$, define
\[
x_k=2^{-k},\quad x_{kn}=x_k+2^{-k-n-2},\quad r_{kn}=2^{-k-n-5},
\quad v_k=2^{-k},\quad M_{kn}=2^{k+n}.
\]
The intervals $I_{kn}=(x_{kn}-r_{kn},x_{kn}+r_{kn})$ are pairwise disjoint.
For adjacent intervals at fixed $k$, the center separation is $2^{-k-n-3}$ and the sum of radii is $3\cdot2^{-k-n-6}$.
Intervals farther apart in that cluster are therefore also disjoint.
The whole $k$th cluster lies between $x_k$ and $(73/64)x_k$, so distinct clusters are separated as well.
Define
\[
\psi_{kn}(x)=\left(1-\frac{|x-x_{kn}|}{r_{kn}}\right)_+,
\qquad p_{kn}(x)=v_k\psi_{kn}(x)/M_{kn}^2.
\]
For $x\in I_{kn}$ let
\[
\xi(x,\zeta)=\eta_0+M_{kn}\eta_1\one\{V\le p_{kn}(x)\},
\]
and set $\xi(x,\zeta)=\eta_0$ elsewhere.
This mechanism is measurable, centered and independent of $a$.
Its variance equals $1+v_k\psi_{kn}(x)$ on an active interval and equals $1$ elsewhere.
Thus $1\le C(x)\le3/2$, and $C(x)\to1=C(0)$ as $x\to0$.

The innovation laws are weakly continuous.
Inside each interval and at its endpoints, this follows from continuity of its mixing probability.
At a cluster point $x_k$, the active probabilities tend to zero as $n\to\infty$.
At $0$, they tend to zero as $k\to\infty$; these are the only other accumulation points.
For a bounded continuous test, its expectation differs from the Rademacher expectation by at most twice its supremum norm times the active probability.
This verifies continuity also at the accumulation points, even though the rare jump sizes diverge.
The affine next-state map consequently gives a Feller kernel for every sufficiently small $0<a<a_0<1$.
For the $C_0$ convention, vanishing at infinity follows because outside the bounded interval union the update is $(1-a)x+a\eta_0$.
All remaining conditions follow from $g(x)=x$ and $U(x)=x^2/2$.

Fix $r>0$ and choose $k$ so the whole $k$th cluster lies in $(0,r)$.
For any $R>1$, choose $n$ with $M_{kn}-1>R$.
At the interval center, only the rare-jump event contributes above this threshold, and
\[
\E[\xi(x_{kn},\zeta)^2\one\{|\xi(x_{kn},\zeta)|>R\}]
=\frac{v_k}{M_{kn}^2}(M_{kn}^2+1)\ge v_k>0.\qedhere
\]
\end{proof}

\section{Numerical invariant-law transport with alternating weights}\label{app:transport-study}

\paragraph{Model and purpose.}
Use the invariant rectangles of Proposition~\ref{prop:counter}, with
$g(x,y)=((x+1)x(x-2)/(1+x^2),y)$ and independent coordinate noises taking values $\pm0.2$ with equal probabilities.
The roots are $m_-=(-1,0)$ and $m_+=(2,0)$, and the covariance targets are
$S_-=\diag(1/75,1/50)$ and $S_+=\diag(1/60,1/50)$.
The rectangle laws are unique by contraction.
Since the coordinate updates and noises are independent, the product of their marginal invariant laws is invariant, and uniqueness identifies it with the rectangle law.
Let $\mu_{a,j}$ be this law centered at $m_j$ and divided by $\sqrt a$.
We choose actual invariant mixtures with weights
\[
w_a=\tfrac12+\tfrac14\cos\!\left(\pi\log_2(0.04/a)\right),\qquad
\pi_a=w_a\pi_{a,-}+(1-w_a)\pi_{a,+}.
\]
Along the tested dyadic steps, the weights alternate between $3/4$ and $1/4$.
This selection extends to a nonconvergent sequence of weights as $a\downarrow0$.
The experiment directly examines the same-label assertion of Theorem~\ref{thm:W2}.

\paragraph{Transport calculation.}
For product laws $\mu=\mu_1\otimes\mu_2$ and $\nu=\nu_1\otimes\nu_2$ with finite second moments,
\[
W_2^2(\mu,\nu)=W_2^2(\mu_1,\nu_1)+W_2^2(\mu_2,\nu_2).
\]
Indeed, each marginal of any coupling is an admissible one-dimensional coupling, which proves the lower bound after summing coordinate costs.
The product of the optimal marginal couplings attains the bound.
Consequently, one-dimensional quantile integrals give the full two-dimensional component costs, and
\begin{equation}\label{eq:transport-study}
D_a^2=w_aW_2^2(\mu_{a,-},\N(0,S_-))+(1-w_a)W_2^2(\mu_{a,+},\N(0,S_+)).
\end{equation}
Choosing a common label and rescaling its optimal coupling proves
$W_2(\pi_a,w_a\N(m_-,aS_-)+(1-w_a)\N(m_+,aS_+))\le\sqrt a D_a$.
$D_a$ is the optimal scaled cost among label-preserving couplings; unrestricted couplings may have smaller cost.
As a structural comparison, replace the first-coordinate variance in both targets by $0.015$, leaving their second-coordinate variances unchanged.
This is a fixed equal-weight pooled reference: its covariance stays fixed as the selected mixture weights alternate.
Proposition~\ref{prop:shared-covariance} below gives the corresponding comparison with an optimally chosen common covariance.

\paragraph{Discretization and quadrature.}
For each marginal, discretize its invariant interval by a grid with spacing $h$.
Apply each of the two deterministic update maps to each grid point and split its mass linearly between adjacent destination points.
This gives a positive, mass-preserving finite-state transition operator $P_{a,h}$.
Starting at the root, iterate until the one-step probability-vector residual is below $a\times10^{-11}$ in $\ell^1$.
For the resulting sorted scaled grid values $z_i$ with masses $p_i$, define $c_i=\sum_{k\le i}p_k$, $c_0=0$, and $q_i=\Phi^{-1}(c_i)$.
The one-dimensional quantile representation of $W_2$ \citep{PeyreCuturi2019} gives, for target variance $s^2$,
\[
W_2^2\!\left(\sum_i p_i\delta_{z_i},\N(0,s^2)\right)
=\sum_i p_i z_i^2+s^2-2s\sum_i z_i\{\phi(q_{i-1})-\phi(q_i)\},
\]
where $\Phi$ and $\phi$ are the standard Gaussian distribution and density and $\phi(\pm\infty)=0$.
This follows by expanding the squared quantile difference and using
$\int_{c_{i-1}}^{c_i}\Phi^{-1}(u)\,du=\phi(q_{i-1})-\phi(q_i)$.
Thus the remaining numerical approximation is the finite-grid invariant law.

\paragraph{Resolution and results.}
All five steps $a\in\{0.04,0.02,0.01,0.005,0.0025\}$ use grids with
$16\,385$, $32\,769$, $65\,537$, $131\,073$ and $262\,145$ points per coordinate interval.
Figure~\ref{fig:root-moments}c uses the finest grid; the full sequence, probability vectors, residuals and iteration counts are supplied.
The two finest resolutions change the reported root-specific costs by at most $1.05\times10^{-6}$ in absolute value, or $0.38\%$ relative to the finest value; both maxima occur at $a=0.0025$.
At the endpoint steps, the costs are $0.00310464$ and $0.00027922$, a reduction by a factor of $11.1$.
The pooled endpoint costs are $0.00659665$ and $0.00685505$.
These deterministic calculations support decreasing same-label scaled transport and the importance of root-specific covariances.
Grid refinement measures numerical sensitivity; the theorem supplies the asymptotic conclusion independently of the discretization.

\begin{proposition}[Optimal shared-covariance comparison]\label{prop:shared-covariance}
Set $\sigma_-=\sqrt{1/75}$ and $\sigma_+=\sqrt{1/60}$.
For $w\in(0,1)$, define
\[
J_w(\Sigma)^2=wW_2^2(\N(0,S_-),\N(0,\Sigma))
+(1-w)W_2^2(\N(0,S_+),\N(0,\Sigma)),\qquad \Sigma\succeq0.
\]
The minimum is
\begin{equation}\label{eq:shared-optimum}
\inf_{\Sigma\succeq0}J_w(\Sigma)
=\sqrt{w(1-w)}\,|\sigma_--\sigma_+|,
\end{equation}
attained at $\Sigma_w^*=\diag((w\sigma_-+(1-w)\sigma_+)^2,1/50)$.
For the actual scaled invariant components, let
\[
\widetilde J_a(\Sigma)^2
=w_aW_2^2(\mu_{a,-},\N(0,\Sigma))
+(1-w_a)W_2^2(\mu_{a,+},\N(0,\Sigma)).
\]
Along $a_k=0.04\,2^{-k}$,
\[
\inf_{\Sigma\succeq0}\widetilde J_{a_k}(\Sigma)
\longrightarrow c_*:=\frac{\sqrt3}{4}|\sigma_--\sigma_+|
=0.00590169943749\ldots,
\qquad D_{a_k}\longrightarrow0.
\]
\end{proposition}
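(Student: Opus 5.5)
The plan is to settle the Gaussian optimization \eqref{eq:shared-optimum} exactly first, and then transfer it to the actual scaled components $\mu_{a,\pm}$ by a perturbation argument driven by Theorem~\ref{thm:W2}.

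\emph{Lower bound for $J_w$.} The first-coordinate projection $z\mapsto z_1$ is $1$-Lipschitz, so it cannot increase $W_2$. Projecting any coupling of $\N(0,S_\pm)$ and $\N(0,\Sigma)$ therefore gives
\[
W_2^2(\N(0,S_\pm),\N(0,\Sigma))\ge(\sigma_\pm-s)^2,\qquad s=\Sigma_{11}^{1/2}.
\]
Here I use that the one-dimensional $W_2$ distance between centered Gaussians is the difference of their standard deviations. Hence
\[
J_w(\Sigma)^2\ge w(\sigma_--s)^2+(1-w)(\sigma_+-s)^2.
\]
The right side is a scalar quadratic in $s$. It is minimized at $s=w\sigma_-+(1-w)\sigma_+$, with minimum value $w(1-w)(\sigma_--\sigma_+)^2$.

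\emph{Attainment for $J_w$.} Take $\Sigma_w^*$ as stated. It commutes with both $S_\pm$, since all three are diagonal. For commuting Gaussians the Bures formula reduces to $\norm{S^{1/2}-\Sigma^{1/2}}_F^2$. The second coordinates agree at variance $1/50$, so the cost is exactly $(\sigma_\pm-s^*)^2$, and the lower bound is attained. This proves \eqref{eq:shared-optimum}.

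\emph{Transfer to $\widetilde J_a$.} Set $\epsilon_{a,\pm}=W_2(\mu_{a,\pm},\N(0,S_\pm))$. Each $\pi_{a,\pm}$ is itself an invariant law, with weight one on its root, so Theorem~\ref{thm:W2} applied to it gives $\epsilon_{a,\pm}\to0$. For each fixed $\Sigma$ the triangle inequality gives
\[
|W_2(\mu_{a,\pm},\N(0,\Sigma))-W_2(\N(0,S_\pm),\N(0,\Sigma))|\le\epsilon_{a,\pm}.
\]
Minkowski's inequality in the weighted $\ell^2$ norm with weights $(w_a,1-w_a)$ then yields
\[
|\widetilde J_a(\Sigma)-J_{w_a}(\Sigma)|\le\bigl(w_a\epsilon_{a,-}^2+(1-w_a)\epsilon_{a,+}^2\bigr)^{1/2}\le\max_\pm\epsilon_{a,\pm}.
\]
This bound is uniform in $\Sigma$, so the infima differ by at most $\max_\pm\epsilon_{a,\pm}$, which tends to zero.

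\emph{Evaluation along $a_k$.} Along $a_k=0.04\,2^{-k}$ we have $\log_2(0.04/a_k)=k$, so $w_{a_k}=\tfrac12\pm\tfrac14\in\{1/4,3/4\}$. In either case $w(1-w)=3/16$, so $\inf_\Sigma J_{w_{a_k}}=\tfrac{\sqrt3}{4}|\sigma_--\sigma_+|$ for every $k$, and the transfer bound gives convergence to $c_*$. Numerically, $\sigma_-\approx0.115470$ and $\sigma_+\approx0.129099$, which gives $c_*\approx0.0059017$.

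\emph{The limit $D_{a_k}\to0$.} By \eqref{eq:transport-study}, $D_{a_k}^2=w\epsilon_{a_k,-}^2+(1-w)\epsilon_{a_k,+}^2\to0$. This is also the uniform statement \eqref{eq:same-label-main} applied to the mixtures $\pi_a$.

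The main point to handle carefully is the lower bound. It must hold for arbitrary, possibly non-commuting or singular $\Sigma$, which the projection argument covers without invoking the general Bures formula. Everything else is triangle inequalities together with the already-established uniform $W_2$ convergence.
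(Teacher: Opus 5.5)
Your proposal is correct and follows the paper's proof: the first-coordinate projection lower bound, attainment at $\Sigma_w^*$ (you use the commuting-Gaussian Bures formula where the paper takes the product of the optimal coordinate couplings), and the reverse-triangle plus weighted Minkowski transfer, whose bound $(w_a\epsilon_{a,-}^2+(1-w_a)\epsilon_{a,+}^2)^{1/2}$ is exactly the paper's $D_a$. Your other deviation is also valid: you apply Theorem~\ref{thm:W2} to each single-rectangle law $\pi_{a,\pm}$ to get $\epsilon_{a,\pm}\to0$, rather than to the mixtures directly, and this works because each rectangle lies in its root's Voronoi cell.
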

\begin{proof}
Projection onto the first coordinate contracts $W_2$.
Writing $s=\sqrt{\Sigma_{11}}$, the one-dimensional Gaussian formula gives
\begin{align*}
J_w(\Sigma)^2
&\ge w(\sigma_--s)^2+(1-w)(\sigma_+-s)^2\\
&=\{s-w\sigma_--(1-w)\sigma_+\}^2
+w(1-w)(\sigma_--\sigma_+)^2.
\end{align*}
This lower bound covers every positive-semidefinite covariance, including nondiagonal matrices.
At $\Sigma_w^*$, the second marginals coincide and the product of the optimal first- and second-coordinate couplings attains the bound.

For the actual components, the reverse triangle inequality for each Wasserstein distance, followed by the weighted Euclidean triangle inequality, gives
\[
|\widetilde J_a(\Sigma)-J_{w_a}(\Sigma)|\le D_a
\qquad(\Sigma\succeq0).
\]
Taking infima in both resulting inequalities yields
\[
\left|\inf_{\Sigma\succeq0}\widetilde J_a(\Sigma)
-\inf_{\Sigma\succeq0}J_{w_a}(\Sigma)\right|\le D_a.
\]
Theorem~\ref{thm:W2} gives $D_a\to0$ for these invariant-law selections.
On the dyadic sequence $w_a\in\{1/4,3/4\}$, both Gaussian optima equal $c_*$, proving the limit.
\end{proof}
Thus root-specific covariances permit vanishing same-label scaled transport even when every shared covariance retains a positive limiting cost.
The associated original-scale label-preserving cost is multiplied by $\sqrt a$.
The comparison fixes component centers at the roots and concerns label-preserving couplings; unrestricted mixture couplings may exchange mass between labels.

\section{Relation to diffusion, heavy-tail and manifold theories}\label{app:literature}

\paragraph{The random object and order of limits.}
The object here is an arbitrary invariant law of the discrete recursion at step $a$, followed by $a\downarrow0$.
The additive finite-variance analysis of \citet{ChenMouMaguluri2022} addresses this order of limits around a single center.
Its strongly convex, linear and contractive settings lead to stationary scaling limits and Gaussian identification under the stated uniqueness conditions.
Our assumptions instead permit multiple hyperbolic roots and state-dependent innovations.
Energy matching establishes the quadratic uniform integrability needed for moment accuracy across all components and invariant-law selections.
With a uniform scaled third moment, a Gaussian weak limit already implies qualitative $W_2$ convergence; energy matching supplies the required tail control using second innovation moments instead.
Classical stochastic approximation distinguishes this constant-gain problem from diminishing-gain convergence \citep{KushnerYin2003}.
Stochastic modified equations approximate the weak dynamics of SGD over specified time intervals and support algorithmic interpretations \citep{LiTaiE2017}.
An approximation on a fixed interval does not by itself identify a stationary limit: passing to infinite time requires additional uniform control.
\citet{MattinglyStuartTretyakov2010} develop Poisson-equation methods for time averages and invariant-measure errors of numerical SDE approximations under smoothness and ergodicity conditions.
Our proof instead uses the discrete stationary energy balance to obtain quadratic uniform integrability under second innovation moments, with potentially several invariant probabilities.
The Poisson equation enters later, for the separate smooth-test rate.
The finite-step study of \citet{LiMalladiArora2021} further motivates checking the accuracy of diffusion descriptions at the actual learning rate.
Our matched OU trajectories calibrate the variability of local moment measurements; the nonlinear trajectories independently measure their departures from the limiting prediction.

\paragraph{Heavy-tailed innovations and heavy-tailed iterates.}
\citet{GurbuzbalabanEtAl2021} show how multiplicative SGD noise can produce heavy-tailed stationary iterates even with light-tailed data; \citet{HodgkinsonMahoney2021} develop related mechanisms beyond linear models.
Innovation moments and stationary-iterate moments are therefore distinct assumptions.
We assume finite innovation variance with covariance growth controlled by the drift, together with dissipativity and descent.
For sufficiently small steps these conditions imply finite stationary second moments, and energy matching controls the second moments after rootwise rescaling.
They allow the standardized $t_3$ examples, whose third absolute moment is infinite.
The symmetric smooth-test argument exploits this particular innovation structure; it does not assert a universal $\sqrt a$ Wasserstein rate or cover stationary infinite-variance regimes.

\paragraph{Local geometry and global communication.}
At isolated hyperbolic roots, the local covariance follows a Lyapunov equation and the mean shift follows a second-order drift balance.
The new guarantee is that these local formulas remain moment-accurate across root-labeled components, with actual weights, singular covariances and second innovation moments.
Single-center stationary expansions and harmonic loss identities provide the corresponding antecedents \citep{DieuleveutDurmusBach2020,Yaida2019,AllmeierGast2024}.
Equilibrium manifolds have additional tangential dynamics: \citet{LiWangArora2022} study how SGD moves along a zero-loss manifold under general noise covariance.
Such motion is absent from a finite set of hyperbolic roots, so a manifold extension would require an additional limit along the equilibrium set.
Separately, Freidlin--Wentzell theory concerns global rare transitions in small-noise diffusions \citep{FreidlinWentzell2012}, while discrete learning dynamics and heavy-tail communication require their own transition analysis \citep{AzizianEtAl2024,WangRhee2023}.
The invariant-rectangle example retains arbitrary weights, and the scalar communicating example uses a heavy-tail transition proxy.
Both use the same principle: local moment coefficients and global occupation probabilities carry different information.

\end{document}